\documentclass{report}
\usepackage{setspace}
\usepackage[utf8]{inputenc}
\usepackage{graphicx}
\graphicspath{{image/}}
\usepackage{caption}
\usepackage{subcaption}
\usepackage{tgbonum}

\usepackage[paper = a4paper,
            inner = 2.54cm,
            outer = 2.54cm,
            bindingoffset = 6mm,
            top = 1.905cm]{geometry}
\usepackage{sectsty}
\usepackage{titlesec}
\titleformat{\section}
  {\fontsize{12}{15}\bfseries}{\thesection}{1em}{}


 \usepackage{lmodern}

\usepackage[x11names]{xcolor}

\titleformat{\chapter}[display]
  {\sffamily\fontsize{12}{15}\bfseries\color{black}}
  {\chaptertitlename\fontsize{12}{15} \thechapter}{1em}{}

\subsectionfont{\fontsize{12}{15}\selectfont}

\makeatletter
\renewcommand\chapter{\if@openright\cleardoublepage\else\clearpage\fi
                    \thispagestyle{fancy}%
                    \global\@topnum\z@
                    \@afterindentfalse
                    \secdef\@chapter\@schapter}
\makeatother

\usepackage{fancyhdr}
\pagestyle{fancy}
\fancyhf{}

\fancyhead[R]{\thepage}


\usepackage[round]{natbib}

\usepackage{amsmath} 
\usepackage{amssymb} 

\RequirePackage{amsthm,amsmath, amsfonts, amssymb}

\bibliographystyle{abbrvnat}
\usepackage[]{algorithm2e}

\usepackage{nomencl}
\makenomenclature

\newtheorem{thm}{Theorem}[section]
\newtheorem{lemma}{Lemma}[section]
\newtheorem{proposition}{Proposition}[section]
\newtheorem{corollary}{Corollary}[section]

\theoremstyle{definition}
\newtheorem{definition}{Definition}[section]
\doublespacing

\begin{document}

\begin{titlepage}
    \begin{center}
    \vspace*{1cm}

    \text{Use Of Vapnik-Chervonenkis Dimension in Model Selection}

    \vspace{1cm}

    \text{ by}

    \vspace{1cm}

    \text{ Merlin T. Mpoudeu}

    \vspace{1.5cm}
    A DISSERTATION

    \vspace{1.5cm}

    Presented to the Faculty of \\

The Graduate College at the University of Nebraska \\

In Partial Fulfillment of Requirements \\

For the Degree of Doctor of Philosophy

\vspace{0.5cm}

 \text{Major: Statistics }

 \vspace{1.5cm}

 Under the Supervision of Professor \text{Bertrand Clarke}
 \vspace{0.5cm}

 Lincoln, Nebraska

  \vspace{0.5cm}

  August, 2017

    \end{center}

\end{titlepage}

\thispagestyle{plain}

\begin{titlepage}
\begin{center}
    \vspace*{1cm}

    \text{Use Of Vapnik-Chervonenkis Dimension in Model Selection}

    {Merlin T. Mpoudeu, Ph.D. \\ University of Nebraska, 2017}

    \end{center}
\end{titlepage}

\thispagestyle{plain}

\begin{titlepage}
\chapter*{Acknowledgements}
\thispagestyle{empty}
I would especially like to thank my advisor Bertrand Clarke.
Without his insight, encouragement, probing questions, excitement, and suggestions,
this thesis would have ended up much differently. I am very proud of this
work, and I owe him a tremendous amount for helping it to become
something valuable. I would also like to thank Professor Kent M. Eskridge for the help that he provided through his Friday consulting group.

I would like to thank Dr. Jennifer Clarke, Dr. Yumou Qiu and Dr Stephen Scott for being my
graduate committee members and reviewing my dissertation. Their help and suggestions have been invaluable on both academic and personal levels.

I am grateful to my friends (Cyrile Nzouda, Salfo Bikienda, and Kismiantini); you have been with me since the beginning, and your support through the last four years has been invaluable.

I would like to express my gratitude to my parents, Mbialeu Charles and Siewe Philomene, and my brother and sisters for their encouragement throughout this graduate school process.
I would also like to thank two and most import people in my life: My fiancee Anne Mbenya and my 20 month old son Charles Mpoudeu for their support and patience.

\end{titlepage}


\pagenumbering{roman}
\tableofcontents
\listoffigures
\listoftables
\printnomenclature[2cm]

\nomenclature[A]{\textbf{VCD}}{Vapnik-Chervonenkis Dimension}
\nomenclature[A]{\textbf{RHS}}{Right-Hand Side}
\nomenclature[A]{\textbf{LHS}}{Left-Hand Side}
\nomenclature[A]{\textbf{EMDBTEL}}{Expected Maximum Difference Between Two Empirical Losses}
\nomenclature[A]{\textbf{ERM}}{Empirical Risk Minimization}
\nomenclature[A]{\textbf{BIC}}{Bayesian Information Criterion}
\nomenclature[A]{\textbf{SCAD}}{Smoothly Clipped Absolute Deviation}
\nomenclature[A]{\textbf{ALASSO}}{Adaptive Least Angle Shrinkage Selector Operator}
\nomenclature[A]{\textbf{DG}}{Data Generator}

\setcounter{page}{1}

\pagenumbering{arabic}


\chapter*{INTRODUCTION}
\label{Intro}
\thispagestyle{fancy}
Given a dataset of size $n$, there are $2^{n}$ different dichotomies of the observations.
If we have a collection of functions $\mathcal{F}$ so that given any dichotomy of the dataset, there is a function $f \in \mathcal{F}$ that is consistent with the dichotomy then we say that $\mathcal{F}$ generates the $2^{n}$ datasets.
 The Vapnik-Chervonenkis Dimension (VCD) is the maximum value of data points all whose dichotomies can be generated by $\mathcal{F}$. This measures the richness of $\mathcal{F}$.

The earliest usage of what we now call the VCD seems
to be in \cite{Vapnik:Chervonenkis:1968}.   A translation into English was published as \cite{Vapnik:Chervonenkis:1971}.  The VCD was initially called an
index of a collection of sets with respect to a sample and was developed
to provide sufficient conditions for the uniform convergence of empirical distributions.
It was extended to provide a sense of dimension for function spaces, particularly for
functions that represented classifiers.  After two decades of development this
formed the foundation for the field of Statistical Learning Theory, summarized
in \cite{Vapnik:1999} and extensively treated in  \cite{Vapnik:1998}.
An intuitive treatment of VCD from the classification perspective
can be found in \cite{Moore:2008}.
Aside from the development in statistical learning, VCD plays an important role in
various real analysis settings.  For instance, VCD defines VC classes of functions that have
relatively small covering numbers, see \cite{Pollard:1984}, \cite{Devroye:etal:1996},
\cite{vanderVaart:1998}, and \cite{vandeGeer:2000} amongst others.
At the risk of oversimplification, the VCD of a space of functions is currently seen mostly
as a parameter characterizing bounds of cumulative risks in term of the size of the function class.

One of our goals is to estimate the true value $h$ of the VCD (noted $h_{T}$) of a data generator (DG) by, say,
$\hat{h}$ and use $\hat{h}$ as the estimated complexity of the model class; e.g, linear models.
Given $\hat{h}$, the next step is to perform model selection. In fact, we use the notion of `consistency at the true model' to do so. We define `consistency at the true model' to be
\begin{enumerate}
    \item consistency at the true model in the usual sense of the term, basically we expect that $\hat{h}\rightarrow h_{T}$ in probability as the sample size increases, coupled with
    \item  some kind of identifiable bad behaviour away from the true model, getting worse as the wrong model moves further from the true model.
\end{enumerate}
If we have a list of non-nested models with nested VCD's denoted by $h$, and we would like to perform model selection, we can estimate $h$ by setting

\begin{equation}
\label{ConsisNonNestedIntro}
\hat{h} = \arg\min_{k} \left|VCD\left(P_{k}(\cdot\mid\beta)\right) - \hat{h}_{k}\right|
\end{equation}
where $\left\{P_{k}\left(\cdot\mid\beta\right)|k = 1,2,\cdots, K\right\}$ is some set of models and $\hat{h}_{k}$ is calculated using model $k$. Writing  $VCD\left(P_{k}(\cdot\mid\beta)\right) = h_{k}$, we see that $\hat{h}_{k}$ is a function of the VCD of the conjectured model and $\arg\min_{k}$ in \eqref{ConsisNonNestedIntro} removes this dependence. In the special case that $P_{k}(\cdot\mid\beta)$'s are linear i.e. $P_{k}(\cdot\mid\beta) = \sum_{j=0}^{k}\beta_{j}x_{j}$, we will see that $h_{T} = k+1$ (here, $x_{0} \equiv 1$). In practice we an approximate form of \eqref{ConsisNonNestedIntro} by thresholding the difference within the $\arg\min$; see \eqref{treshold} in Subsec. \ref{Changesimulation}.

We can also order the inclusion of the covariates using a shrinkage method such as SCAD (see \cite{Fan&Li})
or the correlation (see \cite{fan2008sure}) and the use $\hat{h}$ to pick a model.
We also upper bounded the true but unknown risk by an expression that depends on $\hat{h}$. We used that expression for model selection. In fact, for each model on the list, we estimate a VCD for it and use the estimate in an upper bound of the unknown risk. Then we pick the model with the smallest upper bound on its risk. Asymptotically, the chosen model should perform well with higher probability. This technique is called Empirical Risk Minimization and is suitable for non-nested model lists.

While VCD is not in general equivalent to the real dimension, in the special case of linear models VCD and the real dimension are the same.  Thus, at a minimum, with linear models, we can cut down the plausible models to those that have at least $\hat{h}$ parameters -- and hopefully not too many more.

The estimation of $h_{T}$ is done in three steps: first we find the Expected Maximum Difference Between Two Empirical Losses (EMDBTEL). This value depends on the model class (the class of linear models for this case), the design points, the number of bootstrap samples we draw from our data, and the number of intervals used to discretrize the loss function. In fact, if the number of the design points is $10$, we will have $10$ different values of the EMDBTEL. Since, the bound on the EMDBTEL is not tight, the next step is to find the best factor on the RHS of \eqref{objfn} i.e. the value of  $c$ that will minimize the upper bound. The final step is the estimation of $h_{T}$. This is done by minimizing the squared distance between the EMDBTEL (the LHS of \eqref{objfn}) and the upper bound (the RHS of \eqref{objfn}).
The intuition is that this minimum
corresponds to the decision problem, or more precisely the model class, that has the fastest
convergence to zero of its expected supremal difference of cumulative empirical losses and hence
would be the `right' problem for us to solve if only we knew it.
Therefore, even though the model class is only implicitly defined, we can take its $h_T$ or
empirically, $\hat{h}$, as a lower bound on the VCD of model classes. in the examples presented here
our objective function has a unique minimum that can be taken as an estimate of $\hat{h}$. In practice, for finite sample sizes this is best regarded as a lower bound on $h_T$. Indeed, in practice, we can estimate several values of $\hat{h}$, take their average and standard deviation
as a way to get what is effectively a bootstrap confidence interval for $h_T$.

Our contribution rests on two pioneering earlier contributions.  First using zero-one loss functions,
Vapnik and collaborators \cite{Vapnik:etal:1994} developed the asymptotic expected supremal difference of the cumulative empirical losses in classification problems. Thus, it focused more on probabilities than expectations.
The asymptotic expression they obtained had three terms combined in an ad hoc way. We did not use the asymptotic form that they obtained to estimate $\hat{h}$. Nevertheless, our derivation of an asymptotic upper bound for the
expected supremal difference of empirical cumulative losses is based on their work. However, our derivation is more complex e.g., to allow for continuous loss functions, and we use the cross-validation form of the error. McDonald \cite{McDonald:2012} Chaps. \ref{chap:bounds} and \ref{chap:Numerical:Studies} provides a lucid discussion on this. Secondly, McDonald et al. \cite{McDonald:etal:2011} examine the technique from Vapnik et al.\cite{Vapnik:etal:1994} and tried to establish consistency of $\hat{h}$ for $h_{T}$, again in the classification context. The key result in their proof rested on a result from \cite{vandeGeer:2000}. However, this result does not correctly fill the gap in their proof. Our simulations with the objective function in \cite{Vapnik:etal:1994} leads us to believe that consistency will be very hard to prove --  if in fact it is true.

Our main contributions in this dissertation are:
 \begin{itemize}
        \item We re-derive the Vapnik et al. (1994) bounds for the regression case (not classification).
        \item We correct errors in the earlier derivation so we obtain a single objective function to estimate $h$.
        \item We improve the objective function by using a cross validation form of the errors.
        \item We improve the objective function by optimizing over a constant that appears in it.
        \item We correct the original Vapnik et al. (1994) algorithm for our case.
        \item We verify that our $\hat{h}$ works well in simulation and for three datasets.
    \end{itemize}

The motivation of this dissertation comes from the fact that in statistics, there are many different model selection techniques that have been developed. These include best subset selection, forward and backward model selection, Bayesian Information Criterion (BIC), Smoothly Clipped Absolute Deviation, Adaptive Least Angle Shrinkage Selector Operator, etc. However, none of these techniques use the complexity of the data generator (DG).

The remainder of this dissertation is structured as  follows.
In chapter \ref{chap:Vapnik:Cherv:Dim:Cov:Num}, we discuss the notion of complexity in the VCD sense in general. We give two definitions of the notion of VCD. We also prove that for the case of linear regression, VCD is just the number of parameters in the saturated model. We also present  summaries of \cite{Vapnik:etal:1994} and \cite{McDonald:etal:2011}. In Chapter \ref{chap:bounds} we present the main theoretical work we have accomplished to date. It rests on discretizing bounded loss functions
so that upper bounds for the distinct regions of  the expected supremal difference of empirical losses can be derived in Sec. \ref{sec:Ext:Vap:Bounds}.
In Chapter \ref{chap:Numerical:Studies} we implement the theory developed in Chapter \ref{chap:bounds} on synthetic datasets. In Chapters \ref{example} and \ref{Wheatdata} we implement the theory developed in Chapter \ref{chap:bounds} on three real datasets namely the Tour De France, the Abalone and the Wheat data.
In Chapter \ref{futurework}, we conclude our work, and we present ideas for future work, in particular what we can do to improve our estimator for $h_{T}$.


\chapter{VAPNIK-CHERVONENKIS DIMENSION, COVERING NUMBER AND COMPLEXITY}
\label{chap:Vapnik:Cherv:Dim:Cov:Num}
\thispagestyle{fancy}

There are a variety of intuitive concepts in Statistics that have been formalized in ways that make good sense to those who are familiar with them but are often impenetrable to those who are not. As example, we have information. A series of incisive articles including \cite{Soofi:1994},
 \cite{Ebrahimi:etal:1999} and \cite{Ebrahimi:etal:2010}  develop this concept admitting its `intangibility'.  Another example is model uncertainty. This topic has been addressed in many settings but remains relatively hard to quantify see \cite{Draper:1995} (and discussion) for one of the earliest contributions, \cite{Clyde:George:2004} for a more recent contribution from the Bayes perspective, and \cite{Berk:etal:2013} for a treatment of the effect of model uncertainty on inference. A third example, and the subject of this review, is complexity.

The aim of this chapter is to give some background on the notion of complexity. In fact, there are many ways to measure the complexity or the richness of a class of functions. In Statistical Learning Theory, the indexes to measure the richness of a collection of function are VCD, ND Covering Number, in Information Theory the notion of code length is used and this is an extension of Shannon Entropy and mutual information. The remainder of this chapter will be developed as follows: In Sec. \ref{VCdim}, we will give two definitions of the notion of VCD. In Sec. \ref{VCdimStat}, we will show that VCD for the class of linear models is just the number of parameters. In Sec. \ref{CoverNumberEntropy}, we will talk about Covering Number, Entropy, Growth Function and how they are related.

\section{Definition of VCD}
\label{VCdim}

The aim of the VCD is to measure the richness or capacity, the complexity of a collection of functions.
There are at least three distinct ways to approach the definition of VCD:
geometric, combinatorial, and via covering numbers.
We will begin with the geometric definition and then proceed to the combinatorial definition.  For the sake of simplicity, we only present here the first two definitions of VCD.

\subsection{Geometric definition of VC dimension}
\label{geodef}

Let $\cal{F}$ be a collection of classifiers on some domain $\mathcal{X}$. The domain of $\mathcal{X}$ is usually a finite $d$ dimensional space. Let $k\geq 1$ be an integer, and let $\mathcal{X}_{k}= \left\{x_{1},\cdots, x_{k}\right\} \subseteq \mathcal{X}$. A dichotomy of $\mathcal{X}_{k}$ is a partition of $\mathcal{X}_{k}$ into two disjoint sets. We say that $\mathcal{X}_{k}$ is identified by $\mathcal{F}$ if and only if for any dichotomy of $\mathcal{X}_{k}$, there exists a function $f\in\mathcal{F}$ consistent with the dichotomy. That is let $U$ be any dichotomy of $\mathcal{X}_{k}$, there exists $f\in\mathcal{F}$ such that $\forall u \in U$ $f(u) =1$ and $\forall u \in U^{c}\mathcal{X}_{k}$, $f(u) = 0$. Thus when we have a class of subsets of $\mathcal{X}$, we will say that the class of subset is generated by $\mathcal{F}$ if each element of the collection can be identified by $\mathcal{F}$. Succinctly, the \textsc{VCD} of $\mathcal{F}$ is the largest $k$ for which every single element of the power set of $\mathcal{X}_{k}$ ($P\left(\mathcal{X}_{k}\right)$) can be identified by $\mathcal{F}$ in other words, $P\left(\mathcal{X}_{k}\right)$ is generated by $\mathcal{F}$. An infinite VC dimension means that $\mathcal{F}$ maintains full richness for all sample sizes.

Before giving some examples on how to calculate the VCD using the geometric definition, we will first emphasize some aspects of VCD. Note that, in the definition of VCD we did not say that for all sets of size $k$, their power set should be generated by $\mathcal{F}$. It's enough to show that there exist a set of size $k$ such that its power set can be generated by $\mathcal{F}$, then we say that the VCD is at least $k$. So a simple way to prove that the VCD of a collection of function is $k$ can be done in two steps. First prove that the collection of functions can generate any set of size less or equal to $k$. Second prove that for any set of size $k+1$, we cannot identify a dichotomy for that set.
The following examples are toy examples that will serve for illustrative purposes.

Let $\mathcal{X}=\mathcal{R}$ and let $\mathcal{F}$ be a collection of indicator functions of the intervals in the real line of the form $\left[a,\infty\right)$ for some $a\in \mathcal{R}$. That's
\begin{equation*}
  f(x)=\chi_{\left[a,\infty\right)}(x)=\left\{
                                         \begin{array}{ll}
                                           1, & \hbox{if $x\in\left[a,\infty\right)$;} \\
                                           0, & \hbox{otherwise.}
                                         \end{array}
                                       \right.
\end{equation*}

The VCD is at least $1$. Let's select a sample of one point $\left(k = 1\right)$ and position it on the real line. We want to show that there are $a$'s which generate $\left\{\left\{\emptyset\right\},\left\{x\right\}\right\}$. By taking $a = x-1$, $f(x)=\chi_{\left[x-1,\infty\right)}(x)$ identifies $\left\{x\right\}$ and $f(x)= \chi_{\left[x+1,\infty\right)}(x)$ identifies $\emptyset$. Now the question becomes: is there any set of size $2$ that can be generated by $\mathcal{F}$? Choose two points $\mathcal{X}_{2}=\left\{x_{1}, x_{2}\right\}$, we have $P\left(\mathcal{X}_{2}\right)=\left\{\left\{\emptyset\right\},\left\{x_{1}\right\},\left\{x_{2}\right\},\left\{x_{1},x_{2}\right\}\right\}$. Let's assume without loss of generality that $x_{1}<x_{2}$. In fact, we cannot find any $a$ that identifies $x_{1}$ alone thus; the VCD of this collection of functions is $1$.

Let $\mathcal{X}=\mathcal{R}$ and let $\mathcal{F}$ be a collection of indicator functions of the intervals in the real line of the form $\left(a,b\right]$ for some $a,b\in \mathcal{R}^{2}$. That's

\begin{equation*}
  f(x)=\chi_{\left[a,b\right)}(x)=\left\{
                                         \begin{array}{ll}
                                           1, & \hbox{if $x\in\left(a,b\right]$;} \\
                                           0, & \hbox{otherwise.}
                                         \end{array}
                                       \right.
\end{equation*}

From the previous case, we can say that the VCD is at least $2$. Let's choose two points $\mathcal{X}_{2}=\left\{x_{1}, x_{2}\right\}$, we have $P\left(\mathcal{X}_{2}\right)=\left\{\left\{\emptyset\right\},\left\{x_{1}\right\},\left\{x_{2}\right\},\left\{x_{1},x_{2}\right\}\right\}$. Let's assume without loss of generality that $x_{1}<x_{2}$. We need to show that there are values of $a, b$ which identify each element of $P\left(\mathcal{X}_{2}\right)$. The following functions $f(x)= \chi_{\left(x_{2},x_{2}+1\right]}(x)$, $f(x)=\chi_{\left(x_{1}-1,\frac{x_{1}+x_{2}}{2}\right]}(x), f(x) = \chi_{\left(\frac{x_{1}+x_{2}}{2},x_{2}+1\right]}(x)$, $f(x)= \chi_{\left(x_{1}-1,x_{2}+1\right]}(x)$ respectively identify $\left\{\left\{\emptyset\right\},\left\{x_{1}\right\},\left\{x_{2}\right\},\left\{x_{1},x_{2}\right\}\right\}$. Now to show that the VCD is at most two, we need to show that any set of three points cannot be generated by $\mathcal{F}$. It is sufficient to show that one of the dichotomies is not identifiable. Let $\mathcal{X}_{3}= \left\{x_{1},x_{2},x_{3}\right\}$ and assume without loss of generality that $x_{1}<x_{2}<x_{3}$. There is no $\chi_{\left(a,b\right]}$ that identifies $\left\{x_{1},x_{3}\right\}$. Since the choice of points was arbitrary, we say that the VCD is $2$.

Another interesting example that is  more related to Statistics is the case of half space in the plane. Let $\mathcal{X}= \mathcal{R}^{2}$ and $\mathcal{F}$ is the collection of functions (half plane) of the form $f(x) = ax + b$ where $a,b \in \mathcal{R}^{2}$. For this case, the VCD is at least 3 because any 3 non co-linear points in $\mathcal{R}^{2}$ can be generated by $\mathcal{F}$. In fact, the power set will have 8 different sets (dichotomies) where four of them are the mirror of the other four. By placing one point on one side of the line (half plane) and the other two on the other side of the line we can identify 3 dichotomies and by placing all three points on one side of the plane we identify the fourth dichotomy. The other four are just a mirror image of what we just did. So we can identify them just by symmetric. Now we just have to prove that there is not any set of four points that can be generated by our collection of functions. Here, there are two cases to be considered. Firstly, let's assume without loss of generality that that the four points lie in the convex hull generated by them. In this case, the labelling which assigns a positive sign to the pair on one diagonal and a negative sign on the other diagonal cannot be identified by a straight line. Secondly, let's assume that three of the points form a convex hull and the fourth point is an interior point. The dichotomy where the points on the convex hull are all positives and the interior point is negative cannot be identified by a half space. So the VCD is 3.

VCD can be extended to ${\cal{F}}$'s containing functions that assume values
in $\mathbb{R}$ by looking at level sets.   In fact, we will define level sets as the collection of points in the domain of $f$ for which $f(x)\geq\beta$ where $\beta\in\cal{R}$. Let
\begin{eqnarray}
{\cal{F}}^* = \{ \chi_{\{ z \mid f(z) \geq \beta \} } (\cdot ) \mid f \in {\cal{F}}, \beta \in \mathcal{R} \}
\label{generalreal}
\end{eqnarray}
be the collection of indicator functions generated by ${\cal{F}}$.  Now, one can set
${\sf VCdim}({\cal{F}})={\sf VCdim}({\cal{F}}^*)$.

As an example of this consider the densities
${\cal{F}} = \{ \phi_\sigma(\cdot) \mid \sigma \in \mathbb{R}  \}$ where
$\phi_\sigma (\cdot)$ is the density of a $N(0, \sigma^2)$ distribution.  The collection of points for which $\phi_\sigma (x)\geq \beta$
i.e., the level set
is symmetric intervals around zero, i.e., ${\cal{F}}^* = \{ \chi_{[-a, a]} \mid a \in \mathbb{R}^+ \}$.
By reasoning similar to that above, ${\cal{F}}^*$ has VC dimension one.  (There is no
set of two distinct points on the real line that has a ${\cal{P}}({\cal{X}}_2)$ that can
be generated by the sets in ${\cal{F}}^*$ but any ${\cal{P}}({\cal{X}}_1)$ can be generated.)
Extending this, let
${\cal{F}} = \{ \phi_{\mu, \sigma}(\cdot) \mid \mu \in \mathbb{R}, \sigma \in \mathbb{R}  \}$ where
$\phi_{\mu, \sigma} (\cdot)$ is the density of a $N(\mu, \sigma^2)$ distribution.
Now, ${\cal{F}}^*$ consists of all closed intervals of the form $[a, b]$ in $\mathbb{R}$.
In this case, ${\sf VCD}({\cal{F}}) = 2$.

Now consider ${\cal{F}} = \{ \phi_{\mu, \Sigma}(\cdot) \mid \mu \in \mathbb{R}^2, \Sigma = \sigma^2 I_2, \sigma \in \mathbb{R}^+  \}$, densities of two independent normal with different means.  Then
${\cal{F}}^*$ consists of all circles in $\mathbb{R}^2$.  In this case, the circle will classify everything inside the circle as positive and everything outside aas negative. It is easy to see that any set of three points in the plane that forms a non-degenerate triangle gives a ${\cal{P}}({\cal{X}}_3)$ that can be generated
by ${\cal{F}}^*$.  However, no set of four points gives a ${\cal{P}}({\cal{X}}_4)$
that can be generated by ${\cal{F}}^*$.  Hence,  ${\sf VCD}({\cal{F}}) = 3$.

Analogously, we can see that if the variances are not equal,  i.e.,
${\cal{F}} = \{ \phi_{\mu, \Sigma}(\cdot) \mid \mu \in \mathbb{R}^2, \Sigma = {\sf diag}(\sigma_1^2, \sigma_2^2),  \sigma_1, \sigma_2 \in \mathbb{R}^2  \}$
consists of all ellipses, then it is possible to produce five distinct points whose
 ${\cal{P}}({\cal{X}}_5)$ can be generated by ${\cal{F}}$.
However, there is no set of six points in the plane giving a ${\cal{P}}({\cal{X}}_6)$ that can be
generated by ${\cal{F}}$.   So, ${\sf VCdim}({\cal{F}}) = 5$.

It is left to the reader to work out the VCD of the full set of normals in the plane,
i.e.,
${\cal{F}} = \{ \phi_{\mu, \Sigma}(\cdot) \mid ~\hbox{where} ~\mu \in \mathbb{R}^2,
 {\sf diag}(\Sigma) = (\sigma_1^2, \sigma_2^2),  \sigma_1, \sigma_2 \in \mathbb{R}^2 ~ \hbox{and}~
\sigma_{12} = \sigma_{21} \in \mathbb{R}  \}$.  (It is not hard to produce six points
in the plane whose ${\cal{P}}({\cal{X}}_6)$ can be generated from ${\cal{F}}$.
Arguing that no set of seven points leads to a ${\cal{P}}({\cal{X}}_7)$ that can be generated
by ${\cal{F}}$ is harder.)

Looking back to the examples studied so far, one might think that there is a one-to-one relationship between VCD of a collection of functions and the number of parameters in the model require to represent the class of functions. In fact, this observation is not always true. A one dimensional
family can have infinite VCD. Here is an example.

Let $\mathcal{X}=\mathcal{N}$, $\mathcal{F}=\left\{f_{\alpha}(x): \quad \hbox{where $\alpha\in\mathbb{R}$}\right\}$
$$
f_{\alpha}(x)=\left\{
                \begin{array}{ll}
                  1, & \hbox{if the $x^{th}$ bit in the binary representation of $\alpha$ is 1;} \\
                  0, & \hbox{if the $x^{th}$ bit in the binary representation of $\alpha$ is 0.}
                \end{array}
              \right.
$$

For any number $x$ that you give, I will always find a number $(\alpha)$ such that the $x^{th}$ bit in the binary representation is 1, for instance $x=5$, in this case, $\alpha = 125$ and the binary representation of $\alpha = 125 = 1111101$. Thus every $\mathcal{S}\subseteq\mathcal{X}$ can be generated by $\mathcal{F}$, therefore ${\sf VCdim(F)}=\infty$

\subsection{Combinatorial definition of VCD}
\label{combinatorial}

In Subsec. \ref{geodef} the VCD was associated with a collection of functions ${\cal{F}}$.
In this section, we will look at the VCD of a collection of measurable sets.

 Let ${\mathcal{C}}$ be a collection of measurable subsets of a space and let ${\mathcal{X}} = \{ x_1,...,x_n \}$ be a set
of $n$ points in the same space.  Consider the collection of sets
$C \cap \{x_1,..., x_n \}$  $\forall C \in \cal{C}$. As before, we say a subset  $A$ of $\mathcal{X}$
can be identified by ${\mathcal{C}}$ if there exists $C \in \mathcal{C}$ such that  $A = C \cap \{x_1,..., x_n \}$, and ${\cal{P}}( {\cal{X}} )$
is generated by ${\cal{C}}$ if and only  if $\forall A \in \mathcal{P}(\mathcal{X})$, there exists $C \in \mathcal{C}$ such that $A = C\cap \left\{x_{1},\ldots, x_{n}\right\}$. In statistical learning jargon, they use the terminology `picked out' for identified and
shattered for generated.  As before, the VC dimension $k$
of the class ${\cal{C}}$ of sets is the smallest $n$ for which no
set of size $n+1$ can be generated by ${\cal{C}}$.

Formally, \cite{Vaart:Wellner:1996} defines
\begin{eqnarray}
\Delta  ({\cal{C}}, x_1,..., x_n) =\# \left\{
\{x_1,...,x_n \} \cap C: C \in {\cal{C}} \right\} ,
\label{VWdef}
\end{eqnarray}
so that the VCD is
\begin{eqnarray}
k = \inf \left\{ n : \max_{ \{x_1,...,x_n \} }
\Delta ({\cal{C}}, x_1,..., x_n) < 2^n \right\} .
\label{VWVCdef}
\end{eqnarray}
The infimum over an empty set can be defined to be zero, so the VCD of a set is only infinity when ${\mathcal{C}}$ can generate ${\cal{P}}({\cal{X}}_k)$ for
arbitrarily large $k$. Devroye et al. \cite{Devroye:etal:2013} Chap. 12.4 uses a slight generalization
of this by defining
$$
s({\cal{C}}, n) = \max_{x_1, \ldots , x_n} \Delta  ({\cal{C}}, x_1,..., x_n)
$$

so that $s({\cal{C}}, n) \leq 2^n$ noting that if there is an $n$ for which $s({\cal{C}}, n) < 2^n$
then for all $k \geq n$, $s({\cal{C}}, k) < 2^k$.  The VC dimension $k$  is then the last
time equality holds, that is, the largest $k$ for which $s({\cal{C}}, k) = 2^k$.

Checking whether a subset of $\{x_1,...,x_n \} $ can be identified (or
`picked out') by some $C \in {\cal{C}}$ is much like checking whether the indicator
function for a set gives the right pattern of zeros and ones -- ones for the
$x_i$'s in the set identified and zero on the other $x_i$'s.
Consider the equivalence relation on sets defined by saying that two sets
in a class of sets are equivalent if and only if they identify the same subsets
of $\{x_1,...,x_n \} $.   Then, the number of equivalence classes in ${\cal{C}}$
depends on $\{x_1,...,x_n \} $. For fixed $n$, the equivalence classes heuristically
are like a data driven sub-$\sigma$-field from the $\sigma$ field generated by ${\cal{C}}$.
Thus the combinatorial interpretation of VC dimension is an effort to formalize the
idea that finitely many data points only permits finitely many events to be distinguished.

Let ${\cal{F}}$ be a class of measurable functions on a sample space
and let $f \in {\cal{F}}$.
The subgraph of $f: {\cal{X}} \rightarrow \mathbb{R}$ is
\begin{eqnarray}
SG_f =
\{ (x, t): t < f(x) \} .
\label{subgraph}
\end{eqnarray}
Now, ${\cal{F}}$ is called a VC class if and only if the collection of sets
$\{ SG_f \mid f \in {\cal{F}} \}$ has finite VC dimension as defined by \ref{VWVCdef}.
When ${\cal{F}}$ is a collection of indicator functions, it is straightforward to see that
${\cal{F}}$ is a VC class with ${\sf VCdim}({\cal{F}}) =k$ in the sense of
\ref{VWVCdef} if and only if ${\cal{F}}$  has VC dimension $k$ in the geometric sense in Subsec. \ref{geodef}.  Simply note that the
subgraphs from a collection of indicator functions is a class of sets that can generate a particular
${\cal{P}}({\cal{X}}_k)$ in the geometric sense if and only if it satisfies \ref{VWVCdef},
i.e., generates all the sets in the class ${\cal{P}}({\cal{X}}_k)$ by using $\Delta$ to pick them out.
More generally, results in \cite{Vaart:Wellner:1996} Sec. 2.6.5 suggest that in general
the definition of a VC class of functions (in terms of \ref{VWVCdef}) is equivalent to
the geometric definition by using the indicator functions in \ref{generalreal}.

Part of the reason \ref{VWVCdef} is regarded as combinatorial is from the following intuition
and result.
By definition, a VC class of functions ${\cal{F}}$ with ${\sf VCdim}({\cal{F}})=k$ identifies
fewer than $2^h$ subsets from any ${\cal{P}}({\cal{X}}_h)$ when $h >k$.  It is intuitive that
as $h$ increases the smaller the fraction of the possible sets that ${\cal{F}}$ can identify.
That is, if $h=k+\ell$ then even though ${\cal{F}}$ could identify $2^{h-\ell}$ sets the actual number
is polynomial as $h$ increases not exponential with a smaller exponent, i.e., the dropoff in how many sets
can be identified as $h$ increases is fast.  This
surprising fact is called Sauer's Lemma, see \cite{Sauer:1972}.   Let ${\cal{C}}$ be a class of functions
and define the `shatter' function
\begin{eqnarray}
s(n) = \max_{ \{ x_1,...,x_n \} } \Delta_n
({\cal{C}}, x_1,...,x_n) .
\nonumber
\label{shatterfn}
\end{eqnarray}
The shatter function is the largest number of subsets that ${\cal{C}}$ can identify from an
$n$-tuple of points.  It is naturally compared with $2^n$, the cardinality of the
power set from $n$ points.  If a set of size $n$ is
generated by ${\cal{C}}$ then ${\sf VCdim} ({\cal{C}}) \geq n$ and
$s(n) = 2^n$. If there is no such set then ${\sf VCdim}({\cal{C}}) < n$
and $s(n) < 2^n$.  Sauer's Lemma gives a polynomial bound on $s(n)$.  In particular,
Sauer's Lemma gives that if a collection
of sets has VC dimension bounded by $k$ then any set of $n$ elements can only be split
$n^k$ ways.  The proof requires
recognizing $s(n)$ can be represented as a sum of binomial coefficients to which
an inductive argument can be applied

\begin{thm}
 If ${\sf VCdim}({\cal{C}}) = k$, then
$s(n) \leq \left( \frac{en}{k} \right)^k$.
\end{thm}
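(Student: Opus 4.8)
The plan is to follow the two-stage route indicated just before the statement: first establish the exact combinatorial bound
\[
s(n) \;\le\; \sum_{i=0}^{k}\binom{n}{i},
\]
and then bound this sum of binomial coefficients by $(en/k)^{k}$ using only elementary estimates. Since $s(n)$ is a maximum over $n$-tuples, it suffices to fix an arbitrary point set $\mathcal{X}=\{x_1,\dots,x_n\}$, identify $\mathcal{C}$ with its family of traces $C\cap\mathcal{X}$, and bound the number of distinct traces $\Delta(\mathcal{C},x_1,\dots,x_n)$; taking the maximum over $\mathcal{X}$ at the end then gives the claim for $s(n)$.

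First I would prove the binomial bound by induction on $n$, with $k={\sf VCdim}(\mathcal{C})$ allowed to vary. The base cases are immediate: for $n=0$ only the empty trace occurs, and for $k=0$ no singleton is shattered so again there is a single trace, matching $\sum_{i=0}^{k}\binom{n}{i}=1$ in both cases. For the inductive step, set $\mathcal{X}'=\{x_1,\dots,x_{n-1}\}$ and introduce two families on $\mathcal{X}'$: let $\mathcal{C}_1$ be the traces of $\mathcal{C}$ on $\mathcal{X}'$, and let $\mathcal{C}_2$ consist of those $T\subseteq\mathcal{X}'$ for which both $T$ and $T\cup\{x_n\}$ are traces of $\mathcal{C}$ on $\mathcal{X}$. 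The key counting identity is
\[
\Delta(\mathcal{C},x_1,\dots,x_n) \;=\; |\mathcal{C}_1| + |\mathcal{C}_2|,
\]
which holds because each trace on $\mathcal{X}'$ lifts to exactly one or two traces on $\mathcal{X}$, the second copy appearing precisely when $T\in\mathcal{C}_2$.

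The two families then have controlled VC dimension. Any subset of $\mathcal{X}'$ shattered by $\mathcal{C}_1$ is shattered by $\mathcal{C}$, so ${\sf VCdim}(\mathcal{C}_1)\le k$ and the inductive hypothesis gives $|\mathcal{C}_1|\le\sum_{i=0}^{k}\binom{n-1}{i}$. If instead $\mathcal{C}_2$ shattered a set $S\subseteq\mathcal{X}'$, then for every $A\subseteq S$ the element $T\in\mathcal{C}_2$ realizing $A$ forces both $A$ and $A\cup\{x_n\}$ to be realizable on $S\cup\{x_n\}$, so $\mathcal{C}$ would shatter $S\cup\{x_n\}$; hence $|S|\le k-1$, i.e. ${\sf VCdim}(\mathcal{C}_2)\le k-1$ and $|\mathcal{C}_2|\le\sum_{i=0}^{k-1}\binom{n-1}{i}$. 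Adding the two estimates and applying Pascal's rule $\binom{n-1}{i}+\binom{n-1}{i-1}=\binom{n}{i}$ collapses the sum to $\sum_{i=0}^{k}\binom{n}{i}$, completing the induction; taking the maximum over $\mathcal{X}$ yields $s(n)\le\sum_{i=0}^{k}\binom{n}{i}$.

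For the analytic step, assume the regime of interest $n\ge k\ge 1$. Since $0<k/n\le 1$ we have $(k/n)^{k}\le (k/n)^{i}$ for all $i\le k$, so
\[
\Big(\tfrac{k}{n}\Big)^{k}\sum_{i=0}^{k}\binom{n}{i} \;\le\; \sum_{i=0}^{k}\binom{n}{i}\Big(\tfrac{k}{n}\Big)^{i} \;\le\; \sum_{i=0}^{n}\binom{n}{i}\Big(\tfrac{k}{n}\Big)^{i} \;=\; \Big(1+\tfrac{k}{n}\Big)^{n} \;\le\; e^{k},
\]
using $1+x\le e^{x}$ in the last step, and rearranging gives $\sum_{i=0}^{k}\binom{n}{i}\le(en/k)^{k}$, hence $s(n)\le(en/k)^{k}$. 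I expect the main obstacle to be the inductive step, specifically verifying that the auxiliary family $\mathcal{C}_2$ drops the VC dimension by exactly one: the counting identity and ${\sf VCdim}(\mathcal{C}_1)\le k$ are routine, but the shattering argument for $\mathcal{C}_2$, which converts a shattered $S\subseteq\mathcal{X}'$ into a shattered $S\cup\{x_n\}$ of one larger size, is where the hypothesis ${\sf VCdim}(\mathcal{C})=k$ is actually used and must be stated carefully. By contrast, the passage from the binomial sum to $(en/k)^{k}$ is purely computational.
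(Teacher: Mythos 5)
Your proof is correct, and it is the argument the paper only gestures at: the paper's own ``proof'' of this theorem is simply the citation ``See van der Vaart and Wellner (1996)'', although the sentence preceding the theorem does say the proof ``requires recognizing $s(n)$ can be represented as a sum of binomial coefficients to which an inductive argument can be applied'' --- which is precisely what you carried out. Both of your stages are sound. The trace decomposition $\Delta(\mathcal{C},x_1,\dots,x_n)=|\mathcal{C}_1|+|\mathcal{C}_2|$, with ${\sf VCdim}(\mathcal{C}_1)\le k$ because shattering lifts from traces to $\mathcal{C}$, and ${\sf VCdim}(\mathcal{C}_2)\le k-1$ because a set $S$ shattered by $\mathcal{C}_2$ forces $\mathcal{C}$ to shatter $S\cup\{x_n\}$, combined with Pascal's rule, is the standard Sauer--Shelah induction (on $n$, with the dimension parameter varying, as you note). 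The analytic step
\[
\left(\tfrac{k}{n}\right)^{k}\sum_{i=0}^{k}\binom{n}{i}\;\le\;\sum_{i=0}^{n}\binom{n}{i}\left(\tfrac{k}{n}\right)^{i}\;=\;\left(1+\tfrac{k}{n}\right)^{n}\;\le\;e^{k}
\]
correctly converts the binomial sum into $(en/k)^{k}$. One point you handled that the paper does not: the final inequality genuinely requires $n\ge k$. For $n<k$ one has $s(n)=2^{n}$ (any $n$-point subset of a shattered $k$-point set is itself shattered), and $2^{n}$ can exceed $(en/k)^{k}$ --- for instance $n=1$, $k=2$ gives $2>(e/2)^{2}\approx 1.85$ --- so your restriction to the regime $n\ge k\ge 1$ is not cosmetic but a necessary hypothesis that the paper's statement leaves implicit.
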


\begin{proof}
  See \cite{Vaart:Wellner:1996}.
\end{proof}

This sort of sudden qualitative change in $s$ will be seen in other quantities
such as the growth function, see Sec. \ref{CoverNumberEntropy}.

\section{VC dimension in statistics}
\label{VCdimStat}
In section \ref{VCdim} we gave two definitions of VC dimension and we went through some examples of VCD where we derive the VCD of various sets of functions. In this section, we will formally prove the VCD of some collection of functions. In fact,we will derive the VC dimension of the class of linear functions.

\subsection{VCD for Regression}
\label{VCdimReg}
The VCD is a quantity that is defined for a collection of functions. It is the largest number of points the collection of functions can separate in all possible ways or equivalently the largest $n$ for which the growth function is $2^{n}$. Note that we really mean the largest number of points the collection of functions can separate and all possible ways; this does not imply that every set of $n$ points can be separate in possible ways by our collection of functions. Our goal here is to prove the following theorem:
\begin{thm}
\label{VCDLinear}
 Let $ \mathcal{R}^{d} = \mathcal{X}$, $\mathcal{S}= \left\{x_{1},x_{2},\cdots,x_{n}\right\}\subseteq \mathcal{R}^{d}$ and let

 $$
 \mathcal{F}=\left\{\left(w,\beta\right): f(x)=sign(wx-\beta)\right\}
 $$
 where $w\in\mathcal{R}^{d}$ and $\beta\in\mathcal{R}$. Then a set $\mathcal{S}=\left\{x_{1},x_{2},\cdots,x_{n}\right\}\subseteq \mathcal{R}^{d}$ is generated by $\mathcal{F}$ if and only if the set $\left\{(x_{1}^{'}-1),(x_{2}^{'}-1),\cdots,(x_{n}^{'}-1)\right\}$ is linearly independent in $\mathcal{R}^{d+1}$. It follows that VCdim($\mathcal{F}$)=$d+1$
\end{thm}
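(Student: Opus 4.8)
The plan is to reduce the affine classifiers on $\mathbb{R}^d$ to homogeneous linear classifiers on $\mathbb{R}^{d+1}$ by augmenting each data point. Writing $\tilde{x}_i = (x_i, -1) \in \mathbb{R}^{d+1}$ (this is the vector $(x_i', -1)$ appearing in the statement) and $\tilde{w} = (w, \beta) \in \mathbb{R}^{d+1}$, one has $wx_i - \beta = \langle \tilde{w}, \tilde{x}_i\rangle$, so $f(x_i) = \mathrm{sign}(\langle \tilde{w}, \tilde{x}_i\rangle)$; and as $(w,\beta)$ ranges over $\mathbb{R}^d \times \mathbb{R}$, the lifted weight $\tilde{w}$ ranges over all of $\mathbb{R}^{d+1}$. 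Shattering $\mathcal{S}$ then amounts to asking whether, for every sign pattern $y \in \{-1,+1\}^n$, there is a $\tilde{w}$ with $\mathrm{sign}(\langle \tilde{w}, \tilde{x}_i\rangle) = y_i$ for all $i$. I will prove the stated equivalence — shattering holds iff $\{\tilde{x}_1, \ldots, \tilde{x}_n\}$ is linearly independent in $\mathbb{R}^{d+1}$ — and then read off the VC dimension.

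For the direction linear independence $\Rightarrow$ shattering, I would study the evaluation map $L \colon \mathbb{R}^{d+1} \to \mathbb{R}^n$ given by $L(\tilde{w}) = (\langle \tilde{w}, \tilde{x}_i\rangle)_{i=1}^n$. Its adjoint sends $e_i$ to $\tilde{x}_i$, so $L$ is surjective precisely when the $\tilde{x}_i$ are linearly independent. Surjectivity lets me hit any target vector in $\mathbb{R}^n$ exactly, in particular any $y \in \{-1,+1\}^n$; since the prescribed values are nonzero, the signs come out right and every dichotomy is realized.

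For the converse I would argue the contrapositive via a Radon-type sign argument. Given a nontrivial dependence $\sum_i a_i \tilde{x}_i = 0$, split the indices into $P = \{i : a_i > 0\}$ and $N = \{i : a_i < 0\}$ (flipping all signs first, if needed, so that $P \neq \emptyset$), which yields $\sum_{i \in P} a_i \tilde{x}_i = \sum_{i \in N} |a_i| \tilde{x}_i$. Now take the labeling $y_i = +1$ on $P$ and $y_i = -1$ off $P$. If some $\tilde{w}$ realized it, pairing both sides with $\tilde{w}$ would force $\sum_{i \in P} a_i \langle \tilde{w}, \tilde{x}_i\rangle = \sum_{i \in N} |a_i| \langle \tilde{w}, \tilde{x}_i\rangle$, whose left-hand side is strictly positive while its right-hand side is $\leq 0$ — a contradiction. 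Hence no linearly dependent set can be shattered.

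Finally, to conclude $\mathrm{VCdim}(\mathcal{F}) = d+1$, I would exhibit one shattered set of size $d+1$ — for instance the points $0, e_1, \ldots, e_d$, whose augmented vectors are readily checked to be linearly independent in $\mathbb{R}^{d+1}$ — to get $\mathrm{VCdim}(\mathcal{F}) \geq d+1$, and then observe that any $d+2$ points give $d+2$ augmented vectors in $\mathbb{R}^{d+1}$, necessarily dependent and hence not shatterable, so $\mathrm{VCdim}(\mathcal{F}) \leq d+1$. The step I expect to be the main obstacle is making the converse airtight against the boundary convention of $\mathrm{sign}$ at $0$: I would either adopt strict inequalities for realizing a dichotomy (justifying that the boundary can be avoided by an arbitrarily small perturbation of $\tilde{w}$) or verify directly that the inequalities in the Radon argument still clash even when some of the inner products vanish.
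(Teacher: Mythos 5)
Your proposal is correct and follows essentially the same route as the paper's proof: both lift the affine classifiers to homogeneous ones via the augmented vectors $(x_i',-1)\in\mathbb{R}^{d+1}$, prove that linear independence gives shattering by solving the evaluation system exactly (the paper via the row rank of the $n\times(d+1)$ matrix, you via surjectivity of the evaluation map), and prove the converse by extracting a sign contradiction from a nontrivial linear dependence — your Radon-type split into positive and negative coefficients is just a symmetric rephrasing of the paper's step of writing $(x_n',-1)$ as a combination of the other augmented vectors and labelling point $j$ by $\mathrm{sign}(\alpha_j)$. The $\mathrm{sign}(0)$ boundary issue you flag at the end is also present, and left implicit, in the paper's own argument (which plays the weak inequality $\alpha_j(w_ix_j-\beta_i)\geq 0$ against strict negativity at $x_n$), so your proposed fixes would, if anything, make the argument slightly more careful than the original.
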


\begin{proof}
  The proof of this Theorem will follow the same stream of reasoning as the proof of ${\sf Theorem:~3.4}$ in \cite{anthony2009neural}. We will prove this theorem in two steps. Let's assume that if $\mathcal{S}$ is generated by $\mathcal{F}$, then $\left\{(x_{1}^{'}-1),(x_{2}^{'}-1),\cdots,(x_{n}^{'}-1)\right\}$ is linearly independent in $\mathcal{R}^{d+1}$. We will prove this by contradiction, that's we assume that $\mathcal{S}$ is generated by $\mathcal{F}$ but $\left\{(x_{1}^{'}-1),(x_{2}^{'}-1),\cdots,(x_{n}^{'}-1)\right\}$ is linearly dependent in $\mathcal{R}^{d+1}$. Let $V=\left(v_{1},v_{2},\cdots,v_{2^{n}}\right) = \left(\begin{array}{cccc}
                                                                                               w_{1} & w_{2} & \cdots & w_{2^{n}} \\
                                                                                               \beta_{1} & \beta_{2} & \cdots & \beta_{2^{n}}
                                                                                             \end{array}
  \right)$ be the total number of partitions. $\mathcal{S}$ is generated by $\mathcal{F}$ means that $\forall ~ v_{i}\in V, \exists ~(w_{i},\beta_{i})\in\mathcal{F}$ such that
  $$
  \left(
    \begin{array}{cc}
      x_{1}^{'} & -1 \\
    x_{2}^{'} & -1 \\
    \vdots & \vdots \\
    x_{n}^{'} & -1
    \end{array}
  \right)\left(
           \begin{array}{c}
             w_{i} \\
             \beta_{i} \\
           \end{array}
         \right)=v_{i}.
  $$
  Since $\left\{(x_{1}^{'}-1),(x_{2}^{'}-1),\cdots,(x_{n}^{'}-1)\right\}$ are linearly dependent, one can expresses one element of this vector as the linear combination of the others, that is
  $$
  \left(
    \begin{array}{cc}
      x_{n}^{'} & -1 \\
    \end{array}
  \right)=\sum_{k=1}^{n-1}\alpha_{i}\left(
                                     \begin{array}{cc}
                                       x_{k}^{'} & -1 \\
                                     \end{array}
                                   \right), \quad \hbox{where at least one $\alpha_{k}\neq 0$}.
  $$
  Using this, the $n^{th}$ coordinate of any dichotomy $v_{i}$ can be expressed as follows
\begin{eqnarray*}
y_{ni} &=& \left(
                 \begin{array}{cc}
                   x_{n}^{'} & -1 \\
                 \end{array}
               \right)\left(
                        \begin{array}{c}
                          w \\
                          \beta \\
                        \end{array}
                      \right) = \left(
           \begin{array}{cc}
             \sum_{i=1}^{n-1}\alpha_{i}x_{i}^{'} & -\sum_{i=1}^{n-1}\alpha_{i} \\
           \end{array}
         \right)\left(
                  \begin{array}{c}
                    w \\
                    \beta \\
                  \end{array}
                \right)\\
       &=& w\sum_{i=1}^{n-1}\alpha_{i}x_{i}^{'}-\beta\sum_{i=1}^{n-1}\alpha_{i} = \sum_{i=1}^{n-1}\left(\alpha_{i}x_{i}^{'}w-\beta\alpha_{i}\right) \\
      &=& \sum_{i=1}^{n-1}\alpha_{i}\left(wx_{i}^{'}-\beta\right) =\sum_{j=1}^{n-1}\alpha_{j}v_{ji}, \quad \hbox{where $v_{ji}=w_{i}x_{j}^{'}-\beta_{i}$}
\end{eqnarray*}
  Let's assume that $x_{j}$ with $\alpha_{j}\neq 0$ and let $y_{ji}=sign(\alpha_{j})$ for all $1\leq j \leq n-1$. Let $x_{j}$ and let $y_{ji}=-1$, we will prove that this dichotomy is not identifiable. $y_{ji}=sign(w_{i}x_{j}-\beta_{i})=sign(\alpha_{j})\Rightarrow \alpha_{j}\left(w_{i}x_{j}-\beta_{i}\right)\geq0 \Rightarrow \sum_{j=1}^{n-1}\alpha_{j}(w_{i}x_{j}-\beta_{i})\geq0\Rightarrow y_{ji}\geq0$, which is a contradiction since $y_{ji}=-1$. Then it follows that $VCD(\mathcal{F})\leq d+1$.

  For the second part, we assume that $\left\{(x_{1}^{'}-1),(x_{2}^{'}-1),\cdots,(x_{n}^{'}-1)\right\}$ is linearly independent and prove that $\mathcal{S}$ can be generated by $\mathcal{F}$. Because of the linear independence assumption, the matrix
  $$
  \left(
    \begin{array}{cc}
      x_{1}^{'} & -1 \\
         x_{2}^{'} & -1 \\
         \vdots & \vdots \\
         x_{n}^{'} & -1 \\
    \end{array}
  \right)
  $$
  has row-rank $n$, therefore, for any $v\in V$, there is a unique solution $(w,\beta)$ to
  the following equation
  \begin{equation*}
    \left(
       \begin{array}{cc}
         x_{1}^{'} & -1 \\
         x_{2}^{'} & -1 \\
         \vdots & \vdots \\
         x_{n}^{'} & -1 \\
       \end{array}
     \right)\left(
              \begin{array}{c}
                w \\
                \beta \\
              \end{array}
            \right)=v.
  \end{equation*}
  Thus it follows that $\mathcal{S}$ can be generated by $\mathcal{F}$, this implies that ${\sf VCD(\mathcal{F})}\geq d+1$. These two results together imply that ${\sf VCD(\mathcal{F})}=d+1$.
\end{proof}

We comment that the VCD for the regression function in a recursive partitioning model, i.e., a tree,
can be readily worked out as a function of the number of splits in the tree.  If there are, say, $K$
splits, it is easy to verify that the VCD is $k+1$.  Analogous results for neural networks -- even single
hidden layer neural networks -- are not so easy to derive.  Nevertheless, in the 1990's there were
many efforts, chiefly by computer scientists, to evaluate the VCD of classes of neural networks particularly in a classification context.
While many of these results were helpful in general terms, it is not clear how they can be
used in our present context
of regression because often only bounds or asymptotic expressions for
the VCD in terms of the number of hidden nodes were obtained.

\section{Covering Numbers, Entropy, Growth Function and VC-dimension}
\label{CoverNumberEntropy}
The goal is to assign a class of functions ${\cal{F}}$ a measure
that generalizes the concept of real dimension.  This will necessitate defining
covering numbers, their finite sample analogs called random entropy
numbers, and then generalizations of entropy numbers called the
annealed entropy and the growth function. The VCD will
emerge from a key shape property of the growth function. This
section is largely a revamping of parts of Vapnik (\cite{Vapnik:1998}, Chap.3).

To start, recall the simplest measure
of the size of a set is its cardinality, but since ${\cal{F}}$ is
typically uncountable that's no help.  What is a help is the concept
of a covering number since it is also a measure of size.  Since
${\cal{F}}$ is a subset of a normed space with norm $\| \cdot \|$,
given an $\epsilon$ one can measure ${\cal{F}}$'s size by the
minimal number of balls of radius $\epsilon$ required to cover
${\cal{F}}$.  This is called the covering number and is written $N(\epsilon,
{\cal{F}}, \| \cdot\|)$.   The $\epsilon$-entropy of ${\cal{F}}$, or
the Boltzmann entropy,  is $\ln  N(\epsilon, {\cal{F}}, \|
\cdot\|)$, its natural log. The covering number of a class of indicator functions ${\cal{F}}$ is
related to the number of subsets that can be picked out from
$\{ {x}_1,..., {x}_n\}$ by the functions in ${\cal{F}}$.
Indeed, write ${\cal{F}} = \{ 1_C : C \in {\cal{C}} \}$ so that
${\cal{C}}$ is the class of sets whose indicator functions are in
${\cal{F}}$.  Now, a complicated argument based on Sauer's Lemma
gives a bound on the covering number for ${\cal{F}}$.  The result is
that there is a universal constant $K$ so that for $1 > \epsilon >
0$ and $r \geq 1$, if $h$ is the (geometric) VCD of
${\cal{C}}$ then
$$
N(\epsilon, {\cal{F}}, L_r(\mu) ) =   N(\epsilon, {\cal{C}},
L_r(\mu) ) \leq \frac{K h (4 e)^h}{\epsilon ^{r(h-1)}}
$$
where $L_r$ is the Lebesgue space with $r$-th power norm with
respect to $\mu$, see \cite{Vaart:Wellner:1996} (Chap. 2.6).

The more typical definition of the entropy, also called the Shannon
entropy, is minus the expected log of the density; this is the
expectation of  a sort of random entropy.  The log density plays a
role similar to the Boltzmann entropy because both represent a
codelength.

While the relationships among the $\epsilon$-entropy of ${\cal{F}}$,
the VCD of ${\cal{F}}$, and other quantities are deep, all
of these are population quantities.  So, given a finite
sample, it is important to have an analog of covering number. This
is provided by counting the number of sets a sample of size $n$ can
distinguish.  Define two events as distinguishable using a sample
$z_1$,..., $z_n$, if and only if there is at least one $z_i$ that
belongs to one event and not the other.  It is immediate that not
all sets in a large class ${\cal{C}}$ are distinguishable given a
sample; different samples can have different collections of
distinguishable sets; and the number of distinguishable sets can
depend on the sample as well.

The next task is to identify which sets it is important for a sample
to be able to distinguish.  For a set of indicator functions $Q(z,
\alpha)$ with support sets $C_\alpha$ and $\alpha \in \Lambda$,
consider the vector of length $n$
$$
q(\alpha) = (Q(z_1, \alpha), ..., Q(z_n, \alpha) ). 
$$
For fixed $\alpha$, $q(\alpha)$ is a vertex of the unit cube in
$\mathbb{R}^n$.  As $\alpha$ ranges over $\Lambda$, $q(\alpha)$ hops from
vertex to vertex.  Let $N(\Lambda, z_1,...,z_n)$ be the number of
vertices $q(\alpha)$ lands on. Then, $N(\Lambda, z_1,...,z_n)$ is
the number of sets the sample $z_1$,...,$z_n$ can distinguish.

Let the $Z_i$ be IID.  Then, $N(\Lambda, Z_1,...,Z_n)$ is a
random variable bounded by $2^n$ for given $n$.  The random entropy
of the set ${\cal{F}}$ of indicator functions $Q(\cdot, \alpha)$ for
$\alpha \in \Lambda$ is $\ln N(\Lambda, z_1,...,z_n)$ and the entropy
of ${\cal{F}}$ is
$$
H(\Lambda, n) = \int  \ln N( \Lambda, z_1,...,z_n) d F(z_1,...,z_n). 
$$

The entropy of real valued functions in general is similar to the
indicator function case, but uses the concept of a covering number
more delicately.  First consider a set of bounded real valued
functions ${\cal{F}}_C$ defined by
$$
\forall\alpha\in\Lambda ~~ | Q(z, \alpha)  | \leq C .
$$
Again $q(\alpha, z_1,...,z_n)$ can be used to generate a region
${\cal{R}}$  in the $n$ dimensional cube of side length $2C$.  The
random covering number of ${\cal{R}}$ for a given $\epsilon$ is
$N(\Lambda, \epsilon, z_1,...,z_n)$ giving $\ln N(\Lambda, \epsilon,
z_1,...,z_n)$ as the random $\epsilon$-entropy.  (To define the
covering number, the norm must be specified as well as the
$\epsilon$ and region to be covered.  In this case, the natural
choice is the metric $\rho(q(\alpha), q(\alpha^\prime) )=
\max_{i=1,...,n} | Q(z_i, \alpha ) - Q(z_i, \alpha^\prime)|$.) The
$\epsilon$-entropy (not random) for ${\cal{F}}_C$ is
$$
H(\Lambda, \epsilon, n) = \int H(\Lambda, \epsilon, z_1,...,z_n) d
F(z_1,...,z_n) . 
$$

Equipped with the entropy, it is easy to specify two closely related
quantities that will arise in the bounds to be established that
involve the VCD. First, the annealed entropy is
$$
H_{ann}(\Lambda, n) = \ln E N(\Lambda, Z_1,...,Z_n) 
$$
and, second, the growth function  is
$$
G(\Lambda, n) = \ln \hbox{sup}_{z_1,...,z_n} N(\Lambda,
z_1,...,z_n). 
$$
Clearly, $H(\Lambda, n) \leq H_{ann}(\Lambda, n) \leq G(\Lambda,
n)$, the result of Jensen's inequality on the entropy $H(\Lambda,
n)$ and taking the supremum of the integrand.

The growth function does not depend on the probability measure; it
is purely data driven.

The key theorem linking VCD to a collection of indicator
functions is the following.  It provides an interpretation for $h$
separate from its definition in terms of shattering.
Although $h$ arises in the argument by using the geometric
definition of shattering, in principle, there is no reason not to
use the result of this Theorem as a third definition for $h$. Let
$C(n,i)$ be the number of combinations of $i$ items from $n$ items.

\begin{thm}
 \cite{Vapnik:1998}The growth function $G(\Lambda , n)$
for a set of indicator functions $Q(z,\alpha)$, $\alpha \in \Lambda$
is linear-logarithmic in the sense that
$$
G(\Lambda,n)=
\begin{cases}
= n \ln 2 & \text{if $n \leq h$;}\\
\leq \ln (\sum_{i=0}^h C(n, i) \leq  h \ln (e n /h) = h(1 + \ln(n/h)) & \text{if
$n>h$.}
\end{cases}
$$

{\bf Comment:}   If $h = \infty$ then the second case in the bound
never applies so $G(\Lambda, n) =n \ln 2$. The essence of the result
is that if a set of functions has finite VCD, then its
growth function is initially linear, meaning that the amount of
learning per datum accumulated is proportional to the sample size,
up to the VCD, after which the learning per datum drops off
to a log rate.
\end{thm}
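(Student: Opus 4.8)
The plan is to observe that $G(\Lambda, n) = \ln N^*(n)$, where $N^*(n) = \sup_{z_1, \ldots, z_n} N(\Lambda, z_1, \ldots, z_n)$ is exactly the shatter function $s(n)$ discussed after Sauer's Lemma: it counts the largest number of dichotomies the indicator class can produce on $n$ points. The entire theorem therefore reduces to bounding $N^*(n)$ and taking logarithms, with the two cases of the statement corresponding to $n \leq h$ and $n > h$.

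First I would dispose of the case $n \leq h$. By the definition of VCD there is a set of $h$ points that is generated (shattered) by $\Lambda$, and every subset of a shattered set is itself shattered -- a subset $A$ of the smaller set is picked out by the same $C_\alpha$ that picks out the corresponding subset of the full set. Hence any $n \leq h$ of those points are shattered, so $N^*(n) = 2^n$ and $G(\Lambda, n) = n \ln 2$; equality holds because $N^*(n)$ can never exceed $2^n$.

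The substantive case is $n > h$, where I would establish Sauer's Lemma in the sharp form $N^*(n) \leq \Phi_h(n) := \sum_{i=0}^h C(n,i)$ by double induction on $n$ and $h$. Fixing points $\{x_1, \ldots, x_n\}$, let $\mathcal{C}$ denote the family of subsets they pick out, and project each trace onto $\{x_1, \ldots, x_{n-1}\}$ by deleting $x_n$. This splits the count into two derived families on the first $n-1$ points: $\mathcal{C}_1$, the traces that appear after projection, and $\mathcal{C}_2$, those traces $T$ for which both $T$ and $T \cup \{x_n\}$ occur in $\mathcal{C}$. Since the fiber over each $T \in \mathcal{C}_1$ has size one or two, one gets the identity $|\mathcal{C}| = |\mathcal{C}_1| + |\mathcal{C}_2|$. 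Now $\mathcal{C}_1$ still has VCD at most $h$, so the induction on $n$ gives $|\mathcal{C}_1| \leq \Phi_h(n-1)$; and $\mathcal{C}_2$ has VCD at most $h-1$, because a set of size $h$ shattered by $\mathcal{C}_2$ would force $\mathcal{C}$ to shatter that set together with $x_n$, a set of size $h+1$, contradicting that the VCD equals $h$. Hence $|\mathcal{C}_2| \leq \Phi_{h-1}(n-1)$, and Pascal's identity $\Phi_h(n-1) + \Phi_{h-1}(n-1) = \Phi_h(n)$ closes the induction.

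Finally I would convert the combinatorial bound into the stated closed form. For $n > h$ we have $n/h > 1$, so each term satisfies $C(n,i) \leq C(n,i)(n/h)^{h-i}$, whence $\Phi_h(n) \leq (n/h)^h \sum_{i=0}^n C(n,i)(h/n)^i = (n/h)^h (1 + h/n)^n \leq (n/h)^h e^h = (en/h)^h$, using $(1 + h/n)^n \leq e^h$. Taking logarithms yields $G(\Lambda, n) = \ln N^*(n) \leq \ln \Phi_h(n) \leq h \ln(en/h) = h(1 + \ln(n/h))$, which is the claim, and this last estimate also reproduces the earlier Sauer's-Lemma bound $s(n) \leq (en/h)^h$. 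The main obstacle is the Sauer's Lemma induction itself -- in particular verifying the counting identity $|\mathcal{C}| = |\mathcal{C}_1| + |\mathcal{C}_2|$ and the drop in VC dimension for $\mathcal{C}_2$; the case $n \leq h$ and the closing binomial estimate are both routine.
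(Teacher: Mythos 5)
Your proof is correct, but there is an important point of comparison to make: the paper never actually proves this theorem. It is stated with a citation to Vapnik (1998) and no proof environment at all, in the same way that the paper's earlier Sauer's Lemma bound $s(n) \leq (en/k)^k$ is dispatched with ``See van der Vaart and Wellner (1996).'' The most the paper offers is the remark that ``the proof requires recognizing $s(n)$ can be represented as a sum of binomial coefficients to which an inductive argument can be applied,'' and what you have written is precisely that argument, carried out in full. Your three steps all check out: (i) for $n \leq h$, the observation that every subset of a shattered set is shattered gives $N^*(n) = 2^n$ and hence equality $G(\Lambda, n) = n \ln 2$; (ii) the Sauer--Shelah induction is sound --- the fiber-counting identity $|\mathcal{C}| = |\mathcal{C}_1| + |\mathcal{C}_2|$ holds because each projected trace has fiber size one or two, the dimension drop for $\mathcal{C}_2$ is argued correctly (a set of size $h$ shattered by $\mathcal{C}_2$ forces $\mathcal{C}$ to shatter it together with $x_n$), and Pascal's identity closes the induction; (iii) the weighting trick $(n/h)^{h-i} \geq 1$ for $i \leq h$ combined with $\bigl(1 + h/n\bigr)^n \leq e^h$ converts $\sum_{i=0}^{h} C(n,i)$ into $(en/h)^h$, which simultaneously reproves the paper's cited Sauer's Lemma. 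The only gaps are the routine induction base cases (e.g. $h = 0$ or $n = 1$), which you may state in one line. What your approach buys is a self-contained, elementary proof living entirely inside the combinatorial framework (traces, shattering, the shatter function) that the paper sets up but then abandons to a citation; nothing in the paper's treatment is lost by substituting your argument for the reference.
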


\section{Review of \cite{Vapnik:etal:1994} and \cite{McDonald:etal:2011}}
\label{ReviewVapetal}

In this section we will present the work of Vapnik et al \cite{Vapnik:etal:1994} that leads to the empirical estimation of VCD. We will also present the work of \cite{McDonald:etal:2011} who tried to argue that using the procedure of \cite{Vapnik:etal:1994}, the estimated $\hat{h}$ will concentrate around the true value.

Let's first set the stage and notation that Vapnik used in his 1994 paper.
$f(x,\alpha), \alpha\in \Lambda$ is the set of binary classifiers.
The $n$ sample pairs $Z = (x,y)$ where $X\in\mathcal{R}^{p}$, $Y \in \{0,1\} $ is
$
\left(Z_{1},Z_{1},\cdots,Z_{n}\right).
$
The probability of error between $y$ and $f(x,\alpha)$ is

$$
p(\alpha) = E\left|Y - f(X,\alpha)\right|.
$$
The expectation is taken with respect to the joint distribution between $x$, and $y$ $(f(x,y))$. Since $f(x,y)$ is unknown, we use the empirical error
$$
\nu(\alpha) = \frac{1}{n}\sum_{i=1}^{n}\left|y_{i} - f(x_{i},\alpha)\right|.
$$
In Vapnik \cite{Vapnik:1991}, it was shown that a function that minimizes the empirical risk will be consistent if and only if the following one sided uniform convergence holds
$$
\lim_{n \rightarrow \infty}P\left(\sup_{\alpha\in\Lambda}\left|p(\alpha)-\nu(\alpha)\right|\geq\epsilon\right)=0.
$$
In the 1930's Kolmogorov and Smirnov found the law of distribution of the maximum deviation between a distribution function and the empirical function for any random variable. This can be stated as follows: for the collection of functions $f(x,\alpha),~\hbox{where $\alpha\in\Lambda$}$ the equality
$$
P\left(\sup_{\alpha\in\Lambda}\left(p(\alpha)-\nu(\alpha)\right)\geq\epsilon\right) = \exp\left(-2\epsilon^{2}n-2\sum_{j=2}^{\infty}(-1)^{j}\exp\left(-2\epsilon^{2}j^{2}n\right)\right)
$$
holds for large $n$ where $p(\alpha)= Ef(x,\alpha)$ and $\nu(\alpha) = \frac{1}{n}\sum_{i=1}^{n}f(x_{i},\alpha)$.

In Vapnik \cite{Vapnik:1998}, it was shown that
$$
P\left(\sup_{\alpha\in\Lambda}\left(p(\alpha)-\nu(\alpha)\right)\geq\epsilon\right) \le \min \left(1,\exp\left\{\left(c_{1}\frac{\ln\left(\frac{2n}{h}\right)+1}{\frac{n}{h}}-c_{2}\epsilon^{2}\right)n\right\}\right)
$$
As before, this inequality does not depend on the probability distribution.
Using these results, Vapnik et al. in \cite{Vapnik:etal:1994} showed that the expected maximum difference between two empirical losses can be bounded by a function which depends only on the VCD of the classifier. For comparison purposes we present this argument next.

Let $Z^{1} = Z_{1},Z_{2},\cdots,Z_{n}$ and $Z^{2}=\left(Z_{n+1},Z_{n+2},\cdots,Z_{2n}\right)$ be two independent identically distributed copies of $Z$. Let $\nu_{1}(Z^{1},\alpha)$ and $\nu_{2}(Z^{2},\alpha)$ be the empirical risk using $Z^{1}$ and $Z^{2}$. Vapnik et al \cite{Vapnik:etal:1994}. claims the following bound on the expectation of the maximum difference between $~\nu_{1}(z^{1},\alpha)~\hbox{and}~\nu_{2}(z^{2},\alpha)$,

\begin{equation}\label{VapnikBounds}
  \xi(n) = E\left(\sup_{\alpha\in\Lambda}\left(\nu_{1}(z^{1},\alpha)-\nu_{2}(z^{2},\alpha)\right)\right)\leq\left\{
                                    \begin{array}{ll}
                                      1, & \hbox{if $\frac{n}{2}\leq 0.5$;} \\  c_{1}\frac{\ln\left(\frac{2n}{h}\right)+1}{\frac{n}{h}}, & \hbox{if $\frac{n}{h}$ is small;} \\ c_{2}\sqrt{\frac{\ln\left(\frac{2n}{h}\right)+1}{\frac{n}{h}}}, & \hbox{if $\frac{n}{h}$ is large.}
                                                                        \end{array}                                                                     \right.
\end{equation}
Equation \eqref{VapnikBounds} can be bounded by
\begin{equation}\label{phi}
  \Phi_{h}(n) = \left\{
                  \begin{array}{ll}
                    1, & \hbox{if $\frac{n}{h}\leq 0.5$;} \\
                    a\frac{\ln\left(\frac{2n}{h}\right)+1}{\frac{n}{h}-k}\left(\sqrt{1 + \frac{b\left(\frac{n}{h}-k\right)}{\ln\left(\frac{2n}{h}\right)+1}}+1\right), & \hbox{otherwise.}
                  \end{array}
                \right.
\end{equation}
The constants $a = 0.16$ and $b=1.2$ are claimed to be universal and given in \cite{Vapnik:etal:1994}. These constants can be used for estimation of VCD for other classes of functions; they represent the trade off error between small and large $n/h$.
The constant $k = 0.14927$ was chosen so that $\Phi_{h}(0.5) =1$, although it is unclear what this means. If the LHS of \eqref{VapnikBounds} were known, we would just solve for $h$ and get an estimate $\hat{h}$. Because we do not have that information, we estimate $\hat{\xi}(n)$ numerically. The algorithm proposed for estimation $h$ in \cite{Vapnik:etal:1994} is as follows.

\begin{algorithm}[ht]
Result: Obtain $\widehat{\xi}(n_{j})$'s

Given:
\begin{itemize}
    \item A collection of classifiers $\mathcal{F}$;
    \item A set of design points $N_{L} = \left\{n_{1},n_{2},\cdots,n_{l}\right\}$;
    \item An integer $W$ for the number of bootstrap samples;
\end{itemize}
$\bf for ~ j= 1,2,\cdots, l$
\begin{enumerate}
    \item Generate a random sample of size $2n_{j}$ $Z^{2n} = \left(Z_{1},Z_{2},\cdots,Z_{2n_{j}}\right)$;
    \item Divide the random sample into two groups; $Z^{1}$ and $Z_{2}$;
    \item Flip the class labels for the second set: $Z^{2}$;
    \item Merge the two sets to train the binary classifier;
    \item Separate the sets and flip the labels on the second set back again;
    \item Calculate the training error of the estimated classifier $\widehat{f}$ respectively on $Z^{1}$ and $Z^{2}$ with the correct labels;
    \item Calculate
    $$
    \widehat{\xi}_{i}(n_{j}) = \left|\nu_{1}(Z^{1},\alpha)-\nu_{2}(Z^{2},\alpha)\right|
    $$
    \item Repeat steps 1-7 $W$ times
    \item Calculate
    $$
    \widehat{\xi}(n_{j}) = \frac{1}{W}\sum_{i=1}^{W}\widehat{\xi}_{i}(n_{j})
    $$
\end{enumerate}
$\bf \hbox{End for}$

 \caption{Generation of the  $\widehat{\xi}(n_{j})$'s using the Vapnik et al. \cite{Vapnik:etal:1994} algorithm.}
 \label{VapnikAlgorithm}
\end{algorithm}

Having enough values of $\widehat{\xi}(n_{j})$ we can use nonlinear regression to estimate $h$ via
$$
\widehat{\xi}(n) = \Phi_{h}(n) + \epsilon(n),
$$
where $\epsilon(n)$ has mean zero and an unknown distribution. Now we estimate $\hat{h}$ by
$$
\hat{h} = \min_{h}\sum_{l=1}^{|N_{L}|}\left(\hat{\xi}(n_{l})-\Phi_{h}(n)\right)^2.
$$
One of the problems with this method is that the derivation of $\Phi_{h}(n)$ uses a conditioning argument in which the conditioning set has probability going to zero so the derivation of $\Phi_{h}(n)$ is in question. Moreover, in our work with algorithm \ref{VapnikAlgorithm} (not shown here) we have found it to be unstable.

\cite{McDonald:etal:2011} tried to  prove the algorithm in \cite{Vapnik:etal:1994} for estimating $h$ was consistent.
Their main result was the following
\begin{thm}
\label{Theo1.3McDonal}
Let $\delta \geq \frac{4}{\sqrt{2Wk}}\max\left(24c_{1}, 29 \right)$ and suppose that $h\leq M$. Then $~\exists c_{1}, c_{2}, c_{3}$ and $ W, k$ so that
\begin{equation*}
    P\left(\left|\hat{h}-h\right|\geq\delta\right)\leq 13\exp\left(-\frac{Wkc_{2}\delta^{2}}{16c_{3}}\right).
\end{equation*}
\end{thm}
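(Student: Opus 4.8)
The plan is to read $\hat h$ as a nonlinear least-squares $M$-estimator and run the standard ``basic inequality plus curvature'' argument, fed by a Hoeffding-type concentration bound for the bootstrap-averaged losses. Write $S_W(h') = \sum_{l=1}^{L}(\hat\xi(n_l) - \Phi_{h'}(n_l))^2$ for the empirical objective, with $L=|N_L|$, so that $\hat h = \arg\min_{h'} S_W(h')$, and I would adopt the modelling assumption implicit in the posited regression $\hat\xi(n)=\Phi_h(n)+\epsilon(n)$, namely that at the true value one has $\xi(n_l):=E\,\hat\xi(n_l)=\Phi_h(n_l)$ for every design point. Put $a_l=\hat\xi(n_l)-\xi(n_l)$ (the centred noise) and $b_l=\Phi_h(n_l)-\Phi_{\hat h}(n_l)$, so that $\hat\xi(n_l)-\Phi_h(n_l)=a_l$ and $\hat\xi(n_l)-\Phi_{\hat h}(n_l)=a_l+b_l$.

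First I would handle the only stochastic ingredient. Each replicate $\hat\xi_i(n_l)$ is a difference of empirical $0$--$1$ risks, hence lies in a bounded interval, and $\hat\xi(n_l)$ is the mean of $W$ independent such replicates; Hoeffding then gives $P(|a_l|\ge t)\le 2\exp(-2Wt^2)$, and a union bound over the $L$ design points gives $P(\max_l|a_l|\ge t)\le 2L\exp(-2Wt^2)$. Next, the optimality $S_W(\hat h)\le S_W(h)$ expands, using the two identities above, to $\sum_l b_l^2 \le -2\sum_l a_l b_l \le 2\sum_l|a_l||b_l|$; one application of Cauchy--Schwarz to the cross term collapses this to $\sum_l b_l^2 \le 4\sum_l a_l^2$. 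At this point the whole event $\{|\hat h-h|\ge\delta\}$ has been transferred onto the deterministic discrepancy $\sum_l(\Phi_h(n_l)-\Phi_{\hat h}(n_l))^2$, controlled by the noise.

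The deterministic half is a curvature (well-separated-minimum) bound: I would establish $\frac1L\sum_{l=1}^{L}(\Phi_h(n_l)-\Phi_{h'}(n_l))^2 \ge \kappa\,(h'-h)^2$ for all $h'$ in the compact range allowed by $h\le M$, where $\kappa>0$ is governed by $\partial\Phi_h/\partial h$ on the large-$n/h$ branch of $\Phi_h$ and by the spread of $N_L$. Combined with the previous step this gives $\kappa(\hat h-h)^2 \le \frac4L\sum_l a_l^2 \le 4\max_l a_l^2$, so $|\hat h-h|\le 2\kappa^{-1/2}\max_l|a_l|$; the factor $L$ cancels because signal and noise both aggregate over the design. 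Choosing $t=\tfrac12\sqrt{\kappa}\,\delta$ then yields $P(|\hat h-h|\ge\delta)\le 2L\exp(-\tfrac12 W\kappa\delta^2)$, which is exactly the stated form once $\kappa$ and the design and per-replicate sample sizes are absorbed into $c_2,c_3,k$, with the numerical prefactor $13$ emerging from carrying out the concentration step (or a peeling refinement of it) more carefully than a crude union bound. The hypothesis $\delta\ge \frac{4}{\sqrt{2Wk}}\max(24c_1,29)$ is what forces $\hat h$ into the range where the curvature inequality is valid and where the large-$n/h$ branch is the operative one, so that boundary regimes of $\Phi_h$ do not spoil the bound.

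The hard part will be the curvature inequality of the third step, and this is precisely where I expect the argument to be fragile. It demands a quantitative, uniform lower bound on how fast $\Phi_h(n)$ separates as $h$ varies, which requires genuine control of $\Phi_h$ as a function of $h$ (monotonicity and a derivative bounded away from zero) together with a design condition guaranteeing the $n_l$ actually distinguish competing values of $h$ so that $\kappa$ stays bounded away from zero. Invoking a generic empirical-process rate theorem such as that of \cite{vandeGeer:2000} does not by itself supply this margin: such results bound the modulus of continuity of the fluctuation term, which is my concentration-plus-basic-inequality step, but the curvature/margin inequality must be proved directly for this particular $\Phi_h$. Moreover the ``empirical process'' here is a one-dimensional bootstrap average rather than a standard i.i.d.\ sum, so the entropy and continuity hypotheses of any imported theorem would have to be verified by hand rather than quoted, and I suspect it is exactly this verification that leaves the gap the introduction refers to.
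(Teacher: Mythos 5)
Be aware that the paper never actually proves this statement: it quotes it as McDonald et al.'s theorem and immediately declares it \emph{not true as stated}, the defect being a covering-number step (in the paper's own later words, a bound on a sum of absolute errors where a bound on the absolute error of a sum is needed). The closest thing to a proof in the paper is the tentative argument in Appendix \ref{AppenChap2}, Sec.~\ref{ProofOfConsistency}: a Hoeffding bound for the bootstrap-averaged noise, a chaining bound over $2^{-s}\tau$-coverings of $\Phi=\{\phi_{h}:h\in(1,M]\}$ that rests on the unproved norm-interchange hypothesis (Theorem \ref{Theo23456}), and then peeling over shells -- and crucially it concludes only $P\left(\parallel\phi_{\hat{h}}-\phi_{h_{T}}\parallel_{Q}\geq\delta\right)\leq\frac{28}{3}\exp\left(-kWm^{2}\delta^{2}/(4608B^{2})\right)$, i.e.\ closeness of the fitted curves, never closeness of $\hat{h}$ to $h$.

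Your route is genuinely different and in one respect cleaner: expanding the basic inequality and applying Cauchy--Schwarz pathwise collapses the supremum over $h$ deterministically ($\sum_{l}b_{l}^{2}\leq 4\sum_{l}a_{l}^{2}$), so you need no chaining and no covering numbers, and you are never exposed to the exact gap that sinks McDonald et al. Your curvature inequality is then precisely the identifiability ingredient that separates the parameter statement $|\hat{h}-h|\geq\delta$ from the curve statement where the paper's appendix stops; and it is provable for this $\Phi_{h}$, since for $h\leq M<2\min_{l}n_{l}$ the map $h\mapsto h\log(2n_{l}e/h)$ is strictly increasing, so $\partial\Phi_{h}(n_{l})/\partial h$ is bounded below on $(1,M]$ by a constant depending on $M$ and the design, and the mean value theorem supplies your $\kappa$. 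Two caveats, however. First, bounding $\sum_{l}a_{l}^{2}$ by $L\max_{l}a_{l}^{2}$ and union-bounding yields an exponent proportional to $W\delta^{2}$, not to $Wk\delta^{2}$ as the statement requires: the factor $k$ (the number of design points) is real and cannot be ``absorbed''; recovering it needs Bernstein-type sub-exponential concentration applied to $\frac{1}{k}\sum_{l}a_{l}^{2}$ directly, with the resulting threshold $\delta\gtrsim 1/\sqrt{Wk}$ then playing the role of the stated lower bound on $\delta$. Second, your argument (like the paper's and like McDonald et al.'s) rests on the idealization $E\hat{\xi}(n_{l})=\Phi_{h}(n_{l})$, i.e.\ treating an upper bound as an exact regression function; that assumption, together with the curvature/margin control you correctly flag as the fragile step, is exactly why the paper regards the theorem as unproven rather than merely lacking detail. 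In short: different and more elementary skeleton, accurate self-diagnosis of the weak point, but as written the bound is weaker in $k$ and the result would still be conditional on the same unverified modelling assumption.
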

This result as stated is not true although the parts of the proof that are correct are a tour-de-force. What makes the proof incomplete is a gap in a sequence of upper bounds. One of the bounds that look intuitively reasonable does not in fact follow from the covering number argument the authors advance. So we believe that a version of Theorem \ref{Theo1.3McDonal} is true but will required a more careful treatment. Appendix \ref{AppenChap2} provides some details.

Finally for this chapter we credit these results for motivating us and providing us with a framework for our contributions. In particular, we have borrowed literally from the technique of proof of these results. However, we also remind readers that our results are for regression not classification; we have change the mathematical form of the error and adding an extract optimization, amongst other improvements. In short, we think we have taken a technique whose rudiments had been identified, have refined them, have assembled and have shown how to use the overall result for analysis.


\chapter{BOUNDS ON EXPECTED SUPREMAL DIFFERENCE OF EMPIRICAL LOSSES}
\label{chap:bounds}
\thispagestyle{fancy}
We are going to bound the Expected Maximum Difference Between Two Empirical Losses (EMDBTEL) for the case of regression. In fact, we convert the regression problem into $m~$ classification problems by discretizing the empirical loss.
We will use this bound to derive an estimator of the VCD for the class of linear functions. In Sec: \ref{sec:Ext:Vap:Bounds}, we will present the extension of Vapnik bounds and in Sec: \ref{ChangeExpect}, we will present adjustment in the computation of the EMDBTEL.

\section{Extension of Vapnik-Chervonenkis Bounds}
\label{sec:Ext:Vap:Bounds}
Let $Z=(X,Y)$ be a pair of observations and write $Z^{2n}=( Z_{1}, \dots, Z_{2n})$
be a vector of $2n$ independent and identically distributed (IID) copies of
$Z$.   Let
$$
Q(z,\alpha) = L\left(y, f\left(x,\alpha\right)\right)
$$
be a bounded real valued loss function, where $\alpha \in \Lambda$, an index set,
and assume $\forall~ \alpha, ~ 0\leq Q(Z, \alpha) \leq B$ for some $B \in \mathbb{R}$.
Consider the discretization of $Q$ using $m$  disjoint  intervals (with union $[0, B)$) given by
\begin{equation}
\label{E1}
  Q_{j}^{*}\left(z,\alpha, m\right)=\left\{
                                  \begin{array}{ll}
                                    \frac{\left(2j+1\right)B}{2m}, & \hbox{if $Q\left(z,\alpha\right)\in \emph{I}_{j}=\left[\frac{jB}{m},\frac{(j+1)B}{m}\right)$;} \\
                                    0, & \hbox{otherwise,}
                                  \end{array}
                                \right.
\end{equation}
and $j$ takes values from $0$ to $m-1$.
For $j\neq j^{'}$, $support  Q_{j}^{*}(Z,\alpha,m)\cap support Q_{j^{'}}^{*}(z,\alpha,m) = \emptyset$.
Now, consider indicator functions for $Q$ being in an interval of the same form.  That is, let
\begin{equation}
\label{E2}
  \chi_{I_{j}}(Q\left(z,\alpha\right)) = \left\{
               \begin{array}{ll}
                 1, & \hbox{if $Q\left(z,\alpha\right)\in \emph{I}_{j}=\left[\frac{jB}{m},\frac{(j+1)B}{m}\right)$;} \\
                 0, & \hbox{otherwise.}
               \end{array}
             \right.
\end{equation}
and write
$$
n_{1j}^{*} = \sum_{i=1}^{n}\chi_{I_{j}}\left(Q(z_{i}^{1},\alpha,m)\right)
\quad \hbox{and} \quad
n_{2j}^{*} = \sum_{i=n+1}^{2n}\chi_{I_{j}}\left(Q(z_{i}^{2},\alpha,m)\right)
$$
for the number of data points whose losses land inside the interval $I_{j}$ in the first and second
half of the sample of size $n$, respectively.

\begin{proposition}
    \label{Prop1}
  The sequence of measurable functions
$\langle Q^{*}( \cdot ,\alpha,m) \rangle \big{|}_{m\geq 1}=\sum_{j=0}^{m-1}Q_{j}^{*}(z,\alpha,m)$ converges to $Q(\cdot ,\alpha)\in \left[0,B\right)$
a.e. in the underlying probability of the measure space.
\end{proposition}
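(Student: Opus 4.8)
The plan is to reduce the claim to a single elementary estimate by exploiting the fact that the intervals $I_{0},\dots,I_{m-1}$ partition $[0,B)$. First I would fix $z$ with $Q(z,\alpha)\in[0,B)$ and observe that exactly one index $j=j(z)$ satisfies $Q(z,\alpha)\in I_{j}$, since the $I_{j}$ are disjoint with union $[0,B)$. Because $Q_{j}^{*}(z,\alpha,m)=0$ off its support interval, the sum defining $\langle Q^{*}(\cdot,\alpha,m)\rangle$ collapses to its single nonzero term:
$$
Q^{*}(z,\alpha,m)=\frac{(2j(z)+1)B}{2m}.
$$

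Second, I would note that $\frac{(2j+1)B}{2m}$ is precisely the midpoint of $I_{j}=[jB/m,(j+1)B/m)$, an interval of length $B/m$. Hence the replacement value $Q^{*}(z,\alpha,m)$ and the true value $Q(z,\alpha)$ both lie in an interval of length $B/m$ whose centre is $Q^{*}(z,\alpha,m)$, so that
$$
\left|Q^{*}(z,\alpha,m)-Q(z,\alpha)\right|\leq\frac{B}{2m}.
$$
This bound is uniform in $z$ and $\alpha$ on the set $\{z:Q(z,\alpha)<B\}$, so letting $m\to\infty$ yields convergence there; in fact this gives uniform, hence pointwise, convergence, which is strictly stronger than the a.e. statement asserted.

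Finally, I would address the single boundary value. The discretization covers $[0,B)$ and omits the point $B$, so on $\{z:Q(z,\alpha)=B\}$ one has $Q^{*}(z,\alpha,m)=0$ for every $m$ and convergence to $B$ fails. The proposition is therefore understood modulo this set, which I would argue is null under the underlying probability measure (for instance because the bounded loss $Q$ has no atom at its supremum $B$, or simply by absorbing it into the exceptional null set that the ``a.e.'' qualifier permits). The main --- and essentially the only --- obstacle is this boundary subtlety; the interior estimate is immediate once one recognizes $\frac{(2j+1)B}{2m}$ as the midpoint of $I_{j}$, and nothing deeper than the partition property of the $I_{j}$ together with the half-width bound is required.
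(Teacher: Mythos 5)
Your proof is correct and follows essentially the same route as the paper's: both identify $Q^{*}(z,\alpha,m)$ as the midpoint of the length-$B/m$ interval $I_{j}$ containing $Q(z,\alpha)$ and conclude $\left|Q^{*}(z,\alpha,m)-Q(z,\alpha)\right|\leq B/(2m)$, which gives the convergence as $m\to\infty$. Your extra observations --- that the convergence is in fact uniform, and that the boundary set $\{z:Q(z,\alpha)=B\}$ must be null or absorbed into the exceptional set --- are refinements the paper sidesteps by stating the proposition only for $Q(\cdot,\alpha)\in[0,B)$.
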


\begin{proof}
 we want to prove that $\forall \epsilon \ge 0$, there exists $m_{0}>0$ such that whenever $m>m_{0}$, $\left|Q^{*}(z,\alpha, m)\quad-\quad Q(z,\alpha)\right|\leq \epsilon$.
Let $\epsilon = \frac{B}{2m}$. $\forall z$ there exists $j$ such that
\begin{eqnarray*}
  \frac{jB}{m} &\leq&  Q(z,\alpha)\leq\frac{(j+1)B}{m} \\
  \frac{jB}{m} -\frac{(2j+1)B}{2m} &\leq& Q(z,\alpha)-Q^{*}(z,\alpha,m)\leq \frac{(j+1)B}{m}-\frac{(2j+1)B}{2m}\\
  -\frac{B}{2m} &\leq& Q^{*}(z,\alpha,m)-Q(z,\alpha)\leq\frac{B}{2m} \\
  \left|Q^{*}(z,\alpha,m)-Q(z,\alpha)\right| &\leq& \frac{B}{2m}=\epsilon \quad \quad
\end{eqnarray*}
\end{proof}

Let
$$
Z^{1} = (Z_{1}, \dots, Z_{n}) \quad \hbox{and} \quad
Z^{2} = (Z_{n+1},  \dots, Z_{2n})
$$
be the first and second half of the sample of size $2n$, respectively, and write
$$
\nu_{1j}^{*}\left(z^{1},\alpha,m\right)= \frac{n_{1j}^{*}Q^{*}_{j}\left(z^{1},\alpha,m\right)}{n}
\quad
\hbox{and}
\quad
\nu_{2j}^{*}\left(z^{2},\alpha,m\right)= \frac{n_{2j}^{*}Q^{*}_{j}\left(z^{2},\alpha,m\right)}{n},
$$
for the empirical risk using $Q^{*}\left(z,\alpha,m\right)$
on the first and second half of the sample, respectively for the $j^{th}$ interval.
Also,  let
$$
\nu_{1}\left(z^{1},\alpha\right)= \frac{1}{n}\sum_{i=1}^{n}Q\left(z^{1}_{i},\alpha\right)
\quad
\hbox{and}
\quad
\nu_{2}\left(z^{2},\alpha\right)= \frac{1}{n}\sum_{i=n+1}^{2n}Q\left(z^{2}_{i},\alpha\right)
$$
be the empirical risk function using $Q\left(z,\alpha\right)$ on the first and second half of the sample,
respectively.

To begin to control the expected supremal difference between bounded loss functions,
let $\epsilon > 0$ and define the events
\begin{equation}
\label{MaxDif}
A_{\epsilon,m} = \left\{z^{2n}: \sup_{\alpha \in \Lambda}\left(\nu_{1}^{*}\left(z^{1},\alpha\right)-\nu_{2}^{*}\left(z^{2},\alpha\right)\right)\geq \epsilon\right \}
\end{equation}
$$
\hbox{where}
\quad
\nu_{k}^{*}(z^{k},\alpha) = \sum_{j=0}^{m-1}\nu_{kj}^{*}(z^{k},\alpha,m)\quad \hbox{and} \quad k=1,2.
$$
Since $A_{\epsilon,m}$ is defined on the entire range of the loss function and we want to
partition the range into $m$ intervals, let

\begin{eqnarray*}
  A_{\epsilon,m} &=& \left\{z^{2n}\mid \sup_{\alpha \in\Lambda}\left\{\sum_{j=0}^{m-1}\nu_{1j}^{*}\left(z^{1},\alpha\right)-\sum_{j=0}^{m-1}\nu_{2j}^{*}\left(z^{2},\alpha\right)\right\}\geq \epsilon\right\} \\
   &=& \left\{z^{2n}| \sup_{\alpha\in\Lambda}\sum_{j=0}^{m-1}\left(\nu_{1j}^{*}(z^{1},\alpha,m)-\nu_{2j}^{*}(z^{2},\alpha,m)\right)\geq \epsilon\right\} \\
   &\subseteq& \left\{z^{2n}|~ \exists ~j ~\sup_{\alpha \in \Lambda}\left(\nu_{1j}^{*}(z^{1},\alpha,m)-\nu_{2j}^{*}(z^{2},\alpha,m)\right)\geq\frac{\epsilon}{m}\right\} \\
   &\subseteq& \bigcup_{j=0}^{m-1}\left\{z^{2n}| \sup_{\alpha\in\Lambda}\left(\nu_{1j}^{*}(z^{1},\alpha,m)-\nu_{2j}^{*}(z^{2},\alpha,m)\right)\geq\frac{\epsilon}{m}\right\}
\end{eqnarray*}

$$
\quad \hbox{where} \quad
A_{\epsilon,m,j} = \left\{z^{2n}: \sup_{\alpha \in \Lambda}\left(\nu_{1j}^{*}\left(z^{1},\alpha,m\right)-\nu_{2j}^{*}\left(z^{2},\alpha,m\right)\right)\geq \frac{\epsilon}{m}\right \}
$$
so that
$$
A_{\epsilon,m} \subseteq \bigcup_{j=0}^{m-1}A_{\epsilon,m,j}.
$$

Let suprema over $\Lambda$ within $A_{\epsilon, m, j}$ be achieved at
$$\alpha^{*}_{j}=\alpha^{*}_{j}(z^{2n})=\arg\sup_{\alpha\in\Lambda}\left(\nu_{1j}^{*}\left(z_{1},\alpha, m\right)-\nu_{2j}^{*}\left(z_{2},\alpha,m\right)\right).$$
Next, for any fixed $z^{2n}$,
and any given $\alpha_{j}$, form the vector
$$\left(Q^{*}(z_{1},\alpha_{j},m),Q^{*}(z_{2},\alpha_{j},m),\dots, Q^{*}(z_{2n},\alpha_{j},m)\right)$$
of the middle values of the intervals $\emph{I}_{j}$, for $j= 0,1,2,\dots, m-1$.
For any $\alpha_{j}$ and $\alpha_{j^\prime}$, write
\begin{eqnarray}
&& \alpha_{j} \sim \alpha_{j^\prime} \Longleftrightarrow
\nonumber \\
&& \left(Q^{*}(z_{1},\alpha_{j},m),\dots, Q^{*}(z_{2n},\alpha_{j}, m)\right) = \left(Q^{*}(z_{1},\alpha_{j^\prime},m),\dots,Q^{*}(z_{2n},\alpha_{j^\prime},m)\right).
\nonumber
\end{eqnarray}
So, for any fixed $Z^{2n} = z^{2n}$ it is seen that $\sim$ is an equivalence relation on $\Lambda$
and therefore partitions $\Lambda$ into disjoint equivalence classes.
Denote the number of these classes by $N^{\Lambda}_{j}$ and write
$$
N_j^\Lambda = N^{\Lambda}_{j}\left(Z^{2n}\right) = N^{\Lambda}_{j}(z_{1},z_{2},\dots,z_{2n}).
$$
We define values $\alpha^{*}_{jk}$ as the canonical representatives of the equivalence classes
where $k \in K_{j}$ is the $k^{th}$ equivalence class.
Clearly, $\#(K_{j})= N^{\Lambda}_{j}\left(Z^{2n}\right)$ and $K_{j}$ is treated simply as an index set.

To make use of the above partitioning of $A_{\epsilon}$,
consider mapping the space ${\cal{Z}}^{2n}$ onto itself using $(2n)!$ distinct permutations
$T = T_{i}$.  Then, if $f$ is integrable with respect to the distribution function of $Z_i$, its Riemann-Stieltjes
integral satisfies
$$
\int_{Z^{2n}}^{}f\left(Z^{2n}\right)\emph{dF}\left(Z^{2n}\right) = \int_{Z^{2n}}^{}f\left(T_{i}Z^{2n}\right)\emph{dF}\left(Z^{2n}\right),
$$
and this gives

\begin{equation}
\label{permid}
  \int_{Z^{2n}}^{}f\left(Z^{2n}\right)\emph{dF}\left(Z^{2n}\right) = \int_{Z^{2n}}^{}\frac{\sum_{i=1}^{(2n)!}f\left(T_{i}Z^{2n}\right)}{(2n)!}\emph{dF}\left(Z^{2n}\right).
\end{equation}
To make use of \ref{permid} we need the following results from \cite{Vapnik:1998}.

\begin{thm}
   \label{Theo1}
   In this theorem, we put together some background results from Vapnik \cite{Vapnik:1998}.
   \begin{enumerate}
     \item Let $H^\Lambda_{\sf ann}(n) = E \ln N^\Lambda (Z^n)$ denote the annealed entropy and $G^\Lambda(n) = \sup_{z^n} \ln N^\Lambda (Z^n)$ be the growth function. Then
    $$
    H_{ann}^{\Lambda}(n)\leq G^{\Lambda}(n)\le h\left(1+\ln\left(\frac{n}{h}\right)\right).
    $$
    \item Let $\Gamma ~= ~\sum_{k}^{}\frac{\binom{m}{k}\binom{2n-m}{n-k}}{\binom{2n}{n}}$
    where the summation is over $k$ so that
    $$
    \left|\frac{k}{n}~-~\frac{m~-~k}{n}\right|~>\epsilon, ~ \max(0,~m~-~n)\leq k\leq \min(m,~n)
    $$
    where $n$ and $m<~ 2n$ are arbitrary positive integers. Note that

    \begin{align*}
  \left|\frac{k}{n}~-~\frac{m~-~k}{n}\right| & > \epsilon \\
  \Leftrightarrow \left|\frac{2k}{n}~-~ \frac{m}{n}\right| & > \epsilon \\
  \Leftrightarrow \frac{2}{n}\left|k~-~\frac{m}{2}\right| & >\epsilon \\
  \Leftrightarrow \left|k~-~\frac{m}{2}\right| & >\frac{\epsilon n}{2}, ~\max(0,~m~-~n)\leq k \leq \min(m,~n) .
    \end{align*}
    Then, $\exists ~ C >0$ so that
    $
    \Gamma < C e^{-\epsilon n^2}.
    $
    \item  Let $Q(\cdot ,\alpha)$ for  $\alpha \in\Lambda$ be a set of real-valued(not necessary bounded) non-negative functions and let $H_{ann}^{\Lambda,}(n)$ be the annealed entropy of the level sets for
$\{ Q( \cdot ,\alpha) \mid \alpha \in \Lambda\}$. Then, for any $1< p \leq 2$, and a constant $D_p(\alpha)$ to be defined later, we have that
\begin{eqnarray*}
  P\left\{\sup_{\alpha\in\Lambda}\frac{\int_{}^{}Q(z,\alpha)\emph{dF}(z)-\frac{1}{n}\sum_{i=1}^{n}Q(z_{i},\alpha)}{D_{p}(\alpha)}\ge\epsilon\right\} &<& \nonumber\\ 4\exp\left\{\left(\frac{H_{ann}^{\Lambda,\mathbf{\beta}}(n)}{n^{2-\frac{2}{p}}}-\frac{\epsilon^{2}}{2^{1+\frac{2}{p}}}\right)n^{2-\frac{2}{p}}\right\}.
\end{eqnarray*}
   \end{enumerate}
\end{thm}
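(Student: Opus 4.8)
The statement bundles three essentially independent facts, so the plan is to treat each part separately, reserving the heavy machinery for the third. Parts~(1) and~(2) are self-contained analytic/combinatorial estimates, whereas part~(3) is the substantive uniform-convergence inequality and is where the real work lies.

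For part~(1), I would first apply Jensen's inequality to the concave function $\ln$: since $N^{\Lambda}(Z^n)$ is a non-negative random variable, $E\ln N^{\Lambda}(Z^n)\le \ln E N^{\Lambda}(Z^n)$, and the right-hand side is in turn at most $\ln\sup_{z^n}N^{\Lambda}(z^n)=G^{\Lambda}(n)$ because the supremum dominates the mean. This gives $H_{\sf ann}^{\Lambda}(n)\le G^{\Lambda}(n)$ directly from the definitions recalled in Sec.~\ref{CoverNumberEntropy}. The bound $G^{\Lambda}(n)\le h(1+\ln(n/h))$ is then nothing more than the growth-function theorem already stated above (the consequence of Sauer's Lemma), applied in the regime $n>h$; when $n\le h$ the claim is trivial since $G^{\Lambda}(n)=n\ln 2$ while the right-hand side is larger.

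For part~(2), the key observation is that $\binom{m}{k}\binom{2n-m}{n-k}/\binom{2n}{n}$ is exactly the hypergeometric probability of drawing $k$ marked items when $n$ of $2n$ items are sampled without replacement and $m$ are marked; its mean is $m/2$. Hence $\Gamma$ is the two-sided tail probability $P(|K-m/2|>\epsilon n/2)$ for this hypergeometric variable $K$, as the displayed equivalences make explicit. I would bound this tail by a Chernoff/Hoeffding estimate for sampling without replacement --- either by optimizing an exponential moment or, more elementarily, by inserting Stirling's approximation into the binomial coefficients and extracting the leading exponential factor --- to obtain a bound of the form $Ce^{-c\epsilon^2 n}$ with a universal constant, which yields the asserted exponential decay.

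Part~(3) is the main obstacle and follows Vapnik's route for unbounded non-negative functions with the normalization $D_p(\alpha)=(\int Q^p(z,\alpha)\,dF(z))^{1/p}$. First I would symmetrize, introducing the ghost half $Z^2=(Z_{n+1},\dots,Z_{2n})$ to replace the population mean $\int Q\,dF$ by the second empirical mean, at the cost of a factor of $2$ and a one-sided loss. Conditioning on the pooled sample $Z^{2n}$ and invoking the permutation identity \eqref{permid}, the randomness reduces to a uniform choice among the $(2n)!$ permutations, so the supremum over $\alpha\in\Lambda$ collapses to a maximum over the finitely many level-set equivalence classes, whose number has expected logarithm equal to $H_{\sf ann}^{\Lambda}(n)$; a union bound over these classes contributes the factor $\exp(H_{\sf ann}^{\Lambda}(n))$. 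The delicate step --- and the place I expect the most effort --- is the per-class tail estimate: for a single $\alpha$ one must control the moment generating function of the permuted, $D_p$-normalized difference when $Q$ is unbounded, and here Hölder's inequality with the exponent conjugate to $p$ is what lets the $p$-th-moment normalization absorb the heavy tail and produce the exponent scaling $n^{2-2/p}$. Tracking the constants through the symmetrization (factor $2$) and the conditioning, together with the two-sided union bound, then assembles into the stated $4\exp\{(H_{\sf ann}^{\Lambda}(n)/n^{2-2/p}-\epsilon^2/2^{1+2/p})\,n^{2-2/p}\}$; getting the power of $n$ correct is precisely where the restriction $1<p\le 2$ is used.
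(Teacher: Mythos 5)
Your proposal and the paper part ways at the outset: the paper does not prove Theorem \ref{Theo1} at all, but disposes of it by citation to \cite{Vapnik:1998} (Theorem 4.3, p.~145 for Clause 1; Sec.~4.13, p.~163 for Clause 2; Sec.~5.4, p.~195 for Clause 3), treating it as imported background. You instead sketch actual proofs, which is more than the paper offers. For Clauses 1 and 2 your sketches are essentially correct and standard: the chain $H_{\sf ann}^{\Lambda}(n)\le G^{\Lambda}(n)$ is Jensen plus domination of an expectation by a supremum (exactly the remark already made in Sec.~\ref{CoverNumberEntropy}), the bound $h(1+\ln(n/h))$ is the Sauer-type growth-function theorem, and $\Gamma$ is indeed a two-sided hypergeometric tail to which Hoeffding's sampling-without-replacement inequality applies, giving $2e^{-\epsilon^{2}n/2}$. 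Two caveats, though. First, your remark that the case $n\le h$ of Clause 1 is ``trivial'' is backwards: for $n<h/e$ the right-hand side $h(1+\ln(n/h))$ is negative while $G^{\Lambda}(n)=n\ln 2>0$, so the inequality only makes sense (and is only ever used) for $n\ge h$. Second, the displayed claim $\Gamma<Ce^{-\epsilon n^{2}}$ is a misprint; the form the paper actually uses downstream (e.g.\ $\Gamma_{j}\le 2\exp(-n\epsilon^{2}/m^{2})$ in Proposition \ref{Prop2}) is $Ce^{-c\epsilon^{2}n}$, which is what your argument yields, albeit with $c=1/2$ rather than the constant $1$ assumed in those later displays.

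Clause 3 is where your sketch, despite having the right skeleton, contains a genuine inaccuracy. You take $D_{p}(\alpha)$ to be the $L^{p}$ norm $\bigl(\int Q^{p}\,dF\bigr)^{1/p}$, but in this paper (and in Vapnik) it is the tail integral $D_{p}(\alpha)=\int_{0}^{\infty}\sqrt[p]{P\{Q(z,\alpha)\ge c\}}\,dc$, as recorded just before Lemma \ref{lemma2}. This is not cosmetic, because it dictates where H\"older's inequality enters: one first writes $\int Q\,dF=\int_{0}^{\infty}P\{Q\ge c\}\,dc$ and $\frac{1}{n}\sum_{i}Q(z_{i})=\int_{0}^{\infty}\nu_{n}\{Q\ge c\}\,dc$, then splits $P-\nu_{n}=\bigl((P-\nu_{n})/P^{1/p}\bigr)P^{1/p}$, so that the $D_{p}$-normalized deviation is dominated by the \emph{relative} deviation $\sup_{\alpha,c}\bigl(P\{Q\ge c\}-\nu_{n}\{Q\ge c\}\bigr)/\sqrt[p]{P\{Q\ge c\}}$ over the level sets. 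Only after this reduction does one invoke the uniform relative-deviation bound for events --- itself proved by the symmetrization, permutation, and union-over-equivalence-classes route you describe --- and it is that bound which produces the exponent $n^{2-2/p}$ and the constant $2^{1+2/p}$. Your version, in which H\"older appears inside a per-class moment-generating-function estimate of the permuted, $L^{p}$-normalized difference, does not match the normalization in the statement and has no clear mechanism for generating the $n^{2-2/p}$ scaling; to complete Clause 3 you should reroute through the reduction to relative frequencies of level sets.
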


\begin{proof}
Clause 1 is proved in Theorem 4.3 in \cite{Vapnik:1998}, p. 145;
Clause 2 is proved in Sec.  4.13 of \cite{Vapnik:1998}, p. 163, and Clause 3 is proved
in Sec. 5.4 \cite{Vapnik:1998} p. 195.
\end{proof}
To present our first important result let
\begin{equation}
\label{diffnu}
  \Delta_{j}\left(T_{i}Z^{2n},\alpha^{*}_{j},m\right) = \nu_{1j}^{*}\left(T_{i}Z_{1},\alpha^{*}_{j},m\right) - \nu_{2j}^{*}\left(T_{i}Z_{2},\alpha^{*}_{j},m\right) .
\nonumber
\end{equation}
We have the following.

\begin{proposition}
\label{Prop2}
Let $\epsilon >0$, $m \in \cal N$, and $h= VCdim\left\{Q(\cdot,\alpha):\alpha\in\Lambda\right\}$, and $n$ be the sample size.
If $h$ is finite, then
$$
P\left(A_{\epsilon,m}\right) \leq 2m\left(\frac{2ne}{h}\right)^{h}\exp\left\{-\frac{n\epsilon^{2}}{m^2}\right\}.
$$

\end{proposition}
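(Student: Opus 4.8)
The plan is to combine the interval decomposition that immediately precedes the statement with Vapnik's permutation (symmetrization) identity \eqref{permid}: for each interval I reduce the supremum over the uncountable index set $\Lambda$ to a maximum over the finitely many equivalence classes counted by $N_j^{\Lambda}$, control each class by a combinatorial tail bound, and control the number of classes by Sauer's Lemma. First I would invoke the containment $A_{\epsilon,m}\subseteq\bigcup_{j=0}^{m-1}A_{\epsilon,m,j}$ already derived, so that the union bound gives $P(A_{\epsilon,m})\leq\sum_{j=0}^{m-1}P(A_{\epsilon,m,j})$. It then suffices to show $P(A_{\epsilon,m,j})\leq 2\left(\tfrac{2ne}{h}\right)^{h}\exp\{-n\epsilon^{2}/m^{2}\}$ uniformly in $j$, since summing the $m$ terms produces the leading factor $m$.

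For fixed $j$ I would write $P(A_{\epsilon,m,j})=\int\chi_{A_{\epsilon,m,j}}(z^{2n})\,dF(z^{2n})$ and apply \eqref{permid} to replace the integrand by the permutation average $\tfrac{1}{(2n)!}\sum_{i}\chi_{A_{\epsilon,m,j}}(T_{i}z^{2n})$. The value of symmetrizing is that the $T_{i}$ only reshuffle which of the $2n$ points fall in the first versus the second half; they leave the multiset of values $Q^{*}(z_{\ell},\alpha,m)$ unchanged, so the equivalence classes defined before the statement and their count $N_j^{\Lambda}=N_j^{\Lambda}(z^{2n})$ are invariant under $T_{i}$. For each fixed configuration the event $T_{i}z^{2n}\in A_{\epsilon,m,j}$ can hold only if some class representative $\alpha^{*}_{jk}$, $k\in K_{j}$, satisfies $\Delta_{j}(T_{i}z^{2n},\alpha^{*}_{jk},m)\geq\epsilon/m$; a union bound over the at most $N_j^{\Lambda}$ classes therefore reduces the inner average to $\sum_{k\in K_{j}}P_{\mathrm{perm}}\!\left(\Delta_{j}(\cdot,\alpha^{*}_{jk},m)\geq\epsilon/m\right)$, where $P_{\mathrm{perm}}$ is the uniform law on the $(2n)!$ permutations.

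Next I would bound each permutation probability. Since $\nu^{*}_{1j}-\nu^{*}_{2j}$ equals the common interval mid-value times $(n^{*}_{1j}-n^{*}_{2j})/n$, and the total count of points landing in $I_{j}$ is fixed under $T_{i}$, the variable $n^{*}_{1j}$ follows the hypergeometric law of drawing $n$ items without replacement from the $2n$. The deviation event $\Delta_{j}\geq\epsilon/m$ is then exactly the tail controlled by Clause 2 of Theorem \ref{Theo1} (equivalently, by Hoeffding's inequality for sampling without replacement); with the threshold scaled as $\epsilon/m$ this yields a bound of the form $2\exp\{-n\epsilon^{2}/m^{2}\}$, the leading $2$ arising from the one-sided-to-two-sided constant in that concentration step. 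For the number of classes I would use that $N_j^{\Lambda}$ is the number of distinct membership patterns the level sets of $Q(\cdot,\alpha)$ cut out of $2n$ points; since $h=\mathrm{VCdim}\{Q(\cdot,\alpha)\}<\infty$, the growth-function bound of Clause 1 of Theorem \ref{Theo1} (i.e. Sauer's Lemma) gives $N_j^{\Lambda}\leq\exp\{h(1+\ln(2n/h))\}=\left(\tfrac{2ne}{h}\right)^{h}$ uniformly over configurations. Combining the union bound over $K_{j}$, the tail bound, and the Sauer bound, then integrating over $z^{2n}$ (the resulting bounds being configuration-free) and summing over the $m$ intervals, assembles the claimed inequality.

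The hard part will be the symmetrization/reduction step: justifying rigorously that, for each fixed $z^{2n}$, the supremum over the uncountable $\Lambda$ inside $A_{\epsilon,m,j}$ may be replaced by a maximum over the finitely many equivalence classes, and that this replacement is compatible with taking the permutation average before integrating in \eqref{permid}. The other delicate point is matching the constants exactly — obtaining the precise exponent $n\epsilon^{2}/m^{2}$ and the leading factor $2$ from the hypergeometric/Hoeffding inequality uniformly in $j$, despite the $j$-dependent interval mid-values that enter $\nu^{*}_{1j}-\nu^{*}_{2j}$.
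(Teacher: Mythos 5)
Your proposal follows essentially the same route as the paper's own proof: the union bound over the $m$ intervals, the permutation identity \eqref{permid}, the reduction to a finite union over the equivalence classes $[\alpha_{jk}^{*}]$, the hypergeometric tail bound from Clause 2 of Theorem \ref{Theo1} giving the factor $2\exp\{-n\epsilon^{2}/m^{2}\}$, and the growth-function (Sauer) bound from Clause 1 giving $\left(\frac{2ne}{h}\right)^{h}$. The only cosmetic difference is that you bound $N_{j}^{\Lambda}(z^{2n})$ pointwise by the growth function while the paper bounds $E\left(N_{j}^{\Lambda}(Z^{2n})\right)$ via the annealed entropy, which are interchangeable here since $H_{ann}^{\Lambda}(n)\leq G^{\Lambda}(n)$.
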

\begin{proof}
 First, by elementary manipulations we have
\begin{eqnarray}
  P\left(A_{\epsilon,m}\right) &\leq& P\left(\bigcup_{j=0}^{m-1}A_{\epsilon,m,j}\right)
   \leq \sum_{j=0}^{m-1}P\left(A_{\epsilon,m,j}\right)\nonumber\\
   &=& \sum_{j=0}^{m-1}P\left(\left\{Z^{2n}:\sup_{\alpha\in\Lambda}\left(\nu_{1j}^{*}(Z_{1},\alpha,m)-\nu_{2j}^{*}(Z_{2},\alpha,m)\right)\geq\frac{\epsilon}{m}\right\}\right).
\nonumber
\end{eqnarray}

Continuing the equality gives

\begin{eqnarray}
   &=& \sum_{j=0}^{m-1}P\left(\left\{Z^{2n}:\sup_{\alpha\in\Lambda}\left(\left(\nu_{1j}^{*}(T_{i}Z_{1},\alpha,m)-\nu_{2j}^{*}(T_{i}Z_{2},\alpha,m)\right)\right)\geq\frac{\epsilon}{m}\right\}\right)
\nonumber\\
   &=& \sum_{j=0}^{m-1}P\left(\left\{Z^{2n}:\left(\nu_{1j}^{*}(T_{i}Z_{1},\alpha_{j}^{*},m)-\nu_{2j}^{*}(T_{i}Z_{2},\alpha_{j}^{*},m)\right)\geq\frac{\epsilon}{m}\right\}\right)
\nonumber\\
   &=& \sum_{j=0}^{m-1}P\left(\left\{Z^{2n}: \Delta_{j}\left(T_{i}Z,\alpha_{j}^{*},m\right)\geq\frac{\epsilon}{m}\right\}\right)\nonumber\\
   &=& \frac{1}{(2n)!}\sum_{j=0}^{m-1}\sum_{i=1}^{(2n)!}P\left(\left\{Z^{2n}: \Delta_{j}\left(T_{i}Z,\alpha_{j}^{*},m\right)\geq\frac{\epsilon}{m}\right\}\right)
\nonumber \\
   &=& \frac{1}{(2n)!}\sum_{j=0}^{m-1}\sum_{i=1}^{(2n)!}\int_{}^{}\emph{I}_{\left\{Z^{2n}: \Delta_{j}\left(T_{i}Z,\alpha_{j}^{*},m\right)\geq\frac{\epsilon}{m}\right\}}\left(z^{2n}\right)\emph{dP}(Z^{2n}).
\label{permprobbd}
\end{eqnarray}

Let the equivalence classes in $\Lambda$ under $\sim$ be denoted $\Lambda_k$.
Then, the equivalence classes $\left[\alpha_{jk}^{*}\right]$ for the  $\alpha_{jk}^{*}$'s
provide a partition for $\Lambda$.  That is,
$\Lambda = \bigcup_{k=1}^{N_{j}^{\Lambda}(z^{2n})}[\alpha_{jk}^{*}]$
because $\alpha^{*}_{jk}\in \Lambda_k$ and hence $[\alpha_{jk}^{*}]=\Lambda_{k}$.
In addition, $\alpha^{*}_{jk}$ is the maximum value of $\alpha_{j}$ in the $k^{th}$ equivalence class.
So,
\begin{eqnarray}
\label{Indfnbd}
  \emph{I}_{\left\{Z^{2n}:\Delta_{j}\left(T_{i}Z,\alpha^{*}_{j},m\right)\geq\epsilon\right\}}(z^{2n})
&\leq&
\emph{I}_{\left\{Z^{2n}: \Delta_{j}\left(T_{i}Z,\alpha^{*}_{1j},m\right)\ge \frac{\epsilon}{m}\right\}}\left(z^{2n}\right) \nonumber \\
 &+&
\dots + \emph{I}_{\left\{Z^{2n}:\Delta_{j}\left(T_{i}Z,\alpha^{*}_{N^{\Lambda}_{j}(z^{2n})j},m\right)\ge\frac{\epsilon}{m}\right\}}\left(z^{2n}\right) \nonumber\\
   &=& \sum_{k=1}^{N^{\Lambda}_{j}(z^{2n})}\emph{I}_{\left\{Z^{2n}: \Delta_{j}\left(T_{i}Z,\alpha^{*}_{kj},m\right)\ge\frac{\epsilon}{m}\right\}}\left(z^{2n}\right)
\end{eqnarray}
where
\begin{eqnarray*}
  A_{\epsilon,m,j,k} &=& \left\{Z^{2n}: \Delta_{j}\left(T_{i}Z,\alpha^{*}_{kj},m\right)\ge\frac{\epsilon}{m}\right\} \\
   &=& \left\{Z^{2n}: \sup_{\alpha_{j}\in\Lambda_{k}}\left(\nu_{1j}^{*}(T_{i}Z_{1},\alpha_{j},m)-\nu_{2j}^{*}(T_{i}Z_{2},\alpha_{j},m)\right)\ge\frac{\epsilon}{m}\right\}
\end{eqnarray*}

Now, using \ref{Indfnbd}, \ref{permprobbd} is bounded by
\begin{eqnarray}
  P\left(A_{\epsilon,m}\right) &\leq& \frac{1}{(2n)!}\sum_{j=0}^{m-1}\sum_{i=1}^{(2n)!}\int_{}^{}\sum_{k=1}^{N_{j}^{\Lambda}(z^{2n})}\emph{I}_{\left\{Z^{2n}:\Delta_{j}(T_{i}Z,\alpha^{*}_{kj},m)\ge\frac{\epsilon}{m}\right\}}\left(z^{2n}\right) \emph{dP}\left(z^{2n}\right)
\nonumber\\
   &=& \int_{}^{}\sum_{j=0}^{m-1}\sum_{k=1}^{N_{j}^{\Lambda}(z^{2n})}\left[\frac{1}{(2n)!}\sum_{i=1}^{(2n)!}\emph{I}_{\left\{Z^{2n}:\Delta_{j}(T_{i}Z,\alpha_{kj}^{*},m)\geq \frac{\epsilon}{m}\right\}}\left(z^{2n}\right)\right]\emph{dP}\left(z^{2n}\right) .
\label{overallprobbd}
\end{eqnarray}
The expression in square brackets in \ref{overallprobbd} is the fraction of the number of the $(2n)!$ permutations $T_{i}$ of $Z^{2n}$ for which $A_{\epsilon,m,j,k}$ is closed under $T_{i}$ for any fixed equivalence class $\Lambda_{k}$. It is equal to
    \begin{equation}
    \label{E7}
      \Gamma_{j} = \sum_{k}^{}\frac{\binom{m^{*}_{j}}{k} \binom{2n-m^{*}_{j}}{m^{*}_{j}-k}}{\binom{2n}{n}}
    \nonumber
    \end{equation}
where
$$
\left\{
  \begin{array}{ll}
    \left\{k: \left|\frac{k}{n}-\frac{m^{*}_{j}-k}{n}\right|\ge\frac{\epsilon}{m}\right\} & \hbox{;} \\
    m^{*}_{j} = n_{1j}^{*} + n_{2j}^{*} & \hbox{.}
  \end{array}
\right.
$$
Here, $\Gamma_{j}$ is the probability of choosing exactly $k$ sample data points whose losses fall in interval $I_{j}$ respectively in the first and second half of the sample such that $A_{\epsilon,m,j,k}$ holds. $m_{j}^{*}$ is the number of data points from the first and the second half of the sample whose losses landed inside interval $j$.
Using  Theorem  \ref{Theo1}, Clause II, we have $\Gamma_{j} \le 2\exp\left(-\frac{n\epsilon^{2}}{m^2}\right)$.  So,
using this in equation \ref{overallprobbd} gives $P\left(A_{\epsilon}\right)$ is upper bounded by
\begin{eqnarray}
   && \int_{}^{}\sum_{j=0}^{m-1}\sum_{k=1}^{N_{j}^{\Lambda}(z^{2n})}2\exp\left(-\frac{n\epsilon^{2}}{m^2}\right)\emph{dP}(z^{2n})
   = 2\exp\left(-\frac{n\epsilon^{2}}{m^2}\right)\int_{}^{}\sum_{j=0}^{m-1}\sum_{k=1}^{N_{j}^{\Lambda}(z^{2n})}\emph{dP}(z^{2n})
\nonumber \\
   &=& 2\exp\left(-\frac{n\epsilon^{2}}{m^2}\right)\sum_{j=0}^{m-1}\int_{}^{}\sum_{k=1}^{N_{j}^{\Lambda}(z^{2n})}\emph{dP}(z^{2n})
   = 2\exp\left(-\frac{n\epsilon^{2}}{m^2}\right)\sum_{j=0}^{m-1}\int_{}^{}N_{j}^{\Lambda}(z^{2n})\emph{dP}(z^{2n})
\nonumber \\
   &=& 2\exp\left(-\frac{n\epsilon^{2}}{m^2}\right)\sum_{j=0}^{m-1}E\left(N_{j}^{\Lambda}(z^{2n})\right).
\label{probbdd}
\end{eqnarray}
Since Theorem \ref{Theo1}, Clause I gives
$$
H_{ann}(Z^{2n}) = \ln\left(E\left(N_{j}^{\Lambda}(Z^{2n})\right)\right)\leq G(2n)\leq h\ln\left(\frac{2ne}{h}\right) \Rightarrow
 E\left(N_{j}^{\Lambda}(z^{2n})\right)\leq \left(\frac{2ne}{h}\right)^{h}.
$$
Using this $m$ times in \ref{probbdd} gives the Proposition.
 \end{proof}
 Proposition \ref{Prop1} gives us an upper bound on the expected supremal difference. We will use this upper bound to bound the unknown true risk in the following proposition.
 Let $R\left(\alpha_{k}\right)$ be the true unknown risk at $\alpha_{k}$ and $R_{emp}\left(\alpha_{k}\right)$ be the empirical risk at $\alpha_{k}$.
\begin{proposition}
\label{subProp1}
   With probability $1-\eta$, the inequality
  \begin{eqnarray}
  \label{EquivProp1}
  R\left(\alpha_{k}\right) \leq R_{emp}\left(\alpha_{k}\right) + m\sqrt{\frac{1}{n}\log\left(\left(\frac{2m}{\eta}\right)\left(\frac{2ne}{h}\right)^h\right)}
  \end{eqnarray}
  holds simultaneously for all functions $Q\left(z,\alpha_{k}\right)$, $k = 1,2,\cdots, K$.
\end{proposition}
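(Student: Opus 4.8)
The plan is to invert the tail bound of Proposition \ref{Prop2} and then pass from its two-sample form to the one-sample true-versus-empirical statement. First I would take
$$
P\left(A_{\epsilon,m}\right) \leq 2m\left(\frac{2ne}{h}\right)^{h}\exp\left(-\frac{n\epsilon^{2}}{m^{2}}\right)
$$
and set the right-hand side equal to $\eta$. Solving for $\epsilon$ gives
$$
\epsilon = m\sqrt{\frac{1}{n}\log\left(\frac{2m}{\eta}\left(\frac{2ne}{h}\right)^{h}\right)},
$$
which is exactly the slack term appearing in \eqref{EquivProp1}. Consequently, with probability at least $1-\eta$ the complementary event $A_{\epsilon,m}^{c}$ occurs, on which $\sup_{\alpha\in\Lambda}\left(\nu_{1}^{*}(z^{1},\alpha)-\nu_{2}^{*}(z^{2},\alpha)\right) < \epsilon$ for this value of $\epsilon$.

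Next I would convert this uniform two-sample bound into the desired deviation. Treating the first half $Z^{1}$ as a ghost sample whose size is allowed to grow, the law of large numbers sends $\nu_{1}^{*}(z^{1},\alpha)$ to $E\left[Q^{*}(Z,\alpha,m)\right]$, which by Proposition \ref{Prop1} (together with boundedness of $Q$ by $B$ and dominated convergence) approximates the true risk $R(\alpha)$; the remaining half stays the observed empirical risk, $\nu_{2}^{*}(z^{2},\alpha)=R_{emp}(\alpha)$. On $A_{\epsilon,m}^{c}$ the uniform inequality then reads $\sup_{\alpha}\left(R(\alpha)-R_{emp}(\alpha)\right) < \epsilon$, i.e. $R(\alpha)\leq R_{emp}(\alpha)+\epsilon$ for every $\alpha$. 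Since the bound is uniform over all of $\Lambda$, it holds in particular for each of the finitely many functions $Q(z,\alpha_{k})$, $k=1,\dots,K$, which yields the simultaneous conclusion.

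The hard part will be the passage from the two-empirical-risk difference to the true-minus-empirical deviation, since interchanging the supremum over $\alpha$ with the limit in the ghost sample is not automatic and the residual discretization error of order $B/(2m)$ between $E\left[Q^{*}(Z,\alpha,m)\right]$ and $R(\alpha)=E\left[Q(Z,\alpha)\right]$ must be absorbed uniformly in $\alpha$. The standard symmetrization lemma that would rigorously license replacing $\nu_{1}^{*}$ by $R$ typically costs a factor of two, replacing $\epsilon$ by $\epsilon/2$ and doubling the probability, which would not reproduce the clean constant obtained by the direct inversion above. I would therefore need either a ghost-sample argument careful enough to preserve both the uniformity over $\alpha$ and the small discretization gap, or to accept the extra symmetrization constants and track how they propagate into the final slack term; controlling this replacement step is the part most in need of care, and it is precisely the kind of conditioning-based passage the introduction flags as delicate.
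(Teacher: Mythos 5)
Your first step coincides exactly with the paper's: equate the right-hand side of Proposition \ref{Prop2} to $\eta$ and solve, obtaining
$\epsilon = m\sqrt{\tfrac{1}{n}\log\left(\left(\tfrac{2m}{\eta}\right)\left(\tfrac{2ne}{h}\right)^{h}\right)}$.
The divergence, and the genuine gap, is in your second step. The paper never attempts a ghost-sample or law-of-large-numbers passage at all: it simply invokes the additive Chernoff bound, expression (4.4) of Vapnik (1998), as the structural template asserting that with probability $1-\eta$ one has $R(\alpha_{k}) \leq R_{emp}(\alpha_{k}) + \epsilon$, and then substitutes the inverted $\epsilon$. In other words, the bridge from the two-empirical-losses tail bound to the true-risk-versus-empirical-risk statement is filled by citation, not by derivation.

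Your proposed bridge, by contrast, does not work as stated, and you correctly sense this yourself. First, there is no asymptotic regime available: both halves of $Z^{2n}$ have size $n$ by construction, and the event $A_{\epsilon,m}$ in Proposition \ref{Prop2} is defined for that fixed $n$, so you cannot send the ghost half to infinity while holding the bound fixed; $\nu_{1}^{*}(z^{1},\alpha)$ never becomes $R(\alpha)$ on the event $A_{\epsilon,m}^{c}$. Second, the rigorous alternative you mention, the symmetrization lemma (which appears in this paper as Lemma \ref{Lemma1}, in the reverse direction), costs a factor of $2$ in the deviation and a factor of $2$ in the probability, so chasing it through would produce a slack term involving $\log\left(\tfrac{4m}{\eta}\right)$ and a halved $\epsilon$ rather than the clean constant in \eqref{EquivProp1}. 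Third, the discretization error of order $B/(2m)$ between $E\left[Q^{*}(Z,\alpha,m)\right]$ and $R(\alpha)$ is never absorbed. Flagging these obstacles honestly is not the same as overcoming them: as written, your proposal proves a weaker statement with worse constants, not the proposition. The repair is to do what the paper does, namely take the one-sample inequality $R(\alpha_{k}) \leq R_{emp}(\alpha_{k}) + \epsilon$ from Vapnik's additive Chernoff bound as given and reserve the inversion of Proposition \ref{Prop2} solely for supplying the value of $\epsilon$.
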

This inequality suggests that, the best model will be the one that minimizes the RHS of inequality \ref{EquivProp1}. The use of inequality \ref{EquivProp1} in model selection is called Empirical Risk Minimization (ERM).
\begin{proof}
  To obtain inequality \eqref{EquivProp1}, we have to equate the RHS of  Proposition \ref{Prop2} to some positive number $0\le \eta\leq 1$. That's
\begin{equation*}
  \eta = 2m\left(\frac{2ne}{h}\right)^{h}\exp\left(-\frac{n\epsilon^{2}}{m^{2}}\right)
\end{equation*}
and solve it for $\epsilon$, one will get
\begin{eqnarray}
\label{solution}
  \epsilon &=& m\sqrt{\frac{1}{n}\log\left(\left(\frac{2m}{\eta}\right)\left(\frac{2ne}{h}\right)^{h}\right)}.
\end{eqnarray}
 Proposition \ref{subProp1} can be obtained from the additive Chernoff bounds, expression $4.4$ in \cite{Vapnik:1998} as follows
\begin{eqnarray}
\label{proofsubProp1}
R\left(\alpha_{k}\right) &\leq& R_{emp}\left(\alpha_{k}\right) + \epsilon.
\end{eqnarray}
Using \eqref{solution} in inequality \eqref{proofsubProp1}, completes the proof.
\end{proof}
\begin{proposition}
\label{subProp2}
  With probability $1-\eta$, the inequality
\begin{equation}\label{EquivalentProp2}
  R\left(\alpha_{k}\right)\le R_{emp}\left(\alpha_{k}\right) + \frac{m^{2}}{2n}\log\left(\frac{2m}{\eta}\left(\frac{2ne}{h}\right)^{h}\right)\left(1+\sqrt{1+\frac{4nR_{emp}\left(\alpha_{k}\right)}{m^{2}\log\left(\frac{2m}{\eta}\left(\frac{2ne}{h}\right)^{h}\right)}}\right)
\end{equation}
holds simultaneously for all $K$ functions in the set $Q\left(z, \alpha_{k}\right)$, $k = 1,2,\dots, K$.
\end{proposition}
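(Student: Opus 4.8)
The plan is to follow the proof of Proposition \ref{subProp1} line for line up to the moment the uniform deviation is turned into a bound on $R(\alpha_k)$, and there to replace the additive Chernoff bound by its multiplicative counterpart. First I would set the right-hand side of Proposition \ref{Prop2} equal to $\eta$ and invert, which reproduces exactly the quantity $\epsilon$ of \eqref{solution}, namely $\epsilon=m\sqrt{\tfrac{1}{n}\log\!\left(\tfrac{2m}{\eta}\left(\tfrac{2ne}{h}\right)^{h}\right)}$. Because Proposition \ref{Prop2} controls the supremum over $\alpha\in\Lambda$, the event on which the conclusion holds has probability at least $1-\eta$ and the resulting inequality is valid simultaneously for all $K$ functions $Q(z,\alpha_k)$, exactly as claimed.

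The substance of the difference from \eqref{EquivProp1} lies entirely in the last step. Rather than the additive bound $R(\alpha_k)\le R_{emp}(\alpha_k)+\epsilon$, I would invoke the multiplicative Chernoff bound (expression $4.5$ in \cite{Vapnik:1998}), which measures the deviation relative to $\sqrt{R(\alpha_k)}$ and yields $R(\alpha_k)-R_{emp}(\alpha_k)\le \epsilon\sqrt{R(\alpha_k)}$. Setting $t=\sqrt{R(\alpha_k)}\ge 0$ turns this into the quadratic inequality $t^{2}-\epsilon t-R_{emp}(\alpha_k)\le 0$, whose only admissible (nonnegative) root bounds $t\le\tfrac12\!\left(\epsilon+\sqrt{\epsilon^{2}+4R_{emp}(\alpha_k)}\right)$. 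Squaring and regrouping gives $R(\alpha_k)\le R_{emp}(\alpha_k)+\tfrac{\epsilon^{2}}{2}\!\left(1+\sqrt{1+\tfrac{4R_{emp}(\alpha_k)}{\epsilon^{2}}}\right)$, and substituting $\epsilon^{2}=\tfrac{m^{2}}{n}\log\!\left(\tfrac{2m}{\eta}\left(\tfrac{2ne}{h}\right)^{h}\right)$ from \eqref{solution} produces \eqref{EquivalentProp2} verbatim.

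The algebra above is routine, so the real work is justifying that the relative-deviation tail feeding the quadratic is governed by the same exponent as the absolute-deviation tail of Proposition \ref{Prop2}. Concretely, I must confirm that the discretization into $m$ intervals, the union bound over $j$, and the permutation/symmetrization argument behind Proposition \ref{Prop2} still apply when the deviation is normalized by $\sqrt{R(\alpha_k)}$, so that inverting the tail again returns the $\epsilon$ of \eqref{solution} with no change of constants. The natural tool for this is Clause III of Theorem \ref{Theo1}, which supplies a relative ($L_p$-type) deviation bound carrying the same annealed-entropy exponent $h\ln(2ne/h)$; once that bound is in hand the inversion is identical to the additive case and only the quadratic manipulation remains. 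I expect this matching of exponents to be the main obstacle, since an honest accounting of the normalization could in principle alter the constant multiplying $\epsilon^{2}$, and one must verify it does not.
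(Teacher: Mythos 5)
Your proposal matches the paper's proof essentially step for step: the paper likewise invokes the multiplicative (relative-deviation) Chernoff bound --- inequality (4.18) of Vapnik (1998) --- solves the resulting quadratic inequality (by completing the square in $R(\alpha_k)$ rather than via the nonnegative root in $t=\sqrt{R(\alpha_k)}$, an equivalent computation) to obtain $R(\alpha_k)\le R_{emp}(\alpha_k)+0.5\,\epsilon^{2}\bigl(1+\sqrt{1+4R_{emp}(\alpha_k)/\epsilon^{2}}\bigr)$, and then substitutes the $\epsilon$ of \eqref{solution}. The caveat in your final paragraph --- checking that the relative-deviation tail inverts to the same $\epsilon$ with unchanged constants --- is passed over silently in the paper, which simply plugs \eqref{solution} into the multiplicative bound, so your version is if anything the more careful of the two.
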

As we said before, this inequality suggests that, the best model will be the one that minimizes the RHS of \ref{EquivalentProp2}. The use of inequalities \eqref{subProp1} and \eqref{EquivalentProp2} in model selection are called Empirical Risk Minimization (ERM).
\begin{proof} Let $\epsilon, \eta >0$.  Then, inequality (4.18) in \cite{Vapnik:1998} gives, with probability at least $1-\eta$, that
\begin{eqnarray*}
  \frac{R\left(\alpha_{k}\right)-R_{emp}\left(\alpha_{k}\right)}{\sqrt{R\left(\alpha_{k}\right)}} \leq \epsilon.
  \end{eqnarray*}
  Routine algebraic manipulations and completing the square give
  \begin{eqnarray*}
    \left(R\left(\alpha_{k}\right)-0.5\left(\epsilon^{2}+2R_{emp}\left(\alpha_{k}\right)\right)\right)^{2}-0.25\left(\epsilon^{2}+2R_{emp}\left(\alpha_{k}\right)\right)^{2} \leq -R^{2}_{emp}\left(\alpha_{k}\right).
    \end{eqnarray*}
    Taking the square root on both sides and re-arranging gives
  \begin{eqnarray*}
  R\left(\alpha_{k}\right) \leq
R_{emp}\left(\alpha_{k}\right) + 0.5\epsilon^{2}\left(1 + \sqrt{1 + \frac{4R_{emp}\left(\alpha_{k}\right)}{\epsilon^{2}}}\right)
\end{eqnarray*}
Using \eqref{solution} in the last inequality completes the proof of the Proposition.
\end{proof}
\noindent
The next Theorem will bound the EMDBTEL using proposition \ref{Prop1}
\begin{thm}
\label{Theo2}
  Let $h = VCDim\left\{Q\left(\cdot,\alpha\right):\alpha\in\Lambda\right\}$, $m\in\cal{N}$ be the number of discretized intervals, $n\in \cal{N}$ be the sample size. If $h< \infty$, we have
\begin{equation*}
  E\left(\sup_{\alpha\in\Lambda}\left|\nu_{1}^{*}(z^{1},\alpha)-\nu_{2}^{*}(z^{2},\alpha)\right|\right) \leq m\sqrt{\frac{1}{n}\ln\left(2m^{3}\left(\frac{2ne}{h}\right)^h\right)}\\ + \frac{1}{m\sqrt{n\ln\left(2m^3\left(\frac{2ne}{h}\right)^{h}\right)}}
\end{equation*}
\end{thm}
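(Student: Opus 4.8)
The plan is to convert the tail bound of Proposition \ref{Prop2} into a bound on the expectation via the layer-cake (distribution-function) representation. Write $W = \sup_{\alpha\in\Lambda}|\nu_1^*(z^1,\alpha) - \nu_2^*(z^2,\alpha)|$. Since $0\le Q \le B$ forces the discretized empirical risks to be bounded, $W$ is a bounded nonnegative random variable and hence $E(W) = \int_0^\infty P(W\ge\epsilon)\,d\epsilon$. The whole argument is then a careful choice of a splitting threshold together with a Gaussian-type tail estimate.

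First I would dispose of the absolute value by a symmetrization argument. Using the identity $\sup_\alpha\max(f,g)=\max(\sup_\alpha f,\sup_\alpha g)$ one gets $\{W\ge\epsilon\}\subseteq\{\sup_\alpha(\nu_1^*-\nu_2^*)\ge\epsilon\}\cup\{\sup_\alpha(\nu_2^*-\nu_1^*)\ge\epsilon\}$, and because $Z^1$ and $Z^2$ are identically distributed the two events have equal probability, each bounded by the quantity in Proposition \ref{Prop2}. A union bound therefore gives
\[
P(W\ge\epsilon)\le 4m\Big(\tfrac{2ne}{h}\Big)^h\exp\Big(-\tfrac{n\epsilon^2}{m^2}\Big).
\]

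Next I would split the integral at a threshold $\epsilon_0>0$, bounding $P(W\ge\epsilon)\le 1$ on $(0,\epsilon_0)$ and using the tail bound on $(\epsilon_0,\infty)$, so that $E(W)\le\epsilon_0 + 4m(2ne/h)^h\int_{\epsilon_0}^\infty e^{-n\epsilon^2/m^2}\,d\epsilon$. The remaining tail is handled by the elementary estimate $\int_{\epsilon_0}^\infty e^{-c\epsilon^2}\,d\epsilon\le\frac{1}{2c\epsilon_0}e^{-c\epsilon_0^2}$ (from $\epsilon\ge\epsilon_0\Rightarrow 1\le\epsilon/\epsilon_0$) with $c=n/m^2$. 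The key move is then to reverse-engineer $\epsilon_0$ from the target: choosing $\epsilon_0 = m\sqrt{\tfrac1n\ln\!\big(2m^3(2ne/h)^h\big)}$ makes $\exp(-n\epsilon_0^2/m^2)=\big(2m^3(2ne/h)^h\big)^{-1}$, which cancels the combinatorial prefactor and collapses the tail contribution to exactly $\tfrac{1}{m\sqrt{n\ln(2m^3(2ne/h)^h)}}$, while the first summand is simply $\epsilon_0$, i.e.\ the first term of the statement.

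I expect the main obstacle to be bookkeeping rather than anything conceptual: one must pick $\epsilon_0$ so that the exponential exactly absorbs the factor $4m(2ne/h)^h$ while simultaneously producing the first summand, and then verify that the surviving constants multiply out to $1/m$ rather than $1/(2m)$ — this is precisely where the factor of two from the symmetrization step is consumed, so getting the constant right depends on being consistent between that step and the tail estimate. A secondary point to verify is that $W$ is genuinely bounded (so that $E(W)$ and the improper integral are finite and the layer-cake identity is legitimate), which follows from $0\le Q\le B$ and the construction of $Q^*$.
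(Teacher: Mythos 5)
Your proposal is correct and follows essentially the same route as the paper's proof: the layer-cake identity $E(W)=\int_0^\infty P(W\ge\epsilon)\,d\epsilon$, a split of the integral at the threshold $u = m\sqrt{\tfrac{1}{n}\ln\left(2m^{3}\left(\tfrac{2ne}{h}\right)^{h}\right)}$, and an exponential tail estimate, with $u$ chosen so the exponential exactly cancels the prefactor. The only difference is bookkeeping: you handle the absolute value explicitly by symmetrization (costing a factor of $2$) and recover it with the sharper tail bound $\int_{u}^{\infty}e^{-c\epsilon^{2}}\,d\epsilon\le\tfrac{1}{2cu}e^{-cu^{2}}$, whereas the paper works directly with the one-sided event of Proposition \ref{Prop2} and uses the weaker estimate $\tfrac{1}{cu}e^{-cu^{2}}$, so the two factors of two cancel and both arguments land on the same constants.
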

\noindent

\begin{proof}
The LHS of the statement of the theorem equals
\begin{eqnarray}
  \int_{0}^{\infty}P\left(A_{\epsilon,m}\right)\emph{d}\epsilon
   \leq \int_{0}^{u}\emph{d}\epsilon + \int_{u}^{\infty}P\left(A_{\epsilon}\right)\emph{d}\epsilon
   \leq u +  2m\left(\frac{2ne}{h}\right)^{h}\int_{u}^{\infty}\exp\left(-\frac{n\epsilon^{2}}{m^{2}}\right)\emph{d}\epsilon .
\label{useid}
\end{eqnarray}
Observing that
$$
\epsilon>u \Rightarrow \frac{n\epsilon^{2}}{m^{2}}>\frac{nu\epsilon}{m^{2}}\Rightarrow \int_{u}^{\infty}\exp\left(-\frac{n\epsilon^{2}}{m^{2}}\right)\emph{d}\epsilon$$
$$\leq  \int_{u}^{\infty}\exp\left(-\frac{nu\epsilon}{m^{2}}\right)\emph{d}\epsilon=\frac{m^{2}}{nu}\exp\left(-\frac{nu^{2}}{m^{2}}\right),
$$
the bound in \eqref{useid} gives
\begin{equation}
E\left(\sup_{\alpha\in\Lambda}\left(\nu_{1}^{*}(z^{1},\alpha)-\nu_{2}^{*}(z^{2},\alpha)\right)\right)\leq u + 2m^{3}\left(\frac{2ne}{h}\right)^{h}\frac{1}{nu}\exp\left(-\frac{nu^{2}}{m^{2}}\right) .
\label{newbd}
\end{equation}
substituting
$$
u = m\sqrt{\frac{1}{n}\ln\left(2m^{3}\left(\frac{2ne}{h}\right)^{h}\right)}
$$
in \eqref{newbd}  gives the statement of the Theorem.
\end{proof}
\noindent
Now, we turn our attention to developing a second upper bound on the expected supremal difference.
We begin with two elementary lemmas.
\begin{lemma}
\label{Lemma1}
  Let $\epsilon >0$.
The probability of the supremal difference between two empirical losses is bounded by
$$
P\left(\sup_{\alpha \in \Lambda}\left(|\nu_{1}\left(Z^{1},\alpha\right)~-~\nu_{2}\left(Z^{2},\alpha\right)|\geq \epsilon\right)\right)\leq$$
$$ 2P\left(\sup_{\alpha \in \Lambda}\left(|\nu_{1}(Z^{1},\alpha)~-~E(Q(Z^{1},\alpha))|\geq \frac{\epsilon}{2}\right)\right) .
$$
\end{lemma}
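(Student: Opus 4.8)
The plan is to use a standard symmetrization argument built on the triangle inequality and on the fact that $Z^{1}$ and $Z^{2}$ are identically distributed. First I would introduce the common population risk $R(\alpha) = E(Q(Z^{1},\alpha))$, observing that since $Z^{1}$ and $Z^{2}$ are each vectors of IID copies of $Z$ we have $E(Q(Z^{1},\alpha)) = E(Q(Z^{2},\alpha)) = R(\alpha)$ for every $\alpha \in \Lambda$. This lets me insert $R(\alpha)$ as an intermediate quantity between $\nu_{1}$ and $\nu_{2}$.

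The key step is a pointwise triangle inequality: for each fixed $\alpha$,
$$
|\nu_{1}(Z^{1},\alpha) - \nu_{2}(Z^{2},\alpha)| \leq |\nu_{1}(Z^{1},\alpha) - R(\alpha)| + |\nu_{2}(Z^{2},\alpha) - R(\alpha)|.
$$
Taking the supremum over $\alpha$ and using that the supremum of a sum is at most the sum of the suprema, I would then establish the event inclusion
$$
\left\{\sup_{\alpha} |\nu_{1} - \nu_{2}| \geq \epsilon\right\} \subseteq \left\{\sup_{\alpha} |\nu_{1} - R| \geq \frac{\epsilon}{2}\right\} \cup \left\{\sup_{\alpha} |\nu_{2} - R| \geq \frac{\epsilon}{2}\right\},
$$
since if both suprema on the right were below $\epsilon/2$ then every pointwise difference $|\nu_{1} - \nu_{2}|$ would be below $\epsilon$, contradicting the left-hand event.

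Applying the union bound to this inclusion gives
$$
P\left(\sup_{\alpha} |\nu_{1} - \nu_{2}| \geq \epsilon\right) \leq P\left(\sup_{\alpha} |\nu_{1} - R| \geq \frac{\epsilon}{2}\right) + P\left(\sup_{\alpha} |\nu_{2} - R| \geq \frac{\epsilon}{2}\right).
$$
Finally, because $Z^{1}$ and $Z^{2}$ share the same distribution, the two probabilities on the right are equal, so the sum collapses to twice the first term, which is precisely the claimed bound once I recall $R(\alpha) = E(Q(Z^{1},\alpha))$.

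I expect the only point requiring genuine care to be the boundary between strict and non-strict inequalities in the event inclusion: the triangle inequality delivers $\sup_{\alpha} |\nu_{1} - \nu_{2}| \leq \epsilon$ rather than a strict inequality when both right-hand suprema fall below $\epsilon/2$, so I would close this gap by choosing a near-maximizer $\alpha$ whose difference $|\nu_{1} - \nu_{2}|$ is within $\delta$ of the supremum and letting $\delta \to 0$, or equivalently by a limiting argument. A secondary, minor issue is confirming measurability of the suprema so that all probabilities are well defined, which is routine given that $\Lambda$ indexes a VC class.
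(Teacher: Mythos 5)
Your proof is correct and follows essentially the same route as the paper's: both insert the common expectation $E\left(Q(Z^{1},\alpha)\right)$ between the two empirical losses, apply the triangle inequality to split the event at level $\epsilon/2$, and use the identical distribution of $Z^{1}$ and $Z^{2}$ to collapse the two resulting probabilities into a single term with a factor of two. The boundary issue you flag is not a genuine gap, since $a+b\geq\epsilon$ already forces $a\geq\epsilon/2$ or $b\geq\epsilon/2$, so the event inclusion holds directly with non-strict inequalities and no limiting argument is needed.
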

\begin{proof}
  The left hand side is bounded by
  \begin{eqnarray*}
&&   P\left(\sup_{\alpha\in\Lambda}\left|\nu_{1}\left(Z^{1},\alpha\right)-E\left(Q(Z^{1},\alpha)\right)+E\left(Q(Z^{1},\alpha)-\nu_{2}\left(Z^{2},\alpha\right)\right)\right|\geq \epsilon\right)
\nonumber\\
   & \leq& P\left(\sup_{\alpha\in\Lambda}\left|\nu_{1}\left(Z^{1},\alpha\right)-E\left(Q(Z^{1},\alpha\right)\right| \geq\frac{\epsilon}{2}\right) +\\ && P\left(\sup_{\alpha\in\Lambda}\left|E\left(Q(Z^{1},\alpha)\right)-\nu_{2}\left(Z^{2},\alpha\right)\right|\geq\frac{\epsilon}{2}\right)
\nonumber\\
   &=&2P\left(\sup_{\alpha\in\Lambda}\left|\nu_{1}\left(Z^{1},\alpha\right)-E\left(Q(Z^{1},\alpha)\right)\right|\geq\frac{\epsilon}{2}\right).
\end{eqnarray*}
\end{proof}
To state  Lemma \ref{lemma2}, let
$$
\overline{Q(Z,\alpha)}^{-}=\frac{1}{n}\sum_{i=1}^{n}Q(z_{i},\alpha)^{-} \quad \hbox{and} \quad \overline{Q(Z,\alpha)}^{+} = \frac{1}{n}\sum_{i=1}^{n}Q(z_{i},\alpha)^{+}
$$
and assume
$$
D_{p}(\alpha) = \int_{0}^{\infty}\sqrt[p]{P\left\{Q(z,\alpha)\ge c\right\}}\emph{dc} < \infty .
$$
\begin{lemma}
  \label{lemma2}

   Let $Q(Z,\alpha)$ be a nonnegative function.  Then,

 \begin{eqnarray*}
  P\left(\sup_{\alpha\in\Lambda}\frac{\left(E\left(Q(Z,\alpha)\right)-\overline{Q(Z,\alpha)}\right)}{D_{p}(\alpha)}\geq\epsilon\right) &\leq& \nonumber\\ P\left(\sup_{\alpha\in\Lambda}\frac{\left( E\left(Q(Z,\alpha)^{-}\right)-\overline{Q(Z,\alpha)}^{-}\right)}{D_{p}(\alpha)}\geq\frac{\epsilon}{2}\right) +\nonumber\\ P\left(\sup_{\alpha\in\Lambda}\frac{\left(E\left(Q(Z,\alpha)^{+}\right)-\overline{Q(Z,\alpha)}^{+}\right)}{D_{p}(\alpha)}\geq\frac{\epsilon}{2}\right).
\end{eqnarray*}
\end{lemma}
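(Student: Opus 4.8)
The plan is to mirror the structure of the proof of Lemma \ref{Lemma1}: I would reduce the single supremal-deviation event to two events by exploiting an additive decomposition of $Q$ together with the linearity of expectation and of the empirical average. The starting point is a pointwise splitting $Q(z,\alpha) = Q(z,\alpha)^{+} + Q(z,\alpha)^{-}$ into two nonnegative pieces, so that $E(Q(Z,\alpha)) = E(Q(Z,\alpha)^{+}) + E(Q(Z,\alpha)^{-})$ and, averaging over the sample, $\overline{Q(Z,\alpha)} = \overline{Q(Z,\alpha)}^{+} + \overline{Q(Z,\alpha)}^{-}$.

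First I would subtract these two identities to obtain, for every fixed $\alpha$,
$$E(Q(Z,\alpha)) - \overline{Q(Z,\alpha)} = \left(E(Q(Z,\alpha)^{+}) - \overline{Q(Z,\alpha)}^{+}\right) + \left(E(Q(Z,\alpha)^{-}) - \overline{Q(Z,\alpha)}^{-}\right).$$
Dividing through by the common normalizer $D_{p}(\alpha)$, which by assumption is the same factor in all three ratios, writes the ratio on the left as a sum of the two ratios $g^{+}(\alpha)$ and $g^{-}(\alpha)$ appearing on the right-hand side of the claim.

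Next I would pass to the supremum. By subadditivity of the supremum, $\sup_{\alpha}(g^{+}(\alpha)+g^{-}(\alpha)) \leq \sup_{\alpha} g^{+}(\alpha) + \sup_{\alpha} g^{-}(\alpha)$, so the event $\{\sup_{\alpha}(g^{+}+g^{-}) \geq \epsilon\}$ is contained in $\{\sup_{\alpha} g^{+} + \sup_{\alpha} g^{-} \geq \epsilon\}$. The elementary observation that two numbers summing to at least $\epsilon$ cannot both lie below $\epsilon/2$ then gives the event inclusion
$$\left\{\sup_{\alpha} g^{+} + \sup_{\alpha} g^{-} \geq \epsilon\right\} \subseteq \left\{\sup_{\alpha} g^{+} \geq \tfrac{\epsilon}{2}\right\} \cup \left\{\sup_{\alpha} g^{-} \geq \tfrac{\epsilon}{2}\right\}.$$
Applying monotonicity of probability and then subadditivity over the union yields the two-term bound in the statement.

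The calculation is entirely routine once the decomposition is in place; the only point that deserves care is the consistency of the supremum, namely that $\sup(g^{+}+g^{-}) \leq \sup g^{+} + \sup g^{-}$ is used in the correct direction and that the same index $\alpha$ need not achieve both suprema. The remaining mild obstacle is bookkeeping: one must confirm that $Q^{+}$ and $Q^{-}$ are defined so that they add back to $Q$ and that the denominator $D_{p}(\alpha)$ is unchanged under the split, after which the passage from $\epsilon$ to $\epsilon/2$ in each term is forced by the event inclusion above, exactly as in the proof of Lemma \ref{Lemma1}.
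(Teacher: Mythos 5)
Your proposal is correct and follows essentially the same route as the paper's own proof: decompose $Q(z,\alpha)=Q(z,\alpha)^{+}+Q(z,\alpha)^{-}$ additively, use linearity of $E$ and the empirical mean together with subadditivity of the supremum, and then apply the observation that two quantities summing to at least $\epsilon$ cannot both be below $\epsilon/2$, followed by a union bound. The paper phrases the last step as a splitting of the event into intersections with the event $\left\{\sup_{\alpha}\frac{1}{D_{p}(\alpha)}\left(E\left(Q(Z,\alpha)^{+}\right)-\overline{Q(Z,\alpha)}^{+}\right)\geq\frac{\epsilon}{2}\right\}$ and its complement, but this is logically the same union-bound argument you give.
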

\begin{proof}
The LHS is

\begin{eqnarray*}
  && P\left(\sup_{\alpha\in\Lambda}\frac{1}{D_{p}(\alpha)}\left(\left(E\left(\overline{Q(Z,\alpha)}^{+}\right)-\overline{Q(Z,\alpha)}^{+}\right)
  +\left(E\left(Q(Z,\alpha)^{-}\right)-\overline{Q(Z,\alpha)}^{-}\right)\right)\geq\epsilon\right)
  \nonumber \\
   & \leq & P\left(\left\{\left(\sup_{\alpha\in\Lambda}\frac{1}{D_{p}(\alpha)}\left(E\left(Q(Z,\alpha)^{+}\right)-\overline{Q(Z,\alpha)}^{+}\right)
   +\sup_{\alpha\in\Lambda}\frac{1}{D_{p}(\alpha)}\left(E\left(Q(Z,\alpha)^{-}\right)-\overline{Q(Z,\alpha)}^{-}\right)\right)\geq\epsilon\right\}
   \right.
   \nonumber \\
&& \quad
   \cap \left. \left\{\sup_{\alpha\in\Lambda}\frac{1}{D_{p}(\alpha)}\left(E\left(Q(Z,\alpha)^{+}\right)-Q(Z,\alpha)^{+}\right|)\geq\frac{\epsilon}{2}\right\}\right)
   \nonumber \\
   && + P\left(\left\{\left(\sup_{\alpha\in\Lambda}\frac{1}{D_{p}(\alpha)}\left(E\left(Q(Z,\alpha)^{+}\right)-\overline{Q(Z,\alpha)}^{+}\right)+
   \sup_{\alpha\in\Lambda}\frac{1}{D_{p}(\alpha)}\left(E\left(Q(Z,\alpha)^{-}\right)-\overline{Q(Z,\alpha)}^{-}\right)\right)\geq\epsilon\right\} \right.
   \nonumber \\
&& \quad  \left. \cap
   \left\{\sup_{\alpha\in\Lambda}\frac{1}{D_{p}(\alpha)}\left(E\left(Q(Z,\alpha)^{+}\right)-\overline{Q(Z,\alpha)}^{+}\right)\geq\frac{\epsilon}{2}\right\}\right)
   \nonumber \\
   & \leq & P\left(\sup_{\alpha\in\Lambda}\frac{1}{D_{p}(\alpha)}\left(E\left(Q(Z,\alpha)^{-}\right)-\overline{Q(Z,\alpha)}^{-}\right)\geq\frac{\epsilon}{2}\right)
   \nonumber \\
      &+&  P\left(\sup_{\alpha\in\Lambda}\frac{1}{D_{p}(\alpha)}\left(E\left(Q(Z,\alpha)^{+}\right)-\overline{Q(Z,\alpha)}^{+}\right)\geq\frac{\epsilon}{2}\right).
\end{eqnarray*}
\end{proof}
\begin{proposition}
  \label{Prop4}
Let $h = VCDim\left\{Q\left(\cdot,\alpha\right):\alpha\in\Lambda\right\}$, where $Q\left(\cdot,\alpha\right)$ is unbounded, $n\in \cal{N}$ be the sample size. If $h< \infty$, and
$$
D_{p}\left(\alpha\right) = \int_{0}^{\infty}\sqrt[p]{P\left\{Q(z,\alpha)\ge c\right\}} dc \leq \infty
$$
where $1 < p \leq 2$ is some fixed parameter,
we have
\begin{align}
  P\left(\sup_{\alpha\in\Lambda}\left| \nu_{1}\left(Z^{1},\alpha\right)-\nu_{2}\left(Z^{2},\alpha\right)\right|\geq\epsilon\right) & \leq & 16\left(\frac{ne}{h}\right)^{h}\exp\left\{-\left(\frac{\epsilon n^{1-\frac{1}{p}}}{D_{p}(\alpha^{*})2^{2.5+\frac{1}{p}}}\right)^{2}\right\},
\label{probbd}
\end{align}
where $\alpha^{*}=\sup_{\alpha\in\Lambda}D_{p}(\alpha)$.
\end{proposition}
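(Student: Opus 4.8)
The plan is to obtain the bound by chaining together the three tools assembled above — the symmetrization Lemma \ref{Lemma1}, the decomposition Lemma \ref{lemma2}, and Clause 3 of Theorem \ref{Theo1} — while carefully tracking how the threshold $\epsilon$ and the leading constant are transformed at each stage. The target constant $16=2\cdot 2\cdot 4$ already signals the structure: one factor of $2$ from symmetrization, one factor of $2$ from the positive/negative decomposition, and the factor of $4$ that appears in the exponential tail bound of Theorem \ref{Theo1}, Clause 3. Each factor of $2$ will simultaneously halve the threshold, so that the threshold eventually fed into the tail bound is $\epsilon/(4D_p(\alpha^*))$.

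First I would apply Lemma \ref{Lemma1} to replace the two-sample supremal difference by a one-sample deviation from the mean, giving
$$
P\left(\sup_{\alpha\in\Lambda}\left|\nu_1(Z^1,\alpha)-\nu_2(Z^2,\alpha)\right|\geq\epsilon\right)\leq 2P\left(\sup_{\alpha\in\Lambda}\left|\nu_1(Z^1,\alpha)-E(Q(Z^1,\alpha))\right|\geq\frac{\epsilon}{2}\right).
$$
This costs a factor of $2$ and halves the threshold. Since Theorem \ref{Theo1}, Clause 3 controls the $D_p(\alpha)$-normalized deviation rather than the raw one, the next step is to pass to the normalized form: because $D_p(\alpha)\leq D_p(\alpha^*)$ for every $\alpha$ (here $\alpha^*$ attains the supremum of $D_p$), the event $\{\sup_\alpha(E(Q)-\nu_1)\geq \epsilon/2\}$ is contained in $\{\sup_\alpha (E(Q)-\nu_1)/D_p(\alpha)\geq \epsilon/(2D_p(\alpha^*))\}$, with the reversed sign treated identically. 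This inclusion lets me feed a normalized threshold of $\epsilon/(2D_p(\alpha^*))$ into the remaining tools without the right-hand exponent depending on $\alpha$.

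Next I would invoke Lemma \ref{lemma2} to split the normalized one-sided deviation into the two pieces built from $Q^-$ and $Q^+$; this contributes the second factor of $2$ and halves the threshold once more, down to $\epsilon/(4D_p(\alpha^*))$. Each of the two resulting pieces is exactly of the form controlled by Theorem \ref{Theo1}, Clause 3, which bounds it by $4\exp\{(H_{ann}^{\Lambda}(n)/n^{2-2/p}-\epsilon_f^2/2^{1+2/p})n^{2-2/p}\}$ with $\epsilon_f=\epsilon/(4D_p(\alpha^*))$. Finally I would use Clause 1 of Theorem \ref{Theo1} to bound the annealed entropy, $\exp\{H_{ann}^\Lambda(n)\}\leq (ne/h)^h$. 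Multiplying the accumulated constants, $2\cdot 2\cdot 4=16$, and collecting the exponent gives
$$
\frac{\epsilon_f^2}{2^{1+2/p}}n^{2-2/p}=\frac{\epsilon^2 n^{2-2/p}}{2^{4}\,2^{1+2/p}D_p(\alpha^*)^2}=\left(\frac{\epsilon n^{1-1/p}}{D_p(\alpha^*)2^{2.5+1/p}}\right)^2,
$$
which is precisely the stated tail, so the pieces assemble into the claimed bound.

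I expect the main obstacle to be the normalization step together with the constant bookkeeping. On the analytic side, one must relate the raw supremal deviation we actually want to control to the $D_p(\alpha)$-normalized deviation that Theorem \ref{Theo1}, Clause 3 is stated for, and verify that replacing $D_p(\alpha)$ by the uniform bound $D_p(\alpha^*)$ preserves the inequality in the correct direction; this also requires confirming that the annealed entropy in Clause 3 is that of the \emph{level sets} of $\{Q(\cdot,\alpha)\}$, so that the growth-function bound of Clause 1 genuinely applies. On the combinatorial side, the delicate point is keeping the leading constant at exactly $16$ rather than $32$: the absolute value from Lemma \ref{Lemma1} is two-sided, and one must arrange that the $Q^-/Q^+$ decomposition of Lemma \ref{lemma2} supplies precisely the two pieces needed (with the reversed-sign deviation absorbed) rather than doubling the count a second time. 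Once these points are secured, the threshold arithmetic feeding $\epsilon/(4D_p(\alpha^*))$ into the factor $2^{1+2/p}$ to produce $2^{2.5+1/p}$ is routine.
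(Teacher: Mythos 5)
Your proposal is correct and follows essentially the same route as the paper's proof: symmetrization via Lemma \ref{Lemma1} (factor $2$, threshold $\epsilon/2$), the $Q^{+}/Q^{-}$ split via Lemma \ref{lemma2} (two terms, threshold $\epsilon/4$), normalization by $D_p(\alpha^{*})$ using $1/D_p(\alpha^{*})\leq 1/D_p(\alpha)$, Theorem \ref{Theo1} Clause 3 with $\delta=\epsilon/(4D_p(\alpha^{*}))$ (factor $4$ per term), and Clause 1 to replace $\exp\{H_{ann}^{\Lambda}(n)\}$ by $(ne/h)^{h}$, assembling $2\cdot 2\cdot 4=16$ and the exponent $\bigl(\epsilon n^{1-1/p}/(D_p(\alpha^{*})2^{2.5+1/p})\bigr)^{2}$. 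The only cosmetic difference is that you normalize before the positive/negative decomposition while the paper splits first and normalizes each piece, which is immaterial.
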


\begin{proof}
By Lemma \ref{Lemma1}, we have
\begin{align}
\label{LHSLemma1}
  P\left(\sup_{\alpha\in\Lambda}\left| \nu_{1}\left(Z^{1},\alpha\right)-\nu_{2}\left(Z^{2},\alpha\right)\right|\geq\epsilon\right) &\leq& 2P\left(\sup_{\alpha\in\Lambda}\left|\nu_{1}\left(z^{1},\alpha\right)-E\left(Q\left(z^{1},\alpha\right)\right)\right|\geq\frac{\epsilon}{2}\right)
\end{align}
To bound the RHS of \ref{LHSLemma1}, use Lemma \ref{lemma2}  to observe that
\begin{align}
\label{posneg}
  P\left(\sup_{\alpha\in\Lambda} \left| \nu_{n}\left(z,\alpha\right)-E\left(Q(z,\alpha)\right)\right|
\geq\frac{\epsilon}{2}\right)
& \leq
P\left(\sup_{\alpha\in\Lambda}\left|E\left(Q(z,\alpha)^{+}\right)-\frac{1}{n}\sum_{i=1}^{n}Q(z_{i},\alpha)^{+}\right| \geq\frac{\epsilon}{4}\right)
\nonumber\\
& \quad + P\left(\sup_{\alpha\in\Lambda}\left| E\left(Q(z,\alpha)^{-}\right)-\frac{1}{n}\sum_{i=1}^{n}Q(z,\alpha)^{-}\right| \geq\frac{\epsilon}{4}\right).
\end{align}
\noindent
Each probability on the right hand side of \eqref{posneg} can be bounded.  Since
$\alpha^*$ is a maximum, we have $1/D_{p}(\alpha^{*}) \leq 1/D_{p}(\alpha)$.  Thus, for either the positive or negative parts in \eqref{posneg} we have
\begin{eqnarray*}
  \sup_{\alpha\in\Lambda}\frac{E\left(Q(z,\alpha)\right)-\nu_{n}(z,\alpha)}{D_{p}(\alpha^{*})}
&\leq& \sup_{\alpha\in\Lambda}\frac{E\left(Q(z,\alpha)\right)-\nu_{n}(z,\alpha)}{D_{p}(\alpha)}
 \\
 P\left(\sup_{\alpha\in\Lambda}\frac{E\left(Q(z,\alpha)\right)-\nu_{n}(z,\alpha)}{D_{p}(\alpha^{*})}\geq\frac{\epsilon}{4D_{p}(\alpha^{*})}\right)
&\leq&  \nonumber \\
P\left(\sup_{\alpha\in\Lambda}\frac{E\left(Q(z,\alpha)\right)-\nu_{n}(z,\alpha)}{D_{p}(\alpha)}\geq\frac{\epsilon}{4D_{p}(\alpha^{*\emph{}})}\right) \Rightarrow
\\
  P\left(\sup_{\alpha\in\Lambda}\left(E\left(Q(z,\alpha)\right)-\nu_{n}(z,\alpha)\right)\geq\frac{\epsilon}{4}\right)
&\leq& \nonumber \\
 P\left(\sup_{\alpha\in\Lambda}\frac{E\left(Q(z,\alpha)\right)-\nu_{n}(z,\alpha)}{D_{p}(\alpha)}\geq\frac{\epsilon}{4D_{p}(\alpha^{*})}\right)
\end{eqnarray*}
Letting $\delta = \frac{\epsilon }{4D_{p}(\alpha^{*})}$ and using Theorem \ref{Theo1}, Clause 3, the last inequality gives
\begin{align}\label{useclauseIII}
  P\left(\sup_{\alpha\in\Lambda} \left(E\left(Q(z,\alpha)\right)-\nu_{n}(z,\alpha)\right)\geq\frac{\epsilon}{4} \right) & \leq 4\exp\left\{\left(\frac{H_{ann}^{\Lambda,\mathbb{\beta}}(n)}{n^{2-\frac{2}{p}}}-\frac{\delta^{2}}{2^{1+\frac{2}{p}}}\right)n^{2-\frac{2}{p}}\right\}.
\end{align}
Using \eqref{useclauseIII}, both terms on the right in \eqref{posneg} can be bounded.  This gives
\begin{align}
\label{8factor}
  P\left(\sup_{\alpha\in\Lambda}\left| \nu_{n}(z,\alpha)-E\left(Q(z,\alpha)\right)\right| \geq\frac{\epsilon}{2}\right) & \leq 8\exp\left\{\left(\frac{H_{ann}^{\Lambda,\mathbb{\beta}}(n)}{n^{2-\frac{2}{p}}}-\frac{\delta^{2}}{2^{1+\frac{2}{p}}}\right)n^{2-\frac{2}{p}}\right\}.
\end{align}
\noindent
The presence of $\mathcal{\beta}$ in the the exponent of the annealed entropy will not change the validity of Clause I of  Theorem \ref{Theo1}, so we have,
$
H_{ann}^{\Lambda,\mathbb{\beta}}(n) \leq G^{\Lambda}(n)\leq \ln\left(\frac{en}{h}\right)^{h}
$
therefore,
$
\exp\left(H_{ann}^{\Lambda, \mathbb{\delta}}(n)\right)\leq \exp\left(G(n)\right)\leq \left(\frac{en}{h}\right)^{h}.
$
Using this in \eqref{8factor} gives
\begin{align*}
  P\left(\sup_{\alpha\in\Lambda}\left( \nu_{n}(z,\alpha)-E\left(Q(z,\alpha)\right)\right) \geq\frac{\epsilon}{2}\right) & \leq 8\left(\frac{ne}{h}\right)^{h}\exp\left\{-\left(\frac{\delta n^{1-\frac{1}{p}}}{2^{0.5+\frac{1}{p}}}\right)^{2}\right\}
\nonumber\\
  & = 8\left(\frac{ne}{h}\right)^{h}\exp\left\{-\left(\frac{\epsilon n^{1-\frac{1}{p}}}{D_{p}(\alpha^{*})2^{2.5+\frac{1}{p}}}\right)^{2}\right\}.
\end{align*}
Recalling the extra factor of 2 in  Inequation \eqref{LHSLemma1} gives the statement of the Proposition.
\end{proof}
\begin{thm}
  \label{Theo3}
  Let $h = VCDim\left\{Q\left(\cdot,\alpha\right):\alpha\in \Lambda\right\}$ where $Q\left(\cdot,\alpha\right)$ is unbounded, $n \in \mathcal{N}$ is the sample size. If $h \le \infty$, and
  $$
  D_{p}\left(\alpha\right) = \int_{0}^{\infty}\sqrt[p]{P\left\{Q\left(z,\alpha\right)\geq c\right\}}dc \leq \infty
  $$ where $1< p\leq 2$ is some fixed parameter, we have
\begin{equation}
  E\left(\sup_{\alpha\in\Lambda}\left|\nu_{1}(Z_{1},\alpha)-\nu_{2}(Z_{2},\alpha)\right|\right) \leq \frac{D_{p}(\alpha^{*})2^{2.5+\frac{1}{p}}\sqrt{h\ln\left(\frac{ne}{h}\right)}}{n^{1-\frac{1}{p}}} + \frac{16D_{p}(\alpha^{*})2^{2.5+\frac{1}{p}}}{n^{1-\frac{1}{p}}\sqrt{h\ln\left(\frac{ne}{h}\right)}}.
\label{Hannbd234}
\end{equation}
\end{thm}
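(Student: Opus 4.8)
The plan is to follow the same template used to prove Theorem \ref{Theo2}, but with the tail bound of Proposition \ref{Prop4} in place of the one from Proposition \ref{Prop2}. Since $\sup_{\alpha\in\Lambda}\left|\nu_{1}(Z_{1},\alpha)-\nu_{2}(Z_{2},\alpha)\right|$ is a nonnegative random variable, I would first write its expectation as the tail integral
\[
E\left(\sup_{\alpha\in\Lambda}\left|\nu_{1}(Z_{1},\alpha)-\nu_{2}(Z_{2},\alpha)\right|\right)=\int_{0}^{\infty}P\left(\sup_{\alpha\in\Lambda}\left|\nu_{1}(Z_{1},\alpha)-\nu_{2}(Z_{2},\alpha)\right|\ge\epsilon\right)d\epsilon .
\]
Then I would split this integral at a free threshold $u>0$, bounding the integrand by the trivial value $1$ on $[0,u]$ (which contributes $u$) and by the right-hand side of Proposition \ref{Prop4} on $[u,\infty)$. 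Writing $A=n^{1-1/p}/\left(D_{p}(\alpha^{*})2^{2.5+1/p}\right)$ so that the exponent in Proposition \ref{Prop4} is $-A^{2}\epsilon^{2}$, this yields
\[
E\left(\sup_{\alpha\in\Lambda}\left|\nu_{1}-\nu_{2}\right|\right)\le u+16\left(\frac{ne}{h}\right)^{h}\int_{u}^{\infty}\exp\left(-A^{2}\epsilon^{2}\right)d\epsilon .
\]

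Next I would linearize the Gaussian exponent exactly as in Theorem \ref{Theo2}: for $\epsilon>u$ we have $A^{2}\epsilon^{2}>A^{2}u\epsilon$, so
\[
\int_{u}^{\infty}\exp\left(-A^{2}\epsilon^{2}\right)d\epsilon\le\int_{u}^{\infty}\exp\left(-A^{2}u\epsilon\right)d\epsilon=\frac{1}{A^{2}u}\exp\left(-A^{2}u^{2}\right),
\]
giving the single bound $E(\sup|\nu_{1}-\nu_{2}|)\le u+16(ne/h)^{h}(A^{2}u)^{-1}\exp(-A^{2}u^{2})$. The final step is to choose $u$ so that the surviving exponential cancels the polynomial prefactor; taking $u$ so that $A^{2}u^{2}=h\ln(ne/h)$, i.e.
\[
u=\frac{\sqrt{h\ln(ne/h)}}{A}=\frac{D_{p}(\alpha^{*})2^{2.5+1/p}\sqrt{h\ln(ne/h)}}{n^{1-1/p}},
\]
makes $\exp(-A^{2}u^{2})=(ne/h)^{-h}$. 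The first term then reproduces the first summand of the theorem, and the second term collapses to $16/(A^{2}u)=16D_{p}(\alpha^{*})2^{2.5+1/p}/\left(n^{1-1/p}\sqrt{h\ln(ne/h)}\right)$, which is the second summand.

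Since every step after Proposition \ref{Prop4} is elementary, I do not expect a genuine obstacle; the work is almost entirely bookkeeping. The two points needing care are that the linearization $\epsilon^{2}\ge u\epsilon$ is valid only on $[u,\infty)$, which is precisely why the split at $u$ with the trivial bound on $[0,u]$ is required — the raw bound of Proposition \ref{Prop4} can exceed $1$ for small $\epsilon$ — and that the constants $2^{2.5+1/p}$ and the factor $16$ must be propagated through the substitution so that the cancellation $(ne/h)^{h}\exp(-A^{2}u^{2})=1$ lands exactly on the two claimed terms. In particular I would verify the identity $A^{2}u=A\sqrt{h\ln(ne/h)}$ to confirm the $\sqrt{h\ln(ne/h)}$ appearing in the denominator of the second summand.
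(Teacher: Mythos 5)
Your proposal is correct and takes essentially the same route as the paper's own proof: the tail-integral representation, the split at a threshold $u$ with the trivial bound on $[0,u]$, the linearization $\epsilon^{2}\ge u\epsilon$ on $[u,\infty)$, and the same choice $u = D_{p}(\alpha^{*})2^{2.5+\frac{1}{p}}\sqrt{h\ln(ne/h)}/n^{1-\frac{1}{p}}$ that makes $\exp(-A^{2}u^{2})$ cancel the prefactor $\left(\frac{ne}{h}\right)^{h}$. Your constant bookkeeping, including the identification $16/(A^{2}u)$ as the second summand, matches the paper exactly (and is in fact stated more cleanly than the paper's version, which contains a stray $\epsilon^{2}$ typo in its displayed exponent).
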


\begin{proof}
Using the integral of probabilities identity, the left-hand side of \eqref{Hannbd234} equals
\begin{eqnarray*}
&&  \int_{0}^{\infty}P\left(\sup_{\alpha\in\Lambda}\left|\nu_{1}(Z_{1},\alpha)-\nu_{2}(Z_{2},\alpha)\right|\geq\epsilon\right) \emph{d}\epsilon\nonumber\\
   &\leq& \int_{0}^{\infty}16\left(\frac{ne}{h}\right)^{h}\exp\left\{-\left(\frac{\epsilon n^{1-\frac{1}{p}}}{D_{p}(\alpha^{*})2^{2.5+\frac{1}{p}}}\right)^{2}\right\}\emph{d}\epsilon \nonumber\\
   &\equiv& \int_{0}^{u}\emph{d}\epsilon +16\left(\frac{ne}{h}\right)^{h}\int_{u}^{\infty}\exp\left\{-\left(\frac{\epsilon n^{1-\frac{1}{p}}}{D_{p}(\alpha^{*})2^{2.5+\frac{1}{p}}}\right)^{2}\epsilon^{2}\right\}\emph{d}\epsilon \nonumber\\
   &\leq& u + 16\left(\frac{ne}{h}\right)^{h}\int_{u}^{\infty}\exp\left\{-\left(\frac{n^{1-\frac{1}{p}}}{D_{p}(\alpha^{*})2^{2.5+\frac{1}{p}}}\right)^{2}u\epsilon\right\}\emph{d}\epsilon \nonumber\\
   &=&  u  + \left(\frac{D_{p}(\alpha^{*})2^{2.5+\frac{1}{p}}}{n^{1-\frac{1}{p}}}\right)^{2}\frac{16\left(\frac{ne}{h}\right)^{h}}{u}
\exp\left\{-\left(\frac{un^{1-\frac{1}{p}}}{D_{p}(\alpha^{*})2^{2.5+\frac{1}{p}}}\right)^{2}\right\}.\\
\end{eqnarray*}
Choosing
$$
u = D_{p}(\alpha^{*})2^{2.5+\frac{1}{p}}\frac{\sqrt{h\ln\left(\frac{ne}{h}\right)}}{n^{1-\frac{1}{p}}}
$$
gives the statement of the Theorem.
\end{proof}
\noindent
Finally, we summarize the results of  Theorem  \ref{Theo2},  \hbox{and}~ \ref{Theo3}.  It identifies the conditions
under which the tightest upper bound we are able to derive for the expected supremal difference holds.
\begin{thm}
  \label{Theo5}
Assume that $h \rightarrow \infty$, $\frac{n}{h} \rightarrow \infty$, $m \rightarrow \infty$, $\ln\left(m\right)=o(n)$,
$$
D_{p}\left(\alpha\right) = \int_{0}^{\infty}\sqrt[p]{P\left\{Q(z,\alpha)\ge c\right\}} dc\leq \infty,
$$
here $p=2$, we have that
\begin{equation}
   E\left(\sup_{\alpha\in\Lambda}\left|\nu_{1}(Z_{1},\alpha)-\nu_{2}(Z_{2},\alpha)\right|\right) \leq \min\left(1,8D_{p}(\alpha^{*})\right)\sqrt{\frac{h}{n}\ln\left(\frac{2ne}{h}\right)} .
\label{objfn}
\end{equation}
\end{thm}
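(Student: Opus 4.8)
The plan is to derive the stated inequality as the minimum of two independently valid upper bounds on $E(\sup_{\alpha}|\nu_1-\nu_2|)$: the refined bound of Theorem \ref{Theo3}, which supplies the coefficient $8D_p(\alpha^{*})$, and the bounded-loss bound behind Theorem \ref{Theo2} (equivalently, the universal-constant regime of \eqref{VapnikBounds}), which supplies the coefficient $1$. Because each holds under the stated hypotheses, so does their minimum, and factoring out the common $\sqrt{(h/n)\ln(2ne/h)}$ produces $\min(1,8D_p(\alpha^{*}))\sqrt{(h/n)\ln(2ne/h)}$.

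First I would specialize Theorem \ref{Theo3} to $p=2$, where $2^{2.5+1/p}=8$ and $n^{1-1/p}=\sqrt{n}$, so that \eqref{Hannbd234} reads $E(\sup_{\alpha}|\nu_1-\nu_2|)\le 8D_2(\alpha^{*})\sqrt{(h/n)\ln(ne/h)}+128D_2(\alpha^{*})/\sqrt{nh\ln(ne/h)}$. With $n/h\to\infty$ the leading term is of order $\sqrt{(h/n)\ln(n/h)}$ and the trailing term of order $1/\sqrt{nh\ln(n/h)}$, so their ratio is $h\ln(n/h)\to\infty$ and the trailing term is negligible. To absorb it cleanly I would exploit the slack gained on enlarging $\ln(ne/h)$ to $\ln(2ne/h)$: since $\sqrt{\ln(2ne/h)}-\sqrt{\ln(ne/h)}\approx (\ln 2)/(2\sqrt{\ln(ne/h)})$, a short computation shows the resulting gain in the leading term dominates the trailing term once $h$ exceeds an absolute constant (about $32/\ln 2$), which is guaranteed by $h\to\infty$. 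This establishes the branch with coefficient $8D_2(\alpha^{*})$.

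For the coefficient-$1$ branch I would work in the bounded-loss setting and start from Theorem \ref{Theo2}. Expanding $\ln(2m^{3}(2ne/h)^{h})=\ln 2+3\ln m+h\ln(2ne/h)$ and using $\ln m=o(n)$ together with $h\to\infty$ and $n/h\to\infty$, the term $h\ln(2ne/h)$ governs the root and the trailing term $1/(m\sqrt{n\ln(\cdots)})$ vanishes; Proposition \ref{Prop1} ($|Q^{*}-Q|\le B/(2m)\to 0$) then lets me replace the discretized risks $\nu_k^{*}$ by the true risks $\nu_k$ in the limit. With the loss normalized so that its governing scale is $1$, the surviving leading behaviour matches $\sqrt{(h/n)\ln(2ne/h)}$ with coefficient $1$.

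I expect the main obstacle to be exactly this last reconciliation, because Theorem \ref{Theo2} carries an explicit factor $m$ in front of its leading term, and under $m\to\infty$ a naive limit inflates the coefficient rather than pinning it at $1$. Making the coefficient-$1$ branch rigorous therefore requires either tying the discretization level $m$ to $(n,h)$ so that $m\sqrt{(1/n)\ln(2m^{3}(2ne/h)^{h})}$ stays of order $\sqrt{(h/n)\ln(2ne/h)}$, or else obtaining the coefficient $1$ from the trivial normalized bound $|\nu_1-\nu_2|\le 1$ rather than from Theorem \ref{Theo2} itself. Everything else is routine: the $p=2$ constant simplifications and the two asymptotic absorptions above, after which taking the smaller of the two coefficients yields the bound \eqref{objfn}.
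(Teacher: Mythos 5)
Your proposal follows essentially the same route as the paper: its proof of Theorem \ref{Theo5} likewise takes the bounded-loss bound of Theorem \ref{Theo2} (yielding the coefficient $1$ after an asymptotic simplification) together with the $p=2$ specialization of Theorem \ref{Theo3} (yielding $8D_{p}(\alpha^{*})$ with the trailing term absorbed as lower order), and concludes by taking the minimum of the two right-hand sides. The obstacle you flag in the coefficient-$1$ branch --- the factor $m$ multiplying the leading term of Theorem \ref{Theo2}, which a naive $m\rightarrow\infty$ limit would inflate --- is real and is precisely the step the paper passes over silently in \eqref{E16}, where that factor disappears under an unexplained $\overset{\infty}{=}$; your proposed remedies (tying $m$ to $(n,h)$, or falling back on the trivial normalized bound $|\nu_{1}-\nu_{2}|\leq 1$) are, if anything, more careful than the published argument.
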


\begin{proof}
 Theorems \ref{Theo2}~ \hbox{and}~ \ref{Theo3} give
\begin{eqnarray}\label{E16}
  E\left(\sup_{\alpha\in\Lambda}\left|\nu_{1}^{*}\left(Z_{1},\alpha,m\right)-\nu_{2}^{*}\left(Z_{2},\alpha,m\right)\right|\right) &\leq& \sqrt{\frac{\ln(2m)}{n} + \frac{h}{n}\ln\left(\frac{2ne}{h}\right)} \nonumber\\
   &\overset{\infty}{=}& \sqrt{\frac{h}{n}\ln\left(\frac{2ne}{h}\right)},
\end{eqnarray}
where $\overset{\infty}{=}$ indicates a limit as $n \rightarrow \infty$ has been taken.
Similarly, Theorem \ref{Theo3} gives
\begin{equation}\label{E17}
  E\left(\sup_{\alpha\in\Lambda}\left|\nu_{1}\left(Z_{1},\alpha\right)-\nu_{2}\left(Z_{2},\alpha\right)\right)|\right)\leq 8D_{p}\left(\alpha^{*}\right)\sqrt{\frac{h}{n}+\frac{h}{n}\ln\left(\frac{2ne}{h}\right)}+E(n,h)
\end{equation}
where $E(n,h)$ is of smaller order than the first term on the right in \ref{E17}. Thus,
\begin{equation}\label{OVERALLBOUND}
 E\left(\sup_{\alpha\in\Lambda}\left|\nu_{1}\left(Z_{1},\alpha\right)-\nu_{2}\left(Z_{2},\alpha\right)\right)|\right)\leq 8D_{p}\left(\alpha^{*}\right)\sqrt{\frac{h}{n}\ln\left(\frac{2ne}{h}\right)}.
\end{equation}
Taking the minimum over the RHS of equations \eqref{E16} and \eqref{OVERALLBOUND} gives the Theorem.
\end{proof}
\noindent
In Chap. \ref{chap:Numerical:Studies}, we will see that this more or less direct extension of Vapnik's bounds performs badly because, typically the bound is far too loose. This happens because the error criterion is too "small". So, in the next sub-section we change the error criterion to make it 'bigger' while preserving the structure of the derivation of the upper bounds in Proposition \ref{Prop2}, and in Theorem \ref{Theo2}, \ref{Theo3} and \ref{Theo5}. This makes the upper bound tighter and as we will see in Chap. \ref{chap:Numerical:Studies}, we can optimize over the constant factor  in the upper bound of Theorem \ref{Theo5} to get a better estimate for $h$.


\section{Change in the Expected Maximum Deviation Quantity}
\label{ChangeExpect}
The numerical effect of the change in the calculation of EMDBTEL will be observed in Sec. 
. In this section, we will restate all results using a cross-validation form of error. Here, we will present only the main changes. In fact, the computation of the maximum difference is done with the idea of cross-validation in mind. Thus, the losses whose difference we take are found using data that was not used to build the model. Moreover, instead of using one model, we use two, one for each half sample. The two models are identical in that they have the same covariates.


The stream of reasoning used to develop the results in the previous section will remain the same here. However, there are some changes in detail that will be noted.

Here we identify main changes so that the remainder of the proof will follow.
Let $Z = (x,y)$ be a pair of observations and write $Z^{1} = \left(z_{1},z_{2},\cdots, z_{n}\right)$ and $Z^{2}= \left(z_{n+1},z_{n+2},\cdots,z_{2n}\right)$  two vectors of $n$ independent and identically distributed (IID) copies of Z. Let
$$
Q_{1}\left(z_{1},\alpha_{1}\right)= L\left(y,f\left(x,\alpha_{1}\right)\right)\quad \hbox{and}\quad Q_{2}\left(z_{2},\alpha_{2}\right)= L\left(y,f\left(x,\alpha_{2}\right)\right)
$$
be two bounded real valued loss functions where $\alpha_{i}\in\Lambda, ~ i=1,2$, an index set, and assume $\forall ~ \alpha_{i}, ~ 0\leq Q_{i}\left(z_{i},\alpha_{i}\right)\leq B_{i}$  for some $B_{i}\in \mathbb{R}$ , $i=1,2$.
 Consider the discretization of $Q_{i}$ using $m$ disjoint intervals (with union $[0; B_{i})$) given by
 \begin{equation}
 \label{E11}
 Q^{*}_{ij}\left(z_{i},\alpha_{i},m\right)=\left\{
  \begin{array}{ll}
    \frac{(2j+1)B_{i}}{2m}, & \hbox{if $Q_{i}\left(z,\alpha\right)$ $\in~I_{j}=\left[\frac{jB_{i}}{m},\frac{(j+1)B_{i}}{m}\right),~i=1,2$;} \\
    0 & \hbox{otherwise.}
  \end{array}
\right.
 \end{equation}
Where $j = 0, 1,\cdots, m-1$.
Now, consider indicator functions for $Q_{i}$ being in an interval of the same form.  That is, let
\begin{equation}
\label{E22}
  \chi_{I_{ij}}(Q_{i}\left(z,\alpha,m\right)) = \left\{
               \begin{array}{ll}
                 1, & \hbox{if $Q_{i}\left(z,\alpha\right)\in \emph{I}_{j}=\left[\frac{jB_{i}}{m},\frac{(j+1)B_{i}}{m}\right),~i=1,2$;} \\
                 0, & \hbox{otherwise.}
               \end{array}
             \right.
\end{equation}
and write
$$
n_{1j}^{*} = \sum_{i=1}^{n}\chi_{I_{1j}}\left(Q_{1}(z_{i}^{2},\alpha_{1},m)\right)
\quad \hbox{and} \quad
n_{2j}^{*} = \sum_{i=n+1}^{2n}\chi_{I_{2j}}\left(Q_{2}(z_{i}^{1},\alpha_{2},m)\right)
$$
for the number of data points whose losses land inside the interval $I_{j}$ in the first and second
half of the sample of size $n$, respectively.
The empirical loss for each model will now be written as follows:
$$
\nu_{1j}^{*}\left(z^{2}, \alpha_{1}, m\right)= \frac{n^{*}_{2j}Q^{*}_{2j}\left(z^{2}, \alpha_{1}, m\right)}{n},~
\nu_{2j}^{*}\left(z^{1}, \alpha_{2}, m\right)= \frac{n^{*}_{1j}Q^{*}_{1j}\left(z^{1}, \alpha_{2}, m\right)}{n}.
$$
We see that in the computation of the empirical loss, we use datasets that were not used to fit the model.
The event $A_{\epsilon,m}$ will  now be defined as
\begin{equation}
\label{NewAepsilon}
A_{\epsilon,m} = \left\{z^{2n}: \sup_{\alpha_{1},\alpha_{2}}\left(\nu_{1}^{*}\left(z^{2},\alpha_{1}\right)-\nu_{2}^{*}\left(z^{1},\alpha_{2}\right)\right)\geq\epsilon\right\}
\end{equation}
$$
\hbox{where}~\nu^{*}_{1}\left(z^2,\alpha_{1}\right)=\sum_{j=0}^{m-1}\nu_{1j}^{*}\left(z^{2},\alpha_{1},m\right),~
\nu^{*}_{2}\left(z^1,\alpha_{2}\right)=\sum_{j=0}^{m-1}\nu_{2j}^{*}\left(z^{1},\alpha_{2},m\right).
$$

Since $A_{\epsilon}$ is defined on the entire range of our loss function, and we want to partition the range into $m$ disjoint intervals, let
\begin{eqnarray*}
   A_{\epsilon,m} &=& \left\{z^{2n}~|\sup_{\alpha_{1}, ~\alpha_{2}\in \Lambda}\left\{\sum_{j=0}^{m-1}\nu_{1j}^{*}\left(z^{2},\alpha_{1}\right)-\sum_{j=0}^{m-1}\nu_{2j}^{*}\left(z^{1},\alpha_{2}\right)\right\}\ge\epsilon\right\}\\
   &\subseteq& \left\{z^{2n}~|~ \exists~ j : \sup_{\alpha_{1},~\alpha_{2}\in ~ \Lambda}\left(\nu_{1j}^{*}\left(z^{2}, \alpha_{2},m\right)-\nu_{2j}^{*}\left(z^{1},\alpha_{2},m\right)\right)\ge\frac{\epsilon}{m}\right\}\\
   &\subseteq& \bigcup_{j=0}^{m-1}\left\{z^{2n}~|~ \sup_{\alpha_{1},~\alpha_{2}~\in\Lambda}\left(\nu_{1j}^{*}\left(z^{2}, \alpha_{1},m\right)-\nu_{2j}^{*}\left(z^{1}, \alpha_{2},m\right)\right)\ge\frac{\epsilon}{m}\right\}\\
   &\subseteq& \bigcup_{j=0}^{m-1} A_{\epsilon,m,j}
\end{eqnarray*}
where $ A_{\epsilon,m,j} = \left\{z^{2n}~|~ \sup_{\alpha_{1},~\alpha_{2}~\in\Lambda}\left(\nu_{1j}^{*}\left(z^{2}, \alpha_{1},m\right)-\nu_{2j}^{*}\left(z^{1}, \alpha_{2},m\right)\right)\ge\frac{\epsilon}{m}\right\}$.
The suprema over $\Lambda$ within $A_{\epsilon,m,j}$ will be achieved at
$$
\alpha_{j}^{*} = \alpha_{j}^{*}\left(z^{2n}\right) = \arg\sup_{\alpha_{1},~\alpha_{2}}\left(\nu_{1j}^{*}\left(z^{2},\alpha_{1},m\right)-\nu_{2j}^{*}\left(z^{1},\alpha_{2},m\right)\right).
$$
Every development done in page \pageref{permid} will stay unchanged. Using the cross-validation form of the error and using two different models will not affect  Theorem  \ref{Theo1}. However, since the use of these changes are intense in the proof of  Proposition \ref{Prop2}, we will restate it and redo the proof taking into account these changes.
\begin{proposition}
  \label{NewProp2}
  Let $\epsilon \ge 0,\quad m \in  \mathcal{N}, \hbox{and},\quad h = VCD \left\{Q\left(\cdot,\alpha\right): \alpha \in  \Lambda\right\}$. If $h$ is finite, then
  \begin{equation}
  \label{NewUPBO}
  P\left(A_{\epsilon,m}\right) \leq 2m\left(\frac{2ne}{h}\right)^h\exp\left\{-\frac{n\epsilon^2}{m^{2}}\right\}.
  \end{equation}
\end{proposition}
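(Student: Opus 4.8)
The plan is to replay the proof of Proposition \ref{Prop2} essentially verbatim, substituting the cross-validated empirical losses $\nu_{1j}^*(z^2,\alpha_1,m)$ and $\nu_{2j}^*(z^1,\alpha_2,m)$ and the two loss functions $Q_1,Q_2$ for the single-model quantities used there. Because the target bound \eqref{NewUPBO} is literally the same as the one in Proposition \ref{Prop2}, no new estimate is required; the entire task is to check that each structural step survives the change of error criterion and the passage from a single index $\alpha$ to the pair $(\alpha_1,\alpha_2)$. The text has already noted that everything through page \pageref{permid} — including the permutation identity \eqref{permid} and Theorem \ref{Theo1} — is unaffected, so I would lean on that and concentrate on the combinatorial core.

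First I would apply the union bound displayed just above the statement, $P(A_{\epsilon,m}) \le \sum_{j=0}^{m-1} P(A_{\epsilon,m,j})$, reducing the problem to bounding each $P(A_{\epsilon,m,j})$ by $2(2ne/h)^h\exp(-n\epsilon^2/m^2)$. Then, using \eqref{permid}, I would rewrite each such probability as the average over the $(2n)!$ permutations $T_i$ of $Z^{2n}$ of the probability that $\Delta_j(T_iZ,\alpha_j^*,m)\ge \epsilon/m$, exactly as in the displays leading to \eqref{permprobbd}.

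The core step is the equivalence-class argument. I would introduce the relation $\sim$ that declares two indices equivalent iff they realize the same discretized loss vector on $z_1,\dots,z_{2n}$, partitioning the index set into $N_j^\Lambda(z^{2n})$ classes, and then dominate the indicator of $A_{\epsilon,m,j}$ by the sum of the per-class indicators as in \eqref{Indfnbd}. For a fixed class the bracketed average over permutations collapses to the hypergeometric tail $\Gamma_j=\sum_k \binom{m_j^*}{k}\binom{2n-m_j^*}{m_j^*-k}/\binom{2n}{n}$ with $m_j^*=n_{1j}^*+n_{2j}^*$, to which Theorem \ref{Theo1} Clause II applies to give $\Gamma_j\le 2\exp(-n\epsilon^2/m^2)$. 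Summing over the $N_j^\Lambda$ classes, taking expectations, and invoking Clause I to get $E(N_j^\Lambda(Z^{2n}))\le \exp(G(2n))\le (2ne/h)^h$ yields the per-interval bound; summing over the $m$ intervals produces the factor $2m(2ne/h)^h$ and finishes the argument.

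The hard part will be the verification buried in the equivalence-class step, namely that the cross-validation swap (evaluating the $\alpha_1$-model on $z^2$ and the $\alpha_2$-model on $z^1$) together with two independent indices does not inflate the count beyond what the single VCD $h$ controls. One must confirm that the event still reduces to a condition on the occupancy split $k$ of the fixed count $m_j^*$, so that the hypergeometric bookkeeping depends only on $n_{1j}^*,n_{2j}^*$ and not on which half originally fit which model; and, most delicately, that the number of realizable loss vectors over the pair $(\alpha_1,\alpha_2)$ is still bounded by $(2ne/h)^h$ rather than a squared count $(2ne/h)^{2h}$. Resolving this last point relies on the decoupling $\sup_{\alpha_1,\alpha_2}(\nu_{1j}^*-\nu_{2j}^*)=\sup_{\alpha_1}\nu_{1j}^*-\inf_{\alpha_2}\nu_{2j}^*$ together with the stipulation that the two models share the same covariate structure, so that the effective complexity driving $N_j^\Lambda$ remains governed by the single class $\{Q(\cdot,\alpha):\alpha\in\Lambda\}$ with the same $h$. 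Once these two combinatorial points are confirmed, every remaining estimate is identical to Proposition \ref{Prop2}.
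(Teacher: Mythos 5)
Your proposal follows the paper's own proof essentially step for step: the union bound over the $m$ intervals, the permutation identity \eqref{permid}, the equivalence-class partition of $\Lambda$ with the per-class indicator bound, the hypergeometric tail $\Gamma_j$ controlled by Theorem \ref{Theo1} Clause 2, and finally Clause 1 giving $E\left(N_j^{\Lambda}(Z^{2n})\right)\leq \left(\frac{2ne}{h}\right)^{h}$, used $m$ times. The delicate point you flag -- that the pair $(\alpha_{1},\alpha_{2})$ must not square the entropy count -- is handled in the paper exactly as you suggest: the pair is absorbed into a single index $\alpha_{j}^{*}=(\alpha_{1j},\alpha_{2j})$ with one equivalence-class structure on $\Lambda$ (same covariates for both models), so the single-class bound $\left(\frac{2ne}{h}\right)^{h}$ is retained and no new estimate is needed.
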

This proposition is very similar in spirit to Proposition \ref{Prop2}. The difference is mainly in the setting. This being so, in our proof of  Proposition \ref{NewProp2}, we will emphasize the steps where changes to the argument are important.

\begin{proof}
  Let $\Delta^{*}_{j}\left(T_{i}Z^{2n},\alpha_{j}^{*},m\right)=\nu_{1j}^{*}\left(T_{i}Z_{2},\alpha_{1j}^{*},m\right)-\nu_{2j}^{*}\left(T_{i}Z_{1},\alpha_{2j}^{*},m\right)$, where $\alpha_{j} = \left(\alpha_{1j},~\alpha_{2j}\right)$ and $\alpha_{j}^{*} = \left\{\alpha_{1j},~\alpha_{2j}: \arg\max\Delta_{j}\left(T_{i}Z,\alpha_{j},m\right)\right\}$. Using some manipulations, we have
  \begin{eqnarray}
  \label{UpAep}
  P\left(A_{\epsilon,m}\right) &\leq& P\left(\bigcup_{j=0}^{m-1}A_{\epsilon,m,j}\right)\nonumber\\
   &\leq& \sum_{j=0}^{m-1}P\left(A_{\epsilon,m,j}\right)\nonumber\\
   &=& \sum_{j=0}^{m-1}P\left(\left\{Z^{2n}:\sup_{\alpha_{1},~\alpha_{2}\in\Lambda}\left(\nu_{1j}^{*}(Z_{2},\alpha_{1},m)-\nu_{2j}^{*}(Z_{1},\alpha_{2},m)\right)\geq\frac{\epsilon}{m}\right\}\right).
\nonumber
\end{eqnarray}
Continuing the equality gives that the RHS equals
\begin{eqnarray}
\sum_{j=0}^{m-1}P\left(\left\{Z^{2n}:\sup_{\alpha_{1},~ \alpha_{2}\in\Lambda}\left(\left(\nu_{1j}^{*}(T_{i}Z_{2},\alpha_{1},m)-\nu_{2j}^{*}(T_{i}Z_{1},\alpha_{2},m)\right)\right)\geq\frac{\epsilon}{m}\right\}\right) &=&\nonumber\\ \sum_{j=0}^{m-1}P\left(\left\{Z^{2n}:\left(\nu_{1j}^{*}(T_{i}Z_{2},\alpha_{1j}^{*},m)-\nu_{2j}^{*}(T_{i}Z_{1},\alpha_{2j}^{*},m)\right)\geq\frac{\epsilon}{m}\right\}\right)
\nonumber
\end{eqnarray}
\begin{eqnarray}
   &=& \sum_{j=0}^{m-1}P\left(\left\{Z^{2n}: \Delta_{j}^{*}\left(T_{i}Z,\alpha_{j}^{*},m\right)\geq\frac{\epsilon}{m}\right\}\right)\nonumber\\
   &=& \frac{1}{(2n)!}\sum_{j=0}^{m-1}\sum_{i=1}^{(2n)!}P\left(\left\{Z^{2n}: \Delta_{j}^{*}\left(T_{i}Z,\alpha_{j}^{*},m\right)\geq\frac{\epsilon}{m}\right\}\right)
\nonumber \\
   &=& \frac{1}{(2n)!}\sum_{j=0}^{m-1}\sum_{i=1}^{(2n)!}\int_{}^{}\emph{I}_{\left\{Z^{2n}: \Delta_{j}^{*}\left(T_{i}Z,\alpha_{j}^{*},m\right)\geq\frac{\epsilon}{m}\right\}}\left(z^{2n}\right)\emph{dP}(Z^{2n}).
\label{permprobbdd}
\end{eqnarray}
Let the equivalence classes in $\Lambda$ under $\sim$ be denoted $\Lambda_k$.
Then, the equivalence classes $[\alpha_{jk}^{*}]$ for the  $\alpha_{jk}^{*}$'s
provide a partition for $\Lambda$.  That is,
$\Lambda = \bigcup_{k=1}^{N_{j}^{\Lambda}(z^{2n})}[\alpha_{jk}^{*}]$
because $\alpha^{*}_{jk}\in \Lambda_k$ and hence $[\alpha_{jk}^{*}]=\Lambda_{k}$.
In addition, $\alpha^{*}_{jk}$ is the maximal value of $\alpha_{1j}$ and $ \alpha_{2j}$ in the $k^{th}$ equivalence class.
So,
\begin{eqnarray}
\label{Indfnbdd}
  \emph{I}_{\left\{Z^{2n}:\Delta\left(T_{i}Z,\alpha^{*}_{j},m\right)\geq\epsilon\right\}}(z^{2n})
&\leq&
\emph{I}_{\left\{Z^{2n}: \Delta_{j}^{*}\left(T_{i}Z,\alpha^{*}_{1j},m\right)\ge \frac{\epsilon}{m}\right\}}\left(z^{2n}\right) \nonumber \\
 &+&
\dots + \emph{I}_{\left\{Z^{2n}:\Delta_{j}^{*}\left(T_{i}Z,\alpha^{*}_{N^{\Lambda}_{j}(z^{2n})j},m\right)\ge\frac{\epsilon}{m}\right\}}\left(z^{2n}\right) \nonumber\\
   &=& \sum_{k=1}^{N^{\Lambda}_{j}(z^{2n})}\emph{I}_{\left\{Z^{2n}: \Delta_{j}^{*}\left(T_{i}Z,\alpha^{*}_{kj},m\right)\ge\frac{\epsilon}{m}\right\}}\left(z^{2n}\right)
\end{eqnarray}
where
\begin{eqnarray*}
  A_{\epsilon,m,j,k} &=& \left\{Z^{2n}: \Delta_{j}^{*}\left(T_{i}Z,\alpha^{*}_{kj},m\right)\ge\frac{\epsilon}{m}\right\} \\
   &=& \left\{Z^{2n}: \sup_{\alpha_{1j}, \alpha_{2j}\in\Lambda_{k}}\left(\nu_{1j}^{*}(T_{i}Z_{2},\alpha_{1j},m)-\nu_{2j}^{*}(T_{i}Z_{1},\alpha_{2j},m)\right)\ge\frac{\epsilon}{m}\right\}
\end{eqnarray*}

Now, using \eqref{Indfnbdd}, \eqref{permprobbdd} is bounded by
\begin{eqnarray}
  P\left(A_{\epsilon,m}\right) &\leq& \frac{1}{(2n)!}\sum_{j=0}^{m-1}\sum_{i=1}^{(2n)!}\int_{}^{}\sum_{k=1}^{N_{j}^{\Lambda}(z^{2n})}\emph{I}_{\left\{Z^{2n}:\Delta_{j}^{*}(T_{i}Z,\alpha^{*}_{kj},m)\ge\frac{\epsilon}{m}\right\}}\left(z^{2n}\right) \emph{dP}\left(z^{2n}\right)
\nonumber\\
   &=& \int_{}^{}\sum_{j=0}^{m-1}\sum_{k=1}^{N_{j}^{\Lambda}(z^{2n})}\left[\frac{1}{(2n)!}\sum_{i=1}^{(2n)!}\emph{I}_{\left\{Z^{2n}:\Delta_{j}^{*}(T_{i}Z,\alpha_{kj}^{*},m)\geq \frac{\epsilon}{m}\right\}}\left(z^{2n}\right)\right]\emph{dP}\left(z^{2n}\right) .
\label{overallprobbdd}
\end{eqnarray}
The expression in square brackets in \eqref{overallprobbdd} is the fraction of the number of the $(2n)!$ permutations $T_{i}$ of $Z^{2n}$ for which $A_{\epsilon,m,j,k}$ is closed under $T_{i}$ for any fixed equivalence class $\Lambda_{k}$. It is equal to
\begin{equation}
\label{E77}
  \Gamma_{j} = \sum_{k}^{}\frac{\binom{m^{*}_{j}}{k} \binom{2n-m^{*}_{j}}{m^{*}_{j}-k}}{\binom{2n}{n}}
\nonumber
\end{equation}
where
$$
\left\{
  \begin{array}{ll}
    \left\{k: \left|\frac{k}{n}-\frac{m^{*}_{j}-k}{n}\right|\ge\frac{\epsilon}{m}\right\} & \hbox{;} \\
    m^{*}_{j} = n_{1j}^{*} + n_{2j}^{*} & \hbox{.}
  \end{array}
\right.
$$
Here, $\Gamma_{j}$ is the probability of choosing exactly $k$ sample data points whose losses fall in interval $I_{j}$ respectively in the first and second half of the sample such that $A_{\epsilon,m,j,k}$ holds. $m_{j}^{*}$ is the number of data points from the first and the second half of the sample whose loss land inside interval $j$.
Using  Theorem \ref{Theo1} Clause II, we have $\Gamma_{j} \le 2\exp\left(-\frac{n\epsilon^{2}}{m^2}\right)$. So, using this in \ref{overallprobbdd} gives $P\left(A_{\epsilon, m}\right)$ is upper bounded by
\begin{eqnarray}
   && \int_{}^{}\sum_{j=0}^{m-1}\sum_{k=1}^{N_{j}^{\Lambda}(z^{2n})}2\exp\left(-\frac{n\epsilon^{2}}{m^2}\right)\emph{dP}(z^{2n})
   = 2\exp\left(-\frac{n\epsilon^{2}}{m^2}\right)\int_{}^{}\sum_{j=0}^{m-1}\sum_{k=1}^{N_{j}^{\Lambda}(z^{2n})}\emph{dP}(z^{2n})
\nonumber \\
   &=& 2\exp\left(-\frac{n\epsilon^{2}}{m^2}\right)\sum_{j=0}^{m-1}\int_{}^{}\sum_{k=1}^{N_{j}^{\Lambda}(z^{2n})}\emph{dP}(z^{2n})
   = 2\exp\left(-\frac{n\epsilon^{2}}{m^2}\right)\sum_{j=0}^{m-1}\int_{}^{}N_{j}^{\Lambda}(z^{2n})\emph{dP}(z^{2n})
\nonumber \\
   &=& 2\exp\left(-\frac{n\epsilon^{2}}{m^2}\right)\sum_{j=0}^{m-1}E\left(N_{j}^{\Lambda}(z^{2n})\right).
\label{probbddd}
\end{eqnarray}
Since Theorem  \ref{Theo1} Clause I gives
$$
H_{ann}(Z^{2n}) = \ln\left(E\left(N_{j}^{\Lambda}(Z^{2n})\right)\right)\leq G(2n)\leq h\ln\left(\frac{2ne}{h}\right) \Rightarrow
 E\left(N_{j}^{\Lambda}(z^{2n})\right)\leq \left(\frac{2ne}{h}\right)^{h}.
$$
Using this $m$ times in \eqref{probbddd} gives the  Proposition.
\end{proof}
\noindent
Because changes mentioned in the computation of the EMDBTEL do not affect Propositions \ref{subProp1}, \ref{subProp2} and Theorems \ref{Theo2}, \ref{Theo3}, and \ref{Theo5}, we restate them here for convenience.
\begin{proposition}
\label{Prop1and2}
  \begin{enumerate}
    \item With probability $1-\eta$, the inequality
    \begin{equation}
    R\left(\alpha_{k}\right)\leq R_{emp}\left(\alpha_{k}\right) + m\sqrt{\frac{1}{n}\log\left(\left(\frac{2m}{\eta}\right)\left(\frac{2ne}{h}\right)^{h}\right)}
    \end{equation}
    holds simultaneously for all $K$ functions in the set $\left\{Q\left(z, \alpha_{k}\right),k = 1,2,\cdots, K\right\}$.
    \item With probability $1-\eta$, the inequality
    \begin{equation}
    R\left(\alpha_{k}\right) \leq R_{emp}\left(\alpha_{k}\right) + \frac{m^2}{2n}\log\left(\frac{2m}{\eta}\left(\frac{2ne}{h}\right)^h\right)\left(1 + \sqrt{1 + \frac{4nR_{emp}\left(\alpha_{k}\right)}{m^{2}\log\left(\frac{2m}{\eta}\left(\frac{2ne}{h}\right)^h\right)}}\right)
    \end{equation}
    holds simultaneously for all $K$ functions in the set $\left\{Q\left(z, \alpha_{k}\right),k = 1,2,\cdots, K\right\}$.
  \end{enumerate}
\end{proposition}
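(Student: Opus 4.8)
The plan is to leverage the fact that Proposition \ref{NewProp2} reproduces \emph{exactly} the tail bound
\[
P\left(A_{\epsilon,m}\right) \leq 2m\left(\frac{2ne}{h}\right)^{h}\exp\left\{-\frac{n\epsilon^{2}}{m^{2}}\right\}
\]
that was established for the non-cross-validated quantity in Proposition \ref{Prop2}. Since the only way the derivations of Propositions \ref{subProp1} and \ref{subProp2} used Proposition \ref{Prop2} was through this inequality, together with the additive and relative (multiplicative) Chernoff inequalities invoked in those proofs, and since none of those ingredients is touched by the switch to the cross-validation form of the error, both risk bounds follow by repeating the earlier arguments verbatim. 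I would make this reduction explicit first, so that the remaining steps are purely algebraic.

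For part (1), I would set the right-hand side of the displayed bound equal to $\eta \in (0,1]$ and invert for $\epsilon$, obtaining
\[
\epsilon = m\sqrt{\frac{1}{n}\log\left(\frac{2m}{\eta}\left(\frac{2ne}{h}\right)^{h}\right)}.
\]
Because the bound controls the supremum over all $\alpha \in \Lambda$, the event that $R(\alpha) \leq R_{emp}(\alpha) + \epsilon$ fails for \emph{some} $\alpha$ has probability at most $\eta$; hence with probability $1-\eta$ the additive Chernoff inequality $R(\alpha_{k}) \leq R_{emp}(\alpha_{k}) + \epsilon$ holds simultaneously for every $\alpha_{k}$, $k = 1,\dots,K$, which is the asserted inequality. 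The simultaneity over the finite list of $K$ models is therefore inherited for free from the uniform-over-$\Lambda$ nature of Proposition \ref{NewProp2} and needs no extra union bound.

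For part (2), I would instead start from the relative form of the bound used in the proof of Proposition \ref{subProp2}, i.e. $\left(R(\alpha_{k}) - R_{emp}(\alpha_{k})\right)/\sqrt{R(\alpha_{k})} \leq \epsilon$ holding with probability $1-\eta$, complete the square exactly as there to get
\[
R(\alpha_{k}) \leq R_{emp}(\alpha_{k}) + 0.5\,\epsilon^{2}\left(1 + \sqrt{1 + \frac{4R_{emp}(\alpha_{k})}{\epsilon^{2}}}\right),
\]
and then substitute the same $\epsilon$ as above. Writing $\epsilon^{2} = \frac{m^{2}}{n}\log\left(\frac{2m}{\eta}(2ne/h)^{h}\right)$ turns this into the stated multiplicative bound. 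I do not expect any genuine obstacle here, since the entire content of the proposition is that the cross-validation reformulation changes nothing downstream, and the single fact underwriting that claim, the form-invariance of the tail bound, is precisely what Proposition \ref{NewProp2} supplies. The only point worth checking carefully is that the supremum in Proposition \ref{NewProp2} is now taken jointly over the two model indices $\alpha_{1},\alpha_{2}$, so that the uniform statement still covers each candidate $\alpha_{k}$ when the two half-sample models coincide; this is immediate from the construction of $A_{\epsilon,m}$ in \eqref{NewAepsilon}.
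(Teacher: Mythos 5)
Your proposal is correct and takes essentially the same route as the paper: the paper's own ``proof'' is precisely the observation that the cross-validated tail bound of Proposition \ref{NewProp2} is form-identical to that of Proposition \ref{Prop2}, so the arguments for Propositions \ref{subProp1} and \ref{subProp2} --- equating the tail bound to $\eta$, solving for $\epsilon = m\sqrt{\frac{1}{n}\log\left(\frac{2m}{\eta}\left(\frac{2ne}{h}\right)^{h}\right)}$, and then applying the additive and multiplicative Chernoff bounds with completion of the square --- carry over verbatim. Your closing remark, checking that the joint supremum over $(\alpha_{1},\alpha_{2})$ in \eqref{NewAepsilon} still covers each single candidate $\alpha_{k}$, is a sensible point of care that the paper leaves implicit.
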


\begin{thm}
\label{Theo23and4}
  \begin{enumerate}
    \item If $h< \infty$, we have
    \begin{equation}
     E\left(\sup_{\alpha_{1}, \alpha_{2}\in\Lambda}\left|\nu_{1}^{*}(z^{2},\alpha_{1})-\nu_{2}^{*}(z^{1},\alpha_{2})\right|\right) \leq m\sqrt{\frac{1}{n}\ln\left(2m^{3}\left(\frac{2ne}{h}\right)^h\right)}\\ + \frac{1}{m\sqrt{n\ln\left(2m^3\left(\frac{2ne}{h}\right)^{h}\right)}}
    \end{equation}
    \item If $h \le \infty$, and
  $$
  D_{p}\left(\alpha\right) = \int_{0}^{\infty}\sqrt[p]{P\left\{Q\left(z,\alpha\right)\geq c\right\}}dc \leq \infty
  $$ where $1< p\leq 2$ is some fixed parameter, we have
    \begin{equation}
      E\left(\sup_{\alpha_{1},\alpha_{2}\in\Lambda}\left|\nu_{1}(Z_{2},\alpha_{1})-\nu_{2}(Z_{1},\alpha_{2})\right|\right) \leq \frac{D_{p}(\alpha^{*})2^{2.5+\frac{1}{p}}\sqrt{h\ln\left(\frac{ne}{h}\right)}}{n^{1-\frac{1}{p}}} + \frac{16D_{p}(\alpha^{*})2^{2.5+\frac{1}{p}}}{n^{1-\frac{1}{p}}\sqrt{h\ln\left(\frac{ne}{h}\right)}}.
    \label{Hannbd}
    \end{equation}
    \item Assume that $h \rightarrow \infty$, $\frac{n}{h} \rightarrow \infty$, $m \rightarrow \infty$, $\ln\left(m\right)=o(n)$, \hbox{and}
$$
D_{p}\left(\alpha\right) = \int_{0}^{\infty}\sqrt[p]{P\left\{Q(z,\alpha)\ge c\right\}} dc\leq \infty
$$
where $p=2$. Then we have that
\begin{equation}
   E\left(\sup_{\alpha_{1}, \alpha_{2}\in\Lambda}\left|\nu_{1}(Z_{2},\alpha_{1})-\nu_{2}(Z_{1},\alpha_{2})\right|\right) \leq \min\left(1,8D_{p}(\alpha^{*})\right)\sqrt{\frac{h}{n}\ln\left(\frac{2ne}{h}\right)} .
\label{objfn2}
\end{equation}

  \end{enumerate}
\end{thm}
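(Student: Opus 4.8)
The plan is to observe that the three clauses are exactly the cross-validation analogues of Theorems \ref{Theo2}, \ref{Theo3} and \ref{Theo5}, and that each of those earlier proofs used its probability bound only as a black box fed into the integral-of-probabilities identity. Since Proposition \ref{NewProp2} already re-establishes, for the cross-validated event $A_{\epsilon,m}$ of \eqref{NewAepsilon}, the identical bound $P(A_{\epsilon,m})\le 2m(2ne/h)^{h}\exp(-n\epsilon^{2}/m^{2})$ that Proposition \ref{Prop2} gave in the single-model setting, the remaining work is to check that nothing downstream of that bound depends on which definition of $A_{\epsilon,m}$ produced it. The bound alone, not the internal structure of the event, is what the subsequent integrations consume.

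For Clause 1 I would simply repeat the proof of Theorem \ref{Theo2} verbatim. Writing the left-hand side as $\int_0^\infty P(A_{\epsilon,m})\,d\epsilon$, splitting the range at a free parameter $u$ as $\int_0^u d\epsilon+\int_u^\infty P(A_{\epsilon,m})\,d\epsilon$, inserting Proposition \ref{NewProp2} on the tail, and using $\epsilon>u\Rightarrow n\epsilon^2/m^2>nu\epsilon/m^2$ to dominate the Gaussian tail by an exponential one, gives $u+2m^3(2ne/h)^h(1/(nu))\exp(-nu^2/m^2)$. Choosing $u=m\sqrt{n^{-1}\ln(2m^3(2ne/h)^h)}$ yields precisely the stated bound. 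Because $A_{\epsilon,m}$ now denotes the cross-validated supremum over the pair $(\alpha_1,\alpha_2)$, the only thing needed is that Proposition \ref{NewProp2} supplies the same tail, which it does.

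Clause 2 is where the real checking occurs, since it is the unbounded-loss statement and its single-model ancestor Theorem \ref{Theo3} rested on Proposition \ref{Prop4} rather than on Proposition \ref{Prop2}. The plan is to re-verify that the symmetrization of Lemma \ref{Lemma1} and the positive/negative split of Lemma \ref{lemma2}, together with Clause 3 of Theorem \ref{Theo1}, still go through once the single empirical mean is replaced by the two cross-validated means built from $Q_1,Q_2$ on the opposite half-samples. The delicate point is that Lemma \ref{Lemma1} inserts and subtracts the common mean $E(Q(Z^1,\alpha))$ and then splits the joint supremum by the triangle inequality; with two models and a supremum over $(\alpha_1,\alpha_2)$ one must confirm that each half still converges to the same population expectation, so that the $\epsilon/2$ split and the resulting factor of $2$ are legitimate. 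Granting this, the annealed-entropy bound $H_{ann}^{\Lambda}(n)\le h\ln(en/h)$ feeds in exactly as before, and the same split-and-optimize step with $u=D_p(\alpha^*)2^{2.5+1/p}\sqrt{h\ln(ne/h)}/n^{1-1/p}$ delivers \eqref{Hannbd}.

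Finally, Clause 3 combines Clauses 1 and 2 as in the proof of Theorem \ref{Theo5}: under $h\to\infty$, $n/h\to\infty$, $m\to\infty$, $\ln m=o(n)$ and $p=2$, the $\ln(2m)/n$ contribution in the bounded bound is asymptotically negligible relative to $(h/n)\ln(2ne/h)$, so the bounded and unbounded bounds both reduce to a constant times $\sqrt{(h/n)\ln(2ne/h)}$, and taking their minimum produces the factor $\min(1,8D_p(\alpha^*))$ in \eqref{objfn2}. The main obstacle, then, is not any new calculation but the bookkeeping of Clause 2: ensuring the two-model cross-validated deviations admit the same symmetrization and positive/negative decomposition that the single-model proof exploited.
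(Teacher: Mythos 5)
Your proposal is correct and follows essentially the same route as the paper: the appendix proves Clause 1 by exactly the split-and-optimize integration you describe (deriving the same $u$), proves Clause 2 by first re-establishing the cross-validated symmetrization as Lemma \ref{lema2342} and the resulting tail bound as Proposition \ref{Prop42} --- precisely the re-verification you flag as the delicate step, resolved there by the IID symmetry of the two half-samples --- and obtains Clause 3 by taking the minimum of the two asymptotic bounds as in Theorem \ref{Theo5}.
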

\begin{figure}%
    \centering
    \includegraphics[width=10cm, height=10cm]{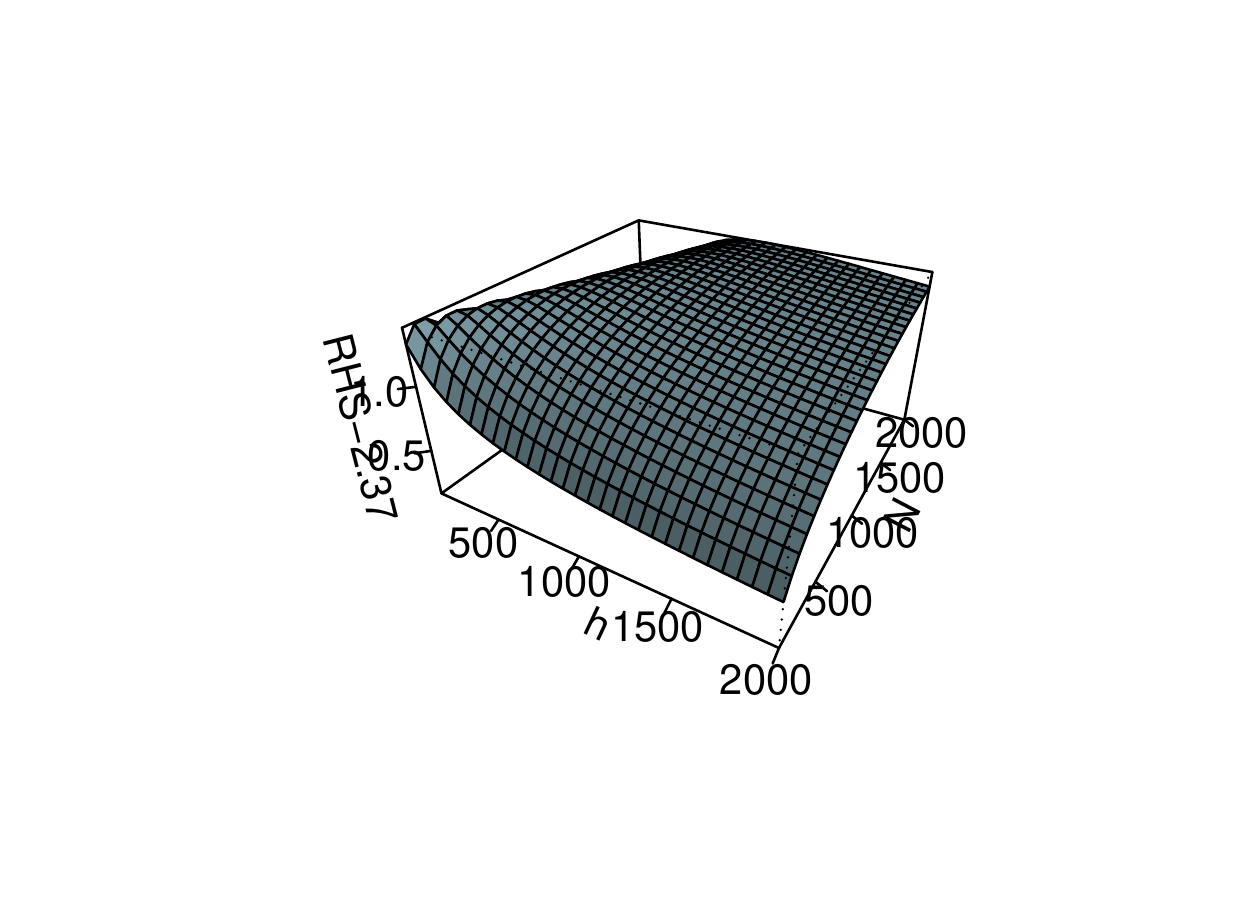}
    \caption{Perspective Plot of the RHS of equation \eqref{objfn2}}
    \label{perps}%
\end{figure}
The  expression in the RHS of inequality \eqref{objfn2} is a decreasing function of $h$ for fixed $n$ and an increasing function of $n$ for fixed $h$. This observation is confirmed in figure \eqref{perps}
Now that we have all the results that we want, we can estimate the VCD and use it in model selection. We show how to do this for simulated and real dataset in Chap. \ref{chap:Numerical:Studies}.
\noindent



\chapter{SIMULATION STUDIES}
\label{chap:Numerical:Studies}
\thispagestyle{fancy}
Our goal here is to implement the theory developed in Chapter \ref{chap:bounds}. We start by giving a definition of what we call `consistency at the true model' in Sec. \ref{DefConcistency}. In Sec. \ref{ImpleTheo5}, we implement Theorem \ref{Theo5} computationally. We develop this implementation in the simulation setting of Subsec. \ref{SimSettings}. The model class we use for simulated data is the class of linear models. In Subsec. \ref{DirectImplementation}, we begin by using the original algorithm of Vapnik et al., \cite{Vapnik:etal:1994} to estimate the LHS of Theorem \ref{Theo5} and see that often it does not give satisfactory results. In particular we will see that the estimates are not accurate. We also see a sense in which Vapnik et al., \cite{Vapnik:etal:1994} original algorithm --Algorithm \ref{VapnikAlgorithm} in Chap. \ref{chap:Vapnik:Cherv:Dim:Cov:Num}-- might not be too bad.

We can do better.

Specifically, we suggest that the inaccuracy in the estimate of VCD using Vapnik's original algorithm \ref{VapnikAlgorithm} arises from insufficient variation in the computation of the Maximum Difference Between Two Empirical Losses and from the fact that $k, a, b$ in equation \eqref{phi} were chosen in sub-optimal ways. To correct for these, we use a data-driven estimate for a constant in the RHS of \eqref{objfn} from Theorem \ref{Theo5}. We also use a cross-validated form of the error, and instead of taking the maximum difference, we take the mean difference between two empirical losses. This is done in  Secs. \ref{ImpleTheo5} and \ref{estimator}. In Sec. \ref{Simulation}, we present cases where our method from Sec. \ref{estimator} works properly. In Subsec. \ref{Dependency}, we look at cases where our method does not work and show how tuning some parameters dramatically improves our estimates.

\section{Definition of `consistency at the true model'}
\label{DefConcistency}
\begin{definition}
\label{Defcon}
We define `consistency at the true model' to be
\begin{enumerate}
    \item consistency at the true model in the usual sense of the term i.e $\hat{h}\rightarrow h_{T}$ in probability coupled with
    \item  some kind of identifiable bad behaviour away from the true model, usually getting worse as the wrong model moves further from the true model.
\end{enumerate}
\end{definition}

We argue that the $2^{nd}$ clause is important because if we are in the unfortunate case of having a wrong model, clause 2 can help us move toward a better model. Moreover, since in practice no models are correct to infinite precision, clause 2 is essential for good modeling.
Our proposed methodology for using estimate $\hat{h}$ of $h_{T}$ is based on this definition. We begin by assuming that there is a true model. The model need not be true to infinite precision, but hopefully it is plausible enough that when we try to discredit it, it will not in fact be discredited.

For any model, we can evaluate the LHS of equation \eqref{objfn} from Theorem \ref{Theo5} by Algorithm \ref{Algo1} below. Then, we can use nonlinear regression in \eqref{objective} to get $\hat{h}$. So, it is seen that $\hat{h}$ is a function of the conjectured model. In principle, for any given model class that we might take as true, the VCD can be found. This is particularly easy for linear models. -- Theorem \ref{VCDLinear} shows that the VCD for linear models is just the number of parameters for a saturated model.

Since our goal is to estimate the true VCD, when a conjectured model $P\left(\cdot\mid\beta\right)$ is linear, we expect $VCD\left(P\left(\cdot\mid\beta\right)\right)\cong\hat{h}$. By the same logic, if $P\left(\cdot\mid\beta\right)$ is far from the true model, we expect $VCD\left(P\left(\cdot\mid\beta\right)\right)\gg
\hat{h}$ or $VCD\left(P\left(\cdot\mid\beta\right)\right)\ll
\hat{h}$. That is, there are conditions that would satisfy
clause 2 of our definition of `consistency at the true model'. Taken together, clause 1 and two of \ref{Defcon} suggest that we estimate $h_{T}$ by seeking
\begin{equation}
\label{Consis}
\hat{h} = \arg\min_{k}\left|VCD\left(P_{k}\left(\cdot\mid\beta\right)\right)- \hat{h}_{k}\right|
\end{equation}
where $\left\{P_{k}\left(\cdot\mid\beta\right)|k = 1,2,\cdots, K\right\}$ is some set of models and $\hat{h}_{k}$ is calculated using model $k$.
In the case of linear model, with $q = 1,2,\cdots, Q$ explanatory variables, we get
\begin{equation}
\label{eqcon}
\hat{h} = \arg\min_{q}\left|q - \hat{h}_{q}\right|
\end{equation}
where $\hat{h}_{q}$ is the estimated VCD for model of size $q$.
Note that \eqref{Consis} can identify a good model even when consistency at the true model fails. The reason is that \eqref{Consis} only requires a minimum at the VCD not convergence to the true VCD. That is, \eqref{Consis} is a slightly weaker condition than consistency at the true model in that clause 1 need not be satisfied. Indeed, clause 1 can be regarded as what we what, while \eqref{Consis} is what we can find. Both criteria narrow the search for the true model.

It would be seen in the simulation that, often, clause 2 is not perfectly satisfied. For instance there may be two values of $h$ which achieve \eqref{Consis} at least approximately. For instance if the true value is $p=30$, there may be a larger value $p$ (see the four panels in Fig. \ref{diffseed30}). In these cases, usually the smallest value of $\hat{h}$ that approximately achieves the $\arg\min$ is usually the best choice. This means that from a strictly pragmatic standpoint, imposing some level of parsimony on the model selection is helpful
-- whence the introduction of a threshold in \eqref{Consis} or \eqref{eqcon} as used in \eqref{treshold}.
For ease of exposition, we usually drop the explicit use of the threshold $t$. However, in practice, we must
remember that taking a precise $\arg\min$ is sub-optimal compared to allowing a little flexibility via $t$.

\section{Implementation of Theorem \ref{Theo5}}
\label{ImpleTheo5}
Before presenting our simulations, let's first re-examine Theorem  \ref{Theo5}. In fact, Theorem \ref{Theo5}  has two parts. The Left-Hand Side (LHS) that represents the Expected Maximum Difference Between Two Empirical Losses (EMDBTEL) and the Right-hand side (RHS) that represents the upper bound of the EMDBTEL. The connection between these two quantities is the VCD that we would like to estimate.
The VCD appears explicitly  in the expression of the RHS of  \eqref{objfn} from Theorem  \ref{Theo5}, whereas in the LHS of  \eqref{objfn} from Theorem \ref{Theo5}, the VCD is implicit in the computation. More details on the calculation of the LHS of \eqref{objfn} from Theorem  \ref{Theo5} are given in Sect \ref{estimator}.

Our objective is to find the value of VCD that will minimize the squared distance between the RHS and the LHS of \eqref{objfn} of Theorem \ref{Theo5}.

After many simulations, we observed that the RHS bound of \eqref{objfn} from Theorem \ref{Theo5} was neither tight nor stable. This finding was the same as \cite{McDonald:etal:2011}. So, to make the bound tighter, instead of taking $\min\left(1, D_{p}(\alpha^{*})\right)$ (the constant factor at the RHS of \eqref{objfn} from Theorem \ref{Theo5}), we replace it by $c$, and use our data to find a value of $\hat{c}$ that makes the upper bound as tight as possible. This approach is different from Vapnik et al., \cite{Vapnik:etal:1994} who asserted (without justification) that the constants in the objective function were universal.

Our estimation of $\hat{h}$ has three steps. The first step is to estimate the Expected Maximum Difference Between Two Empirical Losses (EMDBTEL) i.e the RHS of \eqref{objfn} in Theorem \ref{Theo5}. This step requires that we define the model class and hence its complexity in VCD. We also need to choose the number of design points ($N_{L} = n_{1}, n_{2}, \cdots, n_{l}$). Design points represent different sample sizes that will be used to compute each value of the EMDBTEL use in the non-linear regression see \eqref{objective}. They are chosen so they will cover $[0,n]$ roughly uniformly, where $n$ is the sample size. Thus, we have used a uniform grid, and this has worked well in our examples. However, we note that Shao et al. \cite{shao2000measuring} proposed a design procedure for specifying the $N_{L}$'s and the number of repeated experiments at a given sample size. We have not assessed how well their recommendations perform; we leave this as future work.
Also, in this first step, we use the number $m$ of disjoint intervals as defined in Chap. \ref{chap:bounds} to convert our regression problem into $m$ classification problems.

The second step is the calculation of $c$. This step is very important because a good value of $c$ will make the bound tighter and the probability of getting a good estimate $\hat{h}$ will be higher. To find $c$, we define a relatively fine grid of values, e.g. let $c$ range from $0.1$ to $100$, in steps of size $0.01$. For each value of $c$, we compute the squared distance between the RHS and the LHS of  Theorem \ref{Theo5}, and pick the $\hat{c}$ that gives the minimum squared distance. This is possible because we have a predefined model complexity.

The third step is to find $\hat{h}$. Once we have calculated the LHS of Theorem \ref{Theo5} and we have the value of $\hat{c}$, the only unknown value in Theorem \ref{Theo5} is $h$. To estimate the assumed true value $h_{T}$ of $\hat{h}$, we find the value $h$ that minimizes the squared distance between the RHS and the LHS of Theorem \ref{Theo5} by non-linear regression.

The point of the rest of this section is to argue that the algorithm proposed by Vapnik et al. \cite{Vapnik:etal:1994}, i.e Algorithm \ref{VapnikAlgorithm}, to estimate the LHS of \eqref{objfn} from Theorem \ref{Theo5} and thereby estimate the VCD can behave poorly even on simulated data. In fact, as we said before, to estimate the VCD we first need to find a good way to estimate the LHS of inequality \eqref{objfn} in Theorem \ref{Theo5}. For our first set of simulations, using the computational procedure of  Vapnik et al. \cite{Vapnik:etal:1994} i.e Algorithm \ref{VapnikAlgorithm}  EMDBTEL in the regression setting gives minimally adequate results that we can improve. In particular Algorithm \ref{VapnikAlgorithm} does not show variability in the estimate of VCD across a wide range of models.

\subsection{Simulation Settings}
\label{SimSettings}

Here, our simulations are based on linear models, since for these, we know the VCD equals the number of parameters in the model. To establish notation, we write the regression function as a linear combination of the covariates $X_{j}$, $j=0,1,\cdots, p$,
$$
y = f(x,\beta) = \beta_{0} + \beta_{1}x_{1}+\beta_{2}x_{2}+\cdots+\beta_{p}x_{p}=\sum_{j=0}^{p}\beta_{j}x_{j}.
$$
Given a dataset, $\left\{(x_{i},Y_{i},~ i =1,2,\cdots,n)\right\}$, the matrix representation is
$$
Y = X^{'}_{nxp}\beta_{px1} + \epsilon_{nx1} \quad \hbox{where}\quad X^{'}_{nxp} = \left(
                                                                    \begin{array}{ccccc}
                                                                      1 & x_{11} & x_{21} & \cdots & x_{p1} \\
                                                                      1 & x_{12} & x_{22} & \cdots & x_{p2} \\
                                                                      \vdots & \vdots & \vdots & \vdots & \vdots \\
                                                                      1 & x_{1n} & x_{2n} & \cdots & x_{pn} \\
                                                                    \end{array}
                                                                  \right)
\quad \hbox{and}\quad \beta = \left(
                                 \begin{array}{c}
                                   \beta_{0} \\
                                   \beta_{1} \\
                                   \vdots \\
                                   \beta_{p} \\
                                 \end{array}
                               \right)
.
$$
and $\epsilon_{n\times 1}$ is an n-dimension column vector of mean zero.
Now, the least squared estimator $\widehat{\beta}$ is given by
$$
\widehat{\beta} = \left(X^{'}X\right)^{-1}X^{'}Y.
$$
Our simulated data is analogous. We write
$$
Y = \beta_{0}x_{0} + \beta_{1}x_{1} + \beta_{2}x_{2} + \cdots + \beta_{p}x_{p} + \epsilon
$$
$$
\quad \hbox{where} ~ \epsilon ~ \sim N(0,\sigma_{\epsilon}=0.4)\quad x_{0} = 1, ~ \beta_{j}\sim N(\mu = 5, \sigma_{\beta} = 3), ~ \hbox{for $j=1,2,\cdots p$ and}
$$
$$
 \quad x_{j} \sim N(\mu=5, \sigma_{x} = 2).
$$ and are all independent
\noindent
We center and scale all our variables including the response. Initially, we use a nested sequence of model lists. If our covariates are highly correlated, what we can do before applying our method is to de-correlated our variables by studentized or sphere the data. If our interest in only on prediction, we will not be interested in VCD, however, we can use VCD to narrow down the class of predictors we have to consider.

\subsection{Direct Extension of the Algorithm in Vapnik et al. \cite{Vapnik:etal:1994} }
\label{DirectImplementation}
The objective here is to argue that direct implementation of Theorem \ref{Theo5} using Algorithm \ref{VapnikAlgorithm} to estimate the LHS of \eqref{objfn} in Theorem \ref{Theo5} often gives poor results on simulated datasets. Later, in Sec. \ref{AnalTour5}, we also show this for the real datasets. Indeed, we were unable to replicate the simulation results given in Vapnik et al. \cite{Vapnik:etal:1994} using their computational technique. To implement  Theorem \ref{Theo5}, we took the constant $\min\left(1, D_{p}\left(\alpha^{*}\right)\right)$ to be $1$, since it does not affect the minimization over $h$.

We implemented simulations for two different model sizes $p = 15\hbox{ and} 60$.
For both model sizes, we implemented two sets of models. The first set of models used only a subset of variables that was used to generate our response. For instance, out of $15$ explanatory variables used to generate our response, our modeling started by using only $9$ of them. We added covariates one at a time with their corresponding parameters. Once we get to $15$ explanatory variables, we were at the true model. So the explanatory variables added after this were decoys and have corresponding true coefficients zero. More formally, we started with models of the form
\begin{equation*}
  y = \beta_{0} + \sum_{j=1}^{l}\beta_{j}x_{j},
\end{equation*}
where $~l \leq p~$ and $~p~$ is the number of parameters used to simulate the data. When we proceeded to include decoys explanatory variables, we used models of the form
\begin{equation*}
  y = \beta_{0} + \sum_{j=1}^{m}\beta_{j}x_{j}, ~\hbox{where, $~m > p,~$ and $\forall~ m > p,~ \beta_{j} = 0$.}
\end{equation*}
Taken together, this gives a simple nested sequence of models in which the correct terms are added first. In practice, we suggest that, any collection of models built using the same explanatory variables can be ordered by any shrinkage method such as ALASSO \cite{efron2004least}, SCAD \cite{Fan&Li}, elastics net \cite{zou2005regularization}, etc. that satisfies the oracle property.

Thus for $p = 15$, we fitted $14$ different models centered at the true model and each time we estimated the VCD. In Table \ref{TableTheorem5set15}, the first model used $9$ out of the $15$ covariates, and we added covariates one at a time up to 15. For the second group of $8$ models, we used all $15$ covariates and adding decoys one at a time. In Table \ref{TableTheorem5set15}, the first model in the second group has the $15$ correct covariates and one decoy. Later models used more decoys. The left column represents the model size used to estimate the VCD. The right column shows the estimate of VCD.

\begin{table}[ht]
    \begin{subtable}[h]{0.4\textwidth}
        \centering
        \begin{tabular}{l | l}
        \hline
  Model size & $\hat{h}$ \\
  \hline\hline
  $9$  & 50 \\
  $10$ & 50 \\
  $11$ & 50 \\
  $12$ & 50 \\
  $13$ & 50 \\
  $14$ & 50 \\
  $15$ & 2 \\
  $16$ & 2 \\
  $17$ & 2 \\
  $18$ & 2 \\
  $19$ & 2 \\
  $20$ & 2 \\
  $21$ & 2 \\
  $22$ & 2 \\
  $23$ & 2 \\
  \hline
\end{tabular}
\caption{Estimates of $\widehat{h}$, $\sigma_{\epsilon}=0.4 $,$~N_{L} = 50, 100, 150, 200, 250, 300, 350 $, $~N = 400 $,$~m=15$, $p=15$. Using Vapnik's algorithm, here Algorithm     \ref{VapnikAlgorithm}}
    \label{TableTheorem5set15}
    \end{subtable}

    \begin{subtable}[h]{0.4\textwidth}
    \centering
    \begin{tabular}{l | l}
  \hline
  Model Size & $\hat{h}$ \\
  \hline\hline
  $53$ & 75 \\
  $54$ & 75 \\
  $55$ & 75 \\
  $56$ & 75 \\
  $57$ & 75 \\
  $58$ & 75 \\
  $59$ & 75 \\
  $60$ & 2 \\
  $61$ & 2 \\
  $62$ & 2 \\
  $63$ & 2 \\
  $64$ & 2 \\
  $65$ & 2 \\
  $66$ & 2 \\
  $67$ & 2 \\
  $68$ & 2 \\
  \hline
\end{tabular}
\caption{Estimation of $\widehat{h},~\sigma=0.4,~ N_{L} = 75, 150, 225, 300, 375, 450, 525, 600$, $~N=600~$, $m=10$, $p=60$}
\label{TableTheorem5set60}

    \end{subtable}
  \caption{Direct implementation of Using Vapnik's algorithm, here Algorithm \ref{VapnikAlgorithm} for $p = 15 $ and 60.}
  \label{VapAlgoP1560}

\end{table}

From Table \ref{TableTheorem5set15}, we observe that, there is no variability in the estimate of $\hat{h}$, except for the large sudden decrease at the true model $p=15$. For the cases where the conjectured model is a subset of the true model, $\hat{h}$ is just equal to the minimum value of the design points and is far from the true value ($h_{T} = 15$). When adding decoy covariates in the model, the estimate that we get is the same ($\hat{h} = 2$) for all model sizes.
We performed the analogous simulation for $p=60$. Results of these simulations are contained in Table \ref{TableTheorem5set60}. The observations are qualitatively the same as for $p = 15$.

From Tables \ref{TableTheorem5set15} and \ref{TableTheorem5set60}, we can also observe that the estimated $\hat{h}$ equals $\min N_{L}$ for subsets of the true model and suddenly drops to a much lower value that is different from $\min N_{L}$ when the conjectured model is exactly equal to the true model and is constant thereafter. Moreover, neither the large nor the small values of $\hat{h}$ are true. Thus, even though we can identify the location of the drop as the point where the true model occurs, $\hat{h}$ is not a direct estimate of the VCD, and so is not consistent at the true model. Otherwise put, $\hat{h}$  merely permits us to locate the true model on a nested list of models, contrary to the claims made by Vapnik et al.,\cite{Vapnik:etal:1994}.

Overall, the implementation of Theorem \ref{Theo5} on synthetic data faces problems such as lack of variability and inaccuracy. The lack of variability merely makes it hard to tell in a complex setting whether the true model is more or less complicated than the conjectured model. That is, clause 2 fails. The inaccuracy is obvious in that estimates of $\hat{h}$ were far from the true number of parameters. On the positive side, we do retain the ability to identify a true model and the true value of $h$ is within the range of values of $\hat{h}$ found by Vapnik et al.'s method.
In an effort to improve the  Vapnik et al. \cite{Vapnik:etal:1994} technique, we consider an alternative formulation in the next section.

\section{An Estimator of the VCD}
\label{estimator}

Recall that from Theorem \ref{Theo5}, the upper bound is
\begin{equation}
\label{UPPTHEO6}
  \Phi_{h}(n)=\min\left(1,D_{p}(\alpha^{*})\right)\sqrt{\frac{h}{n}\log\left(\frac{2ne}{h}\right)}.
\end{equation}
This is slightly different from the form derived in \cite{Vapnik:etal:1994} and studied
in \cite{McDonald:etal:2011}. Moreover, although  $\min\left(1,D_{p}(\alpha^{*})\right)$ does not affect the optimization, it might not be the best constant for the inequality in \eqref{objfn}. So, we replace it with an arbitrary constant $c$ and optimize over it to make our upper bound as tight as possible.
In contrast to Vapnik et al. \cite{Vapnik:etal:1994}, our computation of $c$ is data driven not `universal'. We let $c$ vary from $ 0.01 $ to $100 $ in steps of size $0.01$. However, we have observed in practice, the best value of $\hat{c}$ is usually between $1$ and $8$. The technique that we use to estimate $\hat{h}$ is also different from that in Vapnik et al. \cite{Vapnik:etal:1994}. Indeed, our Algorithm \ref{Algo1} below accurately encapsulates the way the LHS of \eqref{OVERALLBOUND} is formed unlike Algorithm \ref{VapnikAlgorithm}. In particular, we use two large bootstrapping procedures, one as a proxy for calculating expectations and  the second as a proxy for calculating a maximum.
In Vapnik et al.'s algorithm (Algorithm \ref{VapnikAlgorithm}), to account for the maximum variability they have to change the label of the second data. In our case we use cross-validation and bootstrapping.
In fact, we split the data into two groups, using the first group of data, we fitted model I and with the second group of data we fitted model II. To estimate the error from the first model, we compared the predictions from the first model and the second group of data and the other way around for the second model.
We comment that Jun Shao et al. \cite{shao1993linear} suggested in a cross-validation context that as much as data as possible should be reserved for testing rather than fitting.

To explain how we find our estimate of the RHS of \eqref{objfn} from Theorem \ref{Theo5}, we start by replacing the sample size $n$ in \eqref{UPPTHEO6} with a specified value of design point, so that the only unknown is $h$. Thus, formally, we replace \eqref{UPPTHEO6} by
$$
\Phi_{h}^{*}(n_{l}) = \widehat{c}\sqrt{\frac{h}{n_{l}}\log\left(\frac{2n_{l}e}{h}\right)},
$$
where $\hat{c}$ is the optimal data driven constant.
If we knew the LHS of \eqref{OVERALLBOUND} even computationally, we could use it to estimate $h$. However, in general we don't know the LHS of \eqref{OVERALLBOUND}.
Instead, we generate one observation of
\begin{equation}
    \widehat{\xi}(n_{l})=E\left(\sup_{\alpha_{1}, \alpha_{2}\in\Lambda}\left(\nu_{1}^{*}(z_{2},\alpha_{1})-\nu_{2}^{*}(z_{1},\alpha_{2})\right)\right)=\Phi_{h}^{*}(n_{l}) + \epsilon(n_{l})
\label{regmodel}
\end{equation}
for each design point $n_{l}$ by bootstrapping. In \eqref{regmodel}, we assume $\epsilon(n_{l})$ has a mean zero
but an otherwise unknown distribution.  We can therefore obtain a list of values of $\widehat{\xi}(n_{l})$
for the elements of $N_{L}$. To generate $\widehat{\xi}(n_{l})$ for each $n_{l}$ our algorithm uses a cross-validation error defined in step 5. We use our computation procedure for finding the LHS of \eqref{OVERALLBOUND} as in algorithm \ref{Algo1}.

\begin{algorithm}
\SetAlgoLined
\KwResult{An estimate of a vector of values containing $\hat{\xi}(n_{l})$. }
{\bf Inputs:}
 \begin{itemize}
         \item A collection of regression models $\mathcal{G}=\left\{g_{\beta}:\beta\in\beta\right\}$;
         \item A dataset;
         \item Two integers $b_{1}$ and $b_{2}$ for the number of bootstrap samples;
         \item Integer $m$ for the number of disjoint intervals use to discretize the losses;
         \item A set of  design points $N_{L} = \left\{n_{1}, n_{2}, \cdots, n_{l}\right\}$.
       \end{itemize}
\begin{enumerate}
  \item For each $g = 1, 2,\cdots, l$ do;
  \item Take a bootstrap sample of size $2n_{g}$ (with replacement) from our dataset;
  \item Randomly divide the bootstrap data into two groups $~G_{1}~ \hbox{and}~ G_{2}$ of size $n_{g}$ each;
  \item Fit two models one for $G_{1}$ and one for $G_{2}$;
  \item The mean square error of each model is calculated using the covariate andthe  response from the other model, thus: For instance $$ MSE_{1} = \left(predict(Model_{1} ,X_{2})-Y_{2}\right)^2,~ \hbox{and}~ $$
  $$MSE_{2} = \left(predict(Model_{2} ,X_{1})-Y_{1}\right)^2;$$
  \item Discretize the loss function ie discretize $MSE_{1} ~ \hbox{and} ~ MSE_{2}$ into $m$ disjoint intervals;
  \item Estimate $\nu_{1j}^{*}(Z_{2},\alpha_{1},m)$ and $\nu_{2j}^{*}(Z_{1},\alpha_{2},m)$ using $MSE_{1} ~\hbox{and}~ MSE_{2}$ respectively for each interval;
  \item Compute the difference $\left|\nu_{1j}^{*}(z_{2},\alpha_{1},j) - \nu_{2j}^{*}(z_{1},\alpha_{2},j)\right|$
  \item Repeat steps $1-7$ $b_{1}$ times, take the mean intervalwise and sum it across all intervals so we have:
  $$
  \hat{\xi_{i}}\left(n_{l}\right) = \sum_{j=0}^{m-1}mean\left|\nu_{1j}^{*}(z_{2},\alpha_{1},j) - \nu_{2j}^{*}(z_{1},\alpha_{2},j)\right|;
  $$
  \item Repeat step $1-8$ $b_{2}$ times and calculate
   $$
  \hat{\xi}(n_{l})= \frac{1}{b_{2}}\sum_{i=1}^{b_{2}}\hat{\xi}_{i}(n_{l}).
  $$
  \item End for.
\end{enumerate}
 \caption{Generate $\hat{\xi}(n)$'s.}
 \label{Algo1}
\end{algorithm}

Note that this algorithm is parallelizable because different $n_{l}$ can be sent to different nodes to speed up the process of estimating $\hat{\xi}\left(\right)$ for all $n_{l}$.
After obtaining $\hat{\xi}(n_{l})$ for each value of $n_{l}$, we estimate $h_{T}$ by minimizing the squared distance between $\hat{\xi}(n_{l})$ and $\Phi\left(n_{l}\right)$. Our objective function is
\begin{eqnarray}
f_{n_{l}}(h) = \sum_{l=1}^{|N_{L}|}\left(\hat{\xi}(n_{l}) - \hat{c}\sqrt{\frac{h}{n_{l}}\log\left(\frac{2n_{l}e}{h}\right)}\right)^{2}.
\label{objective}
\end{eqnarray}
We comment that optimizing \eqref{objective} does not in general lead to closed form solutions. For instance, differentiating \eqref{objective} with respect to $h$ yields
$$
\frac{\partial f_{n_{l}}(h)}{\partial h}=\sum_{l=1}^{|NL|}\frac{\hat{c}\left(\log\left(\frac{2n_{l}e}{h}\right)-1\right)\left(\hat{c}\sqrt{\frac{h}{n_{l}}\log\left(\frac{2n_{l}e}{h}\right)}-\hat{\xi}(n_{l})\right)}{n_{l}\sqrt{\frac{h}{n_{l}}\log\left(\frac{2n_{l}e}{h}\right)}}
$$
Setting this equal to zero only leads to an implicit expression for $\widehat{h}$ that can only be solved numerically.
Doing this would give our estimate of the VCD. However, for ease of computation, we use grid search on \eqref{objective} to estimate $h$, even though a quadratic optimization might be more efficient.
To evaluate the performance of our $\hat{h}$, we first present some simulation studies using linear models.
\section{Analysis of Synthetic data}
\label{Simulation}
The simulations in this section are based on the setting in Sec. \ref{SimSettings}.
As required for any valid estimation, the technique that we use to estimate the VCD does not require knowledge of the true number of parameters in the model.
In Subsec. \ref{PropoWorks}, we present simulation results to show that our estimator for VCD is consistent for the VCD of the true model. Of course, since our results are only simulations, we do not get clause 1 of consistency at the true model perfectly on all occasions: sometimes our $\hat{h}$ is off by one. (As noted later, this can often be corrected if the sample size is larger.) So we are led to believe that, with enough data our $\hat{h}$ is consistent in the sense of Def. \ref{Defcon}. On the other hand, for the sake of good prediction, it may sometimes be good to add one or two explanatory variables beyond what a model selection or principle identifies as optimal. In Subsec. \ref{Dependency}, we will look at simulations where results do not initially appear to be consistent with the theory. Roughly, we show that for larger values of $p$, larger values of $n$ are needed. Also, as $p$ increases, we must choose $n_{l}$'s that are properly spread out over $[0,n]$.
These studies help us calibrate the estimating procedure for $h_{T}$ given $n$ and a reasonable range for the number of parameters. We suggest this is necessary because \eqref{UPPTHEO6} is only an upper bound (see Theorem \ref{Theo5}) that tightens as $n$ increases.

To recap: Our estimation procedure has three steps. First, we obtain $\hat{\xi}\left(n_{l}\right)$'s; The algorithm for its estimation was given in Sec. \ref{estimator}. Second step we find the value of $~\hat{c}$ by minimizing \eqref{objective}. This is possible because $h$ is known in the conjectured model. It is the complexity of the model used to estimate $\hat{\xi}\left(n_{l}\right)$. Third, we find the value of $~\hat{h}~$ that minimizes \eqref{objective}.

\subsection{Simulation cases where our proposed method works}
\label{PropoWorks}
Simulations are implemented for model sizes $p = 15, 30, 40, \hbox{and}~ 50$ and we present the results for all cases.
For $p = 15, 30$, the choice of parameters in our simulation are all the same.
When the sample size is $N=400$; the design points are $N_{L} = \left\{50, 100, 150, 200, 250, 300, 400\right\}$; $m=10$; and the number of bootstrap samples is $b_{1} = b_{2}=50$. For $p = 40, 50 ~\hbox{and}~60$, the choice of the parameters in our simulation is all the same. When the sample size is $N=600$; the design points are $N_{L} = \left(75, 150, 225, 300, 375, 450, 525, 600\right)$; $m=10$; and the number of bootstrap samples is $b_{1} = b_{2}=50$.
For these cases, we fitted two sets of models, the first set uses a subset of our covariates to estimate the VCD, and in the second set, we added some decoys (their corresponding $\beta$ in the generation of the response are zeros).
Outputs of simulations are given in Figures \ref{Hhat15}-\ref{Hhat60}. 
We emphasize that we use Def. \ref{Defcon} for our $\hat{h}$. $\widehat{ERM}_{1}$ and $\widehat{ERM}_{2}$ use the point where the sharpest decreases occurs, and $BIC$ is simply minimized to identify a good model.
\noindent

\begin{figure}
    \centering
    \begin{subfigure}[b]{0.5\textwidth}
        \centering
        \includegraphics[width = \textwidth]{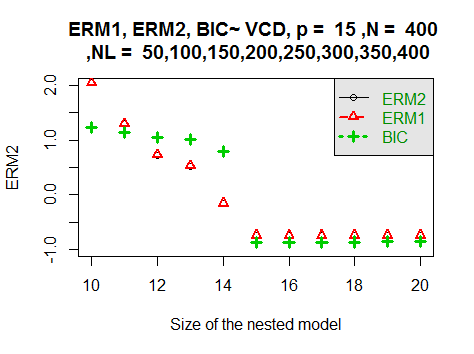}
        \caption{Values of $\widehat{ERM}_{1}$, $\widehat{ERM}_{2}$ and $BIC$}
        \label{erm15}
    \end{subfigure}
    \hfill
    \begin{subfigure}[b]{0.5\textwidth}
        \centering
        \includegraphics[width = \textwidth]{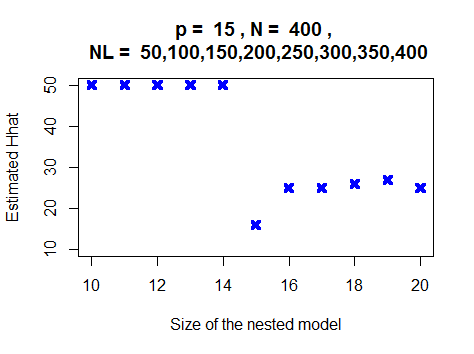}
        \caption{Estimates of $\hat{h}$}
        \label{hat15}
    \end{subfigure}
    \caption{Estimates of $\hat{h}$, $\widehat{ERM}_{1}$, $\widehat{ERM}_{2}$ and $BIC$ for $p=15$, $\sigma_{\epsilon}=0.4$, $\sigma_{\beta}=3$, $\sigma_{x} = 2$}
    \label{Hhat15}
\end{figure}

\begin{figure}
    \centering
    \begin{subfigure}[b]{0.5\textwidth}
        \centering
        \includegraphics[width = \textwidth]{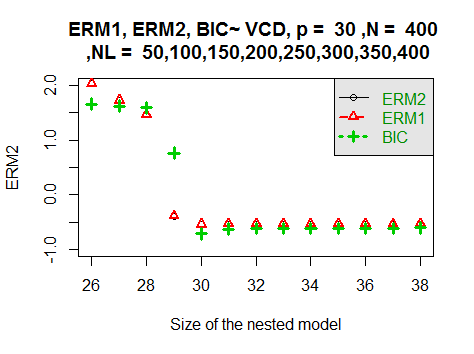}
        \caption{Values of $\widehat{ERM}_{1}$, $\widehat{ERM}_{2}$ and $BIC$}
        \label{erm30}
    \end{subfigure}
    \hfill
    \begin{subfigure}[b]{0.5\textwidth}
        \centering
        \includegraphics[width = \textwidth]{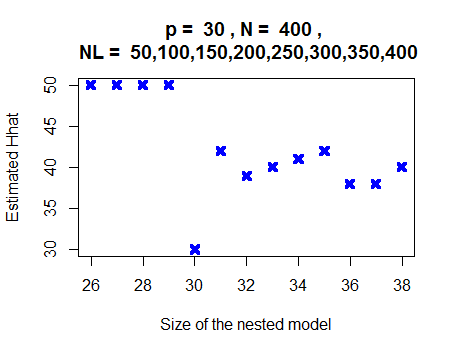}
        \caption{Estimates of $\hat{h}$}
        \label{hat30}
    \end{subfigure}
    \caption{Estimates of $\hat{h}$, $\widehat{ERM}_{1}$, $\widehat{ERM}_{2}$ and $BIC$ for $p=30$,$\sigma_{\epsilon}=0.4$, $\sigma_{\beta}=3$, $\sigma_{x} = 2$}
    \label{Hhat30}
\end{figure}

\begin{figure}
    \centering
    \begin{subfigure}[b]{0.4\textwidth}
        \centering
        \includegraphics[width = \textwidth]{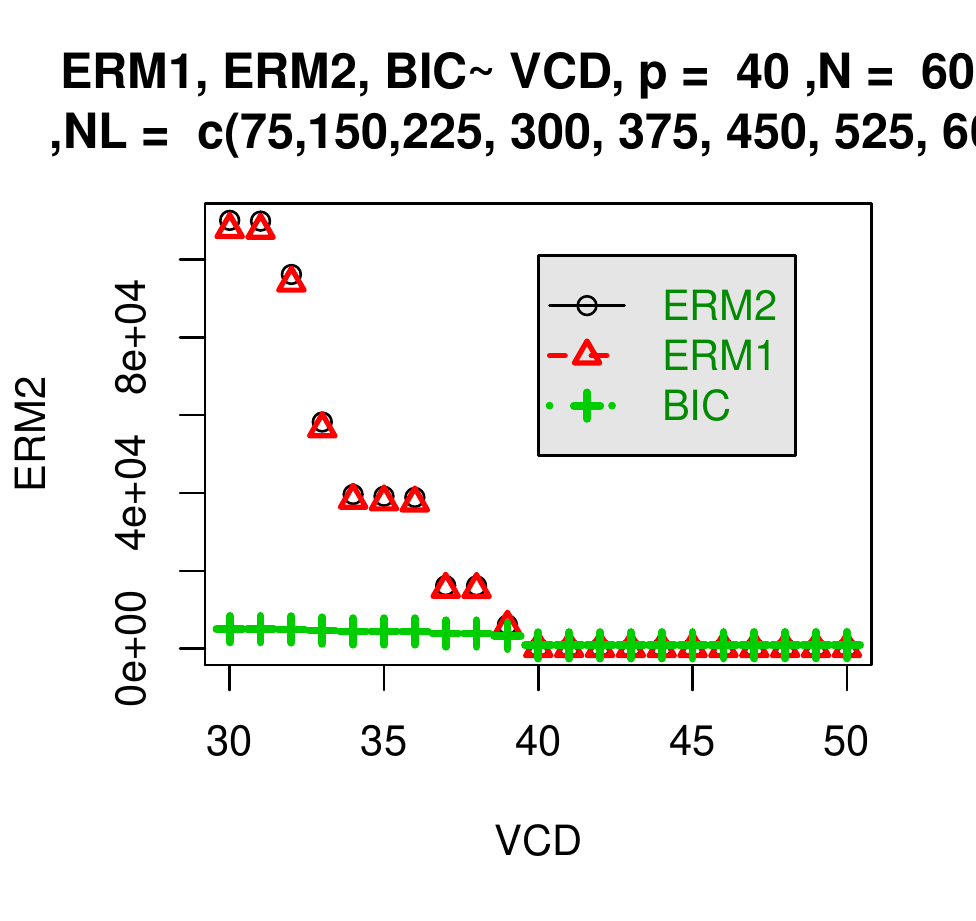}
        \caption{Values of $\widehat{ERM}_{1}$, $\widehat{ERM}_{2}$ and $BIC$}
        \label{erm40}
    \end{subfigure}
    \hfill
    \begin{subfigure}[b]{0.4\textwidth}
        \centering
        \includegraphics[width = \textwidth]{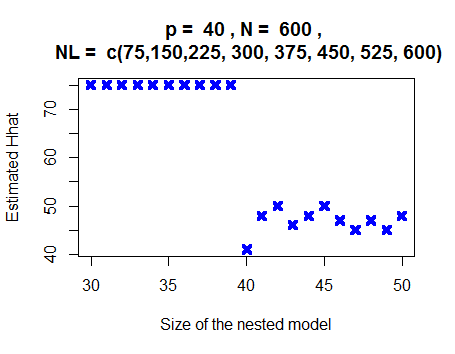}
        \caption{Estimates of $\hat{h}$}
        \label{hat40}
    \end{subfigure}
    \caption{Estimates of $\hat{h}$, $\widehat{ERM}_{1}$, $\widehat{ERM}_{2}$ and $BIC$ for $p=40$}
    \label{Hhat40}
\end{figure}

\noindent

\begin{figure}
    \centering
    \begin{subfigure}[b]{0.4\textwidth}
        \centering
        \includegraphics[width = \textwidth]{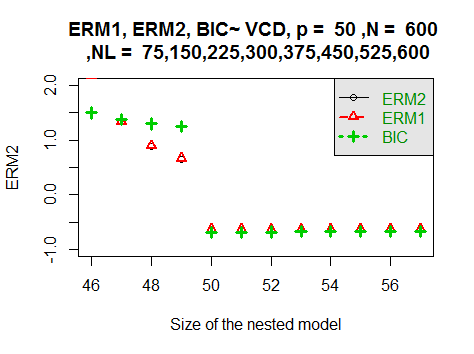}
        \caption{Values of $\widehat{ERM}_{1}$, $\widehat{ERM}_{2}$ and $BIC$}
        \label{erm50}
    \end{subfigure}
    \hfill
    \begin{subfigure}[b]{0.4\textwidth}
        \centering
        \includegraphics[width = \textwidth]{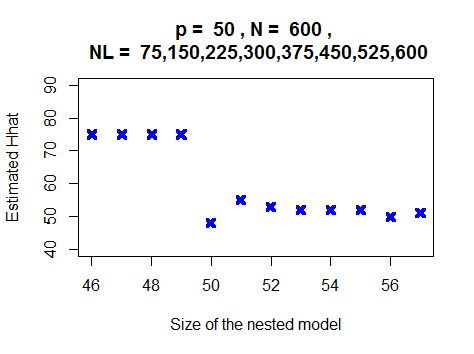}
        \caption{Estimates of $\hat{h}$}
        \label{hat50}
    \end{subfigure}
    \caption{Estimates of $\hat{h}$, $\widehat{ERM}_{1}$, $\widehat{ERM}_{2}$ and $BIC$ for $p=50$}
    \label{Hhat50}
\end{figure}

\begin{figure}
    \centering
    \begin{subfigure}[b]{0.4\textwidth}
        \centering
        \includegraphics[width = \textwidth]{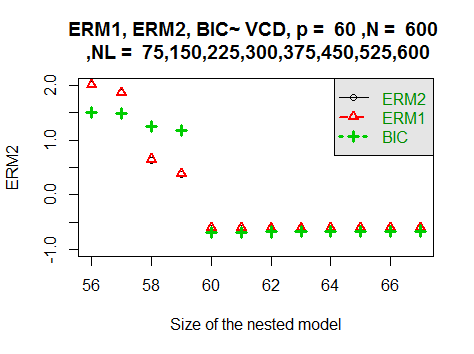}
        \caption{Values of $\widehat{ERM}_{1}$, $\widehat{ERM}_{2}$ and $BIC$}
        \label{erm60}
    \end{subfigure}
    \hfill
    \begin{subfigure}[b]{0.4\textwidth}
        \centering
        \includegraphics[width = \textwidth]{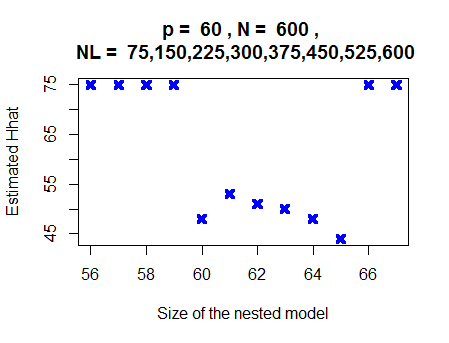}
        \caption{Estimates of $\hat{h}$}
        \label{hat60}
    \end{subfigure}
    \caption{Estimates of $\hat{h}$, $\widehat{ERM}_{1}$, $\widehat{ERM}_{2}$ and $BIC$ for $p=60$}
    \label{Hhat60}
\end{figure}

By examining Figures \ref{Hhat15} to \ref{Hhat60},
we see that, for each given true model of pre-specified size, we fitted a list of nested models. We start with models that used a subset of covariates to generate the response, we added one variable at a time up to the correct size of the model, and after that, we added decoys. These tables also contain information on the upper bound of the true unknown risk implemented using Proposition \ref{subProp1} ($\widehat{ERM}_{1}$) and Proposition \ref{subProp2} ($\widehat{ERM}_{2}$), and also the Bayesian Information Criterion (BIC) of the corresponding model. We comment that $\widehat{ERM}_{1}$ and $\widehat{ERM}_{2}$ are the standard style of probabilistic upper bounds for the empirical risk
under two senses of distance, one is additive (additive Chernoff bound \cite{Vapnik:1998}) and the other is multiplicative (multiplicative Chernoff bounds \cite{Vapnik:1998}). Note that $\widehat{ERM}_{1}$ and $\widehat{ERM}_{2}$ from Propositions \ref{subProp1} and \ref{subProp2} are of exactly the same form as would be derived using our cross-validated form of the errors namely the clause 1 and 2 from Proposition \ref{Prop1and2}.

To ensure $\widehat{ERM}_{1}$, $\widehat{ERM}_{2}$ and $BIC$ will fit on the same graph, we have scaled their values. Their exact values are recorded in the tables of Appendix \ref{AppendiceChapter3}.

We see that when the size of the conjectured model is strictly less than that of the true model, the estimated VCD is equal to the minimum value of the design points, and the values of the $\widehat{ERM}_{1}$, $\widehat{ERM}_{2}$ and $BIC$ are extremely high. Furthermore, these latter values typically decrease as the conjectured models become similar to the true model. For this range of model sizes, when the conjectured model exactly matches the true model, the estimated VCD ($\hat{h}$) is closest to the true value. The biggest discrepancy (of size 2) occurs for $p=50$; by contrast, for every other case the difference between the true value and the estimated VCD is at most one.

From Figures \ref{Hhat15}-\ref{Hhat50},
the behaviour of $\widehat{ERM}_{1}$, $\widehat{ERM}_{2}$ or $BIC$ is the same. In fact, when the conjectured model is a subset of the true model, we see a consistent and substantial decrease of $\widehat{ERM}_{1}$, $\widehat{ERM}_{2}$ or $BIC$; and a sudden drop of these statistics when the conjectured model perfectly matches the true model. This sudden drop can be considered as an indicator of the true model. After this point, $\widehat{ERM}_{1}$ usually flatlines; in fact, $\widehat{ERM}_{1}$ does not have the ability to discriminate well over models in the sense of clause 2 of Def. \ref{Defcon}. However, $\widehat{ERM}_{2}$ can still identify the true model since its smallest value occurs at the sudden drop. Also, $\widehat{ERM}_{2}$ often increases (albeit slowly) from its minimum as $p$ increases, thereby often satisfying clause 2 of Def. \ref{Defcon} at least in a minimal sense. We also see that
$BIC$ has good power of discrimination since its smallest values occur at the true model thereafter, when we add decoys the values of $BIC$ are bigger than those of the true model although sometimes not by much.
\newline
Finally, the smallest discrepancy between the size of the model ($h$) and $\hat{h}$ usually occurs at the true model. This indicates that $\hat{h}$ satisfies clause 1 of Def. \ref{Defcon}, i.e appears to be consistent for the true model.
In addition, $\hat{h}$ generally increases as the size of the model becomes bigger although pass a certain point it may flatline as well. The problem with flatlining or even decreasing past a certain value of $h$ is worse when $N$ is not large enough relative to $p$.

In Figure \ref{Hhat60},
$\widehat{ERM}_{1}$, $\widehat{ERM}_{2}$ and $BIC$ behave as before. In fact, we observe a decrease as the conjectured model becomes similar to the true model and there is a big drop as the conjectured model exactly matches the true model. However, at the true model, there is a big discrepancy between $\hat{h}$ and the size of the true model. We suggest that this discrepancy occurs because the sample size is too small comparatively to $p$ and the choice of the design points is poor. In Sec. \ref{Dependency} we will observe the effect of the sample size and make tentative recommendations for how to choose design points well.

For the present, we note that Figure \ref{Hhat60}
gives us the estimates of $\widehat{ERM}_{1}$ and $\widehat{ERM}_{2}$ when the sample size is $N=600$, and $N_{L}$ takes on values from $75$ to $600$ in steps of size $75$. We observe that $\widehat{ERM}_{1}$ and $\widehat{ERM}_{2}$ have a very low power of discrimination between models in the sense of clause 2 of Def. \ref{Defcon}. However, from Figure \ref{Hhat60_2000} where the sample size is $N = 2000$ and $N_{L} = 500, 700, 1000, 1500, 2000$,
we see that after the sudden drop in the estimate of $\widehat{ERM}_{1}$ and $\widehat{ERM}_{2}$, (that is an indicator of the true model), that $\widehat{ERM}_{2}$ tends to discriminate better than $\widehat{ERM}_{1}$.

\begin{table}[ht]
  \centering
  \caption{Relative increase of the sample size given the size of the model}\label{NoverP}
\begin{tabular}{|c|c|c|c|c|c|}
  \hline
   & Fig. \ref{hat15} & Fig. \ref{hat30}  & Fig. \ref{hat40} & Fig. \ref{hat50} & Fig. \ref{hat60} \\
  \hline\hline
 $\frac{N}{p}$ & 27 & 13 & 15 &   12 & 10  \\
 \hline
\end{tabular}
\end{table}
Table \ref{NoverP} gives the ratio of the sample size to the size of the model. In fact, from Fig \ref{hat15}  to Fig \ref{hat60}, we see that as $\frac{N}{p}$ increases, we have a good estimate of $\hat{h}$ and clause 2 of Def. \ref{Defcon} seems to be satisfied better. That is, the higher $\frac{N}{p}$ is, the better clause 2 of Def. \ref{Defcon} is satisfied. From Fig. \ref{hat15} to Fig. \ref{hat40}, we see that clause 2 of Def. \ref{Consis} is satisfied, in Fig. \ref{hat50}, clause 2 is minimally satisfied, and in Fig. \ref{hat60}, clause 2 is not satisfied.

We argue that estimating VCD directly is better than using $\widehat{ERM}_{1}$ or $\widehat{ERM}_{2}$. There are several reasons. First, the computation of $\widehat{ERM}_{1}$ and $\widehat{ERM}_{2}$ requires $\hat{h}$. It also requires a threshold $\eta$ be chosen (see Proposition \ref{Prop1and2}) and is more dependent on $m$ than $\hat{h}$ is. Being more complicated than $\hat{h}$, $\widehat{ERM}_{1}$, $\widehat{ERM}_{2}$ will break down faster than $\hat{h}$. This is seen, for instance in Table \ref{AdjTableP60_700}, where $\hat{h}$ identifies the correct model (even if it does not estimate $h_{T}$ accurately due to small sample sizes) while $\widehat{ERM}_{1}$ fails clause 2 of Def. \ref{Defcon}. $\widehat{ERM}_{2}$ behaves better than $\widehat{ERM}_{1}$ because the former is multiplicative (a stronger criterion) but can still give smaller values for larger model sizes, see Table \ref{AdjTableP60_2000_NL2}. In these cases when $\widehat{ERM}_{1}$ or $\widehat{ERM}_{2}$ performs poorly, $\hat{h}$ successfully identifies the true model and usually gives a value a value relatively close to the correct VCD. As a further point, Tables \ref{AdjTableP60_2000} and \ref{AdjTableP60_2000_NL2} show that $\widehat{ERM}_{1}$ often performs notably worse than $\widehat{ERM}_{2}$ or $\hat{h}$, and this may be attributed to the design points. More generally, we propose the following based on efficiency.

Specifically, we argue that $\widehat{ERM}_{1}$, and $\widehat{ERM}_{2}$ break down faster than $\hat{h}$ (under Def. \ref{Defcon}) with increasing $p$, if the sample size is held constant. Otherwise put, $\widehat{ERM}_{1}$ and $\widehat{ERM}_{2}$ are inefficient compared to $\hat{h}$, in that to do as well as $\hat{h}$ or ($BIC$) for fixed $p$, the sample size for $\widehat{ERM}_{1}$ and $\widehat{ERM}_{2}$ must be larger. We see this phenomenon in Tables \ref{AdjTableP15}-\ref{AdjTableP50} where we let $p=15, 30, 40$, and $p=50$ while $n$ ranges from 400 to 600. In these cases we see that the inefficiency makes $\widehat{ERM}_{1}$ and $\widehat{ERM}_{2}$ less variable, though they still oscillate somewhat when decoys are included. So $\widehat{ERM}_{1}$, and $\widehat{ERM}_{2}$ do not satisfy clause 2 of consistency at the true model very well in contrast to $\hat{h}$ using \eqref{eqcon}.

\begin{figure}
    \centering
    \begin{subfigure}[b]{0.4\textwidth}
        \centering
        \includegraphics[width = \textwidth]{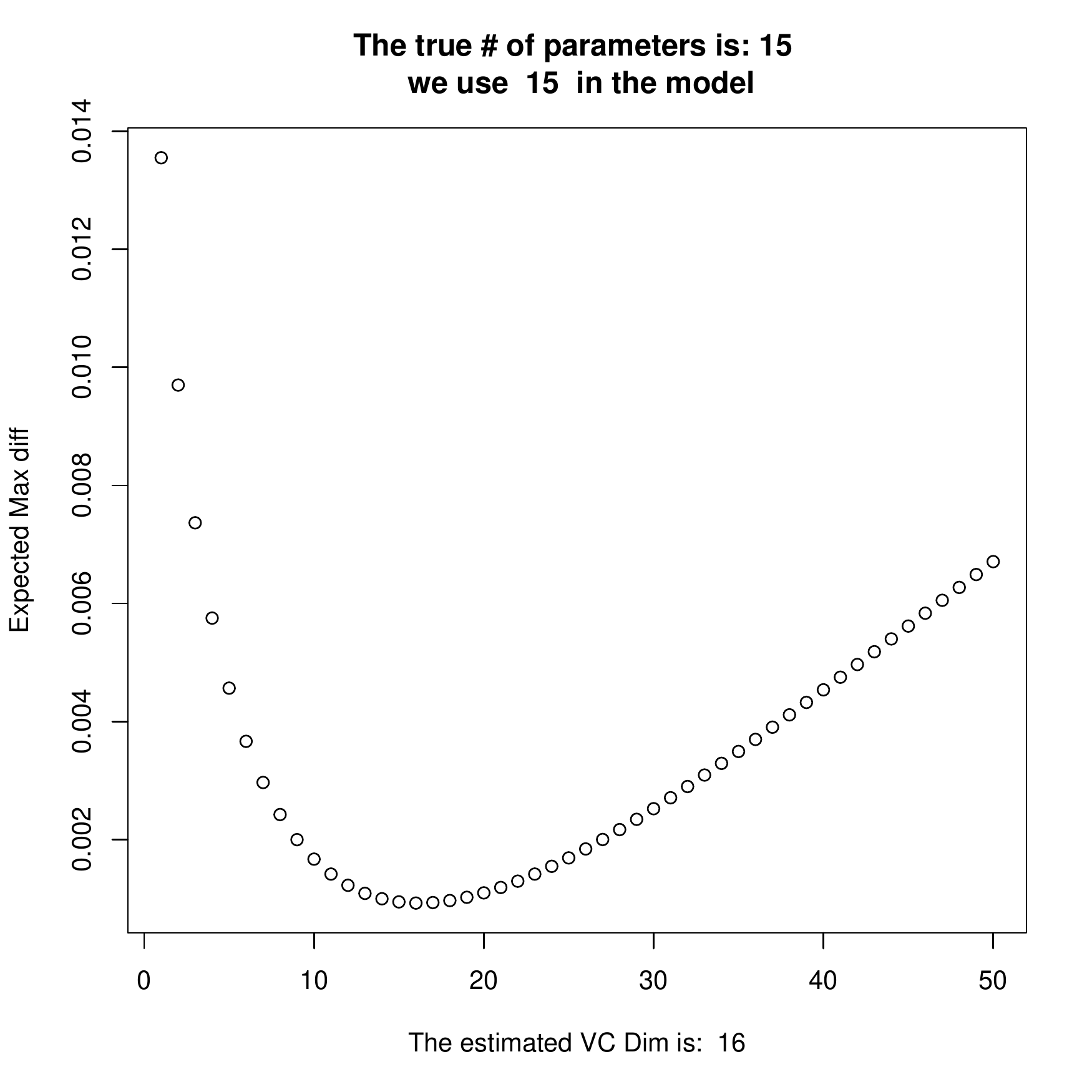}
        \caption{The true Number of Parameters is $~p=15$, $N=400$, $N_{L} = 50, 100, 150, 200, 250, 300, 350,400$}
        \label{Graph15_400}
    \end{subfigure}
    \hfill
    \begin{subfigure}[b]{0.4\textwidth}
        \centering
        \includegraphics[width = \textwidth]{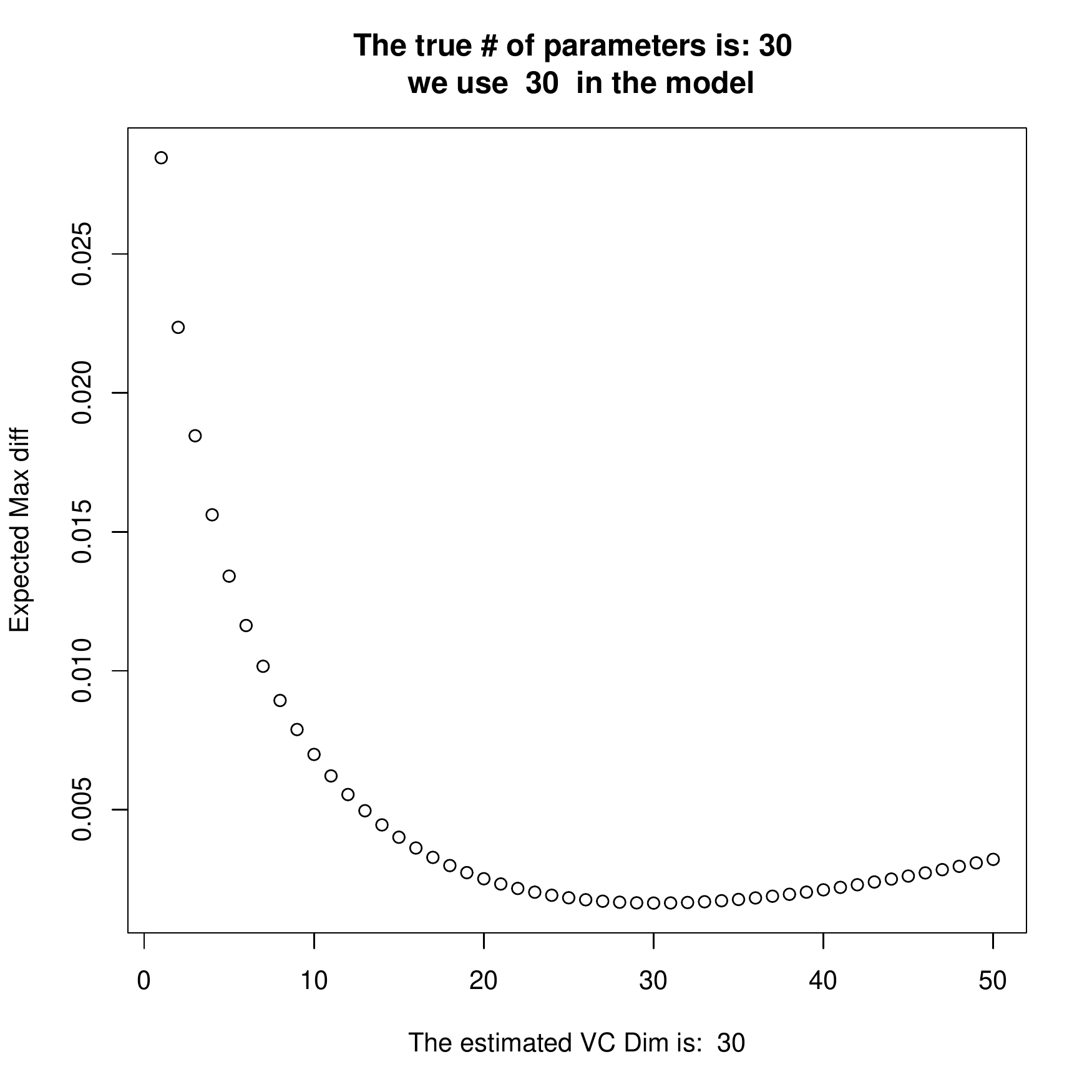}
        \caption{The true Number of Parameters is $~p=30$, $N=400$, $N_{L} = 50, 100, 150, 200, 250, 300, 350,400$}
        \label{Graph30_400}
    \end{subfigure}
    \caption{Estimates of VCD for $p=15, 30$}
    \label{EstVCD1530}
\end{figure}

\begin{figure}
    \centering
    \begin{subfigure}[b]{0.4\textwidth}
        \centering
        \includegraphics[width = \textwidth]{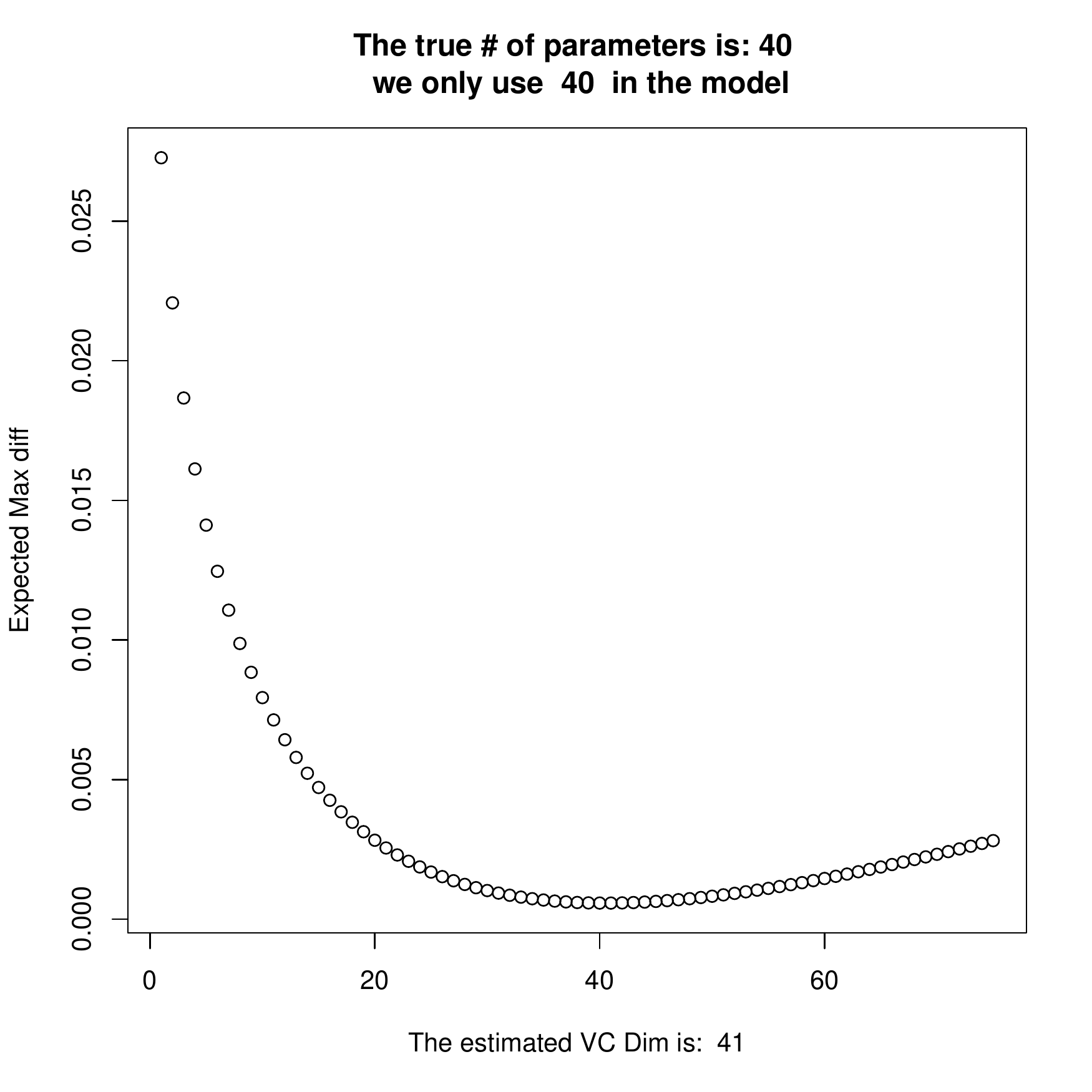}
        \caption{The true Number of Parameters is $p=40$, $N=600$, $N_{L} = 75, 150, 225, 300, 375, 450, 525, 600$}
        \label{Graph40_600}
    \end{subfigure}
    \hfill
    \begin{subfigure}[b]{0.4\textwidth}
        \centering
        \includegraphics[width = \textwidth]{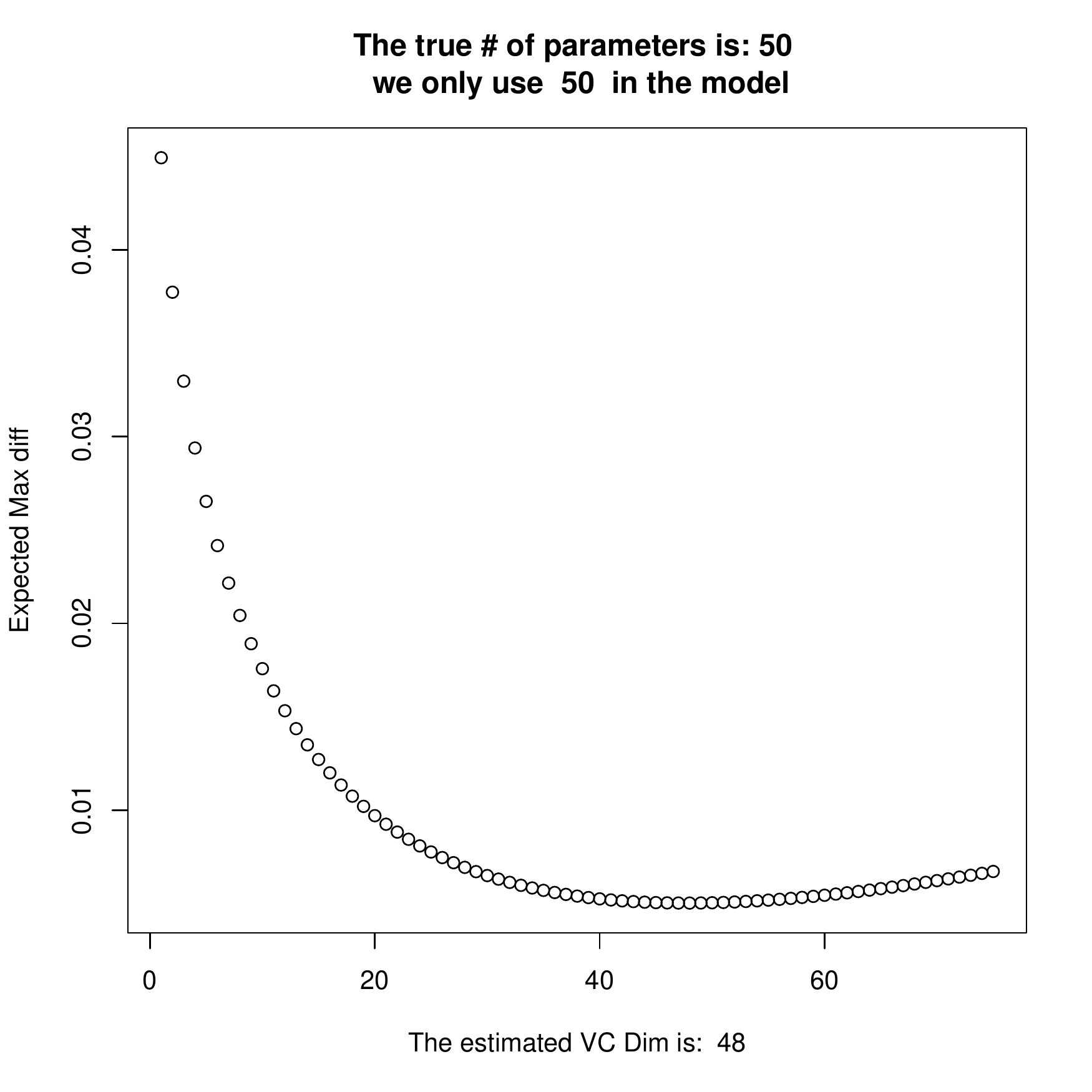}
        \caption{The true Number of Parameters is $~p=50$, $N=600$, $N_{L} = 75, 150, 225, 300, 375, 450, 525, 600$}
        \label{Graph50_600}
    \end{subfigure}
    \caption{Estimates of VCD for $p=40, 50$}
    \label{EstVCD4050}
\end{figure}

\begin{figure}
    \centering
    \begin{subfigure}[b]{0.4\textwidth}
        \centering
        \includegraphics[width = \textwidth]{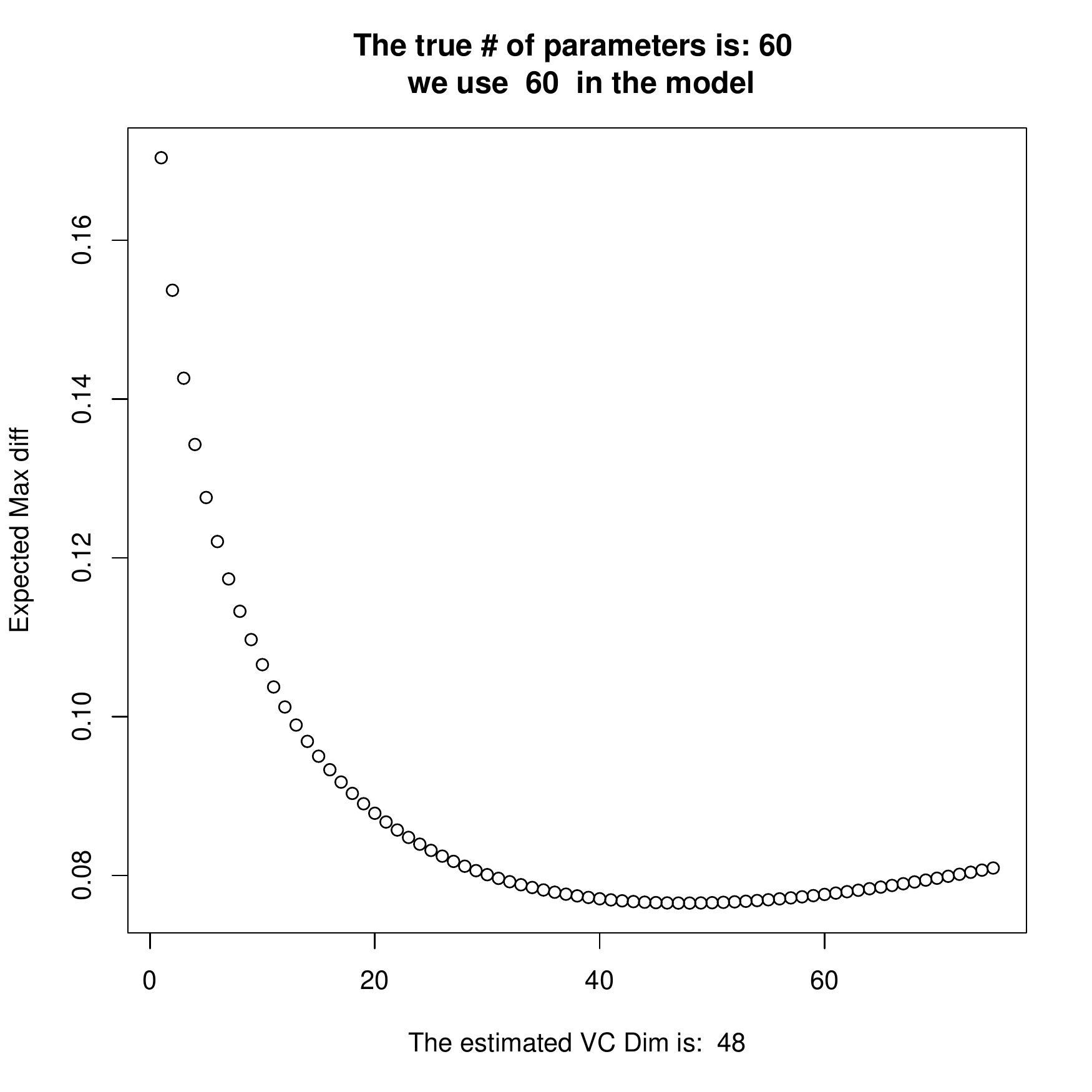}
        \caption{The true Number of Parameters is $~p=60$, $N= 600$, $N_{L} = 5, 150, 225, 300, 375, 450, 525, 600$}
        \label{Graph60_600}
    \end{subfigure}
    \hfill
    \begin{subfigure}[b]{0.4\textwidth}
        \centering
        \includegraphics[width = \textwidth]{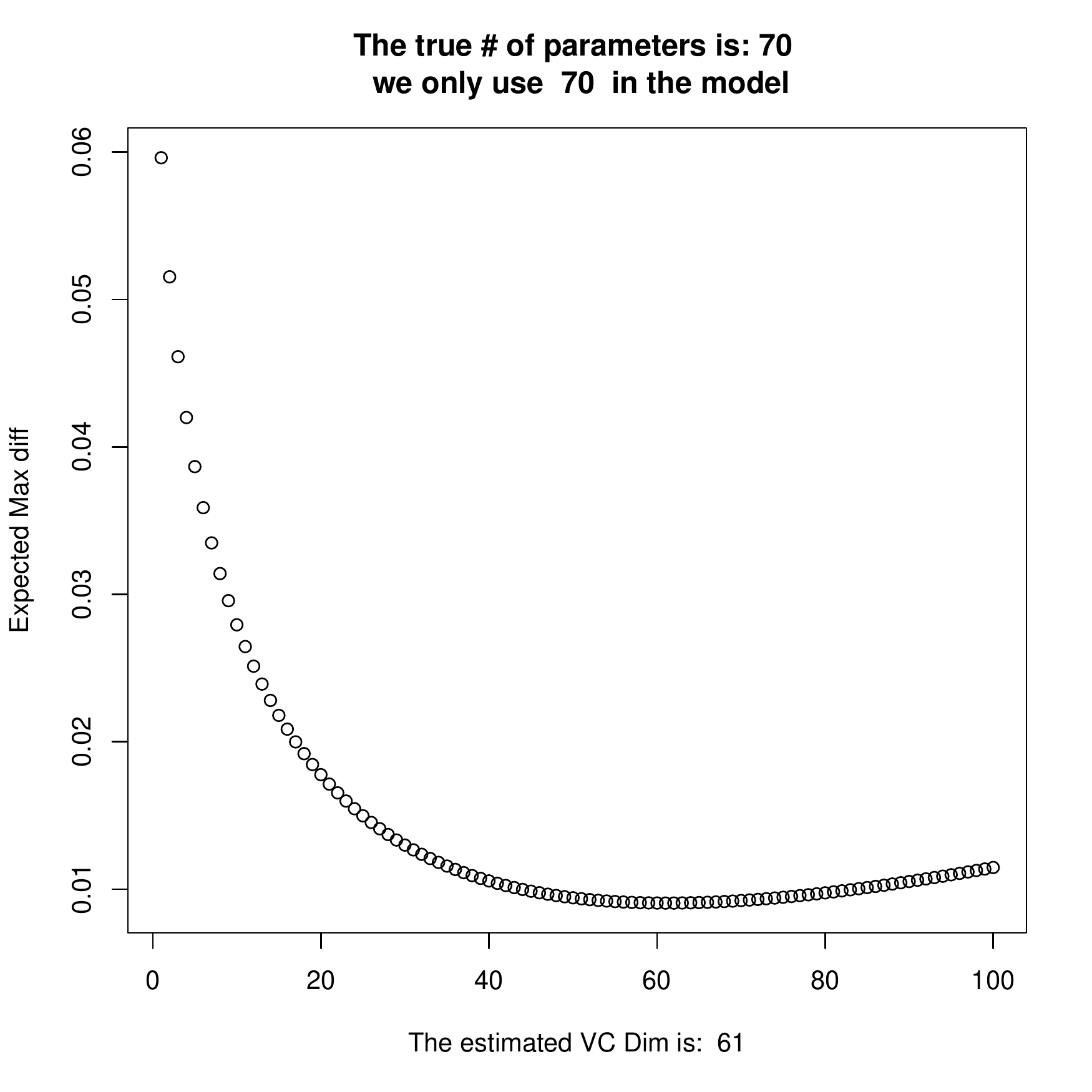}
        \caption{The true Number of Parameters is $~p=70$, $N = 700$, $N_{L} = 100, 200, 300, 400, 500, 600, 700$}
        \label{Graph70_700}
    \end{subfigure}
    \caption{Estimates of VCD for $p=60, 70$}
    \label{EstVCD6070}
\end{figure}

Graphs \ref{Graph60_600} and \ref{Graph70_700} from Figure \ref{EstVCD1530}-\ref{EstVCD6070} are the graphs of the objective function \eqref{objfn}. On the x-axis, we have the VCD and on the y-axis we have the value of the objective function. The minimum of these graphs is the estimate $\hat{h}$.
\subsection{Changes in simulation settings}
\label{Changesimulation}
Our goal in this subsection is to see how our estimates will behave when we change some of our simulation settings. In fact, we change the values of $\sigma_{\epsilon}$ from 0.4 to 0.8. We also change $\sigma_{x}$ and $\sigma_{\beta}$ and we estimate $\hat{h}$ when the size of the conjectured model exactly matches the true model. We repeated this 67 times and each time we only changed the seed used to simulate the response and the covariates. (We want to do this 100 times, but time did not allow this.) By doing this, we can see how our estimates will vary given different seeds.

\begin{figure}
    \centering
    \begin{subfigure}[b]{0.5\textwidth}
        \centering
        \includegraphics[width = \textwidth]{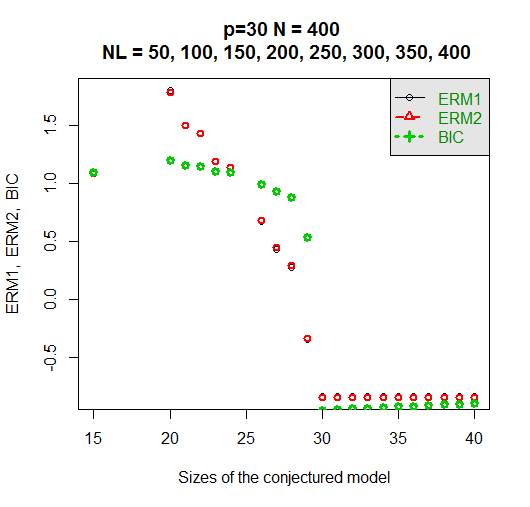}
        \caption{The true Number of Parameters is $p=30$, $N=400$, $N_{L} = 50, 100, 150, 200, 250,300, 350, 400$}
        \label{erm30sig08}
    \end{subfigure}
    \hfill
    \begin{subfigure}[b]{0.5\textwidth}
        \centering
        \includegraphics[width = \textwidth]{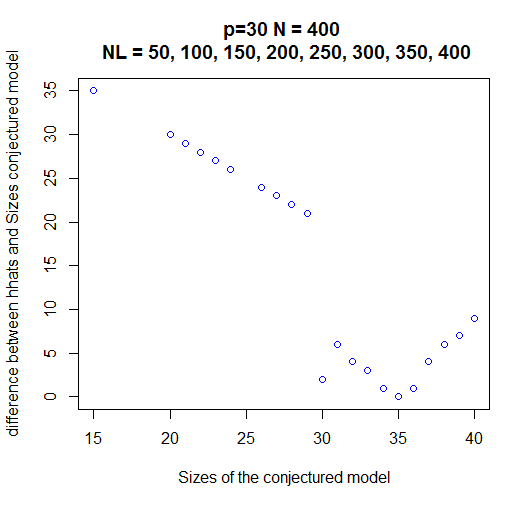}
        \caption{The true Number of Parameters is $p=30$, $N=400$, $N_{L} = 50, 100, 150, 200, 250,300, 350, 400$}
        \label{hhat30sig08}
    \end{subfigure}
    \caption{Estimates of $\hat{h}$, $\widehat{ERM}_{1}$, $\widehat{ERM}_{2}$, $BIC$ for $p=30$, $\sigma_{\epsilon}=0.8$}
    \label{ERmhhatsig08}
\end{figure}

\begin{figure}
    \centering
    \begin{subfigure}[b]{0.5\textwidth}
        \centering
        \includegraphics[width = \textwidth]{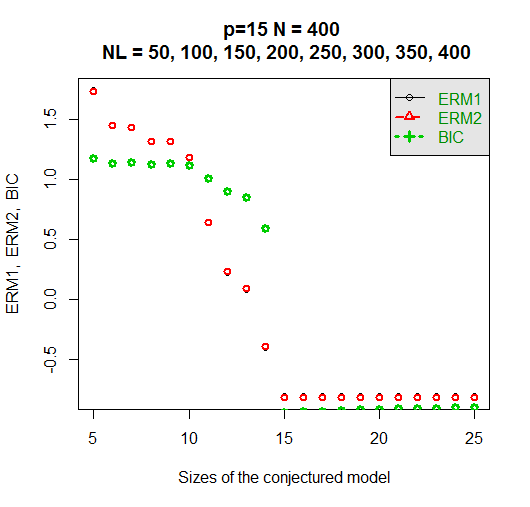}
        \caption{The true Number of Parameters is $p=15$, $N=400$, $N_{L} = 50, 100, 150, 200, 250,300, 350, 400$}
        \label{erm15sig08}
    \end{subfigure}
    \hfill
    \begin{subfigure}[b]{0.5\textwidth}
        \centering
        \includegraphics[width = \textwidth]{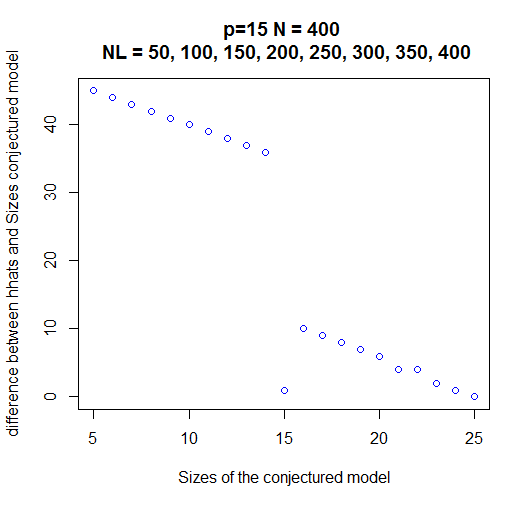}
        \caption{The true Number of Parameters is $p=15$, $N=400$, $N_{L} = 50, 100, 150, 200, 250,300, 350, 400$}
        \label{hhat15sig080}
    \end{subfigure}
    \caption{Estimates of $\hat{h}$, $\widehat{ERM}_{1}$, $\widehat{ERM}_{2}$, $BIC$ for $p=15$, $\sigma_{\epsilon}=0.8$}
    \label{ERmhhat15sig08}
\end{figure}

Figs \ref{ERmhhatsig08} to \ref{ERmhhat15sig08} give the estimate of $\hat{h}$, $\widehat{ERM}_{1}$, $\widehat{ERM}_{2}$, and $BIC$ when we increase $\sigma_{\epsilon}$ form 0.4 to 0.8, the true number of parameters is $p=30$ and $p=15$ respectively. Figs \ref{erm15sig08} and \ref{erm30sig08} are the graphs of $\widehat{ERM}_{1}$, $\widehat{ERM}_{2}$ and $BIC$ against the sizes of the conjectured model. Again, the behaviour of  $\widehat{ERM}_{1}$, $\widehat{ERM}_{2}$ and $BIC$ is pretty much the same of before; in fact, we observe a big drop in their values when the size of the conjecture model exactly matches the size of the true and do not discriminate well when the size of the conjectured model passes the size of true model.

Figs \ref{hhat15sig080} and \ref{hhat30sig08} are graphs of the difference between $\hat{h}$ and the size of the conjectured model for $p=15$ and $p=30$ respectively with $\sigma_{\epsilon}=0.8$. We observed that the minimum distance between the size of the conjectured model and $\hat{h}$ occurs at  models of size 15 and  25 when the true $p=15$ and a model of size $35$ when the true $p = 30$. However, at the true model $\hat{h} = 16$ and $\hat{h}=28$ respectively for $p=15$ and $p=30$. We comment that a parsimony argument possibly using a threshold as in \eqref{treshold} applied to Fig. \ref{hhat30sig08} could lead to the choice $\hat{h} = 30$.

\begin{figure}
  \centering
  \includegraphics[width=15cm]{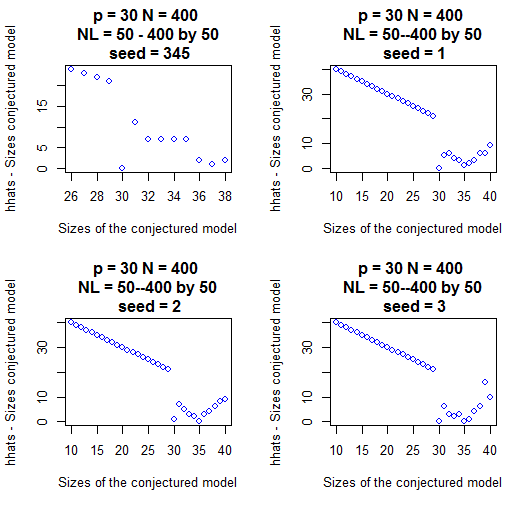}
  \caption{Estimates of $\hat{h}$ for $p=30$ for different seeds}\label{diffseed30}
\end{figure}
Fig \ref{diffseed30} is the graph of the estimated $\hat{h}$ for $p=30$ using different seeds in the simulate of the X's. In these case, we retain $\sigma_{\epsilon}=0.4$, $\sigma_{\beta}=3$ and $\sigma_{X} = 2$. We observe that when we change the seed, the model that we pick using $\hat{h}$ is not always the true model. For instance, when the seed is 2, the minimum discrepancy between $\hat{h}$ and the size of the conjectured model occurs when the size of the conjectured model is 35. When the seed is 3, the minimum discrepancy between $\hat{h}$ and the size of the conjectured model occurs at either 30 or 35. Again, we observe that parsimony arguments may improve our selection of $\hat{h}$ i.e., make $\hat{h}$ reliably closer to $h_{T}$

\begin{figure}
  \centering
  \includegraphics[width=15cm]{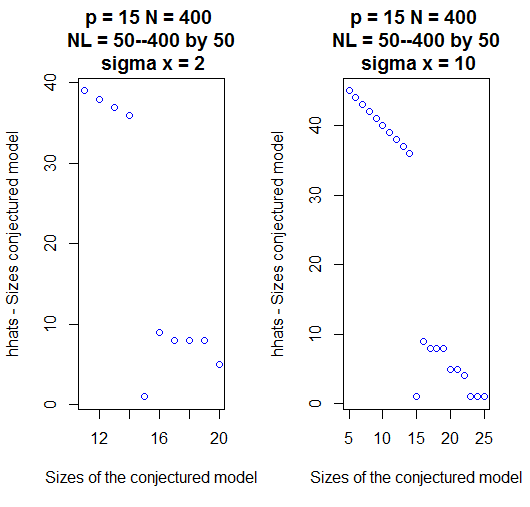}
  \caption{Estimates of $\hat{h}$ for $p=15$ with different  $\sigma_{x}$'s use to simulate the covariates}\label{Incressigx}
\end{figure}
Fig. \ref{Incressigx} estimates of $\hat{h}$ when the standard deviation used to simulate the covariates increases from 2 to 10. We observe that with a small variability in our covariates our method still picks the true $h$. (the small discrepancy between $\hat{h}$ and the size of the conjectured model occurs at the true model.) However, we note that as $\sigma_{X}$ increases, we may have to invoke parsimony to make $\hat{h}$ closer to $h_{T}$. Indeed, the right panel of Fig. \ref{Incressigx} shows that when there is a high variability in the covariates the method does not discriminate well --  the method did not discriminate amongst models of size 15, 23, 24, and 25.

\begin{figure}
    \centering
    \begin{subfigure}[b]{0.4\textwidth}
    \centering
    \includegraphics[width = \textwidth]{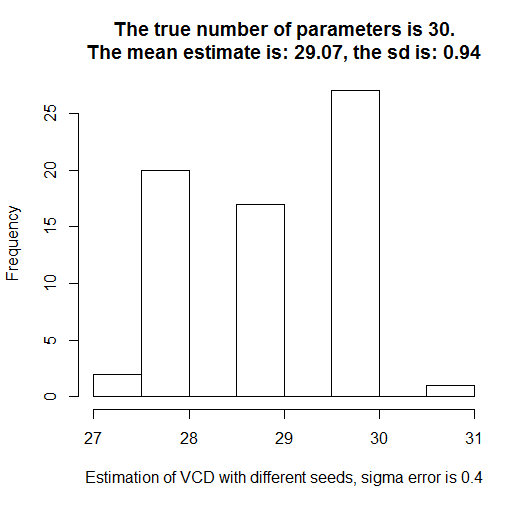}
    \caption{Sampling distribution of  $\hat{h}$, $\sigma_{\epsilon}=0.4$, $\sigma_{\beta}=3$, $\sigma_{X}=2$.}
    \label{Boot30}
    \end{subfigure}
    \hfill
    \begin{subfigure}[b]{0.4\textwidth}
    \centering
    \includegraphics[width = \textwidth]{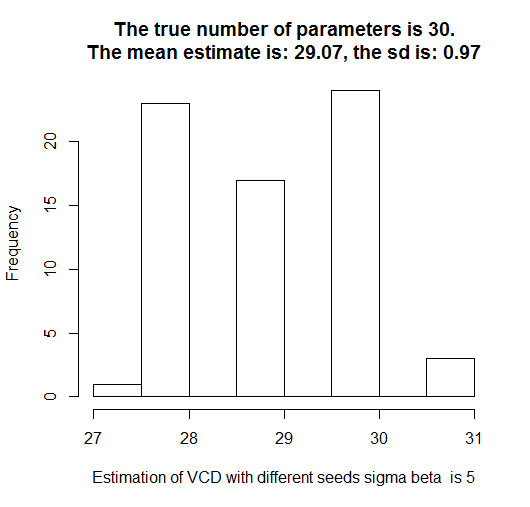}
    \caption{Sampling distribution of  $\hat{h}$,  $\sigma_{\epsilon}=0.4$, $\sigma_{\beta}=5$, $\sigma_{X}=2$.}
    \label{BootBeta}
    \end{subfigure}
    \hfill
    \begin{subfigure}[b]{0.4\textwidth}
    \centering
    \includegraphics[width = \textwidth]{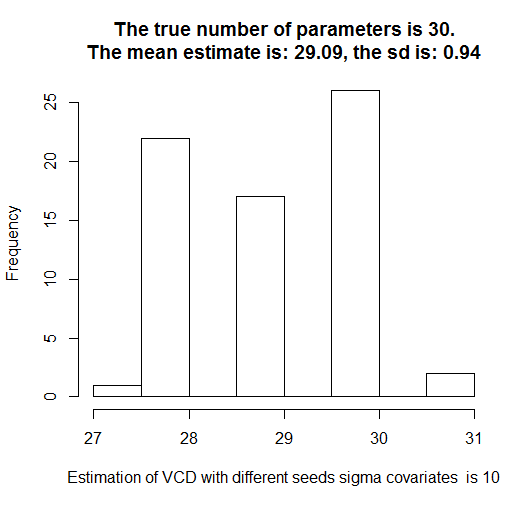}
    \caption{Sampling distribution of  $\hat{h}$, $\sigma_{\epsilon}=0.4$, $\sigma_{\beta}=3$, $\sigma_{X}=10$.  }
    \label{XBoot}
    \end{subfigure}
    \caption{Effect of changing the seed on $\hat{h}$ at the true model. The true number of parameters is 30}
    \label{30seedboot}
\end{figure}
Fig. \ref{30seedboot} gives the estimate of $\hat{h}$ at the true model when the seed used to simulate the response and covariates varies from 1 to 67. Fig. \ref{Boot30} shows a sampling distribution for $\hat{h}$. Here, there is no change in the simulation settings, we merely  computed different values of $\hat{h}$ with the seed changing from 1 to 67. We observe that most of the estimates of $\hat{h}$ are around 30. The average value of $\hat{h}$ is 29.07 with a standard deviation of 0.94. Fig. \ref{BootBeta} gives the same observation, albeit with a little more variability. In fact, the average value of $\hat{h}$ is 29.07 with a standard deviation of 0.97. Again, Fig. \ref{XBoot} is a sampling distribution for $\hat{h}$ for $\sigma_{X}=10$. The analysis is as before. The average value of $\hat{h}$ is 29.09 with a standard deviation of 0.94. For the case of varying $\sigma_{X}$, we expect at most only slight change we have studentized the covariates.

\begin{figure}
  \centering
  \includegraphics[width=\textwidth]{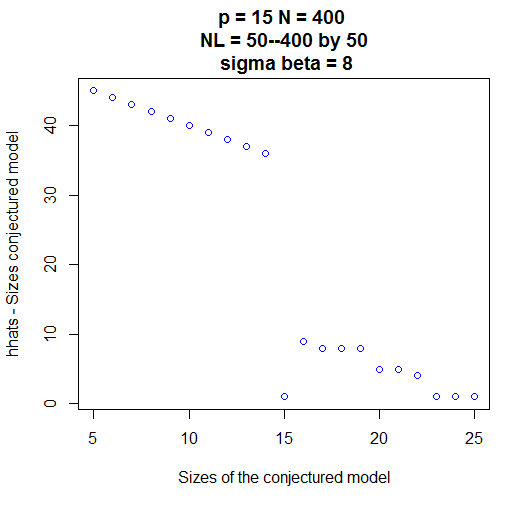}
  \caption{Estimates of $\hat{h}$ for $p=15$, $\sigma_{\beta}=8$}\label{8beta15}
\end{figure}

In Fig. \ref{8beta15}, we set $\sigma_{\beta}=8$. We see that $\hat{h}$ does not discriminate well. In fact, $\hat{h}$ does not detect a difference between models of size 15, 23, 24, and 25.
As in other settings, we suggest parsimony arguments and thresholding can improve the performance of $\hat{h}$. In the present case, simply choosing the smallest value of $\hat{h}$ (parsimony) identifies the true model.

Based on our simulation results, we suggest that in practice, $\hat{h}$ be chosen by a criterion of the form

\begin{equation}
\label{treshold}
\hat{h} = \arg\min_{k}\left\{\left|\hat{h}_{k}- VCD\left(p_{k}(\cdot,\beta)\right)\right|\leq t\right\}
\end{equation}
for some reasonable threshold $t$. Often can be taken to be 0, but not always. In practice, we recommend $t=2$.

\subsection{Dependency on The Sample Size and Design Points}
\label{Dependency}
Our goal here is to show how we can improve the quality of our estimate by increasing $N$ or tuning the design points. At the beginning of this section, we started observing the effect of sample size on the quality of our estimates. Here, we emphasize both the sample size and the effect of design points. We perform simulations for model sizes $p = 60, \hbox{~and } 70$. For $p=70$, the sample size is set to be $N = 700$, design points vary from $100$, to $700$, by $100$, everything else is still the same as in the previous subsection. To observe the effect of sample size and design points on $\hat{h}$, $\widehat{EMR}_{1}$ and $\widehat{ERM}_{2}$ we set the sample size to be $N = 2000~$ and $N_{L} = 500, 700, 1000, 1500, 2000$ for either $p=60$ or $p=70 $ and keep everything else unchanged.

Figures \ref{Hhat60} and \ref{hhat70}
give estimates of  $\hat{h}$, the upper bound of the true unknown risk using  Propositions \ref{subProp1} ($\widehat{ERM}_{1}$) and \ref{subProp2} ($\widehat{ERM}_{2}$), and the $BIC$ as before, for small sample sizes $N=600, 700$ and their corresponding design points. Given that the size of the conjectured model is strictly less than the size of the true model, $\hat{h}$ is equal to the smallest design point. However, when the conjectured model exactly matches the true model, $\hat{h} \approx 50, 61$ underestimates $h_{T}$. When the conjectured model is more complex than the true model, we see that $\hat{h}$ still underestimates $h_{T}$ in most cases. Our observations about $\widehat{ERM}_{1}$, $\widehat{ERM}_{2}$ and $BIC$ are still the same as before.

Figure \ref{Hhat60_700} gives estimates of $\hat{h}$, $\widehat{ERM}_{1}$, $\widehat{ERM}_{2}$ and $BIC$ when $N=700$, $N_{L} = 100, 200$ $, 300, 400, 500, 600$, and $700$ and the model size is $p=60$. We see that $\hat{h}=57$; this estimate is closer to the true value than that from Figure \ref{Hhat60}. We do not observe any change in the qualitative behaviour of $\widehat{ERM}_{1}$, $\widehat{ERM}_{2}$ and $BIC$. This shows that small changes in sample size or design points may have large numerical effects on the values of $\hat{h}$, $\widehat{ERM}_{1}$ and $\widehat{ERM}_{2}$ and $BIC$.

Figure \ref{Hhat60_2000} is qualitatively the same as Figure \ref{Hhat60}. The difference is the sample size and design points. In fact, in Figure \ref{Hhat60_2000}, the sample size is $N=2000$, design points are
$N_{L} = 500, 700, 1000, 1500, 2000$, whereas in Figure \ref{Hhat60}, $N=600$, design points vary from $75$ to $600$ in steps of $75$. The behaviour of $\hat{h}$, $\widehat{ERM}_{1}$, $\widehat{ERM}_{2}$ and $BIC$ are still the same as previously described. We infer from this that as the sample size increases, $\hat{h}$ moves closer to its true value. In these figures, the design points have also shifted.  This leads us to suggest that to get the optimal estimate $\hat{h}$, not only must $N$ increase, the value design points must also increase so that the range of values in the set of design points covers $\left[0, N\right]$.

When comparing Figure \ref{Hhat60_2000NL2} to Figure \ref{Hhat60},
the difference is that in
Figure \ref{Hhat60_2000NL2} $N=2000$, whereas (in Figure \ref{Hhat60}, $N=600$).
We observe that when there is a big enough increase in $N$ relative to the design points ($N_{L}$ being constant, but small compare to sample size) $\hat{h}$ converges to the true value, but at a lower rate even though $\hat{h}$ only moved from 48 to 50, in both cases, the model identified is very close to the true model. Indeed, in Figure \ref{Hhat60}, $\hat{h}$ identifies $p=61$ or $62$ and in Figure \ref{Hhat60_2000NL2}, $\hat{h}$ identifies $p=61$.

Figure \ref{Hhat60_700} and Figure \ref{Hhat60_2000NL}
are almost the same despite the substantial increase in $N$. In Figure \ref{Hhat60_2000NL},
the sample size is $N=2000$, and in Figure \ref{Hhat60_700}
the sample size is $N=700$, design points $N_{L} = 100, 200, 300, 400, 500, 600, 700$ for the two figures. We observe that the qualitative behaviour of $\widehat{ERM}_{1}$, $\widehat{ERM}_{2}$ and $BIC$ is unchanged and $\hat{h}$ at the true model is nearly the same for both simulations and close to the true VCD. In both cases, the model identified by $\hat{h}$ is qualitatively the same and close to the true model. $\hat{h}$ tends to increase as the size of the wrong model increases i.e clause 2 of Def. \ref{Defcon} is approximately satisfied. This comparison suggests that if the design points are large relative to $p$ then $\hat{h}$ can be found accurately without necessitating large sample sizes, i.e., for well chosen design points, $\frac{N}{p}
\approx 15$ will be sufficient. (Usually, the rule of thumb in parameter estimate is to set 10 datas per parameter. Here, we recommend a higher sample size because we are doing model selection as well as parameter estimation. Often, this method will not work with smaller sample sizes. However, few model selection techniques work with small sample sizes outside very restricted settings)

These observations show that, with enough data points, and by choosing design points well (usually half of the size of the data), we can obtain good estimates of $\hat{h}$. Therefore, we can do model selection using clause 1 of Def. \ref{Defcon} and a form of clause 2. Usually, with this adjustment, $\hat{h}$ will perform better than either $\widehat{ERM}_{1}$ or $\widehat{ERM}_{1}$. In contrast to $BIC$, for a given sample size, our method performs roughly as well as BIC for model selection when the design points are well chosen, but also provides an estimate of the VCD.

We leave the question of optimally choosing the design points as future work even though we have conjectured that design points should broadly cover $\left[0,N\right]$ with more design points in the upper half of the interval as $N$ increases, i.e the size of the design points should track $N$. We also recommend that a set of design points be chosen so that the shape of the objective function \eqref{objfn} vs VCD is convex and has a well defined minimum.

\begin{figure}
    \centering
    \begin{subfigure}[b]{0.5\textwidth}
        \centering
        \includegraphics[width = \textwidth]{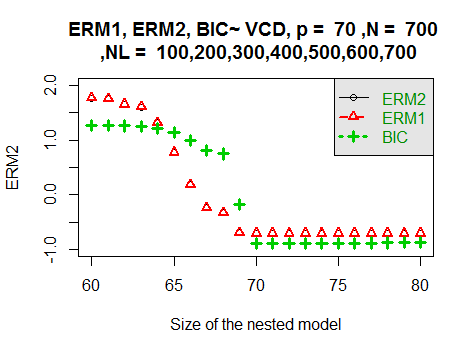}
        \caption{Values of $\widehat{ERM}_{1}$, $\widehat{ERM}_{2}$ and $BIC$ for $p=70$}
        \label{ERM70}
    \end{subfigure}
    \hfill
    \begin{subfigure}[b]{0.5\textwidth}
        \centering
        \includegraphics[width = \textwidth]{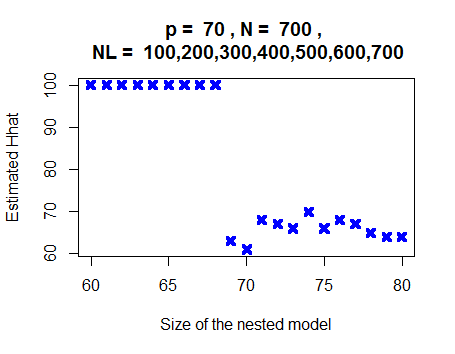}
        \caption{Estimate of $\hat{h}$ for $p=70$}
        \label{Hhat70}
    \end{subfigure}
    \caption{Estimates of $\hat{h}$, $\widehat{ERM}_{1}$, $\widehat{ERM}_{2}$ and $BIC$ for $p=70$}
    \label{hhat70}
\end{figure}

\begin{figure}
    \centering
    \begin{subfigure}[b]{0.5\textwidth}
        \centering
        \includegraphics[width = \textwidth]{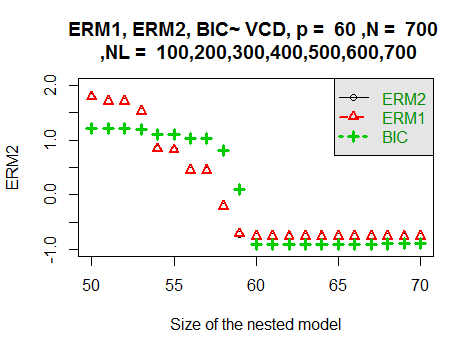}
        \caption{Values of $\widehat{ERM}_{1}$, $\widehat{ERM}_{2}$ and $BIC$ for $p=60$}
        \label{ERM60700}
    \end{subfigure}
    \hfill
    \begin{subfigure}[b]{0.5\textwidth}
        \centering
        \includegraphics[width = \textwidth]{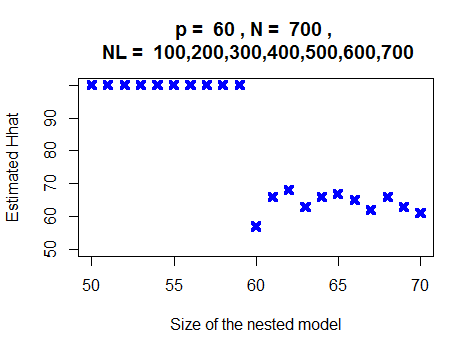}
        \caption{Estimate of $\hat{h}$ for $p=60$}
        \label{Hhat60700}
    \end{subfigure}
    \caption{Estimates of $\hat{h}$, $\widehat{ERM}_{1}$, $\widehat{ERM}_{2}$ and $BIC$ for $p=60$, $\hat{h} = 57$}
    \label{Hhat60_700}
\end{figure}

\begin{figure}
    \centering
    \begin{subfigure}[b]{0.5\textwidth}
        \centering
        \includegraphics[width = \textwidth]{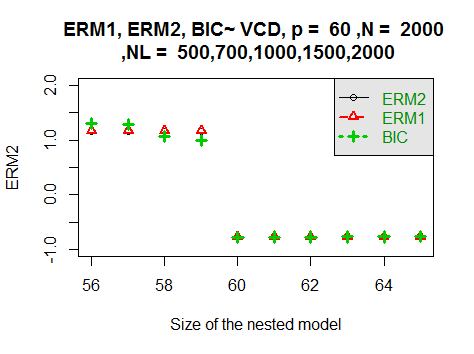}
        \caption{Values of $\widehat{ERM}_{1}$, $\widehat{ERM}_{2}$ and $BIC$ for $p=60$}
        \label{ERM602000}
    \end{subfigure}
    \hfill
    \begin{subfigure}[b]{0.5\textwidth}
        \centering
        \includegraphics[width = \textwidth]{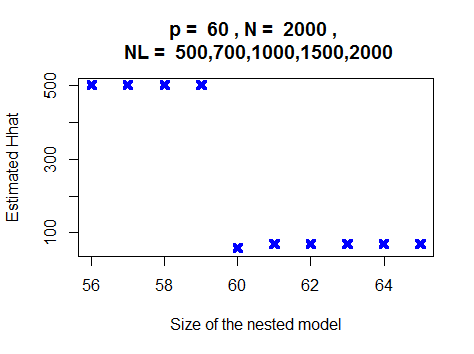}
        \caption{Estimate of $\hat{h}$ for $p=60$}
        \label{Hhat602000}
    \end{subfigure}
    \caption{Estimates of $\hat{h}$, $\widehat{ERM}_{1}$, $\widehat{ERM}_{2}$ and $BIC$ for $p=60$, $\hat{h} = 59$}
    \label{Hhat60_2000}
\end{figure}

\begin{figure}
    \centering
    \begin{subfigure}[b]{0.4\textwidth}
        \centering
        \includegraphics[width = \textwidth]{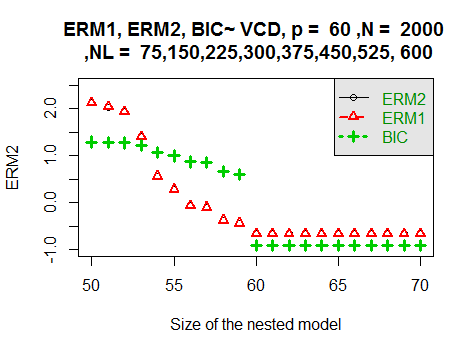}
        \caption{Values of $\widehat{ERM}_{1}$, $\widehat{ERM}_{2}$ and $BIC$ for $p=60$}
        \label{ERM602000NL2}
    \end{subfigure}
    \hfill
    \begin{subfigure}[b]{0.4\textwidth}
        \centering
        \includegraphics[width = \textwidth]{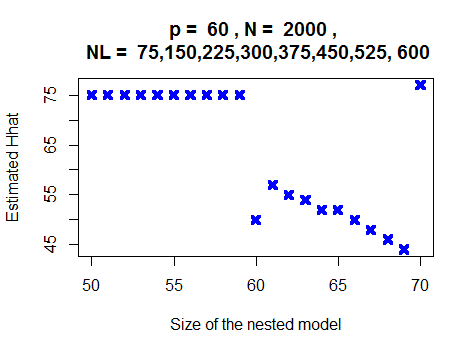}
        \caption{Estimate of $\hat{h}$ for $p=60$}
        \label{Hhat602000NL2}
    \end{subfigure}
    \caption{Estimates of $\hat{h}$, $\widehat{ERM}_{1}$, $\widehat{ERM}_{2}$ and $BIC$ for $p=60$}
    \label{Hhat60_2000NL2}
\end{figure}

\begin{figure}
    \centering
    \begin{subfigure}[b]{0.5\textwidth}
        \centering
        \includegraphics[width = \textwidth]{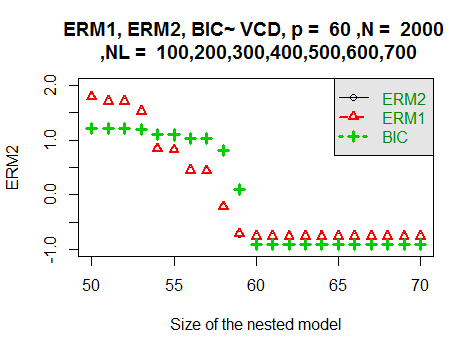}
        \caption{Values of $\widehat{ERM}_{1}$, $\widehat{ERM}_{2}$ and $BIC$ for $p=60$}
        \label{ERM602000NL}
    \end{subfigure}
    \hfill
    \begin{subfigure}[b]{0.5\textwidth}
        \centering
        \includegraphics[width = \textwidth]{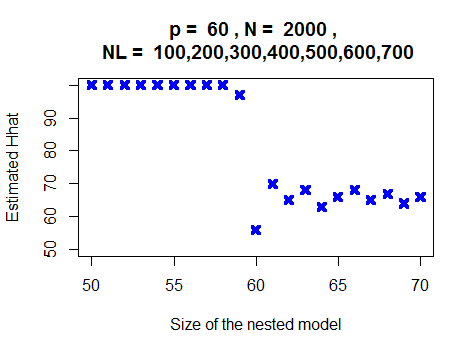}
        \caption{Estimate of $\hat{h}$ for $p=60$}
        \label{Hhat602000NL}
    \end{subfigure}
    \caption{Estimates of $\hat{h}$, $\widehat{ERM}_{1}$, $\widehat{ERM}_{2}$ and $BIC$ for $p=60$}
    \label{Hhat60_2000NL}
\end{figure}

\begin{figure}
    \centering
    \begin{subfigure}[b]{0.5\textwidth}
        \centering
        \includegraphics[width = \textwidth]{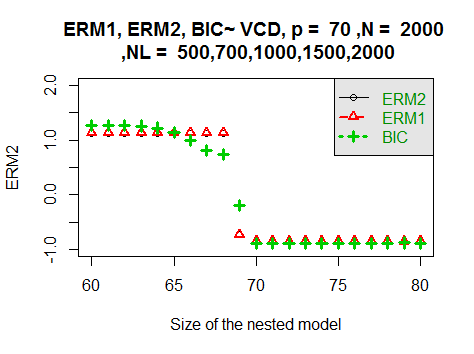}
        \caption{Values of $\widehat{ERM}_{1}$, $\widehat{ERM}_{2}$ and $BIC$}
        \label{ERM702000}
    \end{subfigure}
    \hfill
    \begin{subfigure}[b]{0.5\textwidth}
        \centering
        \includegraphics[width = \textwidth]{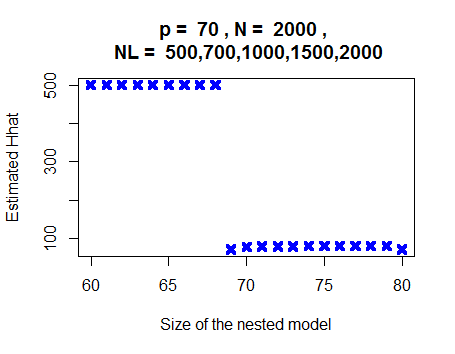}
        \caption{Estimate of $\hat{h}$ for $p=70$}
        \label{Hhat702000}
    \end{subfigure}
    \caption{Estimates of $\hat{h}$, $\widehat{ERM}_{1}$, $\widehat{ERM}_{2}$ and $BIC$ for $p=70$}
    \label{Hhat70_2000}
\end{figure}

\subsubsection{Summary}
In Subsect. \ref{DirectImplementation} we looked at Vapnik's method for Theorem \ref{Theo5} on synthetic datasets. We observed that this did not give good results reliably. We assume that this inaccuracy on the estimate of $\hat{h}$ is due first to the lack of variability in the error criterion, second to the fact that the bound is not tight enough. To fix these problems, we used the estimator of $h$ in Sec. \ref{estimator} that accounts for these. In Sec. \ref{Simulation}, we estimated $h$ using our new estimator $\hat{h}$, and our estimates were closer to the true value. To understand the behaviour of our method, we introduced the concept of `consistency at the true model'. We also observed how $\widehat{ERM}_{1}$, $\widehat{ERM}_{2}$ and $BIC$ decrease as the conjectured model moves closer to the true model and how they drop suddenly when the two models are equal. In Subsect. \ref{Dependency}, we investigated how we can tune the design points and the sample size to improve the estimate of $\hat{h}$. We observed that, given a sample size, if we choose a good set of design points we can improve the estimation of $h$ by $\hat{h}$. We also saw that as the sample size increases, $\hat{h}$ is consistent for the true VCD, and this may be independent of the selection of the design points. So, we suggest that the Theorem in \cite{McDonald:etal:2011} is true, but at a very slow rate and only when $c = \hat{c}$ and for a cross-validated form of the error is used. We also note that the design points must also be uniformly chosen such that its range cover the size of the dataset, and more design points are better than fewer design points.


\chapter{ANALYSIS OF TOUR DE FRANCE DATASET}
\label{example}
\thispagestyle{fancy}

After estimating $\hat{h}$ on synthetic datasets, the natural question is: How do we  estimate $~\hat{h}~$ with real data and use it for model selection?
We will see that the estimate of $\hat{h}$ and the upper bounds of Propositions \ref{subProp1} and \ref{subProp2} will guide our way to the choice of the best model. Since the goal of this chapter is to evaluate our method on a real dataset, we have chosen \sf {Tour De France}.\footnote{Tour De France Data was collected by Bertrand Clarke. More information can be found at  http://www.letour.fr/}

We start this chapter by giving some information about our dataset in Sec. \ref{DescripStat}. Then, in Sec. \ref{AnalTour5} we analyze our dataset using two classes of models with a model list based on $Year$ and $Distance$; the first class is a sequence of nested models and the second class consists of non-nested models. We evaluate our method by comparing $\hat{h}$ to $BIC$, $\widehat{ERM}_{1}$ and $\widehat{ERM}_{2}$. In the last Section, we look at the effect of outliers in the estimates $\hat{h}$, $\widehat{ERM}_{1}$ and $\widehat{ERM}_{2}$. In Sec. \ref{AnalTour4} we re-analyze the data using a model list made up of variables $Year$, $Distance$, $Age$ of the winner and $Stages$ won. A natural question is why not combine the two model lists into one. The answer is that the resulting model would have 9 variables, but our dataset would have size 103 (including outliers) or 99 (with outliers removed). Thus, we would have enough data for parameters estimation, but not for model selection, which is the point of using $h$.

We admit that in the present context, the \sf{Tour De France} dataset is a toy dataset, because it is small in terms of the number of observations compared to the number of quantities that we will be computing from it. However, the results seem reasonable and this chapter is important mostly to demonstrate how $\hat{h}$ can be used and compared to other methods. In Chap. \ref{Wheatdata} we will use  much larger datasets and avoid problems with over-analysis.

\section{Descriptive Analysis of Tour De France Data}
\label{DescripStat}

\begin{figure}
    \centering
    \begin{subfigure}[b]{0.4\textwidth}
        \centering
        \includegraphics[width = \textwidth]{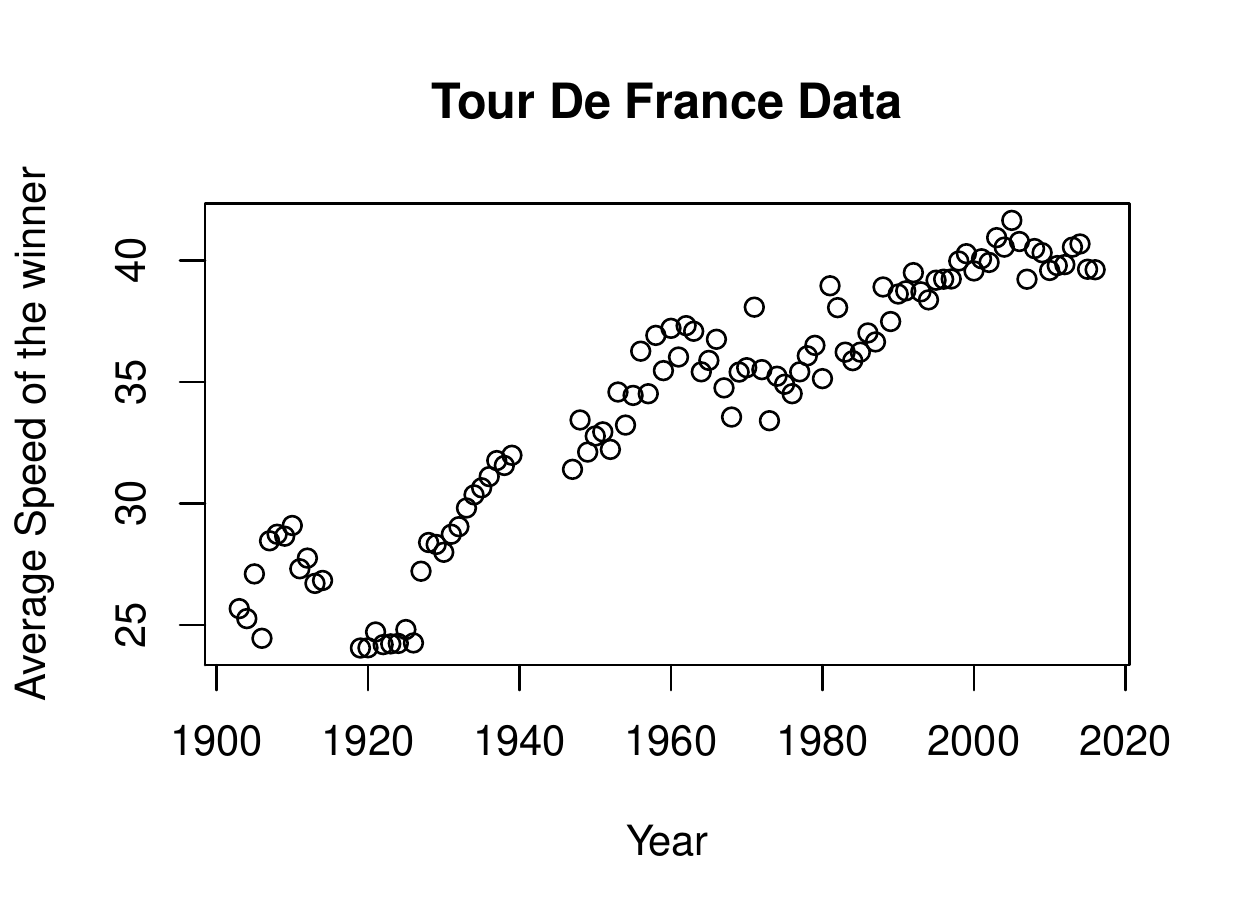}
        \caption{Speed VS Year}
        \label{SpeedYear}
    \end{subfigure}
    \hfill
    \begin{subfigure}[b]{0.4\textwidth}
        \centering
        \includegraphics[width = \textwidth]{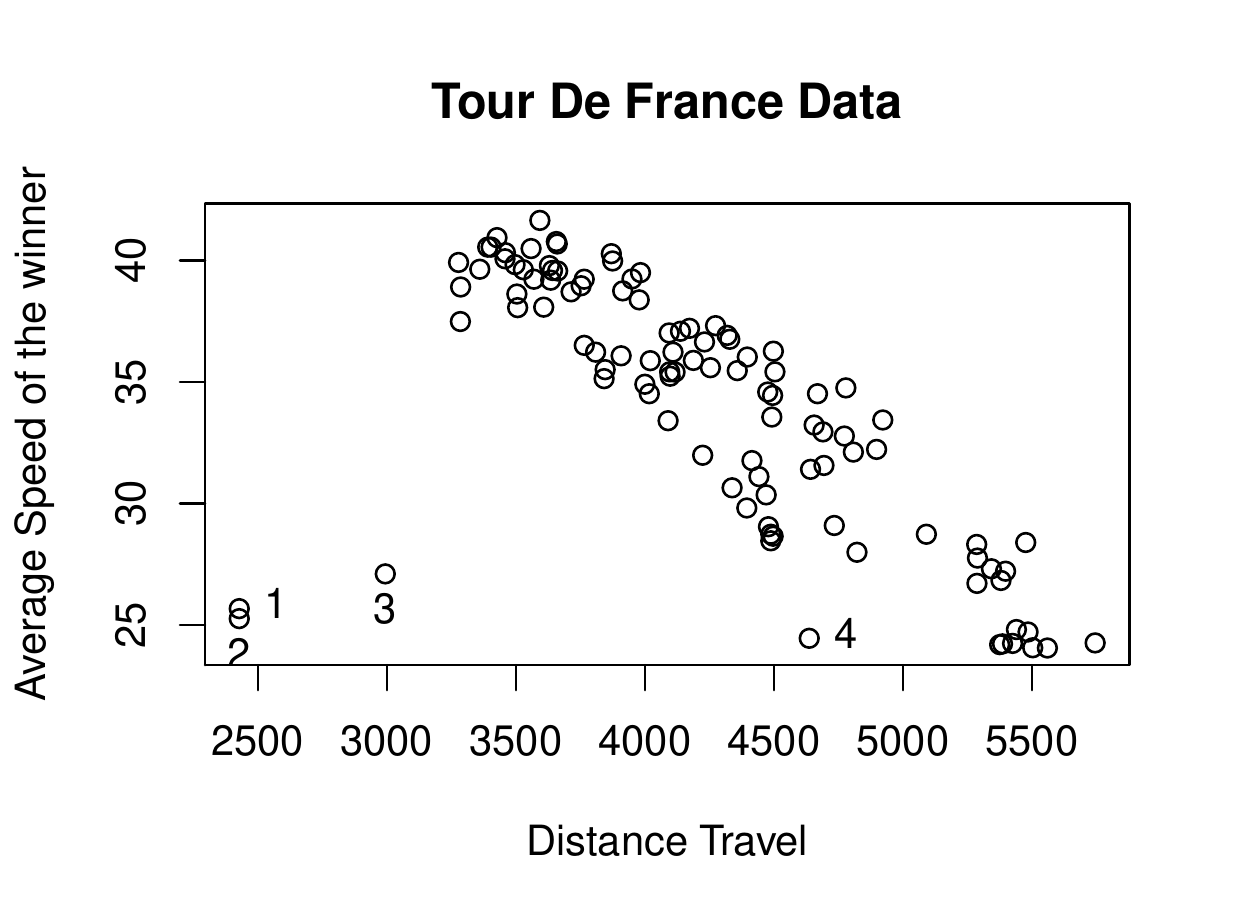}
        \caption{Speed VS Distance}
        \label{SpeedDist}
    \end{subfigure}
    \caption{Tour De France from 1903 to 2016}
    \label{Descriptive}
\end{figure}

The full data set has $n=103$ dependent data points. The data points in the set are dependent because many cyclists competed in the Tour for more than one year. Indeed, one may argue that it is better analyzed as an auto-regressive time series. Here we ignore the dependence structure for the sake of comparing our method with independent data. Each data point has a value of the response variable, the average speed in kilometer per hour (km/h) of the winner (Speed) of the Tour from 1903 to 2016. However during World War 1 and 2 there was no Tour De France, so we do not have data points for those periods of time. We also see the effect of World War 1 on the speed of the winner of the tour.
The lowest speeds were after World War 1. This is probably due to the death of the young men. These data points are potential outliers. After World War 2, there was also a decrease in average winning speed, but the decrease was less than that after World War 1.
From Fig. \ref{SpeedYear}, we see that there is a curvilinear relationship between Speed and Year (Y). In Fig. \ref{SpeedDist}, we see that there is a linear relationship between Speed and Distance and that the variability of speed increases with the Distance (D). The four observations labeled 1-4 that are away from the bulk of the data are from the early years of the Tour and are also considered as potential outliers. The \sf{Tour De France} dataset also has information on the Age of the winner (A), the number of stages won by the winner (S), and the distance (Km) (D) of the \sf {Tour De France}. We use these data in a separate analysis with a different model list in Sec. \ref{AnalTour4}.

\begin{table}[ht]
  \centering
  \caption{Correlation between covariates}\label{Corr}

\begin{tabular}{|c|c|c|c|c|c|}
  \hline
  & Speed &Year & Distance  & Stages & Age \\
  \hline\hline
  Speed & 1 & 0.94 & -0.69 & -0.18 & 0.07 \\
    Year & 0.94 & 1  & -0.63 & -0.22 &  0.21 \\
    Distance & -0.69 & -0.63  & 1 & 0.11 & -0.08 \\
    Stages & -0.18 & -0.22 & 0.11 & 1 & -0.20 \\
    Age & 0.07 & 0.21  &  0.08 & -0.20 & 1 \\
    \hline
\end{tabular}
\end{table}

\begin{table}[h]
  \centering
  \caption{Correlation between Speed and $D^2$, $Y^2$ and $Y:D$}\label{SpeedCorr}

\begin{tabular}{|c|c|c|c|}
  \hline
   &$Distance^2$ & $Year^2$  & Year:Distance \\
  \hline\hline
  Speed & -0.53 & -0.24 & 0.02 \\
  \hline
\end{tabular}
\end{table}

Tables \ref{Corr} and \ref{SpeedCorr} give us Pearson correlation between our variables. We see that there is a strong correlation between Speed and Year (0.94). There is a negative but strong linear relationship between speed and distance (-0.69) and weak linear relationship between speed and stages won (-0.18), whereas, there is a positive and very weak relationship between speed and Age of the winner (0.07). We also notice that the correlation between speed and the squared distance is not strong (-0.53), quite weak between speed and $Year^2$ $(-0.24)$ and almost absent between speed and the interaction between Year and distance $(0.02)$. Tables similar to \ref{Corr} and \ref{SpeedCorr} can be computed using Kendall's $\tau$ for instance; the results are qualitatively the same.
\newline

\begin{figure}
    \centering
    \begin{subfigure}[b]{0.4\textwidth}
        \centering
        \includegraphics[width = \textwidth]{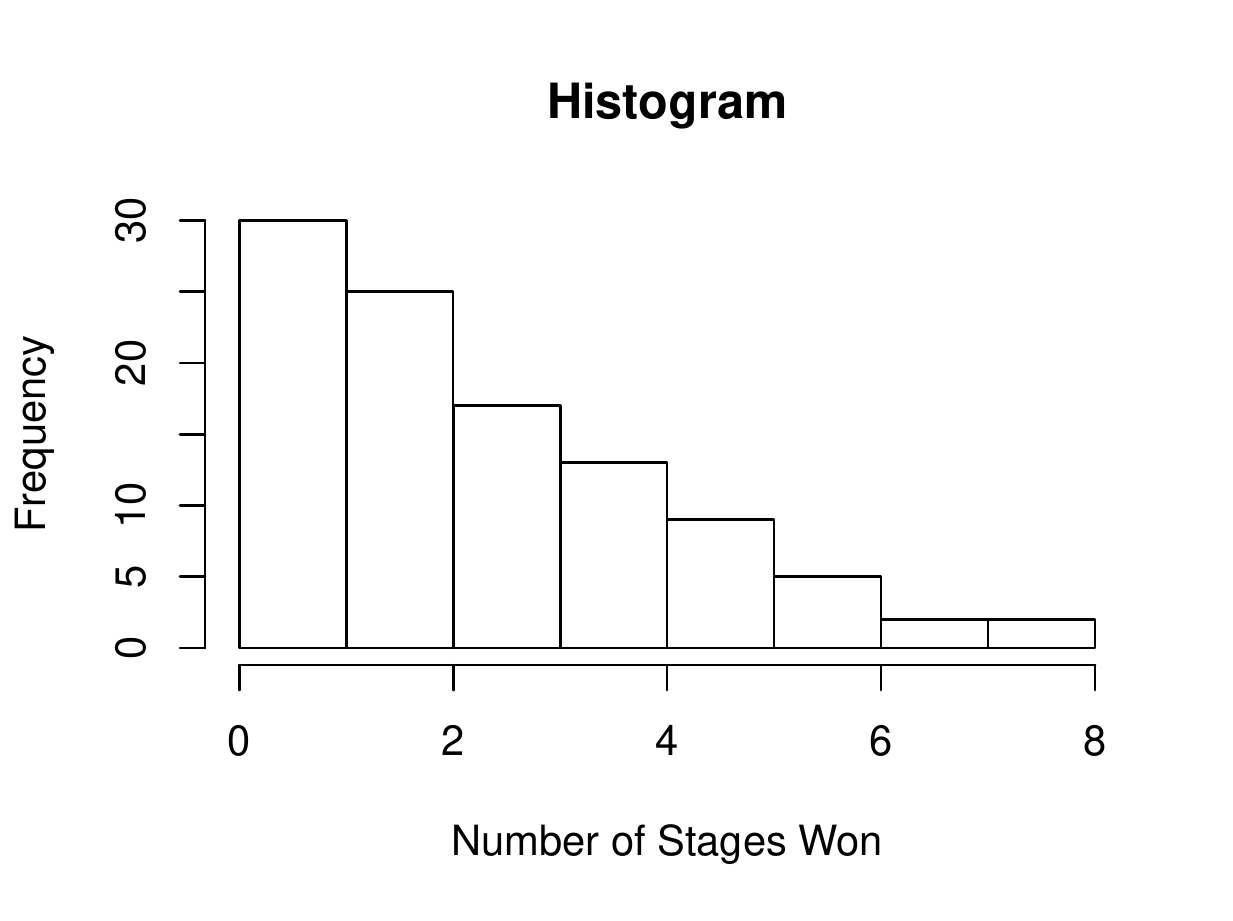}
        \caption{Number of Stages won by the winner}
        \label{Stages}
    \end{subfigure}
    \hfill
    \begin{subfigure}[b]{0.4\textwidth}
        \centering
        \includegraphics[width = \textwidth]{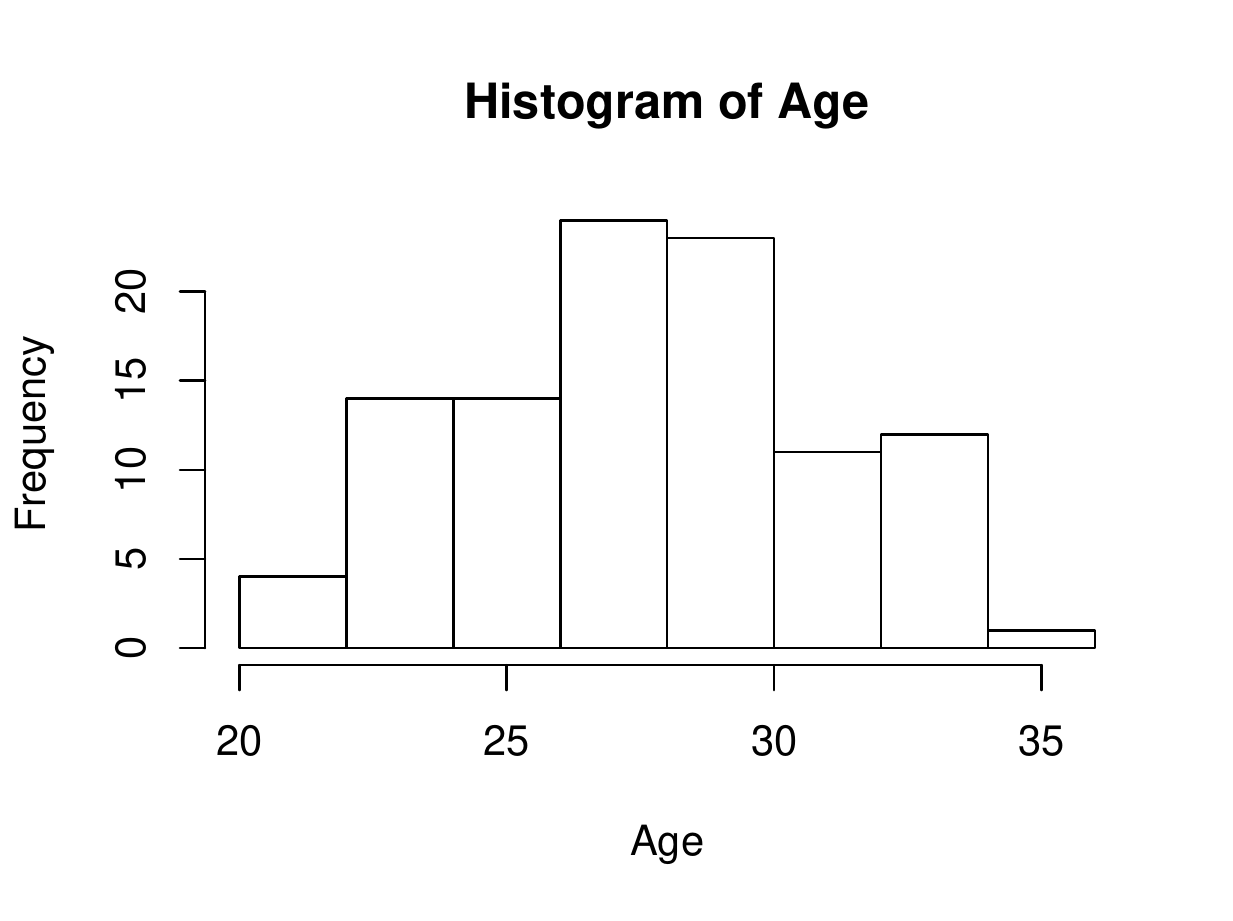}
        \caption{Estimates of $\hat{h}$}
        \label{Age}
    \end{subfigure}
    \caption{Histogram of Stages won and Age of the winners}
    \label{Hist}
\end{figure}

From Figure \ref{Hist}, we see that most of the winners of the tour had age ranging from 25 to 30 years and the number of stages that they won ranged between 0 and 1. This suggest that the winners were the ones who went fast enough to do well but not so as to over exert themselves, becoming tired or injured as a result.

\section{Analysis of a nested and non-nested collection of model lists of Tour De France }
\label{AnalTour5}
\subsection{A nested model list}\label{AnalTour5Nested}
We identify a nested model list using $Y$, $D$, $Y^2$, $D^2$ and $Y:D$ as covariates. Because the size of the dataset is small, we can only use a small model list. We order the variables using the Smoothly Clipped Absolute Deviation (SCAD) \cite{Fan&Li} shrinkage method because it perturbs parameter estimates the least and satisfies an oracle property.

Under SCAD\cite{Fan&Li}, the order of inclusion of variables is $Y$, $D$, $D^2$, $Y^2$, and $Y:D$. We therefore fit five different models.
\begin{table}[ht]
  \centering
  \caption{Direct Implementation of Vapnik's method for the nested models}\label{NestTheo5Direct}

        \begin{tabular}{|c|c|c|c|c|}
  \hline
   Model & $\hat{h}$ & $\widehat{ERM}_{1}$  & $\widehat{ERM}_{2}$ & $BIC$\\
  \hline\hline
  $Y$ & 20 & 316.74 & 495.22 & 419.14 \\
    $Y,~D$ & 20  & 218.11 & 451.06 &  411.13 \\
    $Y,~D,~D^2$ & 20 & 176.09  & 317.27 & 365.68  \\
    $Y, D,~D^2,~Y^2$ & 20 & 172.75 & 312.89 & 368.24 \\
    $Y, D,~D^2,~Y^2,~Y:D$ & 20 & 172.05  &  311.97 & 372.43 \\
    \hline
\end{tabular}
\end{table}

\begin{table}[ht]
  \centering
  \caption{Estimates of  $\hat{h}$, $\widehat{ERM}_{1}$, $\widehat{ERM}_{2}$ and $BIC$ of the nested model using our method}\label{NestourMethod}

\begin{tabular}{|c|c|c|c|c|}
  \hline
   Model & $\hat{h}$ & $\widehat{ERM}_{1}$  & $\widehat{ERM}_{2}$ & $BIC$\\
  \hline
  $Y$ & 4 & 16.42 & 44.95 & 79.67 \\
    $Y,~D$ & 4  & 15.10 & 42.83 &  71.66 \\
    $Y,~D,~D^2$ & 4 & 11.21  & 36.37 & $\bf 26.21$  \\
    $Y, D,~D^2,~Y^2$ &\bf{4}  & 11.09 & 36.16 & 28.77 \\
    $Y, D,~D^2,~Y^2,~Y:D$ & 4 & $\bf 11.06$  &  $\bf 36.11$ & 32.96 \\
    \hline
\end{tabular}
\end{table}

The estimate of VCD requires that we choose some values to use in the analysis. Since the size of our dataset is 103, we set $m=10$, we choose to vary $N_{L}$  from $20$ to $100$ by $10$. And, we set the number of bootstrap samples to be $b_{1}=b_{2} = 50$.

Tables \ref{NestTheo5Direct} and \ref{NestourMethod} give us estimates of $\hat{h}$, $\widehat{ERM}_{1}$, $\widehat{ERM}_{2}$ and $BIC$ of the nested models using Algorithm \ref{VapnikAlgorithm} and Algorithm \ref{Algo1} respectively. It is seen that Vapnik's original method is helpful only if it is reasonable to surmise that there are exactly $15$ missing variables. Our method uniquely identifies one of the models on the list. Even though there is likely no model for Tour De France dataset that is accurate to infinite precision, our method is giving a useful result.

From Table \ref{NestTheo5Direct}, we observe that $\hat{h} = 20$, the smallest design point. This indicates that the objective function used to estimate $\hat{h}$ did not give a useful answer. This is the problem that we face most of the time we directly implement Algorithm \ref{VapnikAlgorithm} to estimate the LHS of \ref{objfn} in Theorem \ref{Theo5}.

In Table \ref{NestourMethod}, we see $\hat{h}=4$ no matter which model is chosen. This indicates that we need 4 explanatory variables to explain our response. Thus, the best model is the one with $Y$, $D$, $D^2$, and $Y^2$. However, if we use $BIC$ for model selection we select the model with $Y$, $D$, and $D^2$. We prefer the model chosen by $\hat{h}$ because there does not appear to be as strong a curvilinear relationship between speed and distance as there appears to be between speed and Year e.g via $Y^2$. We attribute the lesser performance of $BIC$ to the fact that its derivation rests heavily on the assumption that the data are independent. Minimizing $\widehat{ERM}_{1}$ and $\widehat{ERM}_{2}$ leads to the model with five parameters. There is nothing a priori wrong with this, but a smaller model (of size 4 using $\hat{h}$) is preferred when justifiable. Alternatively, we may regard the difference among the $3^{th}$, $4^{th}$ and $5^{th}$ models as trivial for $\widehat{ERM}_{1}$, $\widehat{ERM}_{2}$, so they effectively lead to the model with $Y$, $D$ and $D^2$ as terms, since $\widehat{ERM}_{1}$ and $\widehat{ERM}_{2}$ both have a large decrease from the $2$ term to the 3 term model. That is, they give the same result as $BIC$ which we think is inferior to the model chose by $\hat{h}$.

\subsection{The non-nested cases}
Now, we replicate what we did above, but without using SCAD to nest the models. Since VCD and real dimension of the parameters in the regression function coincide for linear models, we considered all first and second order terms for this case. We fitted $31~$ different models, $10~$ models with only two variables, $10$ with $3$ variables, $5$ with $4$ variables, $5$ with $1$ variable and $1$ with $5$ variables. For each model in the set, we estimate $\hat{h} $, the upper bound of Propositions \ref{subProp1} and \ref{subProp2} and $BIC$.
\newline

\begin{table}[ht]
  \centering
  \caption{Models of size one using Year and Distance as covariates}\label{Set1YD1}

\begin{tabular}{|c|c|c|c|c|}
  \hline
  Model Size &$\widehat{h}$ & $\widehat{ERM}_{1} $ & $\widehat{ERM}_{2}$ & BIC \\
  \hline\hline
  $Y$ & 4 & 16.42  & 44.95 & 79.67 \\
     $D$ & 4 & 59.05  & 105.39 & 239.79 \\
    $D^2$ & 4 & 78.08  & 130.13 & 270.86 \\
    $Y^2$ & 4 & $\bf 14.56 $  & $\bf 41.94 $ & $\bf 65.96 $ \\
    $Y:D$ & 4 & 59.05  & 105.39 & 239.79 \\
    \hline
\end{tabular}
\end{table}

\begin{table}[ht]
  \centering
  \caption{Models of size two using Year and Distance as covariates}\label{Set1YD2}

\begin{tabular}{|c|c|c|c|c|}
  \hline
  Model Size &$\widehat{h}$ & $\widehat{ERM}_{1} $ & $\widehat{ERM}_{2}$ & BIC \\
  \hline\hline
  $Y,D $ & 4 & 15.10  & 42.83 & 71.66 \\
    $Y, Y^2 $ & 4 & 16.42  & 44.95 & 79.67 \\
    $Y, D^2$ & 4 & 15.10  & 42.83 & 71.66 \\
    $Y, Y:D$ & 4 & $\bf 14.56 $  & $\bf 41.94 $ & $\bf 65.96 $ \\
    $D, Y^2$ & 4 & 46.46  & 88.51 & 217.12 \\
    $D, D^2$ & 4 & 35.86  & 73.83 & 186.69 \\
    $D, Y:D$ & 4 & 46.14  & 88.07 & 216.30 \\
    $Y^2, D^2$ & 4 & 100.97  & 159.03 & 298.92 \\
    $Y^2, Y:D$ & 4 & 100.91  & 158.96 & 303.50 \\
    $D^2, Y:D$ & 4 & 78.03  & 130.06 & 275.42 \\
    \hline
\end{tabular}
\end{table}

\begin{table}[ht]
  \centering
  \caption{Models of size three using Year and Distance as covariates}
\label{Set1YD3}

\begin{tabular}{|c|c|c|c|c|}
  \hline
  Model Size &$\widehat{h}$ & $\widehat{ERM}_{1} $ & $\widehat{ERM}_{2}$ & BIC \\
  \hline\hline
  $Y$, $Y:D$, $D^2$ & 4 & 12.91 & 37.90 & 40.16\\
    $D$, $Y:D$, $D^2$ & 4 & 24.82 & 57.86 & 145.74 \\
    $Y$, $D^2$, $Y^2$ & 4 & 13.78 & 40.67 & 61.87 \\
    $Y$, $Y^2$, $Y:D$ & 4 & 14.26 & 41.46 & 67.33\\
  $Y,D, Y^2 $ & 4 & 13.48  & 40.17 & 58.25 \\
    $Y, D,D^2 $ & 4 & $\bf 11.21 $  & $\bf 36.37$ & $\bf 26.21 $\\
    $Y, D, Y:D$ & 4 & 14.51  & 41.86 & 70.05 \\
    $D, Y^2, D^2$ & 4 & 34.91  & 72.49 & 188.12 \\
    $D, Y^2, Y:D$ & 4 & 23.79  & 56.31 & 140.20 \\
    $Y^2, D^2, Y:D$ & 4 & 77.59  & 129.50 & 279.43 \\
  \hline
\end{tabular}
\end{table}

\begin{table}[ht]
  \centering
  \caption{Models of size four using Year and Distance as covariates}
\label{Set1YD4}

\begin{tabular}{|c|c|c|c|c|}
  \hline
  Model Size &$\widehat{h}$ & $\widehat{ERM}_{1} $ & $\widehat{ERM}_{2}$ & BIC \\
  \hline\hline
  $Y,D, Y^2, D^2 $ & 4 & $\bf 11.09 $  & $\bf 36.16 $ & $\bf 28.77$ \\
    $Y, D, D^2, Y:D$ & 4 & 11.20  & 36.35 & 30.67 \\
    $Y, Y^2, D^2, Y:D$ & 4 & 11.99  & 37.70 & 43.09 \\
    $D, Y^2, D^2, Y:D$ & 4 & 19.03  & 49.06 & 114.79 \\
    $ Y, D, Y^2, Y:D$  & 4 & 11.93 & 39.12 & 45.79\\
  \hline
\end{tabular}
\end{table}

Next, we compare $\hat{h}$, $\widehat{ERM}_{1}$, $\widehat{ERM}_{2}$ and $BIC$, for the case of non-nested model lists of size $31$. We emphasize here that for the cases of non-nested model lists, common practice is to make $\widehat{ERM}_{1}$ and $\widehat{ERM}_{2}$ as small as possible. For all these cases, Tables \ref{Set1YD1}-\ref{Set1YD4} show how  $\hat{h}$, $\widehat{ERM}_{1}$, $\widehat{ERM}_{2}$ and $BIC$ behave.
From Tables  \ref{Set1YD1}-\ref{Set1YD4}, we observe that, for any model in this set of models, the estimated VCD is $4$. This indicates the robustness in the estimated VCD for this dataset. We can also see that the behavior of the estimated upper bounds from using either Propositions \ref{subProp1} or \ref{subProp2} ($\widehat{ERM}_{1}$ or $\widehat{ERM}_{2}$) are the same within each model size. More specifically within each size class, the minimum of $\widehat{ERM}_{1}$ and $\widehat{ERM}_{2}$ occurs at the same model. We note that the overall minima of $\widehat{ERM}_{1}$ and $\widehat{ERM}_{2}$ occurs for $Y$, $D$, $Y^2$, $D^2$, $Y:D$ (See the last line of Table \ref{NestourMethod}, 11.06 and 36.11 respectively.), but the minimum of the models of size four (see Table \ref{Set1YD4} is 11.09 and 36.16 respectively) are nearly the same and nearly lead to the same model as in the nested case (See Table \ref{NestourMethod} where the values are 11.09 and 36.16 respectively.).

The behavior of the $BIC$ is similar to that of the estimated upper bounds $\widehat{ERM}_{1}$ and $\widehat{ERM}_{2}$. The smallest BIC model always matches the model with the smallest estimated upper bound, for each model size. However, the smallest $BIC$ over all model sizes occurs for a model of size three, namely $Y,~ D, $ and $D^2$. This is not the case when we look at $\widehat{ERM}_{1}$ or $\widehat{ERM}_{2}$.
As observed before, if we look at graph (a) on Figure \ref{Descriptive}, we see that there is a curvature when we regress Speed on Year; this indicates that the best model should be non-linear in Year. So the best model appears to be the model given by $\hat{h}$, $\widehat{ERM}_{1}$ or $\widehat{ERM}_{2}$ (since they give nearly identical values for the model of size 5 and the best model of size 4) not $BIC$. We can improve the result from $\widehat{ERM}_{1}$ and $\widehat{ERM}_{2}$ if we eliminate the extra term $Y:D$ that they include. Looking at Table \ref{SpeedCorr}, the correlation between $Y:D$ and the average speed is nearly zero. So adding this information to $\widehat{ERM}_{1}$ or $\widehat{ERM}_{2}$ improves them to $\hat{h}$ by allowing us to eliminate the extra term $Y:D$. In simulations, we argued that $\hat{h}$ was better than $\widehat{ERM}_{1}$, $\widehat{ERM}_{2}$. For efficiency reason; here we argue that $\hat{h}$ is more robust against deviation from independence than $BIC$.

\subsection{ Analysis of The Tour De France dataset with outliers removed for nested and non-nested cases.}

The observations just after World War One may be outliers. Our goal is to see how our estimate will behave after we remove these observations.
\newline
As before, we analyze this reduced dataset in two steps. In the first step, we identify the nested model lists by SCAD \cite{Fan&Li}. Then, for each model in the class, we will estimate $\hat{h}$, and obtain $\widehat{ERM}_{1}$, $\widehat{ERM}_{2}$ and $BIC$. In the second step, without defining any structure, we fit all models of size 1, 2, 3, 4, and 5.

\begin{table}[ht]
  \centering
  \caption{Correlation between covariates with outliers removed}\label{CorrNoOutliers}

\begin{tabular}{|c|c|c|c|c|c|}
  \hline
  & Speed &Year & Distance  & Stages won& Age \\
  \hline\hline
  Speed & 1 & 0.96 & -0.56 & -0.24 & 0.37 \\
    Year & 0.96 & 1  & -0.57 & -0.25 &  0.40 \\
    Distance & -0.56 & -0.57  & 1 & 0.15 & -0.15 \\
    Stages won & -0.24 & -0.25 & 0.15 & 1 & -0.18 \\
    Age & 0.37 & 0.40  &  -0.15 & -0.18 & 1 \\
    \hline
\end{tabular}
\end{table}

\begin{table}[ht]
  \centering
  \caption{Correlation between Speed and $D^2$, $Y^2$ and $Y:D$ with outliers removed}\label{SpeedCorrSqrtNoOutliers}

\begin{tabular}{|c|c|c|c|}
  \hline
   &$Distance^2$ & $Year^2$  & Year:Distance \\
  \hline\hline
  Speed & -0.5 & -0.38 & -0.08 \\
  \hline
\end{tabular}
\end{table}

We first investigate the correlation between the average speed and our covariates before and after removing observations after world war one. Table \ref{CorrNoOutliers} gives us the same information as Table \ref{Corr}, but with outliers removed. When comparing the two tables, we see that the correlation between Speed and Year, Speed and Stage won, and between speed and Age increases when outliers are removed. However, it decreases between speed and Distance. Moreover, when we look at correlation between Speed and $D^2$ (-0.53), we see a slight decrease when outliers are removed, and a slight increase on the correlation between speed and $Y^2$ (-0.38, Table \ref{SpeedCorrSqrtNoOutliers}). We still did not have much correlation between speed and the interaction between Year and Distance(-0.08).

\begin{table}[ht]
  \centering
  \caption{Nested models using $Year$ and $Distance$ as covariates with outliers removed}
\label{Set2YDNest}

\begin{tabular}{|c|c|c|c|c|}
  \hline
  Model Size &$\widehat{h}$ & $\widehat{ERM}_{1} $ & $\widehat{ERM}_{2}$ & BIC \\
  \hline
    $Y$ & 4 & 12.87 & 40.72 & 44.55\\
    $Y$, $D^2$ & 4 & 12.01 & 39.26 & 37.84\\
    $Y$, $D^2$, $D$ & 4 & 11.66 & 38.36 & 37.41 \\
    $Y$, $D^2$, $D$, $Y^2$ & \bf{4} & 11.48 & 38.34 & 39.20 \\
    $Y$, $D^2$, $D$, $Y^2$, $Y:D$ & 4 & 11.35 & 38.13 & 41.83 \\
  \hline
\end{tabular}
\end{table}

\begin{table}[ht]
  \centering
  \caption{Models of size one using Year and Distance as covariates with outliers removed}
\label{Set2YD1}

\begin{tabular}{|c|c|c|c|c|}
  \hline
  Model Size &$\widehat{h}$ & $\widehat{ERM}_{1} $ & $\widehat{ERM}_{2}$ & BIC \\
  \hline\hline
    $Y$ & 4 & $\bf 12.87$ & $\bf 40.72$ & $\bf 44.55$\\
    $D$ & 4 & 69.60 & 121.45 & 246.36 \\
    $Y^2$ & 4 & 85.40 & 141.85 & 267.20 \\
    $Y:D$ & 4 & 98.61 & 158.60 & 281.68\\
   $D^2 $ & 4 & 75.14 & 128.66 & 254.21 \\
  \hline
\end{tabular}
\end{table}

\begin{table}[ht]
  \centering
  \caption{Models of size two using Year and Distance as covariates with  outliers removed}
\label{Set2YD2}

\begin{tabular}{|c|c|c|c|c|}
  \hline
  Model Size &$\widehat{h}$ & $\widehat{ERM}_{1} $ & $\widehat{ERM}_{2}$ & BIC \\
  \hline
    $Y$, $D$ & 4 & 12.83 & 40.64 & 48.57\\
    $Y$,$Y^2$ & 4 & 12.39 & 39.91 & 43.04 \\
    $Y$, $Y:D$ & 4 & 12.76 & 40.52 & 47.69 \\
    $Y$, $D^2$ & 4 & $\bf 12.01 $ & $\bf 39.26 $ & $\bf 37.84$ \\
    $D, Y^2$ & 4 & 49.11  & 94.14 & 214.56 \\
    $D, Y:D $ & 4 & 52.92  & 99.31 & 222.46\\
    $D, D^2$ & 4 & 44.22  & 87.43 & 203.35 \\
    $Y^2, Y:D$ & 4 & 84.23  & 140.35 & 270.36 \\
    $Y^2, D^2$ & 4 & 74.07  & 127.27 & 257.29 \\
    $Y:D, D^2$ & 4 & 75.14  & 128.65 & 258.75 \\
  \hline
\end{tabular}
\end{table}

\begin{table}[ht]
  \centering
  \caption{Models of size three using Year and Distance as covariates with  outliers removed}
\label{Set2YD3}

\begin{tabular}{|c|c|c|c|c|}
  \hline
  Model Size &$\widehat{h}$ & $\widehat{ERM}_{1} $ & $\widehat{ERM}_{2}$ & BIC \\
  \hline
    $Y$, $D$, $Y^2$ & 4 & 12.09 & 39.40 & 43.54\\
    $Y$, $D$, $Y:D$ & 4 & 12.76 & 40.52 & 52.22 \\
    $Y$, $D$, $D^2$ & 4 &$\bf 11.66 $ & $\bf 38.66$ & $\bf 37.41$ \\
    $D$, $Y^2$, $Y:D$ & 4 & 23.93 & 58.24 & 138.23 \\
    $D, Y^2, D^2$ & 4 & 40.50  & 82.25 & 198.38 \\
    $Y^2, Y:D, D^2 $ & 4 & 73.98  & 127.15 & 261.72\\
    $Y, Y:D, D^2$ & 4 & 11.83  & 38.94 & 39.80 \\
    $D, Y:D, D^2$ & 4 & 32.26  & 71.26 & 174.88 \\
    $Y, Y^2, D^2$ & 4 & 11.95  & 39.16 & 41.57 \\
    $Y, Y^2, Y:D$ & 4 & 12.34 & 39.81 & 46.79 \\
  \hline
\end{tabular}
\end{table}

\begin{table}[ht]
  \centering
  \caption{Models of size four and five using Year and Distance as covariates with outliers removed}
\label{Set2YD4}

\begin{tabular}{|c|c|c|c|c|}
  \hline
  Model Size &$\widehat{h}$ & $\widehat{ERM}_{1} $ & $\widehat{ERM}_{2}$ & BIC \\
  \hline
    $Y$, $D$, $Y^2$, $D^2$ & 4 & $\bf 11.48$ & $\bf 38.34$ & $ \bf 39.20$\\
    $Y$, $Y^2$, $D$, $Y:D$ & 4 & 11.93 & 39.12 & 45.79\\
    $Y^2$, $D$, $D^2$, $Y:D$ & 4 & 21.99 & 55.28 & 132.40 \\
    $Y$, $D$, $D^2$, $Y:D$ & 4 & 11.66 & 38.66 & 41.97 \\
    $Y, Y^2, D^2, Y:D$ & 4 & 11.81 & 38.90 & 44.02 \\
    $Y, D, Y^2, D^2, Y:D$ & 4 & $\bf 11.35$ & $\bf 38.13$ &$\bf 41.83$\\
  \hline
\end{tabular}
\end{table}
Under SCAD\cite{Fan&Li}, the order of inclusion of our covariates is: $Y$, $D^2$, $D$, $Y^2$ and $Y:D$. This order is different from when we use all data points. Recall that, when we used all data points, $D$ was included before $D^2$ and $D^2$ was included after $y^2$. With this new ordering we fit 5 different models.

From Table \ref{Set2YDNest}, if we choose a model using $\hat{h}$, we get the same answer as in Sec. \ref{AnalTour5Nested}, the model with four variables: $Y$, $D^2$, $D$, $Y^2$. The interaction between Year and distance (Y:D) is not included because of the low correlation between Speed and $Y:D$ (-0.08).  Also as before, $BIC$ indicates a model of size 3 having $Y$, $D^2$, $D$ as covariates and  $\widehat{ERM}_{1}$ and $\widehat{ERM}_{2}$ chose a model of size 5. The reasoning in Subsec. \ref{AnalTour5Nested} for why we think that the $\hat{h}$ chosen model is best continues to hold.

Next, we turn to the non-nested cases. Tables \ref{Set2YD1}-\ref{Set2YD4} give analogous results to those in Tables \ref{Set1YD1}--\ref{Set1YD4}  and our observations about $\hat{h}$, $\widehat{ERM}_{1}$, $\widehat{ERM}_{2}$ and $BIC$ are unchanged. In fact, we see that removing outliers did not affect $\hat{h}$, however, there is a change in the values of  $\widehat{ERM}_{1}$, $\widehat{ERM}_{2}$ and $BIC$; this suggests that $\hat{h}$ may be relatively robust against outliers; Alternatively it may only reflect that $\hat{h}$ must be an integer.

For the non-nested case, if we use either $\widehat{ERM}_{1}$ or $\widehat{ERM}_{2}$ for model selection, we will choose the model with the smallest value. Now, from Tables \ref{Set2YD1}-\ref{Set2YD4}, it is seen that both $\widehat{ERM}_{1}$ and $\widehat{ERM}_{2}$ pick the most complex model with all five parameters. $BIC$ continues to choose $Y$, $D^2$ $D$. Again we invoke the argument of Subsec. \ref{AnalTour5Nested} to argue that the model chosen using $\hat{h}$ is the best.


\section{Analysis of a nested and non-nested collection of model lists of Tour De France using different covariates}
\label{AnalTour4}

\subsection{Outliers retained}\label{OutRetained}
In this Subsection, we use $\left(Y, D, S, A\right)$ as covariates, estimate $\hat{h}$, obtain $\widehat{ERM}_{1}$, $\widehat{ERM}_{2}$ and $BIC$ using for two cases: When covariates are nested and when they are not.
\newline
We start by identifying the nested structure of the covariates. Under SCAD \cite{Fan&Li}, the order of inclusion is $Year$, $Age$, $Distance$ and $Stages won$. With this order, we fit 4 different models. For the non-nested case, we fit $15$ models. In this list, we fit $6$ with $2$ variables, $4$ with $3$ variables, $4$ with $1$ variable and $1$ with all four variables.

In Table \ref{NestedAll}, we have the nested models. We see that $\hat{h} = 4$ for all models. This indicates that a model with all four variables should give good predictions of the average speed of the winner of the Tour. However, if we use $\widehat{ERM}_{1}$, $\widehat{ERM}_{2}$ or $BIC$, the best model will have $Year$, $Age$ and  $Distance$ as covariates. However, on panel (a) of Figure \ref{Hist}, we see that most of the winners of the Tour won between zero and one stage, so stages won is nearly constant. This suggests that the model chosen by $\widehat{ERM}_{1}$, $\widehat{ERM}_{2}$ or $BIC$ is better than the one chose by $\hat{h}$.
\newline

\begin{table}[ht]
  \centering
  \caption{Nested models using all covariates}
\label{NestedAll}

\begin{tabular}{|c|c|c|c|c|}
  \hline
  Model Size &$\widehat{h}$ & $\widehat{ERM}_{1} $ & $\widehat{ERM}_{2}$ & BIC \\
  \hline\hline
  $Y$ & 4 & $ 16.42 $ &  44.95  &  79.67 \\
    $Y$, $A$ & 4 & 14.79  & 42.32 & 68.45 \\
    $Y$, $A$, $D$ & 4 &  $ \bf 14.11$ & $\bf 41.22 $ &  $\bf 65.71 $\\
    $Y$, $A$, $D$, $S$ & 4 & 14.31  & 41.22 & 70.33 \\
    \hline
\end{tabular}
\end{table}

\begin{table}[ht]
  \centering
  \caption{Models of size one using all covariates}
\label{Set1ALL1}

\begin{tabular}{|c|c|c|c|c|}
  \hline
  Model Size &$\widehat{h}$ & $\widehat{ERM}_{1} $ & $\widehat{ERM}_{2}$ & BIC \\
  \hline\hline
  $Y$ & 4 & $\bf 16.42 $ & $\bf  44.95 $ & $\bf 79.67$ \\
    $D$ & 4 & 59.05  & 105.39 & 239.79 \\
    $S$ & 4 &  103.70   & 162.43 &  301.81 \\
    $A$ & 4 & 106.42  & 165.82 & 304.62 \\
    \hline
\end{tabular}
\end{table}

\begin{table}[ht]
  \centering
  \caption{Models of size two using all covariates}
\label{Set1ALL2}

\begin{tabular}{|c|c|c|c|c|}
  \hline
  Model Size &$\widehat{h}$ & $\widehat{ERM}_{1} $ & $\widehat{ERM}_{2}$ & BIC \\
  \hline\hline
  $Y,D $ & 4 & 15.10  & 42.82 & 71.66 \\
    $Y,S$ & 4 & 16.36  & 44.85 & 83.79 \\
    $Y, A$ & 4 & $\bf 14.79 $  & $\bf 42.32 $ & $\bf 68.45$ \\
    $D, S$ & 4 & 57.85  & 103.80 & 242.12 \\
    $D, A $& 4 & 57.32  & 103.09 & 241.08\\
    $S, A$ & 4 & 103.54  & 162.23 & 306.28\\
  \hline
\end{tabular}
\end{table}

\begin{table}[ht]
  \centering
  \caption{Models of size three using all covariates}
\label{Set1ALL3}

\begin{tabular}{|c|c|c|c|c|}
  \hline
  Model Size &$\widehat{h}$ & $\widehat{ERM}_{1} $ & $\widehat{ERM}_{2}$ & BIC \\
  \hline\hline
  $Y,D, S $ & 4 & 15.06  & 42.77 & 75.91 \\
    $Y,D, A $ & 4 & $\bf 14.11 $ & $\bf 41.22$ & $\bf 65.71 $\\
    $Y, S, A$ & 4 & 14.79  & 42.32 & 73.07 \\
    $D, S, A$ & 4 & 56.64  & 102.19 & 244.35 \\
  \hline
\end{tabular}
\end{table}

\begin{table}[ht]
  \centering
  \caption{Models of size four using all covariates}
\label{Set1ALL4}

\begin{tabular}{|c|c|c|c|c|}
  \hline
  Model Size &$\widehat{h}$ & $\widehat{ERM}_{1} $ & $\widehat{ERM}_{2}$ & BIC \\
  \hline\hline
  $Y,D, S, A  $ & 4 & 14.31  & 41.22 & 70.33 \\
  \hline
\end{tabular}
\end{table}
For the non-nested case, we see from Tables \ref{Set1ALL1} -- \ref{Set1ALL4} that, for any conjectured model, the estimated VCD for the Tour de France data is $4$. Thus, if we choose a model using $\hat{h}$, the model that we will choose will again contain Year, Distance, Stages won and Age as covariates. $\widehat{ERM}_{2}$ behaves as before, and suggests that the best model can either have three parameters $Y$, $D$ and $A$, or four parameters $Y$, $D$, $S$ and $A$, since both models have same value for $\widehat{ERM}_{2}$. $BIC$ and $\widehat{ERM}_{1}$ confirm the smallest model, i.e with $Y$, $D$ and $A$.

We comment that the qualitative appearance of graphs of expected maximum difference VS VCD for $Y, D, S, A$ model list is the same in Figs. \ref{EstVCD1530} -- \ref{EstVCD6070}.

\subsection{Analysis of Tour De France dataset with outliers removed using Y, D, S, and A as covariates}
Recall that the outliers are the observations just after World War One. As we did earlier, we find the best model using this set of covariates in two settings. First, we use SCAD \cite{Fan&Li} to order the inclusion of variables into the model. Second, we fit all models of size 1, 2, 3, and 4. Under SCAD, the order of inclusion is $Year$, $Distance$, $Age$, and the number of $Stage~won$. This order is different from that with all observations where we found $Y, A, D, S$. With this order, we fit 4 models.

\begin{table}[ht]
  \centering
  \caption{Nested models using all covariates with outliers removed}
\label{NestedNoOutliers}

\begin{tabular}{|c|c|c|c|c|}
  \hline
  Model Size & $\widehat{h}$ & $\widehat{ERM}_{1} $ & $\widehat{ERM}_{2}$ & BIC \\
  \hline\hline
  $Y$ & 4 & 16.42  & 44.95 & 79.67 \\
  $Y$, $D$ & 4 & 12.83 & 40.64 & $\bf 48.57$ \\
  $Y$, $D$, $A$ & 4 & $\bf 12.81 $ & $\bf 40.61 $ & 52.89 \\
  $Y$, $D$, $A$, $S$ & 4 & $\bf 12.81 $ & $ \bf 40.61 $ & 57.41 \\
  \hline
\end{tabular}
\end{table}
From Table \ref{NestedNoOutliers}, as before, we observed $\hat{h} = 4$. So the best model under $\hat{h}$ has $Y$, $D$, $A$, and the number of Stages won ($S$). $\widehat{ERM}_{1}$ and $\widehat{ERM}_{2}$ do not discriminate between model of size three and four since they have the same value. In fact, on physical grounds, we can choose the model with $Y, D, A$ since most of the winners of the Tour won between zero and one stage. The $BIC$ picks the model with $Y$ and $D$, omitting $A$ and $S$.
In fact, under $\hat{h}$, $\widehat{ERM}_{1}$, $\widehat{ERM}_{2}$, the model with $Y$, $D$, $A$ is probably the best since $S$ cannot be very important and the value for it for $\hat{h}$, $\widehat{ERM}_{1}$, and $\widehat{ERM}_{2}$ are very similar. The $BIC$ model with $Y$ and $D$as variables means in particular that $A$ and $S$ have little effect. This is likely true for $S$ (see Fig \ref{Hist} (a)) but less so for $A$ (see Fig \ref{Hist} (b)). Thus in these cases, $\hat{h}$, $\widehat{ERM}_{1}$, $\widehat{ERM}_{2}$ lead with some extra reasoning to the same model $Y, D, A$. Since $\widehat{ERM}_{1}$, $\widehat{ERM}_{2}$ are derived using $\hat{h}$, this is not particularly surprising.

\begin{table}[ht]
  \centering
  \caption{Models of size one using all covariates with no outliers}
\label{Size1YDSA}

\begin{tabular}{|c|c|c|c|c|}
  \hline
  Model Size & $\widehat{h}$ & $\widehat{ERM}_{1} $ & $\widehat{ERM}_{2}$ & BIC \\
  \hline\hline
  $A$ & 4 & 86.32  & 143.10 & 268.35 \\
  $S$ & 4 &  93.64  &  152.33  & 276.50 \\
  \hline
\end{tabular}
\end{table}

\begin{table}[ht]
  \centering
  \caption{Models of size two using all covariates with outliers removed}
\label{Size2YDSA}

\begin{tabular}{|c|c|c|c|c|}
  \hline
  Model Size & $\widehat{h}$ & $\widehat{ERM}_{1} $ & $\widehat{ERM}_{2}$ & BIC \\
  \hline\hline
  $Y$, $D$ & 4 & $\bf12.82$  & $\bf40.64$ & $\bf 48.57 $ \\
  $Y$, $A$ & 4 & 12.84  &  40.67  & 48.79 \\
  $Y$, $S$ & 4 & 12.87 & 40.71 & 49.09 \\
  $D$, $S$ & 4 & 67.16 & 118.26 & 247.25 \\
  $D$, $A$ & 4 & 61.88 & 111.30 & 238.80 \\
  $S$, $A$ & 4 & 83.38 & 139.26 & 269.33\\
  \hline
\end{tabular}
\end{table}

\begin{table}[ht]
  \centering
  \caption{Models of size three using all covariates with outliers removed}
\label{Size3YDSA}

\begin{tabular}{|c|c|c|c|c|}
  \hline
  Model Size & $\widehat{h}$ & $\widehat{ERM}_{1} $ & $\widehat{ERM}_{2}$ & BIC \\
  \hline\hline
  $Y$, $D$, $A$ & 4 & $\bf 12.81 $  & $\bf 40.61$ & $\bf 52.89 $ \\
  $Y$, $D$, $S$ & 4 & $ 12.83 $ & $ 40.64 $ & 53.11 \\
  $D$, $A$, $S$ & 4 & 60.62 & 109.62 & 241.21 \\
  $Y$, $A$, $S$ & 4 & 12.85 & 40.67 & 53.31 \\
  $Y$, $D$, $A$, $S$ & 4 & $\bf 12.81 $ & $\bf 40.61 $ & 57.41 \\
  \hline
\end{tabular}
\end{table}

Thus, with and without outliers, analysis from Table \ref{Size1YDSA} to \ref{Size3YDSA} is qualitatively the same as the one we have done before.

\subsubsection{Summary}

In this chapter -- Chapter \ref{example} we implemented our theory on Tour de France data.
We fitted two set of models. The first set is based on the following covariates $Y, D, D^2, Y^2$, and  $Y:D$, the second contains $Y, D, S$, and $A$. Whichever of the two model lists is chosen, $\hat{h} = 4$. The best model to predict the average speed of the winner of the \sf{Tour De France} dataset should have $4$ variables.

Table \ref{SummaryYDAll} puts together the best model for each size across all sizes when using Year, distance and their combination as covariates. Table \ref{SummaryYDNoOutliers} gives us the same information but with outliers removed. A closer look at these tables informs us that outliers did not have any effect on $\hat{h}$, whereas it did have an effect on $\widehat{ERM}_{1}$, $\widehat{ERM}_{2}$ and $BIC$ since models of size one and two differ from both tables. Over those two tables, $BIC$ picks the model with 3 covariates, whereas $\widehat{ERM}_{1}$, and $\widehat{ERM}_{2}$ pick the biggest model. Note also that with $\widehat{ERM}_{1}$, and $\widehat{ERM}_{2}$ there is a graduate decrease not a sudden drop to indicate the location of the true model.

Table \ref{SummaryYDAS} and Table \ref{SummaryYDASNoOutliers} respectively put together the best model for each size across all sizes when using $Y, D, A$ and $S$ as covariates with and without outliers removed. We see that outliers did not have any effect on $\hat{h}$, however, it did have an effect on $\widehat{ERM}_{1}$, $\widehat{ERM}_{2}$ and $BIC$ since models of size two differ from both tables. Over those two tables, $BIC$ picks the model with only $Y$ as covariate. When outliers are not removed, $\widehat{ERM}_{1}$ picks the model of size 3, whereas when outliers are removed, $\widehat{ERM}_{1}$ did not discriminate between models of size three and four. Across both tables, $\widehat{ERM}_{2}$ did not discriminate between models of size three and four, its values are the same. We do note, however, that the nesting in Tables \ref{SummaryYDAS} and \ref{SummaryYDASNoOutliers} may indicate a sensitivity to outliers of SCAD.

If we have to choose between the best models from both set of models, with or without outliers, and base our choice either on $\widehat{ERM}_{1}$ or $\widehat{ERM}_{2}$, we will have to choose the best model from the first set, since $\widehat{ERM}_{1}$ or $\widehat{ERM}_{2}$ is smaller than that of the second set; this suggests that we will need $5$ explanatory variables to predict the Average speed of the winner of the Tour, and those variables are: $Y, D, Y^2, D^2$. and $Y:D$. Moreover, the correlation between the average speed of the winner of the tour and $Y:D$ is almost not existent, also because there is not a big difference between models of size four and five (with and without outliers) in $\widehat{ERM}_{1}$ and $\widehat{ERM}_{2}$. We will say that the best model is the model with $Y, D, Y^2, D^2$ -- The model chosen by using $\hat{h}$ in the nested case.

Concerning the second model class $(Y, D, A, S)$, with outliers (see Table \ref{NestedAll}), $\widehat{ERM}_{1}$, $\widehat{ERM}_{2}$ and $BIC$ pick the model containing $Y$, $A$ and $D$. However, when outliers are removed (see Table \ref{NestedNoOutliers}), $\widehat{ERM}_{1}$ and $\widehat{ERM}_{2}$ did not discriminate between models containing $Y$, $D$, $A$ and the model containing $Y$, $D$, $A$ and $S$. Since, on physical ground, stages won is just like a decoy, we will say that the model containing $Y$, $D$, and $A$ is the one that we should choose and we can use $\hat{h}$ to get this model by invoking the extra information that speed is independent of stages won.

We conclude that the best model is the model chosen by $\hat{h}$ with the following variables: $Y$, $D$, $Y^2$, $D^2$.


\begin{table}[ht]
  \centering
  \caption{Best models across all model sizes using $Y$, $D$ based on $\widehat{ERM}_{1}$, $\widehat{ERM}_{2}$ and $BIC$ }
\label{SummaryYDAll}

\begin{tabular}{|c|c|c|c|c|}
  \hline
  Model Size & $\widehat{h}$ & $\widehat{ERM}_{1} $ & $\widehat{ERM}_{2}$ & BIC \\
  \hline\hline
  $Y^2$ & 4 & 14.56  & 41.94 & 65.96 \\
  $Y$, $Y:D$ & 4 & 14.56 & 41.94 & 65.96 \\
  $Y$, $D$, $D^2$ & 4 & 11.21 & 36.37 & 26.21 \\
  $Y$, $D$, $Y^2$, $D^2$ & 4 & 11.09 & 36.16 & 28.77 \\
  $Y$, $D$, $Y^2$, $D^2$, $Y:D$ & 4 & 11.06 & 36.11 & 32.96 \\
  \hline
\end{tabular}
\end{table}

\begin{table}[ht]
  \centering
  \caption{Best models across all model sizes using $Y$, $D$ based on $\widehat{ERM}_{1}$, $\widehat{ERM}_{2}$ and $BIC$ with outliers removed}
\label{SummaryYDNoOutliers}

\begin{tabular}{|c|c|c|c|c|}
  \hline
  Model Size & $\widehat{h}$ & $\widehat{ERM}_{1} $ & $\widehat{ERM}_{2}$ & BIC \\
  \hline\hline
  $Y$ & 4 & 12.87  & 40.72 & 44.55 \\
  $Y$, $D^2$ & 4 & 12.01 & 39.26 & 37.84 \\
  $Y$, $D$, $D^2$ & 4 & 11.66 & 38.66 & 37.41 \\
  $Y$, $D$, $Y^2$, $D^2$ & 4 & 11.48 & 38.34 & 39.20 \\
  $Y$, $D$, $Y^2$, $D^2$, $Y:D$ & 4 & 11.35 & 38.13 & 41.83 \\
  \hline
\end{tabular}
\end{table}

\begin{table}[ht]
  \centering
  \caption{Best models across all model sizes using $Y$, $D$, $A$, $S$ based on $\widehat{ERM}_{1}$, $\widehat{ERM}_{2}$ and $BIC$}
\label{SummaryYDAS}

\begin{tabular}{|c|c|c|c|c|}
  \hline
  Model Size & $\widehat{h}$ & $\widehat{ERM}_{1} $ & $\widehat{ERM}_{2}$ & BIC \\
  \hline\hline
  $Y$ & 4 & 16.42  & 44.95 & 79.67 \\
  $Y$, $A$ & 4 & 14.79 & 42.32 & 68.45 \\
  $Y$, $A$, $D$ & 4 & 14.11 & 41.22 & 65.71 \\
  $Y$, $D$, $A$, $S$ & 4 & 14.31 & 41.22 & 70.33 \\
  \hline
\end{tabular}
\end{table}

\begin{table}[ht]
  \centering
  \caption{Best models across all model sizes using $Y$, $D$, $A$, $S$ based on $\widehat{ERM}_{1}$, $\widehat{ERM}_{2}$ and $BIC$ with outliers removed}
\label{SummaryYDASNoOutliers}

\begin{tabular}{|c|c|c|c|c|}
  \hline
  Model Size & $\widehat{h}$ & $\widehat{ERM}_{1} $ & $\widehat{ERM}_{2}$ & BIC \\
  \hline\hline
  $Y$ & 4 & 12.87  & 40.72 & 44.55 \\
  $Y$, $D$ & 4 & 12.82 & 40.64 & 48.57 \\
  $Y$, $A$, $D$ & 4 & 12.81 & 40.61 & 52.89 \\
  $Y$, $D$, $A$, $S$ & 4 & 12.81 & 40.61 & 57.41 \\
  \hline
\end{tabular}
\end{table}


\chapter{ANALYSIS OF MORE COMPLEX DATASETS}
\label{Wheatdata}
\thispagestyle{fancy}
In Chapter \ref{chap:Numerical:Studies}, we used synthetic datasets to look at the implementation of theories developed in Chapter  \ref{chap:bounds}. In that context, we estimated VCD for different sizes of linear models and we observed some variability in the estimate of VCD. The biggest discrepancy occurred when the size of the true model was $p=60$. However, this discrepancy was reduced when $N$ was increased. In Chapter \ref{example}, we followed essentially the same steps for the Tour De France data. We argue that the Tour de France data set was a toy data because the size (103 observations) was small compared to the number of models that we fitted. In this chapter, we will use larger and more complex datasets to demonstrate the effectiveness of our new methodology.

We start by analyzing the Abalone dataset \cite{AbaloneData} in Sec. \ref{Abalone}. We compare the model chosen by our method to that of SCAD \cite{Fan&Li}, ALASSO \cite{zou2006adaptive} and BIC. We will also replicate what we have done on Tour de France dataset in Chapter \ref{example} on the Wheat dataset in Sec. \ref{AnalWheat}. In Sec.\ref{WheatDesc}, we describe the wheat dataset. In Sec. \ref{Phenotype} we estimate the VCD using phenotype covariates by location. In Sec. \ref{MultiLoc}, we pool together all location and perform the analysis. In Sec. \ref{Design}, we add variables representing the design structure of the dataset to estimate $\hat{h}$. 

\section{Analysis of the Abalone dataset \cite{AbaloneData}}
\label{Abalone}
The ${\sf Abalone}$ dataset has been widely used in statistics and in machine learning as a benchmark dataset. It is known to be very difficult to analyze as either a classification or as a regression problem. Our goal in this section is to see how our method will perform on the $\sf Abalone$ data set and compare the result to other model selection techniques such as SCAD \cite{Fan&Li}, Adaptive Lasso (ALASSO) \cite{zou2006adaptive}, and BIC.

\subsection{Descriptive Analysis of Abalone dataset}
\label{AbaDesc}
\begin{figure}[h]
    \centering
    \begin{subfigure}[b]{0.6\textwidth}
        \centering
        \includegraphics[width = \textwidth]{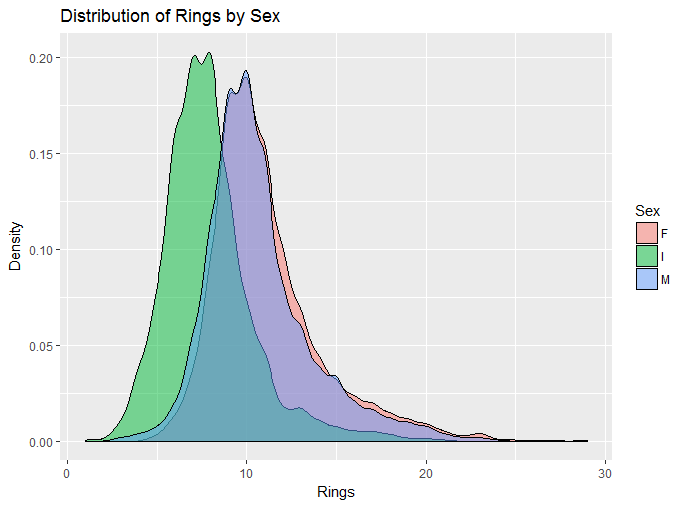}
        \caption{Distribution of $Rings$ by $Sex$}
        \label{AbaDist}
    \end{subfigure}
    \hfill
    \begin{subfigure}[b]{0.6\textwidth}
        \centering
        \includegraphics[width = \textwidth]{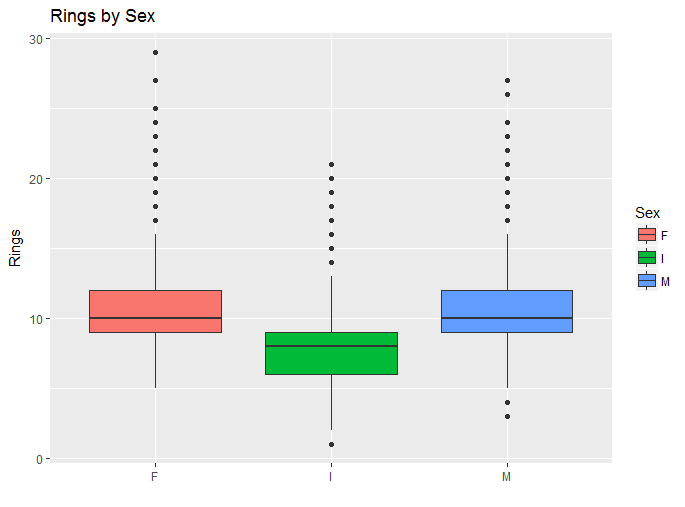}
        \caption{Boxplot of $Rings$ by $Sex$}
        \label{AbaBoxplot}
    \end{subfigure}
    \caption{Density and Boxplot of Abalone}
    \label{DescriptiveAba}
\end{figure}

    \begin{table}
    \centering
      \begin{tabular}{l | l | l | l | l}
              & $Length$ & $Diameter$ & $Height$ & $WholeWT$\\
            \hline \hline
            $Length$ & 1 & 0.99 & 0.83 & 0.93 \\
            $Diameter$ & 0.99 & 1 & 0.83 & 0.93\\
            $Height$ & 0.83 & 0.83 & 1 & 0.82\\
            $WholeWT$ & 0.93 & 0.93 & 0.82 & 1 \\
            $ShuckedWT$ & 0.90 & 0.89 & 0.77 & 0.97\\
            $VisceraWT$ & 0.90 & 0.90 & 0.80 & 0.97\\
            $ShellWT$ &  0.90 & 0.91 & 0.82 & 0.96\\
            $Rings$ & 0.56 & 0.57 & 0.56 &  0.54
            \end{tabular}
            \caption{First set of correlation}
            \label{Abacor1}
    \end{table}

    \begin{table}
      \centering
      \begin{tabular}{l | l | l | l | l}
              & $ShuckedWT$ & $VisceraWT$ & $ShellWT$ & $Rings$\\
            \hline \hline
            $Length$ & 0.90 & 0.90 & 0.90 & 0.56 \\
            $Diameter$ & 0.89 & 0.90 &   0.91 & 0.57 \\
            $Height$ &  0.77 & 0.80 &   0.82 & 0.56 \\
            $WholeWT$ & 0.97 & 0.97 &   0.96 & 0.54 \\
            $ShuckedWT$ & 1 &     0.93 & 0.88 & 0.42 \\
            $VisceraWT$ & 0.93 & 1 & 0.91 & 0.50 \\
            $ShellWT$ & 0.88 &     0.91 & 1 & 0.63 \\
            $Rings$ & 0.42 & 0.5 &   0.63 & 1
            \end{tabular}
            \caption{Second set of correlation}
            \label{Abacor2}
    \end{table}

The Abalone dataset \cite{AbaloneData} has 4177 observation 8 covariates. Sex is a nominal variable with 3 categories: Male, Female and Infant. Length (mm) is the longest shell measurement, Diameter (mm), Height (mm) is the height measures with the meat, Whole weight (grams) is the whole weight of the abalone, Shucked weight (grams) is the weight of the meat, Viscera weight (grams) is the gut weight after bleeding, Shell weight is the shell weight after being dried. The response variable, i.e., the $Y$, is rings. The number of rings is roughly the age of an abalone, since abalone typically grow one ring per year. This dataset is known to be very hard to analyze because many of these covariates are function of each other.

Fig. \ref{AbaDist} gives the distribution of the rings by Sex. We see that Female and Male abalone have essentially the same distribution. The distribution of Infant abalone is shifted left from the common distribution of Female and Male. All three distributions are skewed to the right but are close to a normal distribution over much of their support. From this graph, we see that if $Rings \leq 5$, the abalone is likely to be an Infant. However if your $Rings \geq 15$, it is more probable that the abalone is a Female or a Male.

Fig. \ref{AbaBoxplot} is the box plot of Rings by Sex. We still observe that Female and Male Abalone have the same distribution. We observe some data points outside of the whisker plot; these observations are potential outliers but they are less than $1\%$ of the sample..

Tables \ref{Abacor1} and \ref{Abacor2} give all pairwise correlations of the variables. None of the correlation is obviously low. In fact, the smallest correlation (0.42) is between $Rings$ and $Shucked~weight$. Moreover, we see relatively high correlation between our explanatory variables. In linear models this would leads us to expect problems from multicollinearity. Overall, it is not clear which, if any, of the explanatory can be omitted when explaining the response ($rings$). We also observe that $Length$ is highly correlated with other explanatory variables but not so much with the response ($Rings$).

\begin{figure}
    \centering
    \begin{subfigure}[b]{0.4\textwidth}
        \centering
        \includegraphics[width = \textwidth]{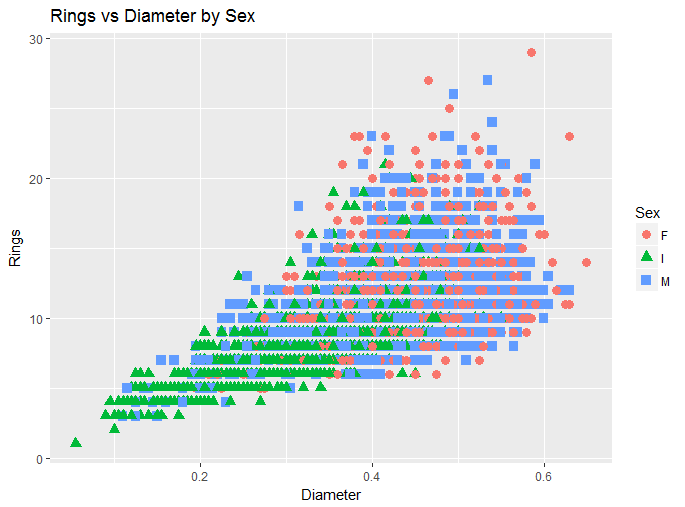}
        \caption{Scatter plot of $Rings$ VS $Diameter$}
        \label{ScatterDiameter}
    \end{subfigure}
    \hfill
    \begin{subfigure}[b]{0.4\textwidth}
        \centering
        \includegraphics[width = \textwidth]{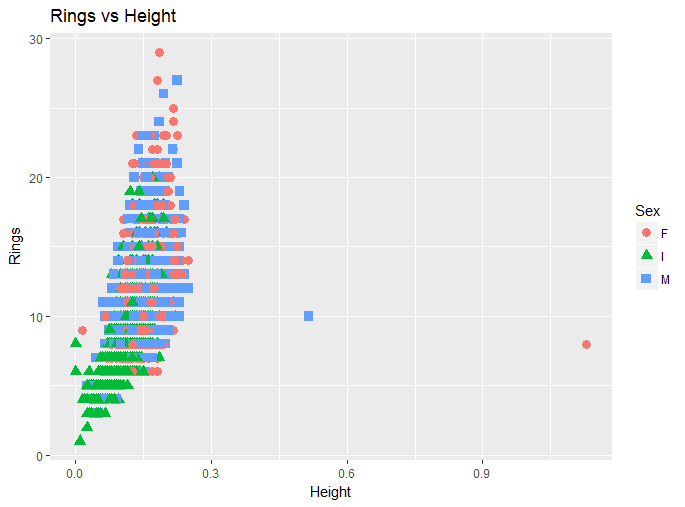}
        \caption{Scatter plot of Rings VS Height}
        \label{ScatterHeight}
    \end{subfigure}
    \begin{subfigure}[b]{0.4\textwidth}
        \centering
        \includegraphics[width = \textwidth]{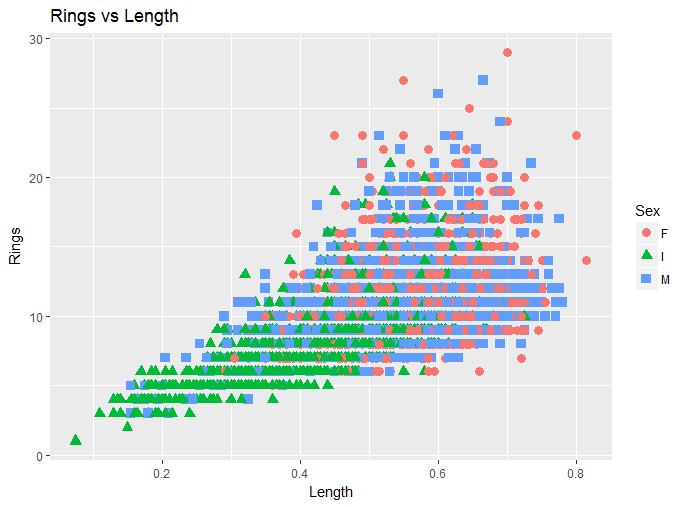}
        \caption{Scatter plot of Rings VS Length}
        \label{ScatterLength}
    \end{subfigure}
    \hfill
    \begin{subfigure}[b]{0.4\textwidth}
        \centering
        \includegraphics[width = \textwidth]{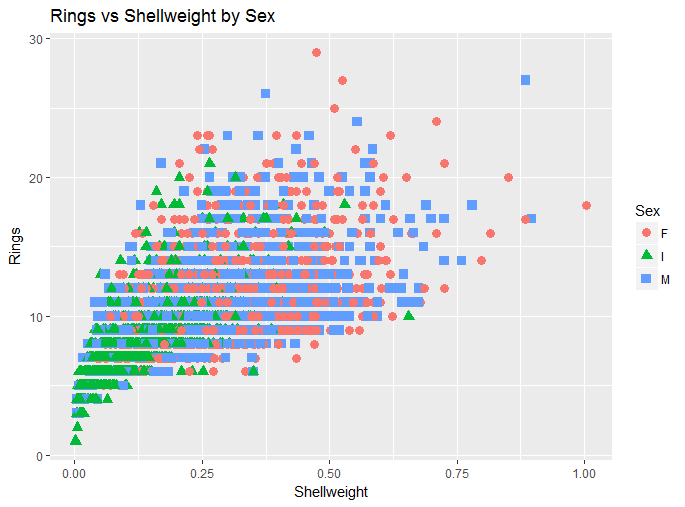}
        \caption{Scatter plot of $Rings$ VS $Shell~weight$}
        \label{ScatterShellweight}
    \end{subfigure}
    \caption{Scatter plot of $Rings$ VS $Diameter, Height,
    Length$ by $Sex$}
    \label{ScatterPlot}
\end{figure}

\begin{figure}
    \centering
    \begin{subfigure}[b]{0.4\textwidth}
        \centering
        \includegraphics[width = \textwidth]{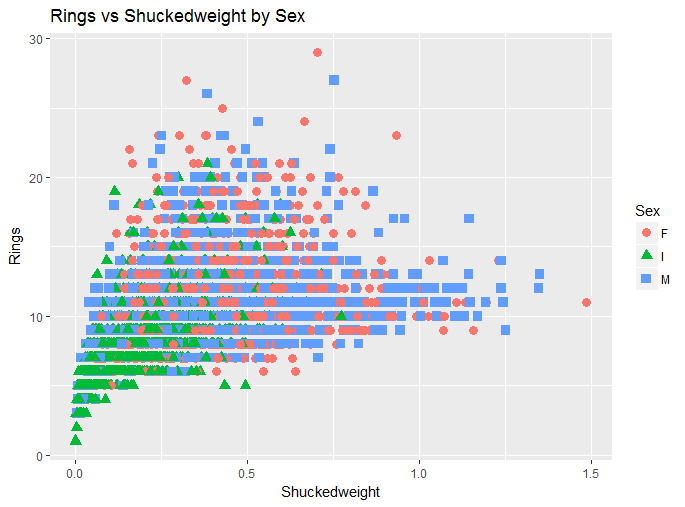}
        \caption{Scatter plot of Rings VS Shucked weight}
        \label{ScatterShuked}
    \end{subfigure}

    \hfill
    \begin{subfigure}[b]{0.4\textwidth}
        \centering
        \includegraphics[width = \textwidth]{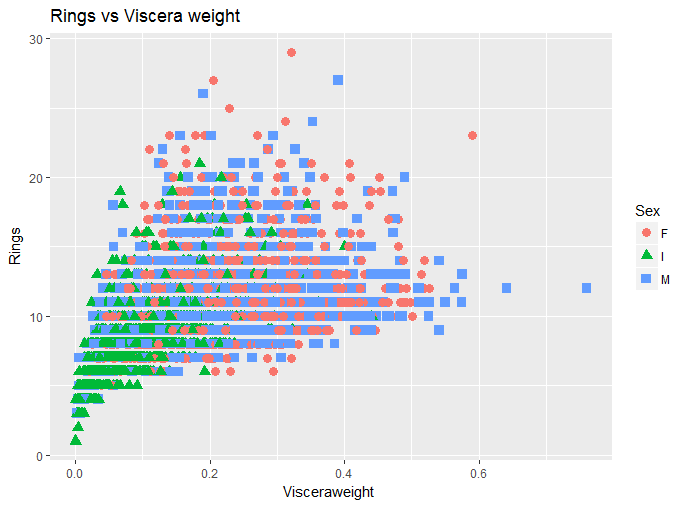}
        \caption{Scatter plot of Rings VS Viscera weight}
        \label{ScatterViscera}
    \end{subfigure}
    \hfill
    \begin{subfigure}[b]{0.4\textwidth}
        \centering
        \includegraphics[width = \textwidth]{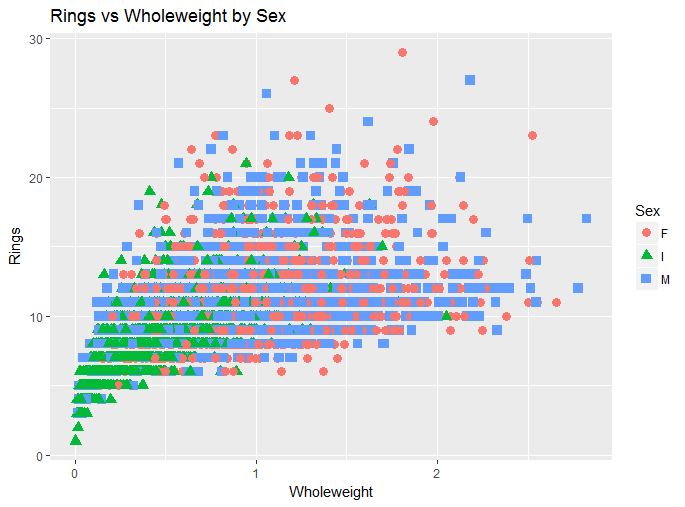}
        \caption{Scatter plot of Rings VS Whole weight}
        \label{ScatterWholeweight}
    \end{subfigure}
    \caption{Scatter plot of $Rings$ VS $Shell~ weight$, $Shucked ~weight$ and $Whole~ weight$ by $Sex$}
    \label{Scatterplot2}
\end{figure}

Figs. \ref{ScatterPlot} and \ref{Scatterplot2} show all pairwise plots of the form $Rings$ versus covariates (color coded by Sex). We see that no matter which covariates are chosen, the variability in the Rings increases as the size of the covariates increases. We also observe that there is likely to be a curvilinear relationship between Rings and the covariates. However it is so  weak for $Rings$ vs $Diameter$ and $Rings$ vs $Length$ that linear terms in these covariates may be adequate.

\subsection{Statistical Analysis of the Abalone data}

Our goal here is to evaluate our method on a more complex dataset. The model that we use to estimate the complexity of the response `Rings' is a linear combination of all the variables. To accomplish this, we first order the inclusion of variables in the model using correlation \cite{fan2008sure}. Under correlation between $Rings$ and each of the explanatory variables, the order of inclusion of variables is as follows: $Shell~weight$, $Diameter$, $Height$, $Length$, $Whole~ weight$, $Viscera~weight$ and $Shucked~weight$. Using this ordering, we fit seven different models, estimated $\hat{h}$, and found values of $\widehat{ERM}_{1}$, $\widehat{ERM}_{2}$ and $BIC$. These values are in Table \ref{AbaNested}. We also compare our method to other model selection techniques based on sparsity such as SCAD \cite{Fan&Li} and Adaptive Lasso \cite{zou2006adaptive}.

From Table \ref{AbaNested}, we observe that $\hat{h} = 8$ except for model of size 5. We regard $\hat{h} = 9$ for a model of size 5 as a random fluctuation since it is close to 8 and our method while stable is not perfectly so. The model chosen using $\hat{h}$ is the biggest model because this model has the smallest distance between its size and $\hat{h}$. However, the fact that model is first order of size 7 and $\hat{h}=8$ suggest there may be a missing variable in the dataset i.e., unavoidable bias. In fact, observational data is prone to bias than design of experiments, but bias is always present in practice. Note that a missing variable in a model may correspond to many actual variables that would have to be measured.  We see some variability in the estimate of $\widehat{ERM}_{1}$ and $\widehat{ERM}_{2}$ as we include variables in the model. We see a drop when $Diameter$ is included, and the values go back up when $height$ and $Length$ are included. There is a big decrease at the $6^{th}$ model and a slight increases at the $7^{th}$ model. This observation is similar for $BIC$. So $\widehat{ERM}_{1}$, $\widehat{ERM}_{2}$ and $BIC$ pick the model of size $6$ while our method picks the model of size seven and suggest there is a bias from at least a missing variable. We regard the results from using $\hat{h}$ as more plausible physically in the present case.

\begin{table}[h]
  \centering
  \caption{Nested models using covariates for Abalone}
\label{AbaNested}

\begin{tabular}{|c|c|c|c|c|}
  \hline
  Model Size & $\widehat{h}$ & $\widehat{ERM}_{1} $ & $\widehat{ERM}_{2}$ & BIC \\
  \hline\hline
  $Shellweight$ & 8 & 26315 & 26524  & 19567\\
  $Shellweight$, $Diameter$ & 8 & 22839 & 23033 &  18983 \\
  $Shellweight$, $Diameter$, $Height$ & 8 & 26045 & 26253 & 19540 \\
  $Shellweight$, $Diameter$, $Height$, $Length$ & 8 & 25745 & 25952 & 19500\\
  5 & 9 & 23477 & 23684 & 19124\\
  6 & 8 & 20389 & 20573 & 18551\\
  7 & 8 & 20507 & 20696 & 18575  \\
  \hline
\end{tabular}
\end{table}

\begin{figure}
    \centering
    \begin{subfigure}[b]{0.4\textwidth}
    \centering
    \includegraphics[width = \textwidth]{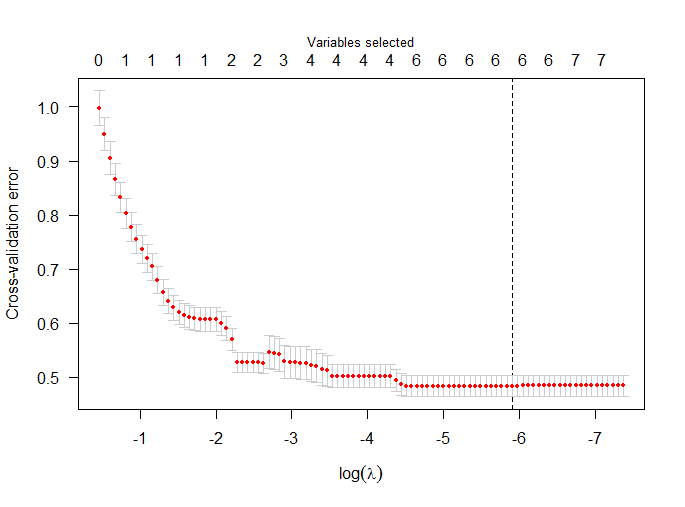}
    \caption{Optimal value of $\lambda$}
    \label{Lambda}
    \end{subfigure}
    \hfill
    \begin{subfigure}[b]{0.4\textwidth}
    \centering
    \includegraphics[width = \textwidth]{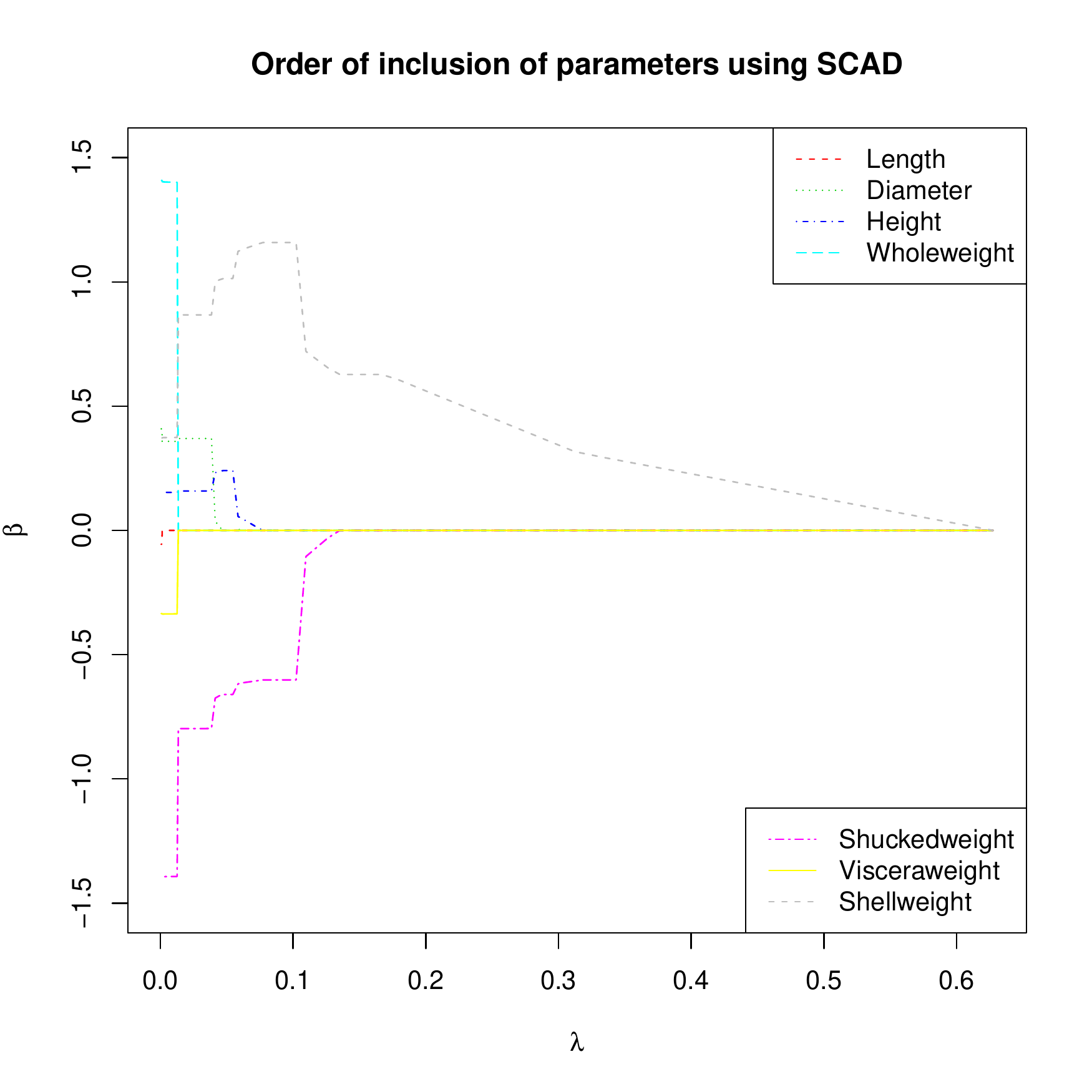}
    \caption{Order of Inclusion of variables in model}
    \label{AbaSCAD}
    \end{subfigure}
    \caption{Analysis of Abalone dataset using SCAD}
    \label{Aba}
\end{figure}

Next we turn to the results of a sparsity driven analysis. We comment that since there are seven explanatory variables and $N = 4177$, sparsity per se is not necessarily an important property for a model to have. We present these results (for comparative purpose only).

Our first comparison uses SCAD \cite{Fan&Li} as a model selection technique. Fig \ref{Lambda} gives the value of the cross-validation error for each value of $\log\left(\lambda\right)$. The optimal value of $\lambda$ is seen to be $\hat{\lambda} = 0.0027$. With this value of $\hat{\lambda}$, the best model must have 6 variables.
Fig \ref{AbaSCAD} is the trace of parameters for different value of $\lambda$. Using the optimal value of $\hat{\lambda}$, the variables that get into the model in order are $Shell~weight$, $Shucked ~ weight$, $Height$, $Diameter$, $Viscera ~ weight$, and $Whole~weight$. Thus, under SCAD, we are led to the model
    \begin{equation}
    \label{AbaEqua}
    \begin{split}
    \widehat{Rings} & = 0.36\cdot Diameter + 0.15\cdot Height + 1.40\cdot wholeweight \\
     & - 1.39S\cdot shuckedweight -0.34\cdot Visceraweight  \\
     & + 0.37\cdot Shellweight.
    \end{split}
    \end{equation}

    Analogous analysis under Adaptive LASSO \cite{zou2006adaptive} leads us to the same six terms and the model is:

    \begin{equation} \label{AbaALASSO}
    \begin{split}
    \widehat{Rings} & = 3 + 11.62\cdot Diameter + 11.69\cdot Height + 9.21\cdot wholeweight \\
     & - 20.24\cdot Shuckedweight -9.79\cdot Visceraweight  \\
     & + 8.63\cdot Shellweight.
    \end{split}
    \end{equation}
Even though \eqref{AbaEqua} and \eqref{AbaALASSO} have the same terms, the coefficients are very different. This may occur because there is a high correlation between covariates as observed in Tables \ref{Abacor1} and \ref{Abacor2}; a prediction based analysis might resolve this question however that is not our point here. Note that the models in \eqref{AbaEqua} and \eqref{AbaALASSO} both included $Shucked~weight$ but neither included $Length$, whereas the models chosen by $BIC$, $\widehat{ERM}_{1}$ and $\widehat{ERM}_{2}$ include $Length$ but not $Shucked ~weight$. That is, the sparsity models use the same variables, the $\widehat{ERM}_{1}$, $\widehat{ERM}_{2}$ and $BIC$ used the same variables (albeit a different set) and $\hat{h}$ includes all the variables, suggesting that some are missing. As before, we regard the model chosen by $\hat{h}$ as the most reasonable.

\section{Analysis of the Wheat dataset}
\label{AnalWheat}
The Wheat dataset has 2912 observation, 104 varieties. The experimental study was conducted in seven locations; Lincoln, NE,
in 1999 to 2001, and Mead and Sidney, NE, in 2000 and 2001. The design used in Lincoln, NE, in 1999 was a Randomized Complete Block Design (RCBD) with four replicates. In others years, the design was an Incomplete Block design with four replicates where each replication consisted of eight incomplete blocks of thirteen entries.  The environments are diverse and representative of wheat producing areas of Nebraska. More information concerning the dataset and the design structure can be found in \cite{campbell2003identification}.
The response variable is $Yield$ (MG/ha), the covariates that we used are 1000 kernel weight (TKWT), kernels per spike (KPS), Spikes per square meter (SPSM), Height of the plant (HT), Test weight (TSTWT(KG/hl)), and Kernels per squared meter (KPSM) in wheat. $Yield$ was measured by harvesting all four rows with a small plot combine.

The analysis of the wheat dataset will be done using phenotype data location by location first. Then we will provide an analysis combining the data from all locations primarily for comparison purposes.

\subsection{Description of the Wheat dataset}
\label{WheatDesc}

\begin{figure}
    \centering
    \begin{subfigure}[b]{0.5\textwidth}
        \centering
        \includegraphics[width = \textwidth]{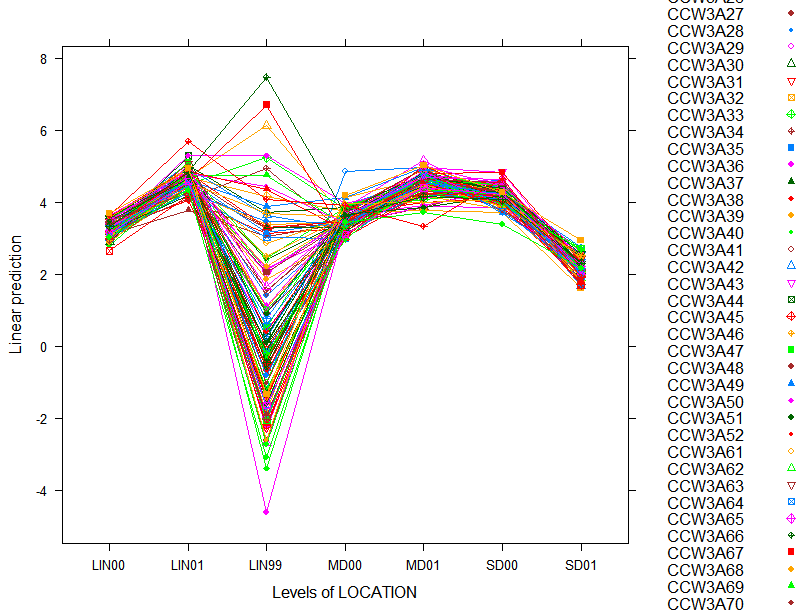}
        \caption{Interaction between Locations and Varieties}
        \label{InteractionPlot}
    \end{subfigure}
    \hfill
    \begin{subfigure}[b]{0.5\textwidth}
        \centering
        \includegraphics[width = \textwidth]{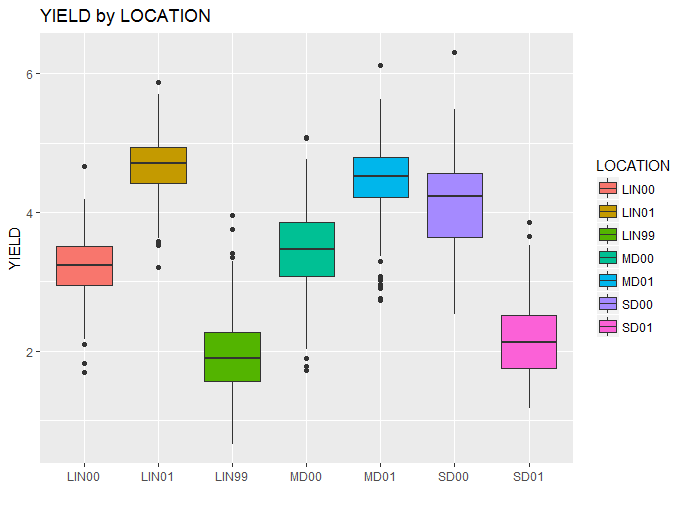}
        \caption{Boxplot of Yield by Location}
        \label{BoxPlotYield}
    \end{subfigure}
    \caption{Interaction plot and boxplot of the Yield. Panel \ref{InteractionPlot} shows symbols for each variety over the 7 locations. The symbols are connected by lines to shows interaction. Panel \ref{BoxPlotYield} pools over varieties at each location.}
    \label{WheatPlotDesc}
\end{figure}

From Fig. \ref{InteractionPlot}, the variable of interest is $Yield$. We see that lines connecting locations by varieties are crossing; this indicates there is interaction between varieties and Locations. We can also observed that there is a lot of variations in Lincoln 1999. On graph \ref{BoxPlotYield}, we see how the yield varies from one location to another. The smallest value of the yield occurs in Lincoln in 1999, and the highest yield occurs at Lincoln in 2001. The most variable location is Sidney in 2000. All data points outside of the whisker plot can be considered as potential outliers.

From Fig. \ref{kpsm}, we see that there is a reasonably linear relationship between YIELD and KPSM --although the variance appears to increase slowly with $KPSM$. On Fig. \ref{spsm}, there is fairly good linear relationship between YIELD and SPSM although again the variance increases with $SPSM$. However, on Fig. \ref{tstwt}, there is a curvilinear relationship between YIELD and TSTWT, and the variance starts small (as a function of $TSTWT$), increases rapidly and then appears to decrease somewhat. Figs. \ref{ht}, \ref{kps}, and \ref{tkwt} suggest that the data do not reflect a strong relationship between $YIELD$ and any of $HT$, $KPS$, and $TKWT$.

\begin{figure}
    \centering
    \begin{subfigure}[b]{0.4\textwidth}
    \centering
    \includegraphics[width = \textwidth]{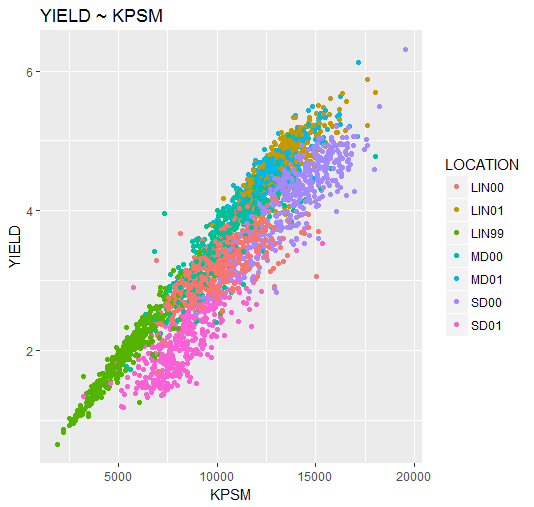}
    \caption{Scatter plot of YIELD vs KPSM}
    \label{kpsm}
    \end{subfigure}
    \hfill
    \begin{subfigure}[b]{0.4\textwidth}
    \centering
    \includegraphics[width = \textwidth]{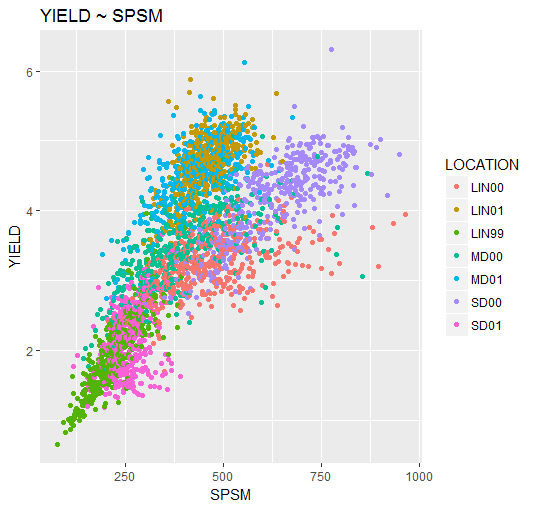}
    \caption{Scatter plot of YIELD vs SPSM}
    \label{spsm}
    \end{subfigure}
    \hfill
    \begin{subfigure}[b]{0.4\textwidth}
    \centering
    \includegraphics[width = \textwidth]{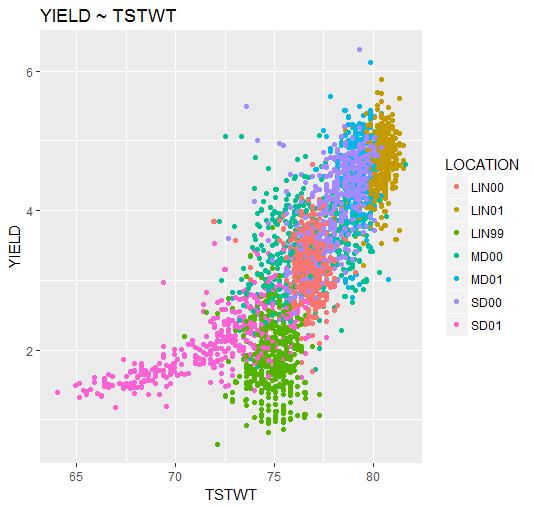}
    \caption{Scatter plot of YIELD vs TSTWT}
    \label{tstwt}
    \end{subfigure}
    \hfill
    \begin{subfigure}[b]{0.4\textwidth}
    \centering
    \includegraphics[width = \textwidth]{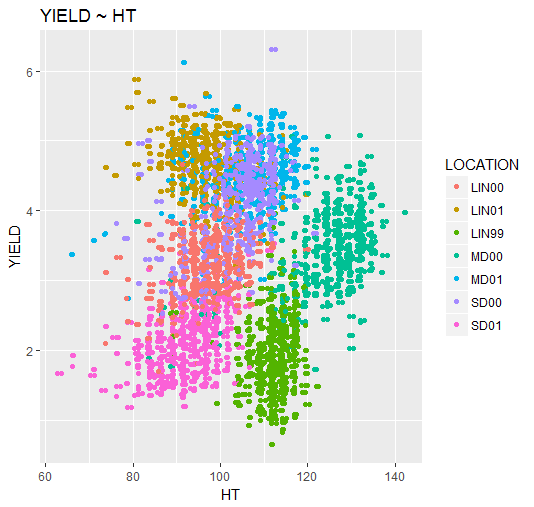}
    \caption{Scatter plot of YIELD vs HT}
    \label{ht}
    \end{subfigure}
    \hfill
    \begin{subfigure}[b]{0.4\textwidth}
    \centering
    \includegraphics[width = \textwidth]{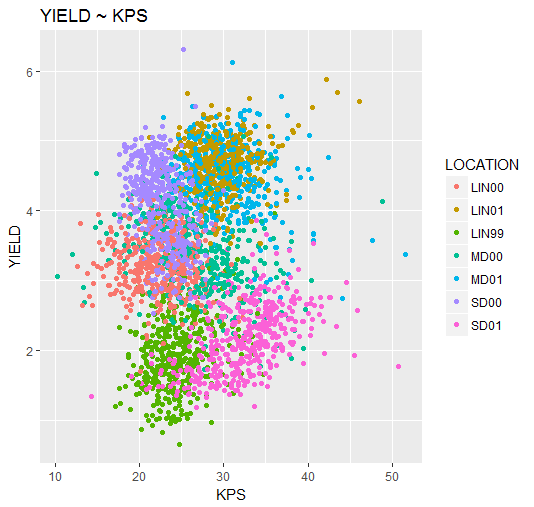}
    \caption{Scatter plot of YIELD vs KPS}
    \label{kps}
    \end{subfigure}
    \hfill
    \begin{subfigure}[b]{0.4\textwidth}
    \centering
    \includegraphics[width = \textwidth]{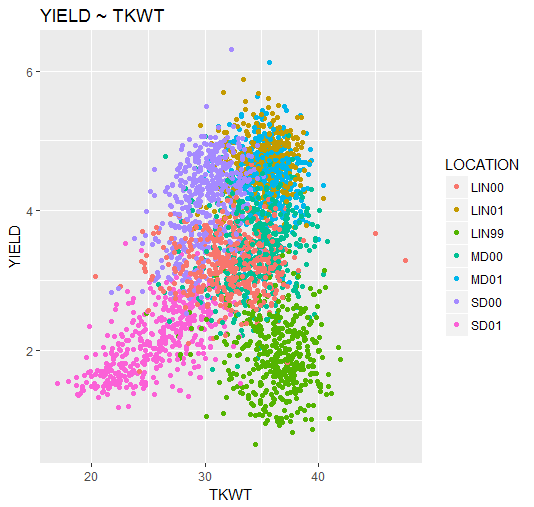}
    \caption{Scatter plot of YIELD vs TKWT}
    \label{tkwt}
    \end{subfigure}
    \caption{Scatter plot of Yield versus phenotype covariates}
    \label{scatterwheat}
\end{figure}

\begin{table}[h]
  \centering
  \caption{Correlation between phenotype covariates over all locations}
\label{CorrPheno}

\begin{tabular}{|c|c|c|c|c|c|c|c|}
  \hline
  &$YIELD$ & $HT$ & $TSTWT $ & $TKWT$ & $SPSM$ & $KPS$ & $KPSM$ \\
  \hline
  $YIELD$ & 1.00 & 0.04 & 0.80 & 0.27 & 0.74 & 0.022 & 0.93\\
$HT$ &    0.04 & 1.00 & 0.09 & 0.38 & 0.02 & -0.22 & -0.072\\
$TSTWT$ & 0.80 &  0.09 &  1.00 & 0.53 & 0.53 & -0.06 &  0.64\\
$TKWT$ &  0.27 &  0.38 &  0.53 &  1.00 & -0.11 & -0.09 & -0.09\\
$SPSM$ &  0.74 &  0.02 & 0.53 & -0.11 &  1.00 & -0.5 & 0.83\\
$ KPS$ &   0.022 & -0.22 & -0.06 & -0.09 & -0.5 &  1.00 & 0.014\\
$ KPSM $ &  0.93 & -0.07 &  0.64 & -0.09 &  0.83 &  0.01 & 1.00\\
\hline
\end{tabular}
\end{table}
From Table \ref{CorrPheno}, we see a strong linear relationship between Yield and $TSTWT$ (0.80), between $Yield$ and $SPSM$ (0.74) and between Yield and $KPSM$ (0.93). There is also a very weak linear relationship between $YIELD$ and $HT$ (0.04), $YIELD$ and $KPS$ (0.022). We also notice some strong correlations between the phenotypic variables. For instance the correlation between $KPSM$ and $TSTWT$ is 0.64, between $KPSM$ and $SPSM$ is 0.83, between $TSTWT$ and $TKWT$ is 0.53. There is also weak correlation between $TSTWT$ and $HT$ (0.09), and between $TSTWT$ and $KPS$ (-0.06).

Intuitively, we expect
$$
YIELD = \beta_{0} + \beta_{1}\cdot TKWT\cdot KPSM + \epsilon
$$
to be a good model because yield is essentially the product of the number of kernels and their average weight. Similarly,
$$
YIELD = \beta_{0} + \beta_{1}\cdot TKWT\cdot KPS\cdot SPSM + \epsilon
$$
should be a good model as well because $ KPSM \cong KPS\cdot SPSM$. Thus using our phenotypic variables there is no unique good model. Moreover, these two models probably only capture the major effect of the explanatory variables. So both are over simplifications.

\subsection{Estimation of VCD using phenotype covariates by Location}
\label{Phenotype}


We analyze the data first for each location, so that we can see how our methodology performs.
Here, we use the six phenotypic variables to estimate $\hat{h}$ and obtain $\widehat{ERM}_{1}$, $\widehat{ERM}_{2}$ and $BIC$. In fact, the analysis is done in two settings. First, we find the order of inclusion of these variables in the model using correlation \cite{fan2008sure} with $Yield$. Second, we estimate $\hat{h}$ and obtain $\widehat{ERM}_{1}$, $\widehat{ERM}_{2}$ and $BIC$. We will present the analysis of Lincoln 1999 and the across locations, the rest can be found in Appendix \ref{AppenChap5}

\subsubsection{Analysis of Wheat data in Lincoln 1999}

\begin{figure}
    \centering
    \begin{subfigure}[b]{0.4\textwidth}
    \centering
    \includegraphics[width = \textwidth]{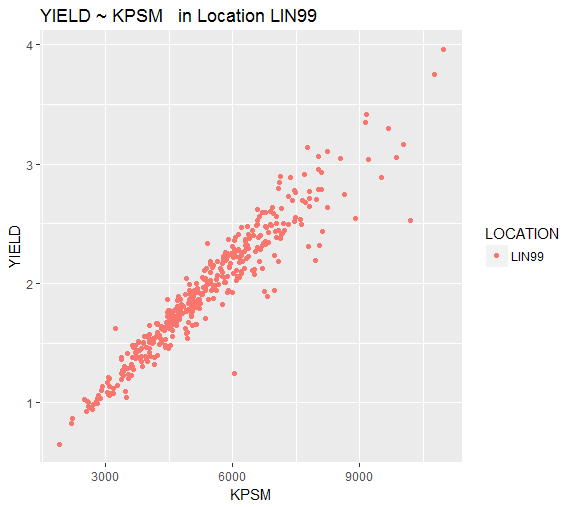}
    \caption{Scatter plot of YIELD vs KPSM}
    \label{kpsm3}
    \end{subfigure}
    \hfill
    \begin{subfigure}[b]{0.4\textwidth}
    \centering
    \includegraphics[width = \textwidth]{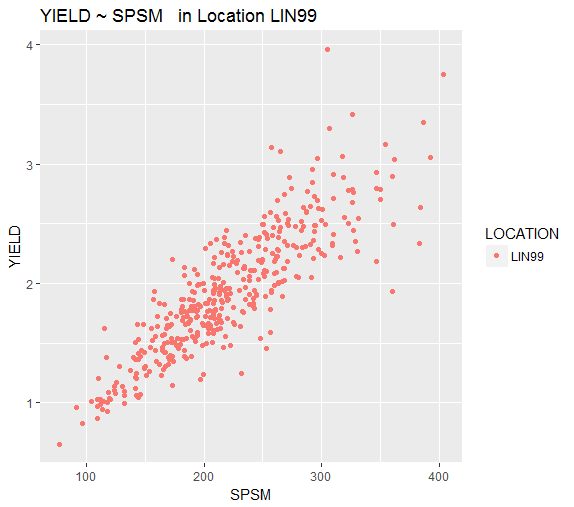}
    \caption{Scatter plot of YIELD vs SPSM}
    \label{spsm3}
    \end{subfigure}
    \hfill
    \begin{subfigure}[b]{0.4\textwidth}
    \centering
    \includegraphics[width = \textwidth]{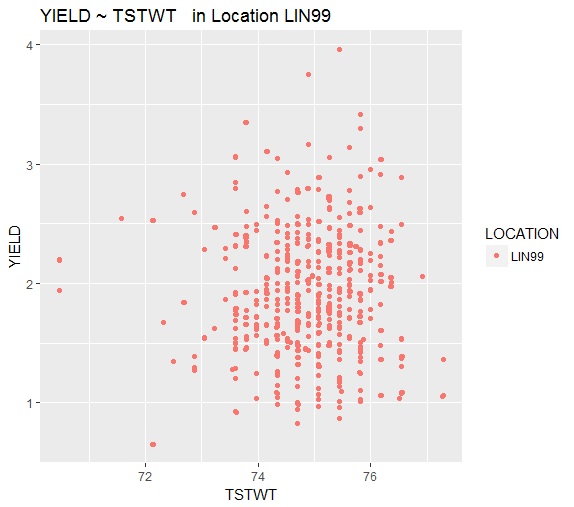}
    \caption{Scatter plot of YIELD vs TSTWT}
    \label{tstwt3}
    \end{subfigure}
    \hfill
    \begin{subfigure}[b]{0.4\textwidth}
    \centering
    \includegraphics[width = \textwidth]{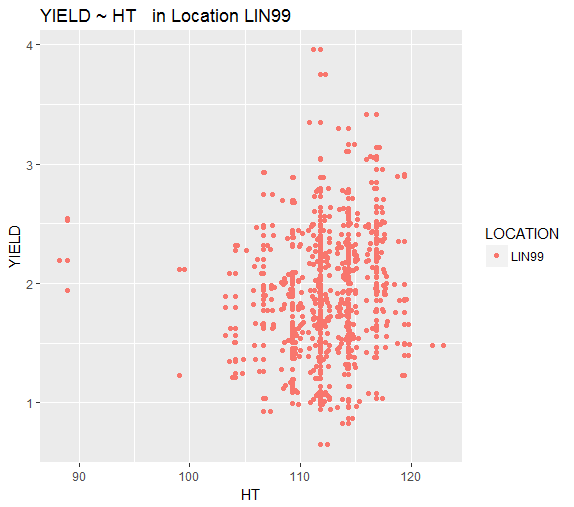}
    \caption{Scatter plot of YIELD vs HT}
    \label{ht3}
    \end{subfigure}
    \hfill
    \begin{subfigure}[b]{0.4\textwidth}
    \centering
    \includegraphics[width = \textwidth]{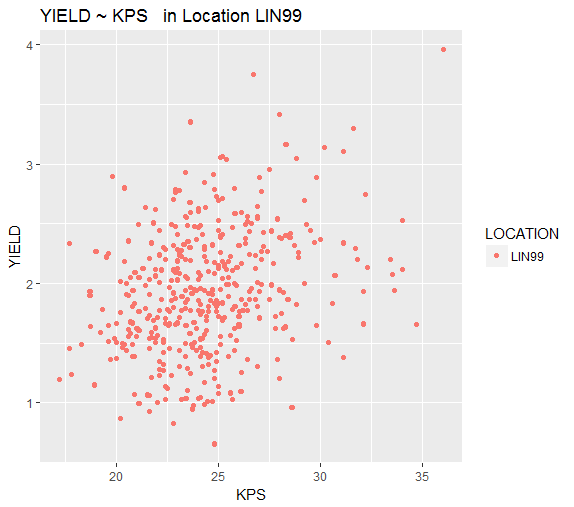}
    \caption{Scatter plot of YIELD vs KPS}
    \label{kps3}
    \end{subfigure}
    \hfill
    \begin{subfigure}[b]{0.4\textwidth}
    \centering
    \includegraphics[width = \textwidth]{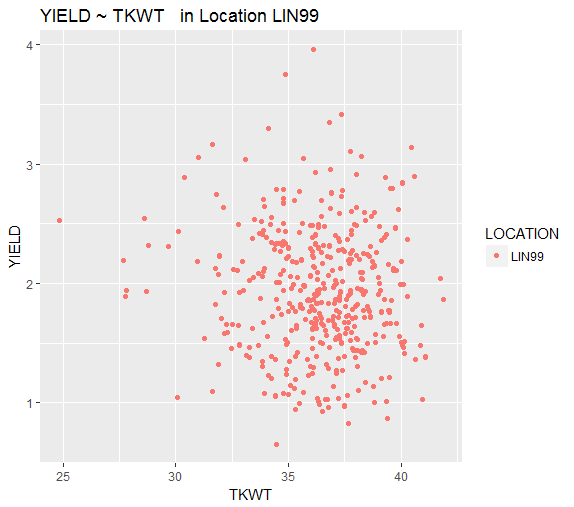}
    \caption{Scatter plot of YIELD vs TKWT}
    \label{tkwt3}
    \end{subfigure}
    \caption{Scatter plot of Yield versus phenotype data in Lincoln 1999}
    \label{scatterwheat3}
\end{figure}
From Fig. \ref{kpsm3}, we see that there is a strong linear relationship between $YIELD$ and $KPSM$. The variance is increasing very slowly with $KPSM$ so for practical purposes we regard it as relatively stable i.e essentially constant. There are also some data points not close to the majority of data but the overall trend is linear. From Fig. \ref{spsm3}, there is a weak linear relationship between $YIELD$ and $SPSM$. The variance increases as a function of $SPSM$. From Figs. \ref{tstwt3} to \ref{tkwt3}, there does not appear to be any non-trivial relationship between between $YIELD$ and any of $TSTWT$, $HT$, $KPS$, and $TKWT$. The observations are just spread around and do not appear to show any particular pattern.

We compare our method to $BIC$ and also to sparse model selection technique as SCAD \cite{Fan&Li} and ALASSO \cite{zou2006adaptive}. We use correlation \cite{fan2008sure} between $YIELD$ and the phenotypic covariates to order the inclusion of phenotype variables and their products in the model. In fact, there are six linear terms, six squared terms, and fifteen cross products, this leads to a total of 27 variables and therefore, 27 nested models to fit. Using correlation, the order of inclusion of terms is:
$TKWT\cdot KPSM, KPSM\cdot HT, KPSM\cdot TSTWT,$
$ SPSM\cdot KPS, KPSM, KPSM^2, KPSM\cdot SPSM, SPSM \cdot TKWT,$ $ KPSM\cdot KPS, SPSM,$ $ SPSM\cdot TSTWT,$
$ SPSM\cdot HT, SPSM^2, KPS\cdot HT, KPS\cdot TSTWT,$
$ KPS^2, KPS, KPS\cdot TKWT, HT^2, HT, HT\cdot TSTWT,$
$ TSTWT, TSTWT^2, TKWT\cdot HT, TKWT\cdot TSTWT,$
$ TKWT^2, TKWT$.

\begin{table}[h]
    \begin{subtable}[h]{0.45\textwidth}
        \centering
        \begin{tabular}{l | l | l | l | l}
          Size& $\hat{h}$ & $\widehat{ERM}_{1}$ & $\widehat{ERM}_{2}$ & $BIC$ \\
        \hline \hline
        1 & 18 & 3.33 &  6.64 & -1221.52\\
        2 & 17 & 3.28 &  6.43 & -1215.85\\
        3 & 16 & 3.23 &  6.22 & -1210.61\\
        4 & 17 & 3.28 &  6.42 & -1205.50\\
        5 & 19 & 3.52 &  6.80 & -1206.86 \\
        6 & 19 & 3.35 &  6.80 & -1201.31\\
        7 & 19 & 3.35 &  6.80 & -1195.29\\
        8 & 20 & 3.40 &  7.00 & -1189.421 \\
        9  & 19 & 3.35 & 6.80 & -1184.31
        \end{tabular}
        \caption{Model sizes 1 to 9}
        \label{Loc3set1}
    \end{subtable}
    \hfill
    \begin{subtable}[h]{0.45\textwidth}
        \centering
        \begin{tabular}{l | l | l | l | l}
          Size& $\hat{h}$ & $\widehat{ERM}_{1}$ & $\widehat{ERM}_{2}$ & $BIC$ \\
        \hline \hline
        10 & 19 & 3.35 & 6.80 & -1178.39\\
        11 & 20 & 3.39 & 6.99 & -1172.38\\
        12 & 20 & 3.90 & 6.99 & -1167.28\\
        13 & 20 & 3.99 & 6.99 & -1161.4\\
        14 & 19 & 3.33 & 6.76 & -1159.72\\
        15 & 20 & 3.38 & 6.96 & -1154.10\\
        16 & 20 & 3.76 & 6.96 & -1148.37 \\
        17 & 21 & 3.42 & 7.16 & -1142.42\\
        18 & 21 & 3.42 & 7.16 & -1136.79\\
        \end{tabular}
        \caption{Model sizes 10 to 18}
        \label{Loc3set2}
    \end{subtable}
    \begin{subtable}[h]{0.45\textwidth}
        \centering
        \begin{tabular}{l | l | l | l | l}
          Size& $\hat{h}$ & $\widehat{ERM}_{1}$ & $\widehat{ERM}_{2}$ & $BIC$ \\
        \hline \hline
        19 & 21 & 3.42 & 7.16 & -1130.82\\
        20 & 21 & 3.42 & 7.15 & -1125.49\\
        21 & 22 & 3.46 & 7.35 & -1119.55\\
        22 & 22 & 3.45 & 7.34 & -1115.6 \\
        23 & 12 & 2.96 & 5.26 & -1109.60 \\
        24 & 22 & 3.45 & 7.33 & -1104.06 \\
        25 & 22 & 3.45 & 7.34 & -1098.50 \\
        26 & 12 & 2.95 & 5.25 & -1092.47\\
        27 & 12 & 2.95 & 5.25 & -1087.2
        \end{tabular}
        \caption{Model sizes 19 to 27}
        \label{Loc3set3}
    \end{subtable}
    \caption{Estimation of $\hat{h}$, $\widehat{ERM}_{1}$, $\widehat{ERM}_{1}$ and $BIC$ in LIN99. Size indicates the dimension of the parameter space of the linear model.}
    \label{Loc3}
\end{table}

Table. \ref{Loc3}, shows some variability in the estimate of $\hat{h}$, also in the values of $\widehat{ERM}_{1}$, $\widehat{ERM}_{2}$ and $BIC$. The smallest difference between the size of the conjectured model and $\hat{h}$ occurs when the size of the conjectured model is 22; this indicates that the model chosen by $\hat{h}$ has 22 variables. $\widehat{ERM}_{1}$ and $\widehat{ERM}_{2}$ choose the most complex model with all parameters. $BIC$ picks the smallest model with $TKWT\cdot KPSM$ as variable.
\begin{figure}
    \centering
    \begin{subfigure}[b]{0.5\textwidth}
    \centering
    \includegraphics[width = \textwidth]{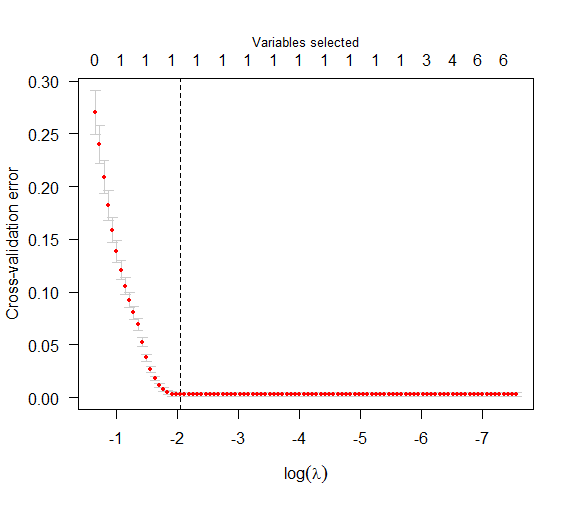}
    \caption{Optimal value of $\lambda$ for Lincoln 1999}
    \label{lambda3}
    \end{subfigure}
    \hfill
     \begin{subfigure}[b]{0.5\textwidth}
    \centering
    \includegraphics[width = \textwidth]{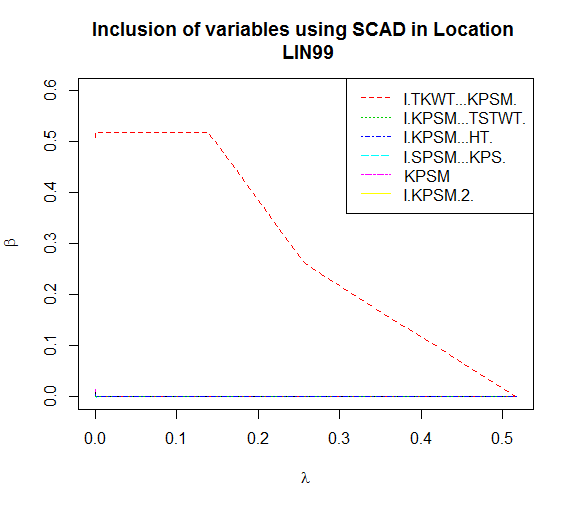}
    \caption{Trace of variables for Lincoln 1999 using SCAD \cite{Fan&Li}}
    \label{trace3}
    \end{subfigure}
    \caption{Analysis of Wheat data set in Lincoln 1999 using SCAD \cite{Fan&Li}}
    \label{Lin99}
\end{figure}

 Now, we turn to analyzing the Lincoln 1999 dataset using shrinkage methods. From Fig. \ref{lambda3}, the optimal value of $\lambda$ using SCAD is $\hat{\lambda} = 0.1281894$; with this value, the optimal model under SCAD should have one variable. From Fig. \ref{trace3}, and using the optimal $\hat{h}$, the variable chosen is $KPSM\cdot TKWT$, that is, the model is

\begin{equation}
\label{Loc3SCAD}
\widehat{YIELD} =  1.93 + 0.52\cdot KPSM\cdot TKWT.
\end{equation}

Under Adaptive LASSO, a similar analysis leads to
\begin{equation}
\label{Loc3ALASSO}
\widehat{YIELD} =  1.93 + 0.48\cdot KPSM\cdot TKWT + 0.038\cdot SPSM\cdot KPS + 0.0043\cdot TKWT.
\end{equation}

Note that for the Lincoln 1999, SCAD and $BIC$ give the same smallest model that is also intuitively reasonable. However, it is known there are effects on $Yield$ from explanatory variables other than $TKWT\cdot KPSM$. Since, we ignore incomplete blocks for the analysis, the estimated VCD could be mostly the incomplete block effects.

\section{Multilocation analysis}
\label{MultiLoc}
 For this analysis, we pool over locations, and we did not include location as a class variable inside the model. We start by ordering the inclusion of terms in the model using correlation \cite{fan2008sure}. Under correlation, the nesting of our variables is as follows:
$TKWT\cdot KPSM, KPSM\cdot TSTWT, KPSM, $
 $SPSM\cdot KPS, KPSM^2, KPSM\cdot HT, $
 $SPSM\cdot TKWT, KPSM\cdot SPSM$
 $TSTWT^2, TSTWT, KPSM\cdot KPS, SPSM\cdot TSTWT,$
 $ SPSM, $
 $SPSM\cdot HT, $ $SPSM^2, TKWT\cdot TSTWT, TKWT, HT\cdot TSTWT, $
 $TKWT^2, KPS\cdot TKWT, KPS\cdot TSTWT, TKWT\cdot HT, KPS\cdot HT$
 $HT, HT^2, KPS, KPS^2$. With this order, we fit 27 models and each time we estimate $\hat{h}$ and find the the values of $\widehat{ERM}_{1}$, $\widehat{ERM}_{2}$ and $BIC$.

\begin{table}[h]
    \begin{subtable}[h]{0.45\textwidth}
        \centering
        \begin{tabular}{l | l | l | l | l}
          Size& $\hat{h}$ & $\widehat{ERM}_{1}$ & $\widehat{ERM}_{2}$ & $BIC$ \\
        \hline \hline
        1 & 6 & 7 & 10  & -9750\\
        2 & 6 & 7 & 10  & -9743\\
        3 & 6 & 7 & 10  & -9737\\
        4 & 6 & 7 & 10  & -9731\\
        5 & 6 & 7 & 10  & -9725\\
        6 & 6 & 7 & 10  & -9718\\
        7 & 6 & 7 & 10  & -9711\\
        8 & 6 & 7 & 10  & -9703\\
        9 & 6 & 7 & 10  & -9699
        \end{tabular}
        \caption{Model sizes from 1 to 9}
        \label{Loc8set1}
    \end{subtable}
    \hfill
    \begin{subtable}[h]{0.45\textwidth}
        \centering
        \begin{tabular}{l | l | l | l | l}
          Size& $\hat{h}$ & $\widehat{ERM}_{1}$ & $\widehat{ERM}_{2}$ & $BIC$ \\
        \hline \hline
        10 & 6 & 7 & 10 & -9691\\
        11 & 6 & 7 & 10 & -9683\\
        12 & 6 & 7 & 10 & -9675\\
        13 & 6 & 7 & 10 & -9668\\
        14 & 6 & 7 & 10 & -9660\\
        15 & 6 & 7 & 10 & -9652\\
        16 & 6 & 7 & 10 & -9645\\
        17 & 6 & 7 & 10 & -9638\\
        18 & 6 & 7 & 10 & -9632
        \end{tabular}
        \caption{Model sizes from 10 to 18}
        \label{Loc8set2}
    \end{subtable}
    \begin{subtable}[h]{0.45\textwidth}
        \centering
        \begin{tabular}{l | l | l | l | l}
          Size& $\hat{h}$ & $\widehat{ERM}_{1}$ & $\widehat{ERM}_{2}$ & $BIC$ \\
        \hline \hline
        19 & 6 & 7 & 10 &  -9625\\
        20 & 6 & 7 & 10 & -9617\\
        21 & 6 & 7 & 10 & -9610\\
        22 & 6 & 7 & 10 & -9603\\
        23 & 6 & 7 & 10 & -9596\\
        24 & 6 & 7 & 10 & -9590\\
        25 & 6 & 7 & 10 & -9582\\
        26 & 6 & 7 & 10 & -9583\\
        27 & 6 & 7 & 10 & -9575
        \end{tabular}
        \caption{Model sizes from 19 to 27}
        \label{Loc8set3}
    \end{subtable}
    \caption{Estimation of $\hat{h}$, $\widehat{ERM}_{1}$, $\widehat{ERM}_{1}$ and $BIC$ for wheat data}
    \label{Wheat8}
\end{table}
From Table \ref{Wheat8}, we see that no matter which model you pick, $\hat{h} = 6$ and $\widehat{ERM}_{1} = 7$ and $\widehat{ERM}_{2} = 10$. We observe variability in the values of $BIC$. The smallest value of $BIC$ occurs with the first model. Also, $\widehat{ERM}_{1}$ and $\widehat{ERM}_{2}$ pick a model of size 1. The fact that $\widehat{ERM}_{1}$ and $\widehat{ERM}_{2}$ is constant might indicate that variables included after the first one did not improve the prediction power of the model. However, $\hat{h}$ picked the model of size 6, a value reflecting a sort of middle point among $\hat{h}$'s found for the seven locations separately.
\begin{figure}
    \centering
    \begin{subfigure}[b]{0.4\textwidth}
    \centering
    \includegraphics[width = \textwidth]{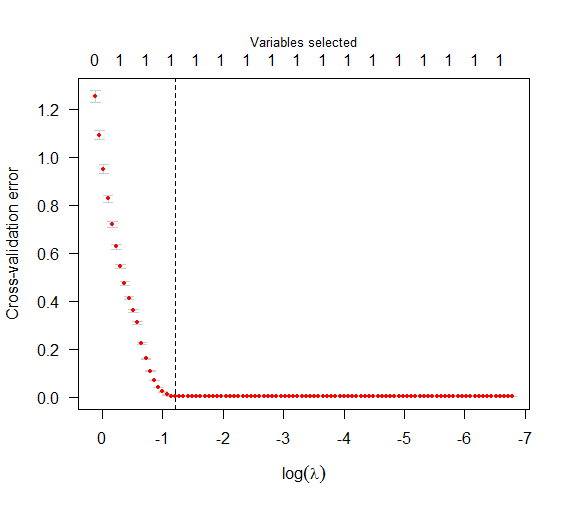}
    \caption{Optimal value of $\lambda$ across all locations}
    \label{lambda8}
    \end{subfigure}
    \hfill
     \begin{subfigure}[b]{0.4\textwidth}
    \centering
    \includegraphics[width = \textwidth]{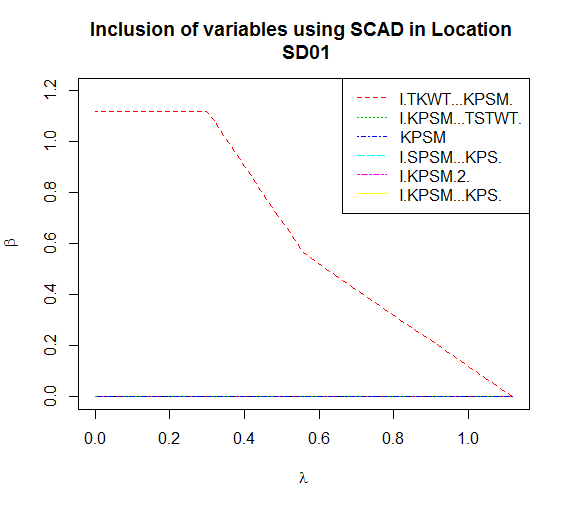}
    \caption{Trace of variables across all location using SCAD \cite{Fan&Li}}
    \label{trace8}
    \end{subfigure}
    \caption{Analysis of Wheat data set with all locations combined using SCAD \cite{Fan&Li}}
    \label{Loc8}
\end{figure}
Turning our attention to the sparsity methods, we observe from Fig. \ref{lambda8} that the optimal value of $\lambda$ is $\hat{\lambda} = 0.29736$; this indicates that the optimal number of parameters is 1. From Fig. \ref{trace8}, under SCAD, the variable chosen is $TKWT\cdot KPSM$, and the model can be written as:

\begin{equation}
\label{Loc8SCAD}
    \widehat{YIELD}  = 3.43 + 1.12\cdot KPSM\cdot TKWT.
\end{equation}
Similarly, using ALASSO we have
\begin{equation}
\label{Loc8ALASSO}
    \widehat{YIELD}  = 3.43 + 1.12\cdot KPSM\cdot TKWT
\end{equation}

We note here that ALASSO and SCAD are identical. $BIC$, $\widehat{ERM}_{1}$ and $\widehat{ERM}_{2}$ also pick $TKWT\cdot KPSM$ as the unique variable inside their model.
the model chosen by $\hat{h}$ is the most complex one with 6 variables. Thus, all techniques except $\hat{h}$ simply choose the single most important term across locations while $\hat{h}$ tried to choose a model that was overall representative of the data.

\begin{table}[h]
  \centering
  \caption{Nested models using all phenotype covariates}
\label{NestedPheno}

\begin{tabular}{|c|c|c|c|c|}
  \hline
  Model Size & $\widehat{h}$ & $\widehat{ERM}_{1} $ & $\widehat{ERM}_{2}$ & $BIC$ \\
  \hline\hline
  $KPSM$ & 6 & 1226.38 & 1273.28  & 5774.36\\
  $KPSM$, $TKWT$ & 6 & 56.88 & 66.56 &  -3234.779 \\
  $KPSM$, $TKWT$, $KPS$ & 6 & 50.75 & 59.86 & -3567.844 \\
  $KPSM$, $TKWT$, $KPS$, $HT$ & 6 & 49.03 & 57.98 & -3662.68\\
  $KPSM$, $TKWT$, $KPS$, $HT$, $TSTWT$ & 6 & 48.41 & 57.30 &-3692.71\\
  $KPSM$, $TKWT$, $KPS$, $HT$, $TSTWT$, $SPSM$ & 6 & 48.02 & 56.86 & -3709.31\\

  \hline
\end{tabular}
\end{table}

We redid our analysis using first order linear models for the sake of comparison.
From Table \ref{NestedPheno}, we see that $\hat{h} = 6$ for all models in the list. This indicates that we should use all phenotypic variables to predict the yield. This model is not accurate because we see from Table \ref{CorrPheno} that there is a strong correlation between some of our covariates and we suspect the terms $SPAM\cdot TKWT$ is important. We also see a sudden drop in the values of $\widehat{ERM}_{1}$, $\widehat{ERM}_{2}$ and $BIC$ when we move from model of size 1 to model of size 2, and a gradual decrease thereafter.

\begin{table}[h]
  \centering
  \caption{Models of size two using all phenotypes covariates}
\label{Phenosize2}

\begin{tabular}{|c|c|c|c|c|}
  \hline
  Model Size & $\widehat{h}$ & $\widehat{ERM}_{1} $ & $\widehat{ERM}_{2}$ & BIC \\
  \hline\hline
  $HT$, $TSTWT$ & 6 & 1329.03 &  1377.88  & 6008.69 \\
  $HT$, $TKWT$ & 6 & 3364.70  & 3442.70  & 8715.42 \\
  $HT$, $SPSM$ & 6 & 1676.28  & 1731.20  & 6685.26 \\
  $HT$, $KPSM$ & 6 & 482.73  & 511.98  & 3054.36 \\
  $HT$, $KPS$ & 6 & 3647.63 & 3728.86 & 8950.62 \\
  $TKWT$, $KPS$ & 6 & 3374.08 & 3452.19 & 8723.53 \\
  $TKWT$, $KPSM$ & 6 & $ 57 $ & $ 67 $ & $-3235 $ \\
  $TKWT$, $SPSM$ & 6 & 1218.70 & 1265.45 & 5756.05\\
  $TSTWT$, $KPS$ & 6 & 1314.87 & 1363.45 & 5977.47\\
  $TSTWT$, $KPSM$ & 6 & 265.95 & 287.53 & 1311.74\\
  $TSTWT$, $SPSM$ & 6 & 825.04 & 863.42 & 4618.51\\
  $TSTWT$, $TKWT$ & 6 & 1226.38 & 1273.28 & 5774.36\\
  $KPS$, $ KPSM$ & 6 & 524.57 & 555.08 & 3297.08 \\
  $SPSM$, $KPS$ & 6 & 941.17 & 982.20 & 5002.59\\
  $SPSM,$ $KPSM$ & 6 & 510.22 & 540.31 & 3216.12\\
  \hline
\end{tabular}
\end{table}

\begin{table}[h]
  \centering
  \caption{Models of size three using all phenotypes covariates}
\label{Phenosize3}

\begin{tabular}{|c|c|c|c|c|}
  \hline
  Model Size & $\widehat{h}$ & $\widehat{ERM}_{1} $ & $\widehat{ERM}_{2}$ & $BIC$ \\
  \hline\hline
  $HT$, $KPS$,$KPSM$ & 6 & 478.89 & 508.02 & 3039.01  \\
  $HT$, $SPSM$, $KPS$ & 6 & 877.51 & 917.11 & 4806.32 \\
  $HT$, $SPSM$, $KPSM$ & 6 & 459.07 & 487.58 & 2915.55 \\
  $HT$, $TKWT$, $KPS$ & 6 & 3360.45 & 3438.40 & 8719.71 \\
  $HT$, $TKWT$, $KPSM$ & 6 & $ 53.75 $ & $ 63.15 $ & $ -3395.81 $ \\
  $HT$, $TKWT$, $SPSM$ & 6 & 1163.39 & 1209.06 & 5628.62 \\
  $HT$, $TSTWT$, $KPS$ & 6 & 1314.02 & 1362.59 & 5983.56 \\
  $HT$, $TSTWT$, $KPSM$ & 6 & 251.95 & 272.94 & 1161.43 \\
  $HT$, $TSTWT$, $SPSM$ & 6 & 823.61 & 861.97 & 4621.44\\
  $HT$, $TSTWT$, $TKWT$ & 6 & 1220.27 & 1267.05 & 5767.78\\
  $TSTWT$, $SPSM$, $KPS$ & 6 & 430.42 & 458.01 & 2727.35 \\
  $TSTWT$, $SPSM$, $KPSM$ & 6 & 253.69 & 274.75 & 1181.48\\
  $TSTWT$, $TKWT$, $KPS$ & 6 & 1214.51 & 1261.19 & 5754.00\\
  $TSTWT$, $TKWT$, $KPSM$ & 6 & 56.53 & 66.18 & -3245.16\\
  $TSTWT$, $TKWT$, $SPSM$ & 6 & 821.48 & 859.78 & 4613.88 \\
  $TKWT$, $KPS$, $KPSM$ & 6 & 50.75 & 59.86 & -3567.844 \\
  $TKWT$, $SPSM$, $KPS$ & 6 & 247.12 & 267.90 & 1104.71\\
  $TKWT$, $SPSM$, $KPSM$ & 6 & 50.36 & 59.44 & -3590.39\\
  $SPSM$, $KPS$, $KPSM$ & 6 & 449.08 & 477.27 & 2851.33 \\
  \hline
\end{tabular}
\end{table}

\begin{table}[h]
  \centering
  \caption{Models of size four using all phenotypes covariates}
\label{Phenosize4}

\begin{tabular}{|c|c|c|c|c|}
  \hline
  Model Size & $\widehat{h}$ & $\widehat{ERM}_{1} $ & $\widehat{ERM}_{2}$ & BIC \\
  \hline\hline
  $HT$, $SPSM$, $KPS$, $KPSM$    & 6 & 416.29 & 443.41 & 2637.80\\
  $TSTWT$, $TKWT$, $SPSM$, $KPS$ & 6 & 244.28 & 264.94 & 1078.83 \\
  $TSTWT$, $TKWT$, $SPSM$, $KPSM$ & 6 & 50.12 & 59.17 & -3596.89\\
  $HT$, $TKWT$, $SPSM$, $KPS$ & 6 & 244.49 & 265.16 & 1081.35\\
  $HT$, $TKWT$, $SPSM$, $KPSM$ & 6 & 48.64 & 57.55 & -3686.69\\
  $HT$, $TKWT$, $KPS$, $KPSM$ & 6 & 49.03 & 57.98 & -3662.68\\
  $HT$, $TSTWT$, $KPS$, $KPSM$   & 6 & 244.28 & 264.94 & 1078.83 \\
  $HT$, $TSTWT$, $SPSM$, $KPS$  & 6 & 411.54 & 438.51 & 2604.33 \\
  $HT$, $TSTWT$, $SPSM$, $KPSM$ & 6 & 234.78 & 255.02 & 962.65 \\
  $HT$, $TSTWT$, $TKWT$, $KPS$  & 6 & 1204.10 & 1250.58 & 5736.89\\
  $HT$, $TSTWT$, $TKWT$, $KPSM$ & 6 & 52.87 & 62.19 & -3436.98\\
  $HT$, $TSTWT$, $TKWT$, $SPSM$ & 6 & 816.28 & 854.46 & 4603.33\\
  \hline
\end{tabular}
\end{table}

\begin{table}[h]
  \centering
  \caption{Models of size five and six using all phenotypes covariates}
\label{Phenosize5}

\begin{tabular}{|c|c|c|c|c|}
  \hline
  Model Size & $\widehat{h}$ & $\widehat{ERM}_{1} $ & $\widehat{ERM}_{2}$ & BIC \\
  \hline\hline
  $HT$, $TKWT$, $SPSM$, $KPS$, $KPSM$ & 6 & 48.61 & 57.51 & -3680.81\\
  $HT$, $TSTWT$, $TKWT$, $KPS$, $KPSM$  &6 &  48.41 & 57.30 &-3692.71\\
  $HT$, $TSTWT$, $TKWT$, $SPSM$, $KPS$ & 6 & 233.90 & 254.11 & 959.71\\
  $HT$, $TSTWT$, $TKWT$, $SPSM$, $KPSM$ & 6 & 48.05 & 56.90 & -3715.12\\
  $HT$, $TSTWT$, $SPSM$, $KPS$, $KPSM$ & 6 & 226.56 & 246.45& 866.46\\
  $TSTWT$, $TKWT$, $SPSM$, $KPS$, $KPSM$ & 6 & 50.05 & 59.09 & -3593.21\\
  $KPSM$, $TKWT$, $KPS$, $HT$, $TSTWT$, $SPSM$ & 6 & 48.02 & 56.86 & -3709.31\\
  \hline
\end{tabular}
\end{table}
from Tables \ref{Phenosize2} -- \ref{Phenosize5}, $\hat{h} = 6$, we see variability in the values of $\widehat{ERM}_{1}$, $\widehat{ERM}_{2}$ and $BIC$. The smallest values of $\widehat{ERM}_{1}$, $\widehat{ERM}_{2}$ and $BIC$ occur with the model of size 6. in agreement with $\hat{h}$. This model is not the best we can have because $KPSM\cdot TKWT$ is important to include. Note that here we believe the model list excludes many good models but $\hat{h}$, $\widehat{ERM}_{1}$, $\widehat{ERM}_{2}$ and $BIC$ give the same model and $\hat{h}$ is a constant equal to its value on the earlier model list of size 27 when all locations are combined.

\section{Estimate of VCD using the design structure of the model}
\label{Design}
Our objective in section is to see the impact of taking into account the design structure in our estimate. We implement our method on Lincoln 2000 and 2001 datasets. We are aware that the bootstrap technique that we used in our method will destroy the design used to collect this dataset, however, we took account of the blocking factor for completeness. Since the blocking variable is a factor variable with 32 different categories, we cannot use it to compute the correlation; so to include variable in the model, we use the order of inclusion from the analysis of the Lincoln 2000 and 2001 datasets. For each model, we add the blocking variable. For instance, in Lincoln 1999, the first term that gets into the model is $TKWT\cdot KPSM$, the model that we fit has $TKWT\cdot KPSM$ plus $IBLK_{j}, j = 1,2, \cdots, 32$ ($IBLK_{j}$ is the $j^{th}$ incomplete block). So that is the size of each model increases by 32. As another example, the $3^{rd}$ model indicated in Table \ref{Loc1} becomes

\begin{equation*}
    \widehat{YIELD} = \beta_{0} + \beta_{1}TKWT\cdot KPSM + \beta_{2}KPSM\cdot TSTWT
    +\beta_{3}KPSM\cdot HT +\sum_{k=1}^{32}\theta_{k}IBLK_{k}
\end{equation*}

\subsubsection{Analysis of Lincoln 2000 data with the design structure included in the model}
\begin{table}[h]
    \begin{subtable}[h]{0.45\textwidth}
        \centering
        \begin{tabular}{l | l | l | l | l}
          Size& $\hat{h}$ & $\widehat{ERM}_{1}$ & $\widehat{ERM}_{2}$ & $BIC$ \\
        \hline \hline
        1 & 13 & 50 & 63  & 490\\
        2 & 13 & 50 & 63  & 493\\
        3 & 13 & 25 & 34  & 191\\
        4 & 13 & 25 & 33  & 195\\
        5 & 15 & 18 & 26  & 49\\
        6 & 15 & 17 & 25  & 33\\
        7 & 16 & 11 & 17  & -173\\
        8 & 16 & 9 & 15  & -267\\
        9 & 16 & 9 & 14  & -274
        \end{tabular}
        \caption{Model sizes from 1 to 9}
        \label{Loc1set11}
    \end{subtable}
    \hfill
    \begin{subtable}[h]{0.45\textwidth}
        \centering
        \begin{tabular}{l | l | l | l | l}
          Size& $\hat{h}$ & $\widehat{ERM}_{1}$ & $\widehat{ERM}_{2}$ & $BIC$ \\
        \hline \hline
        10 & 16 & 9 & 14 & -271\\
        11 & 16 & 9 & 14 & -267\\
        12 & 16 & 9 & 14 & -264\\
        13 & 16 & 9 & 14 & -259\\
        14 & 16 & 8 & 14 & -276\\
        15 & 16 & 8 & 14 & -270\\
        16 & 16 & 8 & 14 & -271\\
        17 & 16 & 5 & 10 & -519\\
        18 & 15 & 5 & 9 & -518
        \end{tabular}
        \caption{Model sizes from 10 to 18}
        \label{Loc1set21}
    \end{subtable}
    \begin{subtable}[h]{0.45\textwidth}
        \centering
        \begin{tabular}{l | l | l | l | l}
          Size& $\hat{h}$ & $\widehat{ERM}_{1}$ & $\widehat{ERM}_{2}$ & $BIC$ \\
        \hline \hline
        19 & 15 & 5 & 9 & -518  \\
        20 & 15 & 5 & 9 & -513\\
        21 & 15 & 5 & 9 & -507 \\
        22 & 15 & 5 & 9 & -501\\
        23 & 15 & 4 & 7 & -778\\
        24 & 15 & 2 & 4 & -773\\
        25 & 15 & 2 & 4 & -3325\\
        26 & 15 & 2 & 4 & -3319\\
        27 & 15 & 2 & 4 & -3313
        \end{tabular}
        \caption{Model sizes from 19 to 27}
        \label{Loc1set31}
    \end{subtable}
    \caption{Estimation of $\hat{h}$, $\widehat{ERM}_{1}$, $\widehat{ERM}_{1}$ and $BIC$ in LIN00; note that the model is 32 plus the size of the model due to including $IBLK$}
    \label{Loc11}
\end{table}
Table \ref{Loc11} gives the estimate of $\hat{h}$, the values of $\widehat{ERM}_{1}$, $\widehat{ERM}_{2}$ and $BIC$ in Lincoln 2000. In these cases, we included the design structure represented by the incomplete block.

We observe small variability in the estimate of $\hat{h}$: it takes on values 13, 15, or 16. The smallest discrepancy between the size of the conjectured model and $\hat{h}$ occurs when the size of the conjectured model is 16. In fact, at that point $\hat{h} = 16$. We also see that there is  variability in the values of $\widehat{ERM}_{1}$ and $\widehat{ERM}_{2}$ as the size of the conjectured model increases. The smallest value of $\widehat{ERM}_{1}$ namely two, occurs first when the size of the conjectured model is 24; this value stays constant up to the most complex model. So, so we would pragmatically say that $\widehat{ERM}_{1}$ picks a model of size 24. The smallest value of $\widehat{ERM}_{2}$ first occurs at the conjectured model of size 24 and stays constant up to the model of size 27. Being parsimonious, we would say that $\widehat{ERM}_{2}$ also picks a model of size 24. On the other hand, $BIC$ picks the most complex model i.e.the  model with the most variables.

\begin{figure}
    \centering
    \begin{subfigure}[b]{0.4\textwidth}
    \centering
    \includegraphics[width = \textwidth]{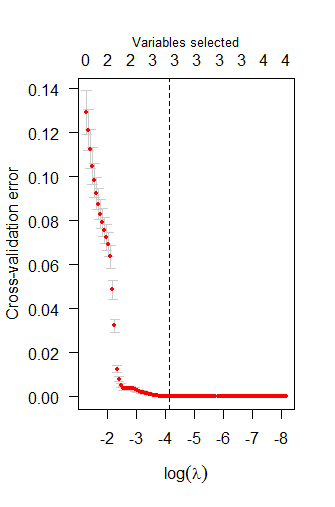}
    \caption{Optimal value of $\lambda$ In Lincoln 2000 with design structure included}
    \label{lambdadesign}
    \end{subfigure}
    \hfill
     \begin{subfigure}[b]{0.4\textwidth}
    \centering
    \includegraphics[width = \textwidth]{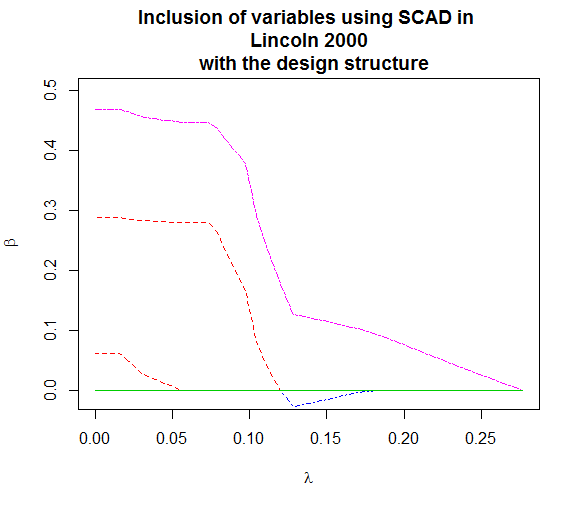}
    \caption{Trace of variables in Lincoln 2000 with design structure included using SCAD \cite{Fan&Li}}
    \label{struct}
    \end{subfigure}
    \caption{Analysis of Wheat data set in Lincoln 2000 with the design structure included in the model using SCAD \cite{Fan&Li}}
    \label{DesignStruct}
\end{figure}
Turning our attention to the sparsity methods, we observe from Fig. \ref{lambdadesign} that the optimal value of $\lambda$ is $\hat{\lambda} = 0.0158356$; this indicates that the optimal number of parameters is 3. From Fig. \ref{struct}, under SCAD, the variables chosen are $TKWT\cdot KPSM$, $KPSM$ and $TKWT$. The model can be written as:
\begin{equation}
\label{DesignSCAD}
    \widehat{YIELD}  = -0.18 + 0.06\cdot KPSM\cdot TKWT + 0.47\cdot KPSM
    + 0.29\cdot TKWT.
\end{equation}
The coefficients of $IBLK$ under SCAD were set to zero, but including $IBLK$ in our SCAD analysis give us two extra terms compare to \eqref{Loc1SCAD}.

We note here that all model selection techniques pick a different model list. $BIC$ picks the most complex model list, $\widehat{ERM}_{1}$ and $\widehat{ERM}_{2}$ pick a model list of size 27, $\hat{h}$ pick a model of size 16 and SCAD pick a model of size 3. We also note that in \eqref{DesignSCAD} we have the interaction between $TKWT$ and $KPSM$ and their main effect. This is the first time that this occurs using shrinkage methods. The model given by SCAD is the best smallest model that we can have, however, the question is: Is this model the overall best model? I would say that this is not the overall best model because this model does not take into account the different type of varieties of wheat. (We comment that, here we have not used ALASSO because the R code we used did not support categorical variables.)

The effect of including the design variables is substantial. In Table \ref{Loc1}, $\hat{h} = 1$; here $\hat{h} = 16$. Likewise, there are large difference in the other methods; for instance, compare \eqref{Loc1SCAD} to \eqref{DesignSCAD}, or $\widehat{ERM}_{1}$ and $\widehat{ERM}_{2}$ in Table \ref{Loc1} to the values in Table \ref{Loc11}.

\subsubsection{Analysis of Lincoln 2001 data with the design structure included in the model}

\begin{table}[h]
    \begin{subtable}[h]{0.45\textwidth}
        \centering
        \begin{tabular}{l | l | l | l | l}
          Size& $\hat{h}$ & $\widehat{ERM}_{1}$ & $\widehat{ERM}_{2}$ & $BIC$ \\
        \hline \hline
        1 & 16 & 31 & 42  & 284\\
        2 & 12 & 4  & 7  & -783\\
        3 & 10 & 3  & 5  & -943\\
        4 & 16 & 10 & 16  & -239\\
        5 & 16 & 10 & 16  & -234\\
        6 & 15 & 10 & 16  & -234\\
        7 & 14 & 7  & 11  & -441\\
        8 & 11 & 3  & 5  & -922\\
        9 & 12 & 3  & 6  & -924
        \end{tabular}
        \caption{Model sizes from 1 to 9}
        \label{Loc2set12}
    \end{subtable}
    \hfill
    \begin{subtable}[h]{0.45\textwidth}
        \centering
        \begin{tabular}{l | l | l | l | l}
          Size& $\hat{h}$ & $\widehat{ERM}_{1}$ & $\widehat{ERM}_{2}$ & $BIC$ \\
        \hline \hline
        10 & 11 & 3 & 6 & -918\\
        11 & 12 & 3 & 6 & -913\\
        12 & 9 & 3 & 4 & -1000\\
        13 & 9 & 3 & 4 & -1000\\
        14 & 9 & 3 & 4 & -992\\
        15 & 9 & 3 & 4 & -986\\
        16 & 9 & 3 & 4 & -980\\
        17 & 8 & 3 & 4 & -981\\
        18 & 42 & 3 & 19 & -1452
        \end{tabular}
        \caption{Model sizes from 10 to 18}
        \label{Loc2set22}
    \end{subtable}
    \begin{subtable}[h]{0.45\textwidth}
        \centering
        \begin{tabular}{l | l | l | l | l}
          Size& $\hat{h}$ & $\widehat{ERM}_{1}$ & $\widehat{ERM}_{2}$ & $BIC$ \\
        \hline \hline
        19 & 32 & 3 & 7 & -1853\\
        20 & 32 & 3 & 7 & -1831\\
        21 & 32 & 2 & 3 & -1826\\
        22 & 28 & 3 & 7 & -1829\\
        23 & 28 & 3 & 7 & -1823\\
        24 & 28 & 3 & 7 & -1817\\
        25 & 27 & 3 & 6 & -1819\\
        26 & 27 & 3 & 6 & -1819\\
        27 & 27 & 3 & 6 & -1819
        \end{tabular}
        \caption{Model sizes from 19 to 27}
        \label{Loc2set32}
    \end{subtable}
    \caption{Estimation of $\hat{h}$, $\widehat{ERM}_{1}$, $\widehat{ERM}_{1}$ and $BIC$ in LINO1 with the design structure included}
    \label{Loc22}
\end{table}

Table \ref{Loc22} gives the estimate of $\hat{h}$, the values of $\widehat{ERM}_{1}$, $\widehat{ERM}_{2}$ and $BIC$ in Lincoln 2001. In these cases, we included the design structure represented by the incomplete block.

We observe some a lot of variability in the estimate of $\hat{h}$: it takes on values from 8 to 42. The smallest discrepancy between the size of the conjectured model and $\hat{h}$ occurs when the size of the conjectured model is 27. In fact, at that point $\hat{h} = 27$. We also see that there is variability in the values of $\widehat{ERM}_{1}$ and $\widehat{ERM}_{2}$ as the size of the conjectured model increases. The smallest value of $\widehat{ERM}_{1}$ namely two, occurs first when the size of the conjectured model is 21. The smallest value of $\widehat{ERM}_{2}$ also occurs at the conjectured model of size 21. The smallest value of $BIC$ occurs with model of size 19.
\begin{figure}
    \centering
    \begin{subfigure}[b]{0.4\textwidth}
    \centering
    \includegraphics[width = \textwidth]{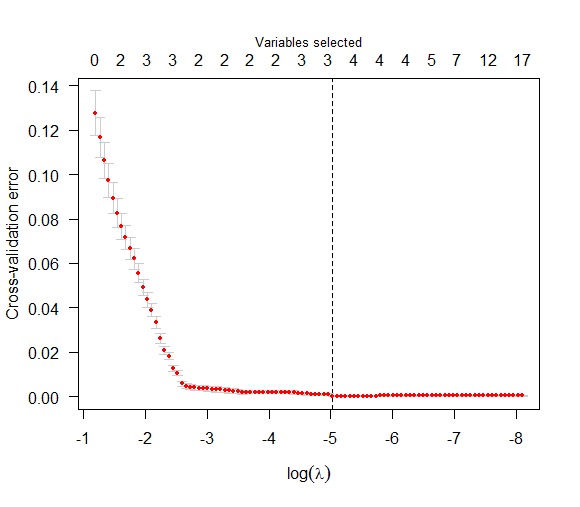}
    \caption{Optimal value of $\lambda$ In Lincoln 2001 with design structure included}
    \label{lambdadesign01}
    \end{subfigure}
    \hfill
     \begin{subfigure}[b]{0.4\textwidth}
    \centering
    \includegraphics[width = \textwidth]{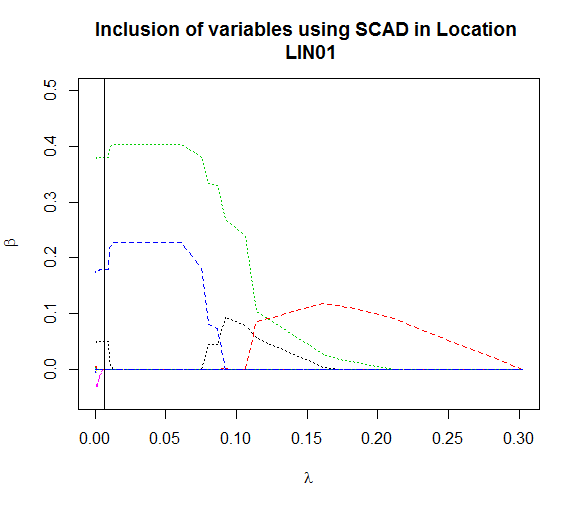}
    \caption{Trace of variables in Lincoln 2001 with design structure included using SCAD \cite{Fan&Li}}
    \label{struct01}
    \end{subfigure}
    \caption{Analysis of Wheat data set in Lincoln 2001 with the design structure included in the model using SCAD \cite{Fan&Li}}
    \label{DesignStruct01}
\end{figure}
Turning our attention to the sparsity methods, we observe from Fig. \ref{lambdadesign01} that the optimal value of $\lambda$ is $\hat{\lambda} = 0.00651666$; this indicates that the optimal number of parameters is 3. From Fig. \ref{struct}, under SCAD, the variable chosen is $TKWT\cdot KPSM$, $KPSM$ and $TKWT$
and the model can be written as:
\begin{equation}
\label{DesignSCAD01}
    \widehat{YIELD}  = 1.10 + 0.05\cdot KPSM\cdot TKWT + 0.38\cdot KPSM
    + 0.18\cdot TKWT.
\end{equation}
This has the same explanatory variables as \eqref{DesignSCAD}.

We note again that all model selection techniques pick a different model. $BIC$ picks a model with 19 variables, $\widehat{ERM}_{1}$ and $\widehat{ERM}_{2}$ pick a model of size 21, $\hat{h}$ pick a model of size 27 and SCAD pick a model of size 3. We also note that in \eqref{DesignSCAD01} we have the interaction between $TKWT$ and $KPSM$ and their main effects.

Overall, the model sizes range from 3 (SCAD), 19 (BIC), 21 ($\widehat{ERM}_{1}$ and $\widehat{ERM}_{2}$) to 27 ($\hat{h}$). Since SCAD and BIC are sparsity or sparsity-like procedures, it is not surprising that they give the smallest models and unless a coarse model is desired, SCAD and possibly BIC can be ignored. In this example, we are left with models of size 21 and 27 and we recall that, compared to Bayes methods such as SCAD and BIC, our method is more sensitive to bias or, more precisely to exactly which variables are being included. We suspect that as in other cases, $\hat{h}$ is even more sensitive than $\widehat{ERM}_{1}$ and $\widehat{ERM}_{2}$ and has detected that some variables are relevant when blocking in included. Indeed, the elevated size of $\hat{h}$ shows how important the blocking is.


\section{Analysis of Wheat data using SNP information}
\label{AnalSNP}
Our goal in this section is to see the effect of how taking the SNP (Single Nucleotide Polymorphism) information into account will affect the estimate of VCD. Since the variables representing the SNP's have so much missing data, we imply dropped all rows with missing SNP values and only use SNP's that are complete (No missing values i.e., we dropped columns as well.) Thus we retained only  6 SNP's and our sample size was reduced to 2631. As before, we ordered the inclusion of covariates in our model using correlation. Under correlation is inclusion of terms in our model is as follows:
$ KPSM, TSTWT, SPSM, $ $KPS\cdot TKWT, TKWT, KPS\cdot TSTWT,$
$ TKWT\cdot KPSM, KPS\cdot HT, barc67, $ $HT, $ $cmwg680bcd366, $ $ bcd141, barc86,$ $ gwm155, barc12, $
$KPS, SPSM\cdot HT, KPSM\cdot HT,$
$ SPSM\cdot TKWT, SPSM^2, KPS^2,$
$HT^2, SPSM\cdot KPS, KPSM\cdot TSTWT,$
$TKWT\cdot TSTWT, KPSM\cdot SPSM, TSTWT^2,$
$KPSM^2,$ $ SPSM\cdot TSTWT, HT\cdot TSTWT,$ $ TKWT^2,$ $ $ $TKWT\cdot HT $. Here the SNP's are symply denoted by their labels in the data.
With this order, we fit 32 different models, each time, we estimate $\hat{h}$, find values of $\widehat{ERM}_{1}$, and $\widehat{ERM}_{2}$ and models for $BIC$, SCAD and ALASSO.
\begin{table}[h]
    \begin{subtable}[h]{0.45\textwidth}
        \centering
        \begin{tabular}{l | l | l | l | l}
          Size& $\hat{h}$ & $\widehat{ERM}_{1}$ & $\widehat{ERM}_{2}$ & $BIC$ \\
        \hline \hline
        1 & 37 & 380 & 439 &  2381\\
        2 & 37 & 193 & 235 & 587\\
        3 & 37 & 185 & 226 & 477\\
        4 & 37 & 181 & 222  & 430\\
        5 & 36 & 37 & 55  & -3930\\
        6 & 37 & 37 & 56  & -3924\\
        7 & 37 & 8 & 17  & -9100\\
        8 & 37 & 8  & 17  & -9093\\
        9 & 36 & 8  & 16  & -9089\\
        10 & 36 & 8 & 16 & -9081
        \end{tabular}
        \caption{Model sizes from 1 to 10}
        \label{Gene1}
    \end{subtable}
    \hfill
    \begin{subtable}[h]{0.45\textwidth}
        \centering
        \begin{tabular}{l | l | l | l | l}
          Size& $\hat{h}$ & $\widehat{ERM}_{1}$ & $\widehat{ERM}_{2}$ & $BIC$ \\
        \hline \hline
        11 & 36 & 8 & 16 & -9074\\
        12 & 36 & 8 & 16 & -9066\\
        13 & 36 & 8 & 16 & -9058\\
        14 & 36 & 8 & 16 & -9052\\
        15 & 36 & 8 & 16 & -9045\\
        16 & 36 & 8 & 16 & -9040\\
        17 & 36 & 8 & 16 & -9033\\
        18 & 36 & 8 & 16 & -9026\\
        19 & 36 & 8 & 16 & -9019\\
        20 & 36 & 8 & 16 & -9014
        \end{tabular}
        \caption{Model sizes from 11 to 20}
        \label{Gene2}
    \end{subtable}
    \begin{subtable}[h]{0.45\textwidth}
        \centering
        \begin{tabular}{l | l | l | l | l}
          Size& $\hat{h}$ & $\widehat{ERM}_{1}$ & $\widehat{ERM}_{2}$ & $BIC$ \\
        \hline \hline
        21 & 36 & 8 & 16 & -9006\\
        22 & 36 & 8 & 16 & -8999\\
        23 & 41 & 8 & 17 & -8993\\
        24 & 41 & 8 & 17 & -8989\\
        25 & 41 & 8 & 17 & -8982\\
        26 & 41 & 8 & 17 & -8978\\
        27 & 41 & 8 & 17 & -8970\\
        28 & 41 & 8 & 17 & -8962 \\
        29 & 41 & 8 & 17 & -8961 \\
        30 & 41 & 8 & 17 & -8956 \\
        31 & 41 & 8 & 17 & -8948 \\
        32 & 41 & 8 & 17 & -8940 \\
        \end{tabular}
        \caption{Model sizes from 19 to 27}
        \label{Gene3}
    \end{subtable}
    \caption{Estimation of $\hat{h}$, $\widehat{ERM}_{1}$, $\widehat{ERM}_{1}$ and $BIC$ with the SNPS included}
    \label{Gene}
\end{table}
Table \ref{Gene} gives the estimate of $\hat{h}$, the values of $\widehat{ERM}_{1}$, $\widehat{ERM}_{2}$ and $BIC$. In these cases, we included the SNP'S variables.

We observe some variability in the estimate of $\hat{h}$: it takes values 36, 37, and 41. The smallest discrepancy between the size of the conjectured model and $\hat{h}$ occurs when the size of the conjectured model is 32. In fact, at that point $\hat{h} = 41$,  suggests that the sample size may be low for our purpose here (We recall that model selection requires more data than parameter estimation). We also observe some variability in the values of $\widehat{ERM}_{1}$ and $\widehat{ERM}_{2}$ as the size of the conjectured model increases. There is a big drop in their value when the size of the conjectured model is 7, and thereafter, they become almost flat. The smallest value of $\widehat{ERM}_{1}$, namely eight, and $\widehat{ERM}_{2}$, namely 16, occur for models of size 7 and 9 respectively. The smallest value of $BIC$ occurs for model of size 8.

\begin{figure}
    \centering
    \begin{subfigure}[b]{0.4\textwidth}
    \centering
    \includegraphics[width = \textwidth]{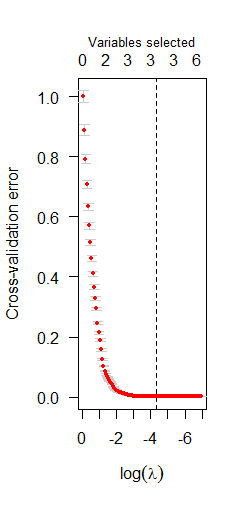}
    \caption{Optimal value of $\lambda$ for wheat data with SNP variables included}
    \label{lambdaSNIP}
    \end{subfigure}
    \hfill
     \begin{subfigure}[b]{0.4\textwidth}
    \centering
    \includegraphics[width = \textwidth]{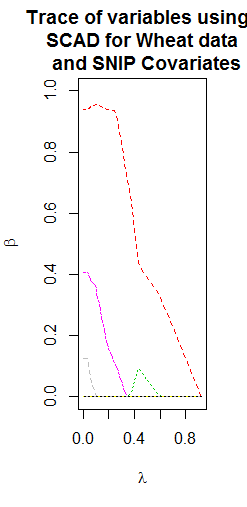}
    \caption{Trace of variables for wheat dataset with SNP variables included using SCAD }
    \label{structSNIP}
    \end{subfigure}
    \caption{Analysis of Wheat data with SNP covariates included in the model using SCAD}
    \label{DesignStructSNIP}
\end{figure}
Turning our attention to the sparsity methods, we observe from Fig. \ref{lambdaSNIP} that the optimal value of $\lambda$ is $\hat{\lambda} = 0.01311611$; this indicates that the optimal number of parameters is 3. From Fig. \ref{struct}, under SCAD, the variable chosen is $TKWT\cdot KPSM$, $KPSM$ and $TKWT$
and the model can be written as:
\begin{equation}
\label{DesignSCADSNIP}
    \widehat{YIELD}  = 0.13\cdot KPSM\cdot TKWT + 0.94\cdot KPSM
    + 0.41\cdot TKWT.
\end{equation}
Similarly, using ALASSO we have,
\begin{equation}
\label{DesignALASSOSNIP}
    \widehat{YIELD}  = 0.13\cdot KPSM\cdot TKWT + 0.94\cdot KPSM
    + 0.41\cdot TKWT.
\end{equation}
So, \eqref{DesignALASSOSNIP} and \eqref{DesignSCADSNIP} are identical.
We also note that $BIC$ and $\widehat{ERM}_{1}$ pick the same variables. $\widehat{ERM}_{2}$ picks a model of size 9 (Large by one term), $\hat{h}$ picks the most complex model, and SCAD and ALASSO pick a the smallest models, of size 3. We also note that in \eqref{DesignSCADSNIP} and \eqref{DesignALASSOSNIP} we have the interaction between $TKWT$ and $KPSM$ and their main effects. As noted before, $\hat{h}$ tends to be most sensitive to bias, so it is not surprising that it is the largest and includes all SNP's. On the other hand, the first SNP to appear on the list is $barc67$, $9^{th}$ on the list. This means SCAD, ALASSO, $\widehat{ERM}_{1}$ and $BIC$ choose no SNP's. However, $\widehat{ERM}_{2}$ includes $barc67$, and $\hat{h}$ includes all the SNP's that had no missing values. Moreover, $\hat{h}$ being larger than 32 -- the largest model size-- suggests that other SNP's that may have been excluded due to missing values, or other missing variables such as variety, may be critical to explain $YIELD$.

1\chapter{General Conclusions and Future Work}
\label{futurework}
\thispagestyle{fancy}

In this Dissertation, we have developed a general method to estimate the VCD $h$ for classes of regression models.

Our method rests on minimizing an upper bound on an error. The upper bound depends on $h$ and generates an estimator $\hat{h}$. We have applied our method to a variety of linear regression problems. Our method is highly data driven and at least for linear regression models with large enough sample sizes,  seems to give better results on simulated and real datasets in contrast with other established methods. Otherwise put,
at a minimum, the models selected by our method are, on balance, no worse and often better than models selected by other methods such as $BIC$, SCAD, ALASSO, and two forms of empirical risk minimization ($\widehat{ERM}$). The comparison between $\hat{h}$ and the $\widehat{ERM}$'s is not a surprise since the $\widehat{ERM}$'s are function of $\hat{h}$.

In Sec. \ref{GenreConcl}, we review what the Dissertation has accomplished, and in Sec. \ref{sectfuturework}, we identify gaps that remain to be filled and future work

\section{General Conclusions}
\label{GenreConcl}
In Chapter \ref{chap:Vapnik:Cherv:Dim:Cov:Num}, we reviewed the Vapnik-Chervonenkis concept of complexity often called a dimension here abbreviated $VCD$. Since it is relatively unfamiliar to statisticians, we recalled the geometric, combinatorial and covering number definitions of $VCD$. Then we summarized the work of Vapnik et al. \cite{Vapnik:etal:1994} and McDonald et al. \cite{McDonald:etal:2011} including the main theoretical results and the computational procedure.

All of this previous work was done for classification problems whereas our work generalizes it to regression problems. This led us to modify substantially not just the theory, but also the computing. Moreover we have had at various junctures to correct simplifying assumptions made by earlier authors so as to obtain good performance of the VCD as a model selection principle.

Chapter \ref{chap:bounds} presents the main mathematical results undergirding our methodology. Specifically, we convert a generic regression problem into $m$ classifications problems and apply corrected form of Vapnik et al to each of the $m$ problems. This gives us a bound on the Expected Maximum Difference between Two Empirical Losses (EMDBTEL)  in term of $h$, the correct VCD.

In Chapter \ref{chap:Numerical:Studies}, computational exploration showed this bound was not tight enough for good model selection. So we optimized the constant factor in the upper bound and changed the error criterion so it ressembled a cross-validation error rather than simply the difference between two models. We verified that the mathematics of Chapter \ref{chap:bounds} continued to hold with this change of the form of error.

The second major contribution of Chapter \ref{chap:Numerical:Studies} was to correct the Vapnik et al \cite{Vapnik:etal:1994} computational procedure so it would accurately encapsulate the mathematical quantities the upper bound used. We denoted this estimate of $h$ by $\hat{h}$. Our improved procedure is more computationally demanding. But not terribly so in term of running time. Thus, in our extensive simulations with linear models, we found the new method gave good results compared to Vapnik's original algorithm as well as compared to $BIC$ and two forms of Empirical Risk Minimization propose by Vapnik \cite{Vapnik:1998}.

In Chapter \ref{chap:Numerical:Studies} we also introduced the concept of `consistency at the true model' to help identify how exactly our method should be used. Indeed, all our method give is an estimate $\hat{h}$ of the $VCD$ of the data generator (DG). To obtain this estimate in practice, we must conjecture the data generator and compare the various $\hat{h}$'s from the DG's to the known true $h$'s from the DG's. Thus using $\hat{h}$ can always cut down a model list to a sub-list although in general $\hat{h}$ does not by itself identify a DG. To get around this problem we used nested model lists where the VCD of the models increased with their size. For these cases, $\hat{h}$ is able to give a unique model, apart from random variability -- which can be considerable.

Our used of $\hat{h}$ required the selection of design points to use in a non-linear regression. Different design points lead to different values of $\hat{h}$ as seen in Chapter \ref{chap:Numerical:Studies}, however, we note a sample size effect: As $n$ increases it seems that
\begin{enumerate}
    \item the design points matter less and less and
    \item the accuracy of $\hat{h}$ for selecting among nested models (with variables ordered for inclusion only by their absolute correlation with the response or by the order of inclusion using SCAD) increases.
\end{enumerate}

In chapter \ref{example}, we estimated $\hat{h}$ using the Tour De France Dataset. We fitted two set of models. The first set is based on the following covariates $Y, D, D^2, Y^2$, and  $Y:D$, the second contains $Y, D, S$, and $A$ (Definitions and notations as in Chapter \ref{example}). Whichever of the two model lists is chosen, $\hat{h} = 4$. That is, the best model to predict the average speed of the winner of the \sf{Tour De France} dataset should have $4$ variables. We conclude that the better model list is the one containing $Y$, $D$, $Y^2$, $D^2$.

Our analysis in this example is exhaustive so we can get an indication of how our method performs relative to $BIC$ and two forms of ERM. We emphasize that although this chapter uses real datasets it is really a sort of `toy' problem since the sample size $n$ is not large compared to the number of inferences we want to make.

In chapter \ref{Wheatdata}, we implemented our method on more complex data sets. We also performed sparsity driven analyses, using SCAD and ALASSO in addition to estimating $\hat{h}$, and finding $\widehat{ERM}_{1}$, $\widehat{ERM}_{2}$ and $BIC$. We examined 2 datasets that are commonly regarded as `difficult' or complex. We started with the {\sf{Abalone}} dataset and found $\hat{h}= 9$ no matter the conjectured linear model. However, there were only 7 variables since we only used first order models. We observed that SCAD and ALASSO picked the same variables, but the estimated coefficient for these variables were extremely different from the two models, this may be due to the high correlation between covariates. These models included $Shucked~weight$ but not $Length$, whereas the models chosen by $BIC$, $\widehat{ERM}_{1}$ and $\widehat{ERM}_{2}$ included $Length$ but not $Shucked ~weight$. That is, the sparsity methods led to the same variables, and $\widehat{ERM}_{1}$, $\widehat{ERM}_{2}$ and $BIC$ used the same variables (albeit a different set) while $\hat{h}$ included all the variables, suggesting that some variables are missing i.e there is unavoidable bias. We regard the model chosen by $\hat{h}$ as the most reasonable because of the non-linear relationship between the response and the explanatory variables.
\newline
The second dataset that we used was the wheat dataset. We performed the estimation of $\hat{h}$ using our method by location first and second we combined all locations to perform a multilocation analysis. Overall, we see that complexity varies from one location to another; this can be observed
by looking at the estimated $\hat{h}$, and the ordering of the variables in the models. The least complex location is Lincoln 2000 since the $\hat{h} = 1$
for all model lists.  We also observe that $TKWT \cdot KPSM$
was  the  most  important  variable  no  matter  which  location  you  choose  and  also had the highest correlation with yield.

Thus, we provide a complete analysis for all 7 locations using the phenotypic variables. Then we turned to two other analyses. The first used the phenotypic variables and the design structure variables, the second used the phenotypic variables and the SNP variables. That is we have shown how our method extends to models permitting two data types. In both cases, we found that the results were more complex that with uni-type datasets and that some SNP's and some design variable were important to include. We could have done multi-type data analysis including phenotypic variables, design variables and SNP variables, but lacked of time to complete this. We note that the {\sf{Wheat}} dataset has other variables in it e.g., variety, that we did not use. However,  using them might have made the effective sample size too small relative to the model classes we wanted to use.

Over all, we found that our estimated VCD $\hat{h}$ had higher values for real datasets than our comparably sized synthetic datasets indicating $\hat{h}$ was in fact reflecting the complexity of the DG. Moreover, in all cases, $\hat{h}$ gave results that were reasonable even when other methods gave models that were obviously too parsimonious or otherwise implausible. We attribute this to the sensitivity of $\hat{h}$ to bias. By contrast, often other methods have some level of built-in-sparsity so they give smaller models than are reasonable.

As a pragmatic recommendation, we therefore suggest that for complex datasets having a large enough $n$ that the conjecture model to be nested and the model selected by $\hat{h}$ be compared with the model selected by other methods. That is, we think that no one method for model selection is \textit{the} method to use. We think that judicious comparison of models selected unless different principles (sparsity, small bias, complexity, etc) is the appropriate way to do model selection. In this context, our method is simply another well performing method whose output should be taken into consideration for optimal model building.

\section{Future Work}
\label{sectfuturework}
No Dissertation can solve all the problems related to a new methodology and this Dissertation is no exception.. There are (at least) five problems that we would like to address in future developing the VCD for model selection.
\begin{enumerate}
    \item As it stands, $\hat{h}$ come from optimizing an upper bound. It is not clear how to re-do the analysis leading to the upper bound so as to derive a variance for $\hat{h}$. At the present, we can only suggest using a bootstrap approach: form $\hat{h}_{1}, \hat{h}_{2}, \cdots, \hat{h}_{B}$ from $B$ bootstrap samples and then form $\hat{\sigma}_{\hat{h}}$ from them.
    \item We have limited our work to linear models although our general theory applies to any class of regression models for which VCD's can be calculated. This includes Trees, Kernel methods, and neural networks, amongst others. These classes are more complicated but being non-linear may give better results in real problems.
    \item A theoretical gap in our derivation in Chapter \ref{chap:bounds} is that we only have an upper bound on the EMDBTEL. We have tried to make it tight computationally, but having either a provably tight upper bound on the EMDBTEL or a provably tight lower bound on the EMDBTEL would make our inferences more convincing.
    \item Another theoretical gap is to prove $\hat{h} \rightarrow h_{true}$ in probability. McDonald et al \cite{McDonald:etal:2011} tried to do this in the classification case but there are some gaps in their proof: For instance, they have a bound for the sum of absolute errors but they want a bound for the absolute error of the sum. We run into the exact same problem when we adapt their proof to regression, see Appendix \ref{ProofOfConsistency}. However we think we have overcome it.
    \item Although, we have shown in an example (see \ref{Design}, \ref{AnalSNP}) how our method can be used for two data types with linear models, the general question of how well our method performs in comparison to other methods for multi-type data remains to be explored. The issue is that as more data types become available the modeling becomes more complex and may outstrip $\hat{h}$'s ability to scale up with the number of data types and overall sample sizes.

    In Sec. \ref{Phenotype}, and \ref{MultiLoc} we compared a location-by-location analysis with pooled data analysis. This is not the same as a multi-type data, but is a facet of the multi-type data problem and shows some of the complexities that may occur.
\end{enumerate}
Obviously, this list is not complete, but it may serve as a warning that however well our method seems to perform in preliminary heats, reasonable questions about it remain to be answered before it is ready for general use.



\printnomenclature
\appendix

\chapter{CHAPTER 2 APPENDIX }
\label{AppenChap2}
\thispagestyle{fancy}

\section{Proof of Theorem \ref{Theo23and4} clause 1}

\begin{proof}
The LHS of the statement of the theorem equals
\begin{eqnarray}
  \int_{0}^{\infty}P\left(A_{\epsilon,m}\right)\emph{d}\epsilon
   \leq \int_{0}^{u}\emph{d}\epsilon + \int_{u}^{\infty}P\left(A_{\epsilon}\right)\emph{d}\epsilon
   \leq u +  2m\left(\frac{2ne}{h}\right)^{h}\int_{u}^{\infty}\exp\left(-\frac{n\epsilon^{2}}{m^{2}}\right)\emph{d}\epsilon .
\label{useid2341}
\end{eqnarray}
Observing that
$$
\epsilon>u \Rightarrow \frac{n\epsilon^{2}}{m^{2}}>\frac{nu\epsilon}{m^{2}}\Rightarrow \int_{u}^{\infty}\exp\left(-\frac{n\epsilon^{2}}{m^{2}}\right)\emph{d}\epsilon$$
$$\leq  \int_{u}^{\infty}\exp\left(-\frac{nu\epsilon}{m^{2}}\right)\emph{d}\epsilon=\frac{m^{2}}{nu}\exp\left(-\frac{nu^{2}}{m^{2}}\right),
$$
we have
\begin{equation}
    \label{expected2341}
    E\left(\sup_{\alpha_{1}, \alpha_{2}\in\Lambda}\left|\nu_{1}\left(Z_{2},\alpha_{1}\right)-\nu_{2}\left(Z_{1}, \alpha_{2}\right)\right|\right) \leq u + 2m^{3}\left(\frac{2ne}{h}\right)^h\frac{1}{nu}\exp\left(-\frac{nu^2}{m^2}\right).
\end{equation}
Let
$$
f(u) = u + 2m^{3}\left(\frac{2ne}{h}\right)^h\frac{1}{nu}\exp\left(-\frac{nu^2}{m^2}\right),
$$

$$
f^{'}(u) = 1 - 2m^{3}\left(\frac{2ne}{h}\right)^{h}\left[\frac{1}{nu^2} + \frac{2}{m^2}\right]\exp\left(-\frac{nu^2}{m^2}\right)
$$
$$
f^{'}(u) = 0 \Rightarrow \left[\frac{1}{nu^2} + \frac{2}{m^2}\right]\exp\left(-\frac{nu^2}{m^2}\right) = \frac{1}{2m^3}\left(\frac{h}{2ne}\right)^h
$$
\begin{eqnarray}
  \exp\left(-\frac{nu^2}{m^2}\right) &=& \frac{1}{2m^3}\left(\frac{h}{2ne}\right)^h\\
  -\frac{nu^2}{m^2} &=& \ln\left(\frac{1}{2m^3}\left(\frac{h}{2ne}\right)^h\right)\\
  u^2 &=& \frac{m^2}{n}\ln\left(2m^3\left(\frac{2ne}{h}\right)^h\right)\\
  u &=& m\sqrt{\frac{1}{n}\ln\left(2m^3\left(\frac{2ne}{h}\right)^h\right)}
  \label{234u1}
\end{eqnarray}
Substituting \ref{234u1} in \ref{expected2341} gives the statement of the proof.
\end{proof}

\section{Proof of Theorem \ref{Theo23and4} clause 2}

We will first state and proof the following Lemma

\begin{lemma}
\label{lema2342}
Let $\epsilon \geq 0$. The probability of the supremal difference between two empirical losses using cross-validation form of the error is bounded above by
\begin{equation}
    \begin{split}
        & P\left(\sup_{\alpha_{1}, \alpha_{2}\in \Lambda}\left|\nu_{1}\left(Z_{2},\alpha_{1}\right)- \nu_{2}\left(Z_{1},\alpha_{2}\right)\right|\geq\epsilon\right) \\
        & \leq  2P\left(\sup_{\alpha_{1}\in \Lambda}\left|\nu_{1}\left(Z_{1},\alpha_{1}\right)-E\left(Z_{2}, \alpha_{1}\right)\right|\geq\frac{\epsilon}{2}\right)
    \end{split}
\end{equation}
\end{lemma}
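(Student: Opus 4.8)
The plan is to replicate, almost verbatim, the argument used for Lemma~\ref{Lemma1}, inserting the common population risk as an intermediate quantity and then exploiting that the two half-samples are identically distributed. First I would center each empirical loss at its expectation: for fixed $\alpha_1,\alpha_2$ write
$$\nu_1(Z_2,\alpha_1)-\nu_2(Z_1,\alpha_2) = \left[\nu_1(Z_2,\alpha_1)-E(Q(Z,\alpha_1))\right] + \left[E(Q(Z,\alpha_1))-\nu_2(Z_1,\alpha_2)\right].$$
Applying the triangle inequality and passing to the supremum over $(\alpha_1,\alpha_2)$, the event that the supremal difference is at least $\epsilon$ is contained in the union of the two events in which each centered term is at least $\epsilon/2$, so subadditivity of probability bounds the left-hand side by the sum of the two corresponding probabilities.

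Second, I would argue that the two summands are equal in probability. Because $Z_1$ and $Z_2$ are IID copies of $Z$ and the two fitted models share the same structure (the same covariates and the same index set $\Lambda$), the law of $\sup_{\alpha}|\nu_1(Z_2,\alpha)-E(Q(Z,\alpha))|$ computed from the second half matches that of the corresponding quantity from the first half. Collapsing the two equal probabilities produces the factor of $2$ and leaves the single one-sided deviation $\sup_{\alpha_1\in\Lambda}|\nu_1(Z_1,\alpha_1)-E(Z_2,\alpha_1)|\ge\epsilon/2$ on the right. This mirrors exactly the role Lemma~\ref{Lemma1} played in the proof of Proposition~\ref{Prop4}: it reduces a two-sample (cross-validated) supremal difference to a one-sample deviation from the mean, after which Lemma~\ref{lemma2} and Theorem~\ref{Theo1}, Clause~3, can be brought to bear.

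The hard part will be that, unlike in Lemma~\ref{Lemma1}, the cross-validated quantity ranges over two independent parameters $\alpha_1$ and $\alpha_2$, one per half-sample, rather than a single shared $\alpha$. Centering at the population risk therefore threatens to leave a deterministic discrepancy $E(Q(Z,\alpha_1))-E(Q(Z,\alpha_2))$ with no counterpart in the single-parameter case. The cleanest way around this is to note that the supremum over the pair decouples, $\sup_{\alpha_1,\alpha_2}(\nu_1(Z_2,\alpha_1)-\nu_2(Z_1,\alpha_2)) = \sup_{\alpha_1}\nu_1(Z_2,\alpha_1)-\inf_{\alpha_2}\nu_2(Z_1,\alpha_2)$, and to insert the common expected risk evaluated at the maximizing and minimizing parameters so that the split is genuinely into two one-sided deviations; one then checks that the same reduction survives taking absolute values. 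I expect verifying this decoupling together with the IID symmetry step to be the only places that demand more than routine manipulation.
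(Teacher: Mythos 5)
Your first two steps are precisely the paper's own proof: insert the common population risk $E\left(Q(Z,\alpha_1)\right)$, split via the triangle inequality and subadditivity into two deviation probabilities at level $\epsilon/2$, and collapse them into a factor of $2$ using the fact that the two half-samples are identically distributed. (Your symmetry justification for the factor of $2$ is actually cleaner than the paper's, which instead tacks on an unexplained ``if'' clause asserting that one of the two probabilities dominates the other.)

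The genuine gap is in the step you yourself flag as the hard part, and your proposed repair does not close it. After decoupling $\sup_{\alpha_1,\alpha_2}\left(\nu_1(Z_2,\alpha_1)-\nu_2(Z_1,\alpha_2)\right)=\sup_{\alpha_1}\nu_1(Z_2,\alpha_1)-\inf_{\alpha_2}\nu_2(Z_1,\alpha_2)$ and inserting the expected risk at the extremizing parameters $\alpha_1^{*},\alpha_2^{*}$, you are left with
\begin{equation*}
\left[\nu_1(Z_2,\alpha_1^{*})-E\left(Q(Z,\alpha_1^{*})\right)\right]+\left[E\left(Q(Z,\alpha_1^{*})\right)-E\left(Q(Z,\alpha_2^{*})\right)\right]+\left[E\left(Q(Z,\alpha_2^{*})\right)-\nu_2(Z_1,\alpha_2^{*})\right],
\end{equation*}
and the middle, purely deterministic term is controlled by neither deviation probability on the right-hand side. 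No manipulation can remove it, because the inequality as stated is false whenever $\alpha\mapsto E\left(Q(Z,\alpha)\right)$ is non-constant: take $Q(z,\alpha)$ free of $z$, taking the values $0$ and $B$ as $\alpha$ ranges over $\Lambda$. Then every empirical loss equals its expectation, so the right-hand probability is $0$ for every $\epsilon>0$, while the left-hand supremum equals $B$, so the left-hand probability is $1$ for all $0<\epsilon\leq B$. For what it is worth, the paper's own proof conceals exactly this problem: in passing to its second deviation probability it silently replaces $E\left(Q(Z^{2},\alpha_{1})\right)$ by $E\left(Q(Z^{1},\alpha_{2})\right)$ inside the supremum, which is the invalid step. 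A correct statement requires either a single shared parameter ($\alpha_1=\alpha_2$, which is Lemma \ref{Lemma1}) or a left-hand side in which each empirical loss is centered at its own expectation; with the free double supremum as written, neither your argument nor the paper's goes through.
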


\begin{proof}
The LHS is bounded by
  \begin{eqnarray*}
&&   P\left(\sup_{\alpha_{1}, \alpha_{2}\in\Lambda}\left|\nu_{1}\left(Z^{2},\alpha_{1}\right)-E\left(Q(Z^{2},\alpha_{1})\right)+E\left(Q(Z^{2},\alpha_{1})-\nu_{2}\left(Z^{1},\alpha_{2}\right)\right)\right|\geq \epsilon\right)
\nonumber\\
   & \leq& P\left(\sup_{\alpha_{1}\in\Lambda}\left|\nu_{1}\left(Z^{2},\alpha_{1}\right)-E\left(Q(Z^{2},\alpha_{1}\right)\right| \geq\frac{\epsilon}{2}\right) +\\ && P\left(\sup_{\alpha_{2}\in\Lambda}\left|E\left(Q(Z^{1},\alpha_{2})\right)-\nu_{2}\left(Z^{1},\alpha_{2}\right)\right|\geq\frac{\epsilon}{2}\right)
\nonumber\\
   &=&2P\left(\sup_{\alpha_{1}\in\Lambda}\left|\nu_{1}\left(Z^{2},\alpha\right)-E\left(Q(Z^{2},\alpha_{1})\right)\right|\geq\frac{\epsilon}{2}\right).
\end{eqnarray*}
if
\begin{equation*}
    \begin{split}
        & P\left(\sup_{\alpha_{1}\in\Lambda}\left|\nu_{1}\left(Z^{2},\alpha\right)-E\left(Q(Z^{2},\alpha_{1})\right)\right|\geq\frac{\epsilon}{2}\right)  \geq \\
         & P\left(\sup_{\alpha_{2}\in\Lambda}\left|E\left(Q(Z^{1},\alpha_{2})\right)-\nu_{2}\left(Z^{1},\alpha_{2}\right)\right|\geq\frac{\epsilon}{2}\right)
    \end{split}
\end{equation*}
\end{proof}

\begin{proposition}
  \label{Prop42}
Let $h = VCDim\left\{Q\left(\cdot,\alpha\right):\alpha\in\Lambda\right\}$, where $Q\left(\cdot,\alpha\right)$ is unbounded, $n\in \cal{N}$ be the sample size. If $h< \infty$, and
$$
D_{p}\left(\alpha_{1}\right) = \int_{0}^{\infty}\sqrt[p]{P\left\{Q(Z_{2},\alpha_{1})\ge c\right\}} dc \leq \infty
$$
where $1 < p \leq 2$ is some fixed parameter,
we have
\begin{align}
  P\left(\sup_{\alpha_{1}, \alpha_{2}\in\Lambda}\left| \nu_{1}\left(Z^{2},\alpha_{1}\right)-\nu_{2}\left(Z^{1},\alpha_{1}\right)\right|\geq\epsilon\right) & \leq & 16\left(\frac{ne}{h}\right)^{h}\exp\left\{-\left(\frac{\epsilon n^{1-\frac{1}{p}}}{D_{p}(\alpha_{1}^{*})2^{2.5+\frac{1}{p}}}\right)^{2}\right\},
\label{probbd2}
\end{align}
where $\alpha_{1}^{*}=\sup_{\alpha_{1}\in\Lambda}D_{p}(\alpha_{1})$.
\end{proposition}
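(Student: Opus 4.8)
The plan is to prove Proposition \ref{Prop42} by adapting the proof of Proposition \ref{Prop4} to the cross-validation setting, exactly as Proposition \ref{NewProp2} adapted Proposition \ref{Prop2}. The overall strategy rests on three reductions that peel off factors of 2 in the exponential constant while each time invoking a result already established. First I would apply Lemma \ref{lema2342} to pass from the symmetric quantity to a one-sample deviation, picking up the leading factor of $2$:
\begin{equation*}
P\left(\sup_{\alpha_{1},\alpha_{2}\in\Lambda}\left|\nu_{1}(Z^{2},\alpha_{1})-\nu_{2}(Z^{1},\alpha_{2})\right|\geq\epsilon\right)\leq 2P\left(\sup_{\alpha_{1}\in\Lambda}\left|\nu_{1}(Z^{2},\alpha_{1})-E\left(Q(Z^{2},\alpha_{1})\right)\right|\geq\frac{\epsilon}{2}\right).
\end{equation*}

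Next I would split the one-sample supremal deviation into its positive and negative parts using the analogue of Lemma \ref{lemma2}, which introduces a second factor of $2$ and replaces $\epsilon/2$ by $\epsilon/4$ on each of the two resulting terms. The key structural point, exactly as in the proof of Proposition \ref{Prop4}, is that since $\alpha_{1}^{*}=\sup_{\alpha_{1}\in\Lambda}D_{p}(\alpha_{1})$ maximizes $D_{p}$, we have $1/D_{p}(\alpha_{1}^{*})\leq 1/D_{p}(\alpha_{1})$, so normalizing by the constant $D_{p}(\alpha_{1}^{*})$ only enlarges the event. This lets me introduce $\delta=\epsilon/(4D_{p}(\alpha_{1}^{*}))$ and apply Theorem \ref{Theo1}, Clause 3 to each of the positive and negative parts, bounding each by $4\exp\{(H_{ann}^{\Lambda,\beta}(n)/n^{2-2/p}-\delta^{2}/2^{1+2/p})n^{2-2/p}\}$. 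Summing the two parts gives a factor of $8$, and combined with the factor of $2$ from Lemma \ref{lema2342} this produces the $16$ in the statement.

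Finally I would bound the annealed entropy via Clause 1 of Theorem \ref{Theo1}, using $\exp(H_{ann}^{\Lambda,\beta}(n))\leq\exp(G^{\Lambda}(n))\leq(ne/h)^{h}$; the presence of $\beta$ in the exponent does not affect Clause 1, as noted in the proof of Proposition \ref{Prop4}. Substituting $\delta=\epsilon/(4D_{p}(\alpha_{1}^{*}))$ back and simplifying the exponent to $(\epsilon n^{1-1/p}/(D_{p}(\alpha_{1}^{*})2^{2.5+1/p}))^{2}$ yields the claimed bound. The only genuine obstacle is verifying that the cross-validation form of the empirical loss $\nu_{1}(Z^{2},\alpha_{1})$ — which evaluates the model fit on $Z^{1}$ using the held-out data $Z^{2}$ — still satisfies the hypotheses of Theorem \ref{Theo1}, Clause 3, since that clause is stated for a single-sample empirical average $\frac{1}{n}\sum_{i=1}^{n}Q(z_{i},\alpha)$ against its expectation. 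I expect this to go through because, for fixed $\alpha_{1}$, the quantity $\nu_{1}(Z^{2},\alpha_{1})$ is itself an empirical average of $n$ i.i.d.\ terms $Q(z_{i}^{2},\alpha_{1})$ whose common expectation is $E(Q(Z^{2},\alpha_{1}))$, so the change of which sample builds the model versus which sample is averaged over does not alter the distributional structure that Clause 3 requires — the same point that made the development on page \pageref{permid} carry over unchanged. Making this identification precise, and confirming that the level-set annealed entropy is the relevant complexity measure throughout, is the main thing to check carefully.
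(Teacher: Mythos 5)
Your proposal follows essentially the same route as the paper's own proof: the factor-of-$2$ reduction via Lemma \ref{lema2342}, the positive/negative split via Lemma \ref{lemma2} giving the factor of $8$, the normalization by $D_{p}(\alpha_{1}^{*})$ with $\delta = \epsilon/(4D_{p}(\alpha_{1}^{*}))$, and the application of Theorem \ref{Theo1}, Clauses 1 and 3, to obtain the $16\left(\frac{ne}{h}\right)^{h}$ bound. Your closing remark about why Clause 3 still applies to the cross-validated loss is exactly the justification the paper relies on (it notes that the cross-validation form of the error does not affect Theorem \ref{Theo1}), so there is no gap.
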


\begin{proof}
By Lemma \ref{lema2342}, we have
\begin{equation}
    \label{LHSLemma11}
    \begin{split}
       &  P\left(\sup_{\alpha_{1}, \alpha_{2}\in\Lambda}\left| \nu_{1}\left(Z^{2},\alpha_{1}\right)-\nu_{2}\left(Z^{1},\alpha_{2}\right)\right|\geq\epsilon\right) \leq \\
        & 2P\left(\sup_{\alpha_{1}\in\Lambda}\left|\nu_{1}\left(z^{2},\alpha_{1}\right)-E\left(Q\left(z^{2},\alpha_{1}\right)\right)\right|\geq\frac{\epsilon}{2}\right)
    \end{split}
\end{equation}

To bound the RHS of \ref{LHSLemma11}, use Lemma \ref{lemma2}  to observe that

\begin{align}
\label{posneg2}
  P\left(\sup_{\alpha\in\Lambda} \left| \nu_{n}\left(z,\alpha\right)-E\left(Q(z,\alpha)\right)\right|
\geq\frac{\epsilon}{2}\right)
& \leq
P\left(\sup_{\alpha\in\Lambda}\left|E\left(Q(z,\alpha)^{+}\right)-\frac{1}{n}\sum_{i=1}^{n}Q(z_{i},\alpha)^{+}\right| \geq\frac{\epsilon}{4}\right)
\nonumber\\
& \quad + P\left(\sup_{\alpha\in\Lambda}\left| E\left(Q(z,\alpha)^{-}\right)-\frac{1}{n}\sum_{i=1}^{n}Q(z,\alpha)^{-}\right| \geq\frac{\epsilon}{4}\right).
\end{align}

Each probability on the right hand side of \eqref{posneg2} can be bounded.  Since
$\alpha_{1}^*$ is a maximum, we have $1/D_{p}(\alpha_{1}^{*}) \leq 1/D_{p}(\alpha_{1})$.  Thus, for either the positive or negative parts in \eqref{posneg2} we have
\begin{equation*}
    \begin{split}
     & \sup_{\alpha_{1}\in\Lambda}\frac{E\left(Q(Z_{2},\alpha_{1})\right)-\nu_{1}(Z_{2},\alpha_{1})}{D_{p}(\alpha_{1}^{*})}\\
        & \leq \sup_{\alpha_{1}\in\Lambda}\frac{E\left(Q(Z_{2},\alpha_{1})\right)-\nu_{1}(Z_{2},\alpha_{1})}{D_{p}(\alpha_{1})} \\
       & P\left(\sup_{\alpha_{1}\in\Lambda}\frac{E\left(Q(Z_{2},\alpha_{1})\right)-\nu_{1}(Z_{2},\alpha_{1})}{D_{p}(\alpha_{1}^{*})}\geq\frac{\epsilon}{4D_{p}(\alpha_{1}^{*})}\right) \\
       & \leq P\left(\sup_{\alpha_{1}\in\Lambda}\frac{E\left(Q(Z_{2},\alpha_{1})\right)-\nu_{1}(Z_{2},\alpha_{1})}{D_{p}(\alpha_{1})}\geq\frac{\epsilon}{4D_{p}(\alpha_{1}^{*\emph{}})}\right) \Rightarrow\\
       & P\left(\sup_{\alpha_{1}\in\Lambda}\left(E\left(Q(Z_{2},\alpha_{1})\right)-\nu_{1}(Z_{2},\alpha_{1})\right)\geq\frac{\epsilon}{4}\right) \\
       & \leq P\left(\sup_{\alpha_{1}\in\Lambda}\frac{E\left(Q(Z_{2},\alpha_{1})\right)-\nu_{1}(Z_{2},\alpha_{1})}{D_{p}(\alpha_{1})}\geq\frac{\epsilon}{4D_{p}(\alpha_{1}^{*})}\right).
    \end{split}
\end{equation*}
Letting $\delta = \frac{\epsilon }{4D_{p}(\alpha^{*})}$ and using Theorem \ref{Theo1}, Clause 3, the last inequality gives
\begin{align}
\label{useclauseIII2}
  P\left(\sup_{\alpha_{1}\in\Lambda} \left(E\left(Q(Z_{2},\alpha_{1})\right)-\nu_{1}(Z_{2},\alpha_{1})\right)\geq\frac{\epsilon}{4} \right) & \leq 4\exp\left\{\left(\frac{H_{ann}^{\Lambda,\mathbb{\beta}}(n)}{n^{2-\frac{2}{p}}}-\frac{\delta^{2}}{2^{1+\frac{2}{p}}}\right)n^{2-\frac{2}{p}}\right\}.
\end{align}
Using \eqref{useclauseIII2}, both terms on the right in \eqref{posneg2} can be bounded.  This gives
\begin{align}
\label{8factor2}
  P\left(\sup_{\alpha_{1}\in\Lambda}\left| \nu_{1}(Z_{2},\alpha_{1})-E\left(Q(Z_{2},\alpha_{1})\right)\right| \geq\frac{\epsilon}{2}\right) & \leq 8\exp\left\{\left(\frac{H_{ann}^{\Lambda,\mathbb{\beta}}(n)}{n^{2-\frac{2}{p}}}-\frac{\delta^{2}}{2^{1+\frac{2}{p}}}\right)n^{2-\frac{2}{p}}\right\}.
\end{align}
\noindent
The presence of $\mathcal{\beta}$ in the the exponent of the hannealed entropy will not change the validity of Clause I of  Theorem \ref{Theo1}, so we have,
$
H_{ann}^{\Lambda,\mathbb{\beta}}(n) \leq G^{\Lambda}(n)\leq \ln\left(\frac{en}{h}\right)^{h}
$
therefore,
$
\exp\left(H_{ann}^{\Lambda, \mathbb{\delta}}(n)\right)\leq \exp\left(G(n)\right)\leq \left(\frac{en}{h}\right)^{h}.
$
Using this in \eqref{8factor2} gives
\begin{align*}
  P\left(\sup_{\alpha_{1}\in\Lambda}\left( \nu_{1}(Z_{2},\alpha_{1})-E\left(Q(Z_{2},\alpha_{1})\right)\right) \geq\frac{\epsilon}{2}\right) & \leq 8\left(\frac{ne}{h}\right)^{h}\exp\left\{-\left(\frac{\delta n^{1-\frac{1}{p}}}{2^{0.5+\frac{1}{p}}}\right)^{2}\right\}
\nonumber\\
  & = 8\left(\frac{ne}{h}\right)^{h}\exp\left\{-\left(\frac{\epsilon n^{1-\frac{1}{p}}}{D_{p}(\alpha_{1}^{*})2^{2.5+\frac{1}{p}}}\right)^{2}\right\}.
\end{align*}
Recalling the extra factor of 2 in  inequality \eqref{LHSLemma11} gives the statement of the Proposition.
\end{proof}

Now let prove Theorem \ref{Theo23and4} clause 2.
\begin{proof}
Using the integral of probabilities identity and Proposition \ref{Prop42}, the left-hand side of \eqref{Hannbd} equal
\begin{eqnarray*}
&&  \int_{0}^{\infty}P\left(\sup_{\alpha_{1}, \alpha_{2}\in\Lambda}\left|\nu_{1}(Z_{2},\alpha_{1})-\nu_{2}(Z_{1},\alpha_{2})\right|\geq\epsilon\right) \emph{d}\epsilon\nonumber\\
   &\leq& \int_{0}^{\infty}16\left(\frac{ne}{h}\right)^{h}\exp\left\{-\left(\frac{\epsilon n^{1-\frac{1}{p}}}{D_{p}(\alpha_{1}^{*})2^{2.5+\frac{1}{p}}}\right)^{2}\right\}\emph{d}\epsilon \nonumber\\
   &\equiv& \int_{0}^{u}\emph{d}\epsilon +16\left(\frac{ne}{h}\right)^{h}\int_{u}^{\infty}\exp\left\{-\left(\frac{\epsilon n^{1-\frac{1}{p}}}{D_{p}(\alpha_{1}^{*})2^{2.5+\frac{1}{p}}}\right)^{2}\epsilon^{2}\right\}\emph{d}\epsilon \nonumber\\
   &\leq& u + 16\left(\frac{ne}{h}\right)^{h}\int_{u}^{\infty}\exp\left\{-\left(\frac{n^{1-\frac{1}{p}}}{D_{p}(\alpha_{1}^{*})2^{2.5+\frac{1}{p}}}\right)^{2}u\epsilon\right\}\emph{d}\epsilon \nonumber\\
   &=&  u  + \left(\frac{D_{p}(\alpha_{1}^{*})2^{2.5+\frac{1}{p}}}{n^{1-\frac{1}{p}}}\right)^{2}\frac{16\left(\frac{ne}{h}\right)^{h}}{u}
\exp\left\{-\left(\frac{un^{1-\frac{1}{p}}}{D_{p}(\alpha_{1}^{*})2^{2.5+\frac{1}{p}}}\right)^{2}\right\}.\\
\end{eqnarray*}
Choosing
$$
u = D_{p}(\alpha_{1}^{*})2^{2.5+\frac{1}{p}}\frac{\sqrt{h\ln\left(\frac{ne}{h}\right)}}{n^{1-\frac{1}{p}}}
$$
gives the statement of the Theorem.
\end{proof}

\section{Proof of Theorem \ref{Theo23and4} clause 3}
\label{Clause3Appen}

\begin{proof}
 Theorem \ref{Theo23and4} clause 1 gives
\begin{eqnarray}\label{E162}
  E\left(\sup_{\alpha_{1}, \alpha_{2}\in\Lambda}\left|\nu_{1}^{*}\left(Z_{2},\alpha_{1},m\right)-\nu_{2}^{*}\left(Z_{1},\alpha_{2},m\right)\right|\right) &\leq& \sqrt{\frac{\ln(2m)}{n} + \frac{h}{n}\ln\left(\frac{2ne}{h}\right)} \nonumber\\
   &\overset{\infty}{=}& \sqrt{\frac{h}{n}\ln\left(\frac{2ne}{h}\right)},
\end{eqnarray}
where $\overset{\infty}{=}$ indicates a limit as $n \rightarrow \infty$ has been taken.
Similarly, Theorem \ref{Theo23and4} clause 2 gives
\begin{equation}\label{E172}
  E\left(\sup_{\alpha_{1}, \alpha_{2}\in\Lambda}\left|\nu_{1}\left(Z_{2},\alpha_{1}\right)-\nu_{2}\left(Z_{1},\alpha_{2}\right)\right)|\right)\leq 8D_{p}\left(\alpha_{1}^{*}\right)\sqrt{\frac{h}{n}+\frac{h}{n}\ln\left(\frac{2ne}{h}\right)}+E(n,h)
\end{equation}
where $E(n,h)$ is of smaller order than the first term on the right in \ref{E17}. Thus,
\begin{equation}\label{OVERALLBOUND2}
 E\left(\sup_{\alpha_{1},\alpha_{2}\in\Lambda}\left|\nu_{1}\left(Z_{2},\alpha_{1}\right)-\nu_{2}\left(Z_{1},\alpha_{2}\right)\right)|\right)\leq 8D_{p}\left(\alpha_{1}^{*}\right)\sqrt{\frac{h}{n}\ln\left(\frac{2ne}{h}\right)}.
\end{equation}
Taking the minimum over the RHS of equations \eqref{E162} and \eqref{OVERALLBOUND2} gives the Theorem.
\end{proof}

\section{Proof of Consistency}
\label{ProofOfConsistency}
Here, we offer a proof of consistency of $\hat{h}$ to $h$. In almost  all respects this tentative of proof should be credited to \cite{McDonald:etal:2011}. Our contribution is the examination of the gaps in their proof. Our two ugly hypotheses and the discussion about how to satisfy them fill the gaps in their proof.  We have put it in as appendix rather than the main part of the thesis because we have not finished verifying all the details satisfactorily. Instead, we have not convince ourselves it is error free. Nevertheless, we have the following. $\epsilon_{i}$'s represent the $\xi_{i}$'s in Algorithm \ref{Algo1}

\begin{lemma}
\label{Lemma1consis}
Let $t\geq 0$ and let $\epsilon\left(n_{l}\right) = \frac{1}{W}\sum_{i=1}^{W}\epsilon_{i}\left(n_{l}\right)$, where $\epsilon_{i}\left(n_{l}\right)$'s are independent for any given $n_{l}$'s. Then at any design point $n_{l}$, we have
\begin{equation}
 \label{chernoffLemma}
 E\left(\exp\left(t\epsilon_{i}\left(n_{l}\right)\right)\right) \leq \exp\left(\frac{t^{2}B^{2}}{8Wm^{2}}\right)
\end{equation}
\end{lemma}

\begin{proof}
 Fix $j = 0, 1, 2, \cdots, m-1$. Now, for any $t\geq 0$ and $i \geq 0$, and $\epsilon_{i}\left(n_{l}\right) \in \left(\frac{jB}{m}, \frac{(j+1)B}{m}\right]$, using Hoeffding's lemma we have
 \begin{equation*}
     E\left(\exp\left(t\epsilon_{i}\left(n_{l}\right)\right)\right)\leq \exp\left(\frac{t^{2}B^{2}}{8m^{2}}\right).
 \end{equation*}
 Thus, we have
 \begin{eqnarray} \label{consisLemma1}
    E\left(\exp\left(t\epsilon_{i}\left(n_{l}\right)\right)\right) &=& E\left(\exp\left(\frac{t}{W}\sum_{i=1}^{W}\epsilon_{i}\left(n_{l}\right)\right)\right) \nonumber \\
    &=& E\left(\Pi_{i=1}^{W}\exp\left(\frac{t}{W}\epsilon_{i}\left(n_{l}\right)\right)\right) \nonumber \\
    &=& \Pi_{i=1}^{W}E\left(\exp\left(\frac{t}{W}\epsilon_{i}\left(n_{l}\right)\right)\right) \nonumber \\
    &\leq & \Pi_{i=1}^{W}\exp\left(\frac{t^{2}b^{2}}{8Wm^{2}}\right) \nonumber \\
    &=& \exp\left(\frac{t^{2}B^{2}}{8Wm^{2}}\right)
 \end{eqnarray}
\end{proof}

\begin{proposition}
\label{Prop1Consistency}
  Suppose that $\left\{\epsilon_{l} \equiv \epsilon\left(n_{l}\right), l = 1,2 \cdots k\right\}$ is a set of random variables satisfying Lemma \ref{Lemma1consis}. Then for any $\gamma \in \mathbb{R}^{k}$ and $\rho \geq 0$, we have
  \begin{equation}
   \label{Prop1ConEq1}
   P\left(\mid \sum_{l=1}^{k}\epsilon_{l}\gamma_{l} \mid\geq \rho \right) \leq 2\exp\left(-\frac{2Wm^{2}\rho^{2}}{B^{2}\sum_{l=1}^{k}\gamma_{l}^{2}}\right)
  \end{equation}
\end{proposition}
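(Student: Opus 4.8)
The plan is to prove Proposition \ref{Prop1Consistency} by a standard Chernoff/Hoeffding argument, exploiting the moment-generating bound from Lemma \ref{Lemma1consis}. First I would reduce the two-sided probability to a one-sided one via the union bound,
\begin{equation*}
P\left(\left|\sum_{l=1}^{k}\epsilon_{l}\gamma_{l}\right|\geq\rho\right)\leq P\left(\sum_{l=1}^{k}\epsilon_{l}\gamma_{l}\geq\rho\right)+P\left(-\sum_{l=1}^{k}\epsilon_{l}\gamma_{l}\geq\rho\right),
\end{equation*}
so that it suffices to bound each tail by $\exp\left(-2Wm^{2}\rho^{2}/(B^{2}\sum_{l}\gamma_{l}^{2})\right)$ and then add the two identical bounds to pick up the factor of $2$.

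For the upper tail I would apply the exponential Markov (Chernoff) inequality: for any $t\geq 0$,
\begin{equation*}
P\left(\sum_{l=1}^{k}\epsilon_{l}\gamma_{l}\geq\rho\right)\leq e^{-t\rho}\,E\left(\exp\left(t\sum_{l=1}^{k}\epsilon_{l}\gamma_{l}\right)\right).
\end{equation*}
Using independence of the $\epsilon_{l}$ across design points, the expectation factors as $\prod_{l=1}^{k}E\left(\exp\left(t\gamma_{l}\epsilon_{l}\right)\right)$. Here I would invoke Lemma \ref{Lemma1consis} with $t$ replaced by $t\gamma_{l}$ at each design point $n_{l}$, which gives $E\left(\exp\left(t\gamma_{l}\epsilon_{l}\right)\right)\leq\exp\left(t^{2}\gamma_{l}^{2}B^{2}/(8Wm^{2})\right)$. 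Multiplying these yields
\begin{equation*}
P\left(\sum_{l=1}^{k}\epsilon_{l}\gamma_{l}\geq\rho\right)\leq\exp\left(-t\rho+\frac{t^{2}B^{2}\sum_{l=1}^{k}\gamma_{l}^{2}}{8Wm^{2}}\right).
\end{equation*}
The exponent is a quadratic in $t$ that I would minimize over $t\geq 0$; the minimizer is $t^{*}=4Wm^{2}\rho/(B^{2}\sum_{l}\gamma_{l}^{2})$, and substituting $t^{*}$ produces precisely the exponent $-2Wm^{2}\rho^{2}/(B^{2}\sum_{l}\gamma_{l}^{2})$. The lower tail is handled identically after replacing $\gamma_{l}$ by $-\gamma_{l}$, which leaves $\sum_{l}\gamma_{l}^{2}$ unchanged, so the same bound applies.

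I do not anticipate a genuine obstacle here, since this is the classical weighted-sum Hoeffding bound; the only delicate points are bookkeeping. The main thing to verify carefully is that Lemma \ref{Lemma1consis} may legitimately be applied with the scaled parameter $t\gamma_{l}$ at each $n_{l}$ (one must check the lemma's statement holds for the relevant sign of $\gamma_{l}$, which is why centering or the symmetric two-sided reduction is needed if the $\epsilon_{l}$ are not assumed mean-zero). If the $\epsilon_{l}$ are not centered, I would either assume the centering implicit in Lemma \ref{Lemma1consis} or absorb a mean term; given that Lemma \ref{Lemma1consis} is stated as a clean sub-Gaussian MGF bound, the cleanest route is to treat that bound as supplied and simply carry out the tensorization, Chernoff optimization, and union bound as above.
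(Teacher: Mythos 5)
Your proposal is correct and follows essentially the same route as the paper's own proof: the exponential Markov (Chernoff) inequality, factorization of the moment generating function by independence of the $\epsilon_{l}$, the bound of Lemma \ref{Lemma1consis} applied with parameter $t\gamma_{l}$, and optimization at $t^{*}=4Wm^{2}\rho/(B^{2}\sum_{l=1}^{k}\gamma_{l}^{2})$, yielding the exponent $-2Wm^{2}\rho^{2}/(B^{2}\sum_{l=1}^{k}\gamma_{l}^{2})$. The one place you differ is the two-sided reduction: you use the union bound and handle each tail separately (noting $\gamma_{l}\mapsto-\gamma_{l}$ leaves $\sum_{l}\gamma_{l}^{2}$ unchanged), which is actually tighter bookkeeping than the paper's opening step $P\left(\left|\sum_{l}\epsilon_{l}\gamma_{l}\right|\geq\rho\right)\leq 2P\left(\sum_{l}\epsilon_{l}\gamma_{l}\geq\rho\right)$ --- an inequality that is not valid in general without a symmetry assumption, but whose intended conclusion your two-tail argument establishes correctly.
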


\begin{proof}

 \begin{eqnarray}
    P\left(\mid \sum_{l=1}^{k}\epsilon_{l}\gamma_{l} \mid\geq \rho \right) &\leq& 2  P\left(\sum_{l=1}^{k}\epsilon_{l}\gamma_{l} \geq \rho \right)  \nonumber \\
    &=& 2P\left(\exp\left(t\sum_{l=1}^{k}\epsilon_{l}\rho_{l}\right) \geq \exp(t\rho)\right)  \nonumber \\
    &\leq& \frac{2E\left(\exp\left(t\sum_{l=1}^{k}\epsilon_{l}\rho_{l}\right)\right)}{\exp(t\rho)}  \nonumber \\
    &=& \frac{2E\left(\Pi_{l=1}^{k}\exp\left(t\epsilon_{l}\gamma_{l}\right)\right)}{\exp(t\rho)} \nonumber \\
    &=& \frac{2\Pi_{l=1}^{k}E\left(\exp\left(t\epsilon_{l}\gamma_{l}\right)\right)}{\exp(t\rho)}  \nonumber \\
    &\leq& \frac{2\Pi_{l=1}^{k}\exp\left(\frac{t^{2}B^{2}\gamma_{l}^{2}}{8Wm^{2}}\right)}{\exp(t\rho)} \nonumber \\
    &=& 2\exp\left(\frac{t^{2}B^{2}}{8Wm^{2}}\sum_{l=1}^{k}\gamma_{l}^{2} - t\rho\right)
    \label{eqProp1conc}
 \end{eqnarray}
 Since \ref{eqProp1conc} is true for all $t$, the RHS of \ref{eqProp1conc} attains its minimum for
 \begin{equation}
  \label{consiT}
  t = \frac{8Wm^{2}\rho}{2B^{2}\sum_{l=1}^{k}\gamma_{l}^{2}}
 \end{equation}
 Replacing $t$ by its value in \ref{eqProp1conc} gives the statement of the proposition.
\end{proof}

\begin{definition}
\label{DefEntropy}
$\Phi = \left\{\phi_{h} : h\in \left(1, M\right]\right\}$. For $s = 0, 1, 2, \cdots,$ let $\{\phi_{j}^{s}\}_{j=1}^{N_{s}}$ be the minimal $2^{-s}\tau$-covering set of $\left(\Phi, \parallel \cdot \parallel_{Q}\right).$ So $N_{s} = N\left(2^{-s}\tau, \Phi\right)$ and for each $h$, there exists a $\phi_{h}^{s} \in \left\{\phi_{1}^{s}, \phi_{2}^{s}, \cdots, \phi_{N_{s}}^{s}\right\}$ such that $\parallel\phi_{h} - \phi_{h}^{s} \parallel_{Q} ~ \leq ~2^{-s}\tau$. We choose $\phi_{h}^{0}\equiv 0$, since $\parallel\phi_{h}\parallel_{Q}\leq \tau$.
The key argument is based on the entropy of the restricted class $\Phi(\tau) = \left\{\phi_{h}\in \Phi : \parallel \phi_{h}-\phi_{h_{T}}\parallel_{Q} \leq \tau\right\}$. $\phi_{h}\left(n\right)$ is defined as follows:
\begin{equation}
 \label{phibound}
 \phi_{h}\left(n\right) = \hat{c}\sqrt{\frac{h}{n}\log\left(\frac{2ne}{h}\right)}
\end{equation}
\end{definition}

We assume without any proof that the following theorem is true.
\begin{thm}
\label{Theo23456}
Let assume that we have a collection of functions inside $Im(\Phi)$ so that
$\parallel\phi_{h} - \phi_{h}^{s} \parallel_{2} ~ \leq ~2^{-s}\tau$ $\Longrightarrow $ $\parallel\phi_{h} - \phi_{h}^{s} \parallel_{1} ~ \leq ~2^{-s}c^{'}f(\tau)$, where $f(\tau)$ is an increasing function of $\tau$.
\end{thm}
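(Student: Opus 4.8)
The plan is to recognize that the claimed implication is, at bottom, a comparison between the $L_1$ and $L_2$ norms induced by the measure $Q$, and to produce it from H\"older's inequality. First I would write the difference $g = \phi_h - \phi_h^s$ and apply Cauchy--Schwarz in the form
\[
\|g\|_1 = \int |g| \, dQ = \int |g| \cdot 1 \, dQ \le \left(\int |g|^2 \, dQ\right)^{1/2}\left(\int 1 \, dQ\right)^{1/2} = \|g\|_2 \, Q(\Omega)^{1/2}.
\]
When $Q$ is a probability measure this collapses to $\|g\|_1 \le \|g\|_2$, so the hypothesis $\|g\|_2 \le 2^{-s}\tau$ immediately yields $\|g\|_1 \le 2^{-s}\tau$; that is, the statement holds with $c' = 1$ and $f(\tau) = \tau$, which is manifestly increasing. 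For a general finite measure the same computation gives $c' = Q(\Omega)^{1/2}$ and again $f(\tau) = \tau$. The homogeneity of both norms is what preserves the factor $2^{-s}$ automatically, so no scaling is lost in this easy regime.

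The genuine work appears only when the functions in $Im(\Phi)$ live on a domain of infinite $Q$-measure, or are themselves unbounded, since then $Q(\Omega)^{1/2}$ is no longer finite and the naive bound is useless. Here I would truncate: fix a level $R = R(\tau)$ and split $\int |g| \, dQ$ into the part where $|\phi_h| \le R$ and the tail. On the bounded part Cauchy--Schwarz over a set of finite measure is valid and contributes a multiple of $\|g\|_2 \le 2^{-s}\tau$. On the tail I would exploit the explicit shape $\phi_h(n) = \hat{c}\sqrt{\frac{h}{n}\log\left(\frac{2ne}{h}\right)}$ together with the standing constraint $h \in (1, M]$: because $\phi_h(n)$ decays like $\sqrt{\log n / n}$ and $h$ is bounded by $M$, the tail integral can be controlled uniformly in $h$, and the dependence of the truncation level on $\tau$ is absorbed into a single increasing function $f(\tau)$. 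The output is then an inequality of exactly the stated form, with $c'$ depending on $M$ and $\hat{c}$ but not on $s$ or $h$.

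The main obstacle I anticipate is uniformity: the constant $c'$ and the function $f$ must be chosen once and for all, independently of the scale index $s$, of $h \in (1,M]$, and of which covering element $\phi_h^s$ is selected, while still preserving the dyadic factor $2^{-s}$. This uniformity is precisely what the ``ugly hypotheses'' flagged earlier in the excerpt are meant to guarantee, and verifying that they hold for the specific restricted class $\Phi(\tau) = \{\phi_h \in \Phi : \|\phi_h - \phi_{h_{T}}\|_Q \le \tau\}$ --- rather than merely asserting the abstract $L_1$/$L_2$ comparison --- is the delicate point. In particular one must check that the truncation does not destroy homogeneity in the covering radius; I would handle this by keeping $R$ tied to $\tau$ rather than to $2^{-s}\tau$, so that the $2^{-s}$ factor rides through the bounded part untouched while the tail contributes a term already of the required order.
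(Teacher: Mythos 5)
First, a fact you could not have known: the paper offers \emph{no} proof of Theorem \ref{Theo23456}. It is introduced with the sentence ``We assume without any proof that the following theorem is true,'' and it is one of the two ``ugly hypotheses'' that the author flags at the start of Appendix \ref{ProofOfConsistency} as the unverified steps in the adaptation of McDonald et al.'s consistency argument. So there is no paper proof to compare yours against; you are attempting to fill a gap the author deliberately left open. Your first step does fill it in the only setting where the statement is actually used: the norm $\parallel\cdot\parallel_{Q}$ of Definition \ref{DefEntropy} is the empirical norm over the $k$ design points $n_{1},\dots,n_{k}$ (this is how it enters Corollary \ref{CoroConsis} and Theorem \ref{TheoConsis}), i.e.\ an $L_{2}$ norm with respect to a finite measure, and there Cauchy--Schwarz gives $\parallel g\parallel_{1}\leq Q(\Omega)^{1/2}\parallel g\parallel_{2}$, hence the implication with $f(\tau)=\tau$ and $c^{'}=Q(\Omega)^{1/2}$ (equal to $1$ after normalization, $\sqrt{k}$ without it).

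Your second step, the truncation argument for infinite $Q$-measure, has genuine errors and is in any case moot. The truncation is oriented backwards: since $\phi_{h}(n)=\hat{c}\sqrt{(h/n)\log(2ne/h)}$ decreases to zero in $n$, the set $\left\{|\phi_{h}|\leq R\right\}$ is a neighbourhood of infinity with \emph{infinite} measure, which is exactly where ``Cauchy--Schwarz over a set of finite measure'' is unavailable; the finite-measure piece is $\left\{|\phi_{h}|>R\right\}$. Worse, the regime you are trying to cover cannot arise for this family: for $h\neq h^{'}$ in $(1,M]$ one computes $\phi_{h}(n)-\phi_{h^{'}}(n)\asymp(h-h^{'})\sqrt{\log n/n}$, and since $\int_{1}^{\infty}(\log n/n)\,dn=\infty$ these differences are not even in $L_{2}$ over an infinite domain, so the hypothesis $\parallel\phi_{h}-\phi_{h}^{s}\parallel_{2}\leq 2^{-s}\tau$ could never be satisfied and the covering sets of Definition \ref{DefEntropy} would not exist. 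The only meaningful reading of the theorem is over the finite design-point measure, where your first paragraph already settles the question; no decay or tail analysis of $\phi_{h}$ is needed or possible.

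One further caution about scope: proving the statement as literally written still does not repair the consistency proof, which is presumably why the author parked it as a hypothesis rather than invoking Cauchy--Schwarz. In the chaining step the bound is applied \emph{pointwise}: display \eqref{eq2norm} is used at every design point in order to pull the factor $3\cdot2^{-s}\tau$ out of $\frac{1}{k}\left|\sum_{l=1}^{k}\epsilon_{l}\left(\phi_{h}^{s}-\phi_{h}^{s-1}\right)(n_{l})\right|$ in \eqref{bbq1}, and empirical $L_{2}$ or $L_{1}$ control of size $3\cdot2^{-s}\tau$ only yields a pointwise bound of size $3\cdot2^{-s}\tau\sqrt{k}$; that lost $\sqrt{k}$ (together with the sum-versus-absolute-value issue the author concedes in Chapter \ref{futurework}) is the real unfilled gap. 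For this particular one-parameter family a cleaner route exists: $h\mapsto\phi_{h}(n)$ is Lipschitz on $(1,M]$ uniformly over design points bounded away from zero, so one can build sup-norm covers of $\Phi$ directly and bypass the $L_{2}$-to-$L_{1}$ comparison altogether.
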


\begin{corollary}
\label{CoroConsis}
If $\sup_{\phi_{h}\in \Phi(\tau)}
\parallel \phi_{h} \parallel_{Q} ~\leq ~ \tau$ and the conclusion of Lemma \ref{Lemma1consis} holds for all design points $n_{l}$, then for
\begin{equation}
 \label{deltaeq}
 \delta \geq \frac{2B\tau}{m\sqrt{kW}}\left(288\log\left(2\right)\right)^{0.5},
\end{equation}
we have

\begin{equation}
 \label{CorProp1}
 P\left(\sup_{\phi_{h}\in \Phi}\left|\frac{1}{k}\sum_{l=1}^{k}\epsilon_{l}\phi_{h}\left(n_{l}\right)\right|\geq \delta\right) \leq  4\exp\left(-\frac{kWm^{2}\delta^{2}}{1152B^{2}\tau^{2}}\right).
\end{equation}
\end{corollary}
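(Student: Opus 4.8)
The plan is to establish the corollary via a chaining argument over the covering sets of $\Phi(\tau)$ introduced in Definition \ref{DefEntropy}, combined with the concentration bound of Proposition \ref{Prop1Consistency}. First I would set up the chaining decomposition: for each $\phi_h \in \Phi(\tau)$ and each resolution level $s$, write the telescoping identity
\begin{equation}
\phi_h = \phi_h^0 + \sum_{s=1}^{\infty}\left(\phi_h^s - \phi_h^{s-1}\right),
\end{equation}
where $\phi_h^0 \equiv 0$ and $\phi_h^s$ is the element of the minimal $2^{-s}\tau$-covering set closest to $\phi_h$. By the triangle inequality and the covering property, each increment satisfies $\|\phi_h^s - \phi_h^{s-1}\|_Q \leq \|\phi_h^s - \phi_h\|_Q + \|\phi_h - \phi_h^{s-1}\|_Q \leq 2^{-s}\tau + 2^{-(s-1)}\tau = 3\cdot 2^{-s}\tau$. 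This controls the $Q$-norm of each link so that Proposition \ref{Prop1Consistency} can be applied to the inner product $\sum_l \epsilon_l(\phi_h^s - \phi_h^{s-1})(n_l)$ with $\gamma_l$ given by the increment evaluated at the design points.

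Next I would allocate the error budget $\delta$ across the resolution levels. The standard device is to assign to level $s$ a fraction $\delta_s = \delta \cdot a_s$ with $\sum_s a_s = 1$, choosing $a_s$ to decay geometrically (for instance proportional to $2^{-s/2}$ balanced against the growth of $\log N_s$) so that the union bound over the $N_s \cdot N_{s-1}$ pairs of covering elements at each level yields a convergent sum of exponential tail terms. For each fixed level, I would apply Proposition \ref{Prop1Consistency} with $\rho = k\delta_s$ and the appropriate $\sum_l \gamma_l^2 \leq k \cdot (3\cdot 2^{-s}\tau)^2$, obtaining a bound of the form $2 N_s N_{s-1}\exp(-2Wm^2 k \delta_s^2 / (B^2 (3\cdot 2^{-s}\tau)^2))$. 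Summing these over $s$, absorbing the log-covering-number factors using the entropy control implicit in Definition \ref{DefEntropy}, and tracking the numerical constants should collapse the geometric series into the single clean exponential $4\exp(-kWm^2\delta^2/(1152 B^2\tau^2))$, with the constant $1152 = 128\cdot 9$ and the constant $288$ in the lower bound \eqref{deltaeq} on $\delta$ emerging precisely so that the tail of the chaining series is summable and the leading factor is bounded by $4$.

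The main obstacle I anticipate is bookkeeping the constants so that they match the stated $1152$ and $288$ exactly, and more substantively, justifying that the number of covering elements $N_s = N(2^{-s}\tau, \Phi)$ grows slowly enough (polynomially in $2^s$) that $\log N_s$ does not overwhelm the $2^{-s}$ decay of the per-level norm bound. This is exactly where Definition \ref{DefEntropy} and the finite-VCD structure are needed: the covering number bound on a VC-type class (analogous to the covering number estimate $N(\epsilon,\mathcal{F},L_r)\leq Kh(4e)^h/\epsilon^{r(h-1)}$ quoted in Section \ref{CoverNumberEntropy}) gives $\log N_s = O(s)$, which is dominated by the quadratic exponent. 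A secondary subtlety is that the condition $\delta \geq \frac{2B\tau}{m\sqrt{kW}}(288\log 2)^{0.5}$ is precisely the threshold ensuring the $s=1$ term already dominates and the series starting from it is controlled; I would verify that at this threshold the first chaining term contributes the factor $4$ rather than a larger constant. I would handle the passage from the discrete covering approximants back to the full supremum over $\phi_h \in \Phi$ by noting that $\|\phi_h - \phi_h^s\|_Q \to 0$, so the supremal deviation is realized in the limit and the union over the (countably many) covering elements suffices.
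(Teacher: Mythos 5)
Your telescoping decomposition over the $2^{-s}\tau$-nets of Definition \ref{DefEntropy} and your increment bound $\|\phi_{h}^{s}-\phi_{h}^{s-1}\|_{Q}\leq 3\cdot 2^{-s}\tau$ coincide with the paper's, but the core mechanism diverges, and the divergence is exactly where your argument cannot deliver the inequality as stated. You run classical chaining: at each level you take a union bound over the $N_{s}\cdot N_{s-1}$ pairs of net points and then hope to ``absorb'' the $\log N_{s}$ factors into the exponent. But \eqref{CorProp1} contains no covering-number factor at all, and the threshold \eqref{deltaeq} contains no entropy term. With your allocation $\delta_{s}\propto 2^{-s}\sqrt{s}\,\delta$, the level-$s$ exponent is of order $s\cdot kWm^{2}\delta^{2}/(1152B^{2}\tau^{2})$, so absorbing $\log(N_{s}N_{s-1})\asymp c's$ forces $kWm^{2}\delta^{2}/(1152B^{2}\tau^{2})\geq \log 2 + c'$, i.e.\ a lower bound on $\delta$ that grows with the entropy constant $c'$ of $\Phi$ (even for the one-parameter family $\{\phi_{h}:h\in(1,M]\}$, $c'$ depends on $M$ and $\tau$). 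Hypothesis \eqref{deltaeq} only buys you $\geq \log 2$. So your route yields either an extra covering-number-dependent factor in front of the exponential, or requires a strictly stronger hypothesis on $\delta$; the constants $288$ and $1152$ and the leading factor $4$ do not ``emerge precisely'' from it, contrary to what you claim.

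The paper gets the entropy-free bound by a different device. After the uniform increment bound $|\phi_{h}^{s}-\phi_{h}^{s-1}|\leq 3\cdot 2^{-s}\tau$ (via Theorem \ref{Theo23456}), it bounds $\sup_{h}|\frac{1}{k}\sum_{l}\epsilon_{l}(\phi_{h}^{s}-\phi_{h}^{s-1})(n_{l})|$ by the single, $h$-free random variable $\frac{3\cdot 2^{-s}\tau}{k}|\sum_{l}\epsilon_{l}|$, so that Proposition \ref{Prop1Consistency} is invoked exactly once per level with $\gamma_{l}\equiv 1$ and no union over net points ever occurs: covering numbers never enter. Splitting $\delta$ with weights $\eta_{s}\geq 2^{-s}\sqrt{s}/8$, $\sum_{s}\eta_{s}\leq 1$, the level-$s$ tail is at most $2\exp\left(-Wkm^{2}\delta^{2}\eta_{s}^{2}/(18\cdot 2^{-2s}B^{2}\tau^{2})\right)\leq 2\exp\left(-s\,kWm^{2}\delta^{2}/(1152B^{2}\tau^{2})\right)$ (note $1152=18\cdot 64$), and \eqref{deltaeq} makes each exponential at most $1/2$, so the geometric series sums to at most twice its first term, producing the factor $4$ (and $288=1152/4$). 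Be aware, though, that the $h$-free envelope step is itself the fragile point of the paper's own proof: pushing the supremum inside requires $|\sum_{l}\epsilon_{l}a_{l}|\leq \max_{l}|a_{l}|\,|\sum_{l}\epsilon_{l}|$ uniformly over the class, which in general only holds with $\sum_{l}|\epsilon_{l}|$ in place of $|\sum_{l}\epsilon_{l}|$ --- precisely the ``absolute error of the sum versus sum of absolute errors'' gap the dissertation itself flags. So the paper's route is what generates the stated constants, while yours generates a structurally different, entropy-dependent bound that cannot match the statement's form under the stated hypothesis.
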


\begin{proof}

\begin{equation}
\label{eqSplt}
\begin{split}
\left|\frac{1}{k}\sum_{l=1}^{k}\epsilon_{l}\left(\phi_{h}^{s}-\phi_{s}^{s-1}\right)\right| & = \frac{1}{k}\sum_{l=1}^{k}sign(\epsilon_{l})|\epsilon_{l}|sign\left(\phi_{h}^{s}-\phi_{h}^{s-1}\right)\left|\left(\phi_{h}^{s}-\phi_{h}^{s-1}\right)\right|
\end{split}
\end{equation}

\begin{equation}
\label{eq1norm}
 \left|\frac{1}{k}\sum_{l=1}^{k}\epsilon_{l}\left(\phi_{h}^{s}-\phi_{s}^{s-1}\right)\right| \leq \frac{1}{k}\sum_{l=1}^{k}\epsilon_{l}sign\left(\phi_{h}^{s}-\phi_{h}^{s-1}\right)\left(\phi_{h}^{s}-\phi_{h}^{s-1}\right)
\end{equation}
So since the $\phi_{h}$ and $\phi_{h}^{s}$ are continuous, and if $N_{s}$ is large enough, using Def. \ref{DefEntropy} and Theorem \ref{Theo23456} we have
\begin{equation}
 \label{eq2norm}
 \left|\phi_{h}^{s}-\phi_{h}^{s-1}\right| \leq 3\tau 2^{-s}.
\end{equation}
Using \ref{eq2norm}, we have
$$
-3\tau2^{-s} \leq \phi_{h}^{s} - \phi_{h}^{s-1} \leq 3\tau 2^{-s} \Longrightarrow
$$

$$
-3\tau2^{-s}\epsilon_{l} \leq \left( \phi_{h}^{s} - \phi_{h}^{s-1}\right)\epsilon_{l} \leq 3\tau 2^{-s} \epsilon_{l} \Longrightarrow
$$

\begin{eqnarray}
  \label{bbq1}
  \left|\frac{1}{k}\sum_{l=1}^{k}\epsilon_{l}\left(\phi_{h}^{s}-\phi_{h}^{s-1}\right)\right| &\leq& 
  \frac{3\cdot 2^{-s}\tau}{k}\left|\sum_{l=1}^{k}\epsilon_{l}\right|
\end{eqnarray}
Let $\eta_{s}$ be a positive number satisfying $\sum_{s=1}^{S}\eta_{s}\leq 1$, and using \ref{bbq1}, we have

\begin{equation}
    \begin{split}
    \label{bbq2}
        & P\left(\sup_{h\in [1,M]}\left|\frac{1}{k}\sum_{s=1}^{S}\sum_{l=1}^{k}\epsilon_{l}\left(\phi_{h}\left(n_{l}\right) - \phi_{h}\left(n_{l}\right) \right)\right| \geq \delta\right) \nonumber \\
        & \leq \sum_{s=1}^{S}P\left(\sup_{h\in [1,M]}\frac{3\cdot 2^{-s}\tau}{k}\left|\sum_{l=1}^{k}\epsilon_{l}\right|\geq \frac{\delta \eta_{s}}{2}\right) \nonumber \\
        & = \sum_{s=1}^{S}P\left(\left|\sum_{l=1}^{k}\epsilon_{l}\right|\geq\frac{k\delta \eta_{s}}{3\cdot 2^{-s+1}\tau}\right) \nonumber \\
        & \leq 2\sum_{s=1}^{S}\exp\left(-\frac{2Wm^{2}k^{2}\delta^{2}\eta{s}^{2}}{9\cdot B^{2}k\cdot 2^{-2s+2}\tau^{2}}\right)\nonumber \\
        & = 2 \sum_{s=1}^{S}\exp\left(-\frac{2m^{2}Wk\delta^{2}\eta^{2}}{9B^{2}\cdot 2^{-2s+2}\tau^{2}}\right)
    \end{split}
\end{equation}

Next, $\eta_{s}\geq \frac{2^{-s}\sqrt{s}}{8}
\Longrightarrow $
\begin{equation}\label{bbq3}
-\frac{2m^{2}Wk\delta^{2}\eta_{s}^{2}}{9B^{2}2^{-2s+2}\tau^{2}}\leq -\frac{Wk\delta^{2}m^{2}s}{1152B^{2}\tau^{2}}
\end{equation}
Using \ref{bbq2} in the previous inequality, we have

\begin{equation}
\begin{split}
    & 2 \sum_{s=1}^{S}\exp\left(-\frac{2m^{2}Wk\delta^{2}\eta^{2}}{9B^{2}\cdot 2^{-2s+2}\tau^{2}}\right)  \\
    & \leq 2 \sum_{s=1}^{S}\exp\left(-\frac{Wk\delta^{2}m^{2}s}{1152B^{2}\tau^{2}}\right) \\
    & =  \frac{2\exp\left(-\frac{Wk\delta^{2}m^{2}}{1152B^{2}\tau^{2}}\right)}{1 - \exp\left(-\frac{Wk\delta^{2}m^{2}}{1152B^{2}\tau^{2}}\right)}
\end{split}
\label{bbq4}
\end{equation}
Using \ref{deltaeq}, we have
$\frac{m^{2}\delta^{2}kW}{1152B^{2}\tau^{2}} \geq \log(2) \Longrightarrow \exp\left(-\frac{m^{2}\delta^{2}kW}{1152B^{2}\tau^{2}}\right) \leq \frac{1}{2} \Longrightarrow$

\begin{equation}
 \label{CorProp11}
 P\left(\sup_{\phi_{h}\in \Phi}\left|\frac{1}{k}\sum_{l=1}^{k}\epsilon_{l}\phi_{h}\left(n_{l}\right)\right|\geq \delta\right) \leq  4\exp\left(-\frac{kWm^{2}\delta^{2}}{1152B^{2}\tau^{2}}\right).
\end{equation}
\end{proof}

\begin{thm}
\label{TheoConsis}
Let
\begin{equation}
 \label{deltaeq2}
 \delta \geq \frac{2B}{m\sqrt{kW}} \left(288\log(2)\right)^{0.5}
\end{equation}
and suppose that $h_{T}\in \left(0,M\right]$. Then,
\begin{eqnarray}
  \label{concistheo}
  P\left(\parallel\phi_{\hat{h}} - \phi_{h_{T}}\parallel_{Q} \geq \delta\right) \leq \frac{28}{3} \exp\left(-\frac{kWm^{2}\delta^{2}}{4608B^{2}}\right).
\end{eqnarray}
\end{thm}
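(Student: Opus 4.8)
The plan is to reduce the consistency statement for $\hat{h}$ to a uniform deviation bound that has already been assembled in Corollary \ref{CoroConsis}, following the strategy of \cite{McDonald:etal:2011} but using the regression-specific objects developed above. First I would recall that $\hat{h}$ is defined as the minimizer of the objective function \eqref{objective}, i.e. $\hat{h} = \arg\min_{h}\frac{1}{k}\sum_{l=1}^{k}\left(\hat{\xi}(n_{l}) - \phi_{h}(n_{l})\right)^{2}$, where $\phi_{h}(n_{l})$ is the discretized upper bound in \eqref{phibound}, and that by the regression model \eqref{regmodel} we can write $\hat{\xi}(n_{l}) = \phi_{h_{T}}(n_{l}) + \epsilon_{l}$ with the $\epsilon_{l}$ satisfying the sub-Gaussian moment generating function bound of Lemma \ref{Lemma1consis}. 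The key algebraic identity is the ``basic inequality'' of M-estimation: since $\hat{h}$ minimizes the empirical squared distance, plugging in $h=\hat{h}$ against $h=h_{T}$ gives
\begin{equation*}
\frac{1}{k}\sum_{l=1}^{k}\left(\phi_{\hat{h}}(n_{l}) - \phi_{h_{T}}(n_{l})\right)^{2} \leq \frac{2}{k}\sum_{l=1}^{k}\epsilon_{l}\left(\phi_{\hat{h}}(n_{l}) - \phi_{h_{T}}(n_{l})\right),
\end{equation*}
so that the left-hand side, which is $\|\phi_{\hat{h}} - \phi_{h_{T}}\|_{Q}^{2}$ in the empirical $L_{2}(Q)$ norm, is controlled by a single empirical process term.

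Next I would carry out a peeling (slicing) argument over the scale of $\tau = \|\phi_{\hat{h}} - \phi_{h_{T}}\|_{Q}$. The event $\{\|\phi_{\hat{h}} - \phi_{h_{T}}\|_{Q} \geq \delta\}$ is partitioned into shells $2^{j}\delta \leq \|\phi_{\hat{h}} - \phi_{h_{T}}\|_{Q} < 2^{j+1}\delta$ for $j = 0,1,2,\dots$. On each shell the basic inequality forces the empirical process $\frac{1}{k}\sum_{l}\epsilon_{l}(\phi_{\hat{h}}(n_{l}) - \phi_{h_{T}}(n_{l}))$ to be at least of order $(2^{j}\delta)^{2}$, while the supremum of that process over the restricted class $\Phi(2^{j+1}\delta)$ is bounded in probability by Corollary \ref{CoroConsis} with $\tau = 2^{j+1}\delta$. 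Applying \eqref{CorProp1} with this choice of $\tau$ gives a tail bound of the form $4\exp\left(-c\,kWm^{2}(2^{j}\delta)^{4}/(B^{2}(2^{j+1}\delta)^{2})\right) = 4\exp\left(-c' kWm^{2}4^{j}\delta^{2}/B^{2}\right)$, and summing the resulting geometric-type series over $j \geq 0$ produces the final single exponential bound with the stated constants. Tracking the constants $288$, $1152$, and $4608$ through the peeling sum, together with the factor of $2$ from the basic inequality and the requirement \eqref{deltaeq} that $\delta$ be large enough for Corollary \ref{CoroConsis} to apply at the smallest scale, should yield the prefactor $28/3$ and exponent $-kWm^{2}\delta^{2}/(4608B^{2})$.

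The main obstacle — and the place where the derivation is genuinely delicate, as flagged in the paper's own Future Work and in \cite{McDonald:etal:2011} — is the passage from the covering-number control of the empirical process to a usable bound \emph{uniform over the restricted class}, which is exactly what Corollary \ref{CoroConsis} supplies but which rests on Theorem \ref{Theo23456}. That theorem asserts that an $L_{2}$-covering of $\Phi$ can be converted into an $L_{1}$-covering with a controlled increasing function $f(\tau)$, and this is precisely the ``gap'' the authors admit they have not fully verified: the covering number argument yields a bound on the sum of absolute errors, whereas the peeling step needs a bound on the absolute error of the sum. I would therefore present the two ``ugly hypotheses'' (the sub-Gaussianity of the $\epsilon_{l}$ from Lemma \ref{Lemma1consis} and the norm-conversion of Theorem \ref{Theo23456}) as explicit standing assumptions, verify the basic inequality and the peeling arithmetic carefully, and flag that the rigorous justification of Theorem \ref{Theo23456} — establishing that $f(\tau)$ grows slowly enough that the shells do not accumulate probability — is the one remaining step needed to make the proof airtight.
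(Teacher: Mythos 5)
Your proposal follows essentially the same route as the paper's own proof in Appendix \ref{ProofOfConsistency}: the basic inequality relating $\parallel\phi_{\hat{h}} - \phi_{h_{T}}\parallel_{Q}$ to the empirical process $\frac{2}{k}\sum_{l}\epsilon_{l}\left(\phi_{\hat{h}}(n_{l})-\phi_{h_{T}}(n_{l})\right)$, a peeling over dyadic shells with Corollary \ref{CoroConsis} applied at scale $\tau = 2^{s+1}\delta$, and a geometric-series summation controlled by the lower bound \eqref{deltaeq2} on $\delta$, which yields the prefactor $4 + \frac{16}{3} = \frac{28}{3}$ and the dominant exponent $-kWm^{2}\delta^{2}/(4608B^{2})$. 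You also correctly identify the same unresolved gap the paper itself flags, namely the dependence of Corollary \ref{CoroConsis} on the unverified norm-conversion of Theorem \ref{Theo23456}, so the proposal is faithful to the paper's argument both in structure and in its acknowledged limitations.
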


\begin{proof}
Note that $\parallel\phi_{\hat{h}} - \phi_{h_{T}}\parallel_{Q} \leq \frac{2}{k}\sum_{l=1}^{k}\epsilon_{l}\left(\phi_{\hat{h}}\left(n_{l}\right) - \phi_{h_{T}}\left(n_{l}\right)\right)$.
using this last inequality, we have
\begin{equation}
 \begin{split}
     & P\left(\parallel\phi_{\hat{h}} - \phi_{h_{T}}\parallel_{Q} \geq \delta\right) \\
     & \leq P\left(2 \left|\frac{1}{k}\sum_{l=1}^{k}\epsilon_{l}\left(\phi_{\hat{h}} - \phi_{h_{T}}\right)\right| \geq \delta \right)  \\
     &\leq P\left(\sup_{\phi_{h}\in \Phi\left(2^{s+1}\delta\right)}\left|\frac{1}{k}\sum_{s=1}^{S}\sum_{l=1}^{k}\epsilon_{l}\left(\phi_{\hat{h}}^{s}\left(n_{l}\right)-\phi_{h_{T}}^{s-1}\left(n_{l}\right)\right)\right|\geq 2^{2s-1}\delta^{2}\right)  \\
     & \leq \sum_{s=1}^{S}P\left(\sup_{\phi_{h}\in \Phi\left(2^{s+1}\delta\right)}\left|\frac{1}{k}\sum_{l=1}^{k}\epsilon_{l}\left(\phi_{\hat{h}}\left(n_{l}\right) - \phi_{h_{T}}\left(n_{l}\right)\right)\right| \geq   2^{2s-1}\delta^{2} \right)
 \end{split}
\end{equation}
using corollary \ref{CoroConsis}, we have
\begin{equation}
 \begin{split}
     &\sum_{s=1}^{S}P\left(\sup_{\phi_{h}\in \Phi\left(2^{s+1}\delta\right)}\left|\frac{1}{k}\sum_{l=1}^{k}\epsilon_{l}\left(\phi_{\hat{h}}\left(n_{l}\right) - \phi_{h_{T}}\left(n_{l}\right)\right)\right| \geq   2^{2s-1}\delta^{2} \right) \\
     & \leq \sum_{s=1}^{S} 4\exp\left(-\frac{kWm^{2}2^{4s-2}\delta^{4}}{1152B^{2}2^{2s+2}\delta^{2}}\right) \\
     & = 4 \sum_{s=1}^{S}\exp\left(-\frac{kWm^{2}\delta^{2}2^{2s-4}}{1152B^{2}}\right) \\
     & = 4\exp\left(-\frac{kWm^{2}\delta^{2}}{4\cdot 1152 B^{2}}\right) + 4\left(1 - \exp\left(-\frac{2kwm^{2}\delta^{2}}{1152B^{2}}\right)\right)^{-1}\exp\left(-\frac{2kWm^{2}\delta^{2}}{1152B^{2}}\right).
 \end{split}
 \label{eqqq2}
\end{equation}
Using \ref{deltaeq2}, we have

\begin{equation}
 \label{eqqq}
 \exp\left(-\frac{2kWm^{2}\delta^{2}}{1152}\right) \leq \frac{1}{4}.
\end{equation}
Substituting \ref{eqqq} in \ref{eqqq2} gives
\begin{equation}
 \begin{split}
     & 4\exp\left(-\frac{kWm^{2}\delta^{2}}{4\cdot 1152 B^{2}}\right) + 4\left(1 - \exp\left(-\frac{2kWm^{2}\delta^{2}}{1152B^{2}}\right)\right)^{-1}\exp\left(-\frac{2kWm^{2}\delta^{2}}{1152B^{2}}\right) \\
     & \leq 4\exp\left(-\frac{kWm^{2}\delta^{2}}{4\cdot 1152 B^{2}}\right) + \frac{16}{3}\exp\left(-\frac{2\cdot kWm^{2}\delta^{2}}{ 1152 B^{2}}\right).
 \end{split}
\end{equation}
Since the first exponential is the largest, this completes the proof.
\end{proof}

\addcontentsline{toc}{chapter}{Bibliography}
\bibliography{references}

\begin{thebibliography}{32}
\providecommand{\natexlab}[1]{#1}
\providecommand{\url}[1]{\texttt{#1}}
\expandafter\ifx\csname urlstyle\endcsname\relax
  \providecommand{\doi}[1]{doi: #1}\else
  \providecommand{\doi}{doi: \begingroup \urlstyle{rm}\Url}\fi

\bibitem[Aba(1996)]{AbaloneData}
Abalone data set.
\newblock
  \url{https://archive.ics.uci.edu/ml/machine-learning-databases/abalone/?C=D;O=D},
  1996.

\bibitem[Anthony and Bartlett(2009)]{anthony2009neural}
M.~Anthony and P.~L. Bartlett.
\newblock \emph{Neural network learning: Theoretical foundations}.
\newblock cambridge university press, 2009.

\bibitem[Berk et~al.(2013)Berk, Brown, Buja, Zhang, Zhao,
  et~al.]{Berk:etal:2013}
R.~Berk, L.~Brown, A.~Buja, K.~Zhang, L.~Zhao, et~al.
\newblock Valid post-selection inference.
\newblock \emph{The Annals of Statistics}, 41\penalty0 (2):\penalty0 802--837,
  2013.

\bibitem[Campbell et~al.(2003)Campbell, Baenziger, Gill, Eskridge, Budak,
  Erayman, Dweikat, and Yen]{campbell2003identification}
B.~Campbell, P.~S. Baenziger, K.~Gill, K.~M. Eskridge, H.~Budak, M.~Erayman,
  I.~Dweikat, and Y.~Yen.
\newblock Identification of qtls and environmental interactions associated with
  agronomic traits on chromosome 3a of wheat.
\newblock \emph{Crop Science}, 43\penalty0 (4):\penalty0 1493--1505, 2003.

\bibitem[Clyde and George(2004)]{Clyde:George:2004}
M.~Clyde and E.~I. George.
\newblock Model uncertainty.
\newblock \emph{Statistical science}, pages 81--94, 2004.

\bibitem[Devroye(1996)]{Devroye:etal:1996}
L.~Devroye.
\newblock Gy orfi, l., lugosi, g., 1996. a probabilistic theory of pattern
  recognition.
\newblock \emph{Applications of mathematics}, 31, 1996.

\bibitem[Devroye et~al.(2013)Devroye, Gy{\"o}rfi, and
  Lugosi]{Devroye:etal:2013}
L.~Devroye, L.~Gy{\"o}rfi, and G.~Lugosi.
\newblock \emph{A probabilistic theory of pattern recognition}, volume~31.
\newblock Springer Science \& Business Media, 2013.

\bibitem[Draper(1995)]{Draper:1995}
D.~Draper.
\newblock Assessment and propagation of model uncertainty.
\newblock \emph{Journal of the Royal Statistical Society. Series B
  (Methodological)}, pages 45--97, 1995.

\bibitem[Ebrahimi et~al.(1999)Ebrahimi, Maasoumi, and
  Soofi]{Ebrahimi:etal:1999}
N.~Ebrahimi, E.~Maasoumi, and E.~S. Soofi.
\newblock Ordering univariate distributions by entropy and variance.
\newblock \emph{Journal of Econometrics}, 90\penalty0 (2):\penalty0 317--336,
  1999.

\bibitem[Ebrahimi et~al.(2010)Ebrahimi, Soofi, and Soyer]{Ebrahimi:etal:2010}
N.~Ebrahimi, E.~S. Soofi, and R.~Soyer.
\newblock On the sample information about parameter and prediction.
\newblock \emph{Statistical Science}, pages 348--367, 2010.

\bibitem[Efron et~al.(2004)Efron, Hastie, Johnstone, Tibshirani,
  et~al.]{efron2004least}
B.~Efron, T.~Hastie, I.~Johnstone, R.~Tibshirani, et~al.
\newblock Least angle regression.
\newblock \emph{The Annals of statistics}, 32\penalty0 (2):\penalty0 407--499,
  2004.

\bibitem[Fan and Li(2001)]{Fan&Li}
J.~Fan and R.~Li.
\newblock Variable selection via nonconcave penalized likelihood and its oracle
  properties.
\newblock \emph{Journal of the American Statistical Association}, 96\penalty0
  (456):\penalty0 1348--1360, 2001.
\newblock \doi{10.1198/016214501753382273}.
\newblock URL \url{http://dx.doi.org/10.1198/016214501753382273}.

\bibitem[Fan and Lv(2008)]{fan2008sure}
J.~Fan and J.~Lv.
\newblock Sure independence screening for ultrahigh dimensional feature space.
\newblock \emph{Journal of the Royal Statistical Society: Series B (Statistical
  Methodology)}, 70\penalty0 (5):\penalty0 849--911, 2008.

\bibitem[McDonald(2012)]{McDonald:2012}
D.~McDonald.
\newblock Generalization error bounds for time series. phd thesis, department
  of statistics, carnegie-mellon university, 2012.

\bibitem[McDonald et~al.(2011)McDonald, Shalizi, and
  Schervish]{McDonald:etal:2011}
D.~J. McDonald, C.~R. Shalizi, and M.~Schervish.
\newblock Estimated vc dimension for risk bounds.
\newblock \emph{arXiv preprint arXiv:1111.3404}, 2011.

\bibitem[Moore(2001)]{Moore:2008}
A.~W. Moore.
\newblock Vc-dimension for characterizing classifiers.
\newblock \emph{Tutorial at http://www-2. cs. cmu. edu/awm/tutorials/vcdim08.
  pdf}, 2001.

\bibitem[Pollard(1984)]{Pollard:1984}
D.~Pollard.
\newblock \emph{Convergence of Stochastic Processes}.
\newblock Springer-Verlag, New York, 1984.

\bibitem[Sauer(1972)]{Sauer:1972}
N.~Sauer.
\newblock On the density of families of sets.
\newblock \emph{Journal of Combinatorial Theory, Series A}, 13\penalty0
  (1):\penalty0 145--147, 1972.

\bibitem[Shao(1993)]{shao1993linear}
J.~Shao.
\newblock Linear model selection by cross-validation.
\newblock \emph{Journal of the American statistical Association}, 88\penalty0
  (422):\penalty0 486--494, 1993.

\bibitem[Shao et~al.(2000)Shao, Cherkassky, and Li]{shao2000measuring}
X.~Shao, V.~Cherkassky, and W.~Li.
\newblock Measuring the vc-dimension using optimized experimental design.
\newblock \emph{Neural computation}, 12\penalty0 (8):\penalty0 1969--1986,
  2000.

\bibitem[Soofi(1994)]{Soofi:1994}
E.~S. Soofi.
\newblock Capturing the intangible concept of information.
\newblock \emph{Journal of the American Statistical Association}, 89\penalty0
  (428):\penalty0 1243--1254, 1994.

\bibitem[van~de Geer(2000)]{vandeGeer:2000}
S.~van~de Geer.
\newblock Empirical processes in m-estimation. cambridge series in statistical
  and probabilistic mathematics, 2000.

\bibitem[Van Der~Vaart and Wellner(1996)]{Vaart:Wellner:1996}
A.~W. Van Der~Vaart and J.~A. Wellner.
\newblock \emph{Weak Convergence}.
\newblock Springer, 1996.

\bibitem[Van~der Waart(1998)]{vanderVaart:1998}
A.~Van~der Waart.
\newblock Asymptotic statistics, volume 27 of cambridge series in statistical
  and probabilistic mathematics 03, 1998.

\bibitem[Vapnik and Chervonenkis(1971)]{Vapnik:Chervonenkis:1971}
A.~Vapnik and A.~Chervonenkis.
\newblock On the uniform convergence of relative frequencies to their
  probabilities.
\newblock In \emph{Soviet Math. Dokl}, volume~9, pages 915--918, 1971.

\bibitem[Vapnik(1998)]{Vapnik:1998}
V.~Vapnik.
\newblock Statistical learning theory. 1998, 1998.

\bibitem[Vapnik(1999)]{Vapnik:1999}
V.~Vapnik.
\newblock An overview of statistical learning theory.
\newblock \emph{Neural Networks, IEEE Transactions on}, 10\penalty0
  (5):\penalty0 988--999, 1999.

\bibitem[Vapnik and Chervonenkis(1968)]{Vapnik:Chervonenkis:1968}
V.~Vapnik and A.~Chervonenkis.
\newblock On the uniform convergence of relative frequencies of events to their
  probabilities.
\newblock In \emph{Soviet Math. Dokl}, volume~9, pages 915--918, 1968.

\bibitem[Vapnik et~al.(1994)Vapnik, Levin, and Le~Cun]{Vapnik:etal:1994}
V.~Vapnik, E.~Levin, and Y.~Le~Cun.
\newblock Measuring the vc-dimension of a learning machine.
\newblock \emph{Neural Computation}, 6\penalty0 (5):\penalty0 851--876, 1994.

\bibitem[Vapnik and Chervonenkis(1991)]{Vapnik:1991}
V.~N. Vapnik and A.~J. Chervonenkis.
\newblock The necessary and sufficient conditions for consistency of the method
  of empirical risk.
\newblock \emph{Pattern Recognition and Image Analysis}, 1\penalty0
  (3):\penalty0 284--305, 1991.

\bibitem[Zou(2006)]{zou2006adaptive}
H.~Zou.
\newblock The adaptive lasso and its oracle properties.
\newblock \emph{Journal of the American statistical association}, 101\penalty0
  (476):\penalty0 1418--1429, 2006.

\bibitem[Zou and Hastie(2005)]{zou2005regularization}
H.~Zou and T.~Hastie.
\newblock Regularization and variable selection via the elastic net.
\newblock \emph{Journal of the Royal Statistical Society: Series B (Statistical
  Methodology)}, 67\penalty0 (2):\penalty0 301--320, 2005.

\end{thebibliography}

\end{document}